\definecolor{Gred}{RGB}{219, 50, 54}
\definecolor{Ggreen}{RGB}{60, 186, 84}
\definecolor{Gblue}{RGB}{72, 133, 237}
\definecolor{Gyellow}{RGB}{247, 178, 16}
\definecolor{ToCgreen}{RGB}{0, 128, 0}
\definecolor{myGold}{RGB}{231,141,20}
\definecolor{myBlue}{rgb}{0.19,0.41,.65}
\definecolor{myPurple}{RGB}{175,0,124}
\title{Learning Deep ReLU Networks Is Fixed-Parameter Tractable}
\author{Sitan Chen\thanks{This work was supported in part by a Paul and Daisy Soros Fellowship, NSF CAREER Award CCF-1453261, and NSF Large CCF-1565235.} \\
\texttt{sitanc@mit.edu}\\
MIT
\and Adam R. Klivans\thanks{Supported by NSF awards AF-1909204, AF-1717896, and the NSF AI Institute for Foundations of Machine Learning (IFML). Work done while visiting the Institute for Advanced Study, Princeton, NJ.} \\
\texttt{klivans@cs.utexas.edu} \\
UT-Austin and IAS
\and Raghu Meka\thanks{Supported by NSF CAREER Award CCF-1553605.} \\
\texttt{raghum@cs.ucla.edu} \\
UCLA
}
\newcommand{\N}{\mathcal{N}}
\newcommand{\sgn}{\mathop{\textrm{sgn}}}
\newcommand{\vM}{\vec{M}}
\newcommand{\vW}{\vec{W}}
\newcommand{\td}{\widetilde}
\newcommand{\ulam}{\underline{\lambda}}
\newcommand{\calC}{\mathcal{C}}
\newcommand{\calJ}{\mathcal{J}}
\newcommand{\op}{\mathop{\textrm{op}}}
\newcommand{\Sig}{\vec{\Sigma}}
\newcommand{\gap}{\mathsf{gap}}
\newcommand{\emp}{\mathsf{emp}}
\newcommand{\condE}{\mathbb{E}\expectarg}
\DeclarePairedDelimiterX{\expectarg}[1]{[}{]}{%
  \ifnum\currentgrouptype=16 \else\begingroup\fi
  \activatebar#1
  \ifnum\currentgrouptype=16 \else\endgroup\fi
}
\newcommand{\clip}{\mathsf{clip}}
\newcommand{\innermid}{\nonscript\;\delimsize\vert\nonscript\;}
\newcommand{\activatebar}{%
  \begingroup\lccode`\~=`\|
  \lowercase{\endgroup\let~}\innermid 
  \mathcode`|=\string"8000
}
\newcommand{\normbound}{B}
\newcommand{\Lip}{\Lambda}
\newcommand{\Cjunta}{C_{\mathsf{piecewise}}}
\newcommand{\Cnet}{C_{\mathsf{network}}}
\newcommand{\Constant}{C_*}
\renewcommand{\epsilon}{\varepsilon}
\begin{document}

\maketitle

\begin{abstract}
    We consider the problem of learning an unknown ReLU network with respect to Gaussian inputs and obtain the first nontrivial results for networks of depth more than two.  We give an algorithm whose running time is a fixed polynomial in the ambient dimension and some (exponentially large) function of only the network's parameters.  
    
    Our bounds depend on the number of hidden units, depth, spectral norm of the weight matrices, and Lipschitz constant of the overall network (we show that some dependence on the Lipschitz constant is necessary).  We also give a bound that is doubly exponential in the size of the network but is independent of spectral norm.   These results provably cannot be obtained using gradient-based methods and give the first example of a class of efficiently learnable neural networks that gradient descent will fail to learn.

    In contrast, prior work for learning networks of depth three or higher requires \emph{exponential} time in the ambient dimension, even when the above parameters are bounded by a constant.  Additionally, all prior work for the depth-two case requires well-conditioned weights and/or positive coefficients to obtain efficient run-times.  Our algorithm does not require these assumptions.
    
    Our main technical tool is a type of filtered PCA that can be used to iteratively recover an approximate basis for the subspace spanned by the hidden units in the first layer.  Our analysis leverages new structural results on lattice polynomials from tropical geometry. 
\end{abstract}


\section{Introduction}

We study the problem of learning the following class of concepts:

\begin{definition}[ReLU Networks]\label{defn:relunets}
	Let $\calC_S$ denote the concept class of (feedforward) ReLU networks over $\R^d$ of size $S$. Specifically, $F\in\calC_S$ if there exist weight matrices $\vW_0\in\R^{k_0\times d}, \vW_1\in\R^{k_1\times k_0},\ldots, \vW_{L}\in\R^{k_{L}\times k_{L-1}}, \vW_{L+1}\in\R^{1\times k_{L}}$ for which \begin{equation}
		F(x) \triangleq \vW_{L+1}\phi\left(\vW_{L}\phi\left(\cdots \phi(\vW_0 x) \cdots \right)\right),
	\end{equation} where $\phi(z)\triangleq \max(z,0)$ is the ReLU activation applied entrywise, and $k_0+\cdots + k_{L} = S$. In this case we say that $F$ is computed by a ReLU network with depth $L+2$. We will refer to the rank of $\vW_0$ as $k$, to emphasize that the value of $F$ only depends on a $k$-dimensional subspace of $\R^d$. We will also let $k_{L+1} = 1$.
\end{definition}

	When the weight matrices of two ReLU networks $F,F'\in\calC_S$ have the same dimensions (at all layers), then we say that $F$ and $F'$ \emph{have the same architecture}.

For example, a depth two ReLU network of size $S$ in $d$-dimensions is a function  $F:\R^d \rightarrow \R$ of the form \begin{equation}
	F(x) = \sum_{i=1}^S \lambda_i \phi(\iprod{w_i,x}),
\end{equation}
where $\lambda_i \in \R$ are scalars and $w_i \in \R^d$ are arbitrary vectors. 

Note that any Boolean function $F:\brc{\pm 1}^n\to\brc{\pm 1}$ can be computed by an $n$-layer ReLU network (see Lemma~\ref{lem:representation} in Appendix~\ref{app:represent}). In particular, if $F$ is a junta depending only on $k$ variables, then it can be computed by a $k$-layer ReLU network with size that depends only on $k$.

\paragraph{Learning ReLU Networks}
The problem of PAC learning an unknown ReLU network from labeled examples is a central challenge in the theory of machine learning. Given samples from a distribution of the form $(x,y) \in \R^d \times \R$ where $y = F(x)$ with $F$ an unknown size-$S$ ReLU network, and $x$ is drawn according to a distribution $\cal{D}$, the goal is to output a function $f:\R^d \rightarrow \R$ with small {\em test} error, i.e., $\E[x,y]{(y - f(x))^2} \leq \epsilon \E{y^2}$.  In this work, we focus on the widely studied case where the input distribution on $x$ is Gaussian.  

Ideally, we would like an algorithm with sample complexity and running time that is polynomial in all the relevant parameters.  As a first step, the algorithm should depend polynomially on the {\em dimension} (it is often easy to obtain brute-force search algorithms that run in time exponential in the dimension\footnote{Although in our specific case even this type of search turns out to be nontrivial.}). Even this goal, however, has been elusive: it is not known how to achieve subexponential-time algorithms for general depth two ReLU networks (without making additional assumptions on the network). 

In this work, we give the first algorithm for learning ReLU networks whose running time is a fixed polynomial in the dimension, regardless of the depth of the network.  Our algorithm is \emph{fixed-parameter tractable}: we show that we can \emph{properly learn} (i.e., the output hypothesis is also a ReLU network) ReLU networks with sample complexity and running time that is a fixed polynomial in the dimension and an exponential function of the network's {\em parameters}.


More precisely, our main result is as follows. We will also make the (as it turns out necessary) assumption that the ReLU network has a bounded Lipschitz constant: a function $f:\R^d \rightarrow \R$ is $\Lip$-Lipschitz if $|f(x) - f(x')| \leq \Lip \|x - x'\|_2$ for all $x,x'$. 

\begin{theorem}[Main, see Theorem~\ref{thm:main_nets} for formal statement]\label{thm:main_informal}
Let $\calD$ be the distribution over pairs $(x,y)\in \R^d \times \R$ where $x \sim \calN(0,\Id)$ and $y = F(x)$ for a size-$S$ ReLU network $F$ with depth $L+2$, Lipschitz constant at most $\Lip$, rank of \emph{bottom} weight matrix $\vW_0$ being $k$, and whose weight matrices all have spectral norm at most $\normbound$.

There is an algorithm that 
draws $d\log(1/\delta) \exp\left(\poly(k,S,\Lip/\epsilon)\right) \normbound^{O(Lk)}$ samples, runs in time $\td{O}(d^2\log(1/\delta))\exp\left(\poly(k,S,\Lip/\epsilon)\right) \normbound^{O(L kS^2)}$, and outputs a ReLU network $\td{F}$ such that $\E{(y - \td{F}(x))^2} \leq \epsilon$ with probability at least $1-\delta$.\footnote{See Remark~\ref{remark:scaleinvariant} for a discussion of why this guarantee is scale-invariant.}
\end{theorem}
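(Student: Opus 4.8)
The plan is to run the algorithm in two phases. Phase~1 is a \emph{dimension-reduction} step that outputs an approximate orthonormal basis $\td V\in\R^{d\times k'}$, $k'\le k$, for the subspace $V\triangleq\mathrm{rowspan}(\vW_0)$ on which $F$ depends; this is the only part of the algorithm that touches all $d$ coordinates, and it needs $\approx d$ samples because it estimates $d\times d$ second-moment matrices. Phase~2 replaces $x$ by $z=\td V^\top x\sim\calN(0,\Id_{k'})$, observes that $F$ is then (approximately) realized by a ReLU network over $\R^{k'}$ with the same architecture and the same parameters $(S,L,\normbound,\Lip)$, and brute-force fits such a low-dimensional network by searching an $\epsilon'$-net of weight matrices; since the residual problem has only $\poly(k,S)$ parameters and bounded norms, this costs $\exp(\poly(k,S,\Lip/\epsilon))\normbound^{O(LkS^2)}$ and is dimension-free by a standard uniform-convergence bound.

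\emph{Phase~1 (filtered PCA).} The key fact is that for any event $\calE$ depending on $x$ only through $\Pi_V x$ --- e.g.\ ``$y=F(x)$ lies in a small interval,'' or ``$\iprod{u,x}$ lies in a slab'' for $u$ in the part of $V$ recovered so far --- rotation invariance of the standard Gaussian implies that the component of $x$ orthogonal to $V$ is, conditioned on $\calE$, still a standard Gaussian independent of $\calE$; hence $M_\calE\triangleq\E{\mathbf 1[\calE]\,xx^\top}-\E{\mathbf 1[\calE]}\,\Id$ has column space inside $V$, and its top eigenvectors are honest directions of $V$. Recovery is iterative: maintain $U\subseteq V$ of directions found so far; conditioning $\Pi_U x$ to a neighbourhood of a random point turns $F$ into (approximately) a function of the lower-dimensional residual $(\Pi_V-\Pi_U)x$; run filtered PCA on the orthogonal complement of $U$ to extract a new significant eigenvector; repeat for at most $k$ rounds, stopping once no admissible filter produces a significant eigenvalue --- at which point $F$ is provably $\epsilon$-close to a function of $\Pi_U x$ and we set $\td V$ accordingly. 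The nontrivial input here is a structural lemma (this is where the tropical-geometry/lattice-polynomial results enter): writing the depth-$(L+2)$ network in a $\max$-$\min$-of-affine-functions normal form bounds the number of linear pieces of $F$ by a function of $S$ and $L$, so whenever $F$ varies along a direction $v\in V$ there is a single linear piece --- a polytope cut out by linear inequalities, hence compatible with a refinement of our slab/interval filters --- of Gaussian measure at least $\exp(-\poly(k,S,\Lip/\epsilon))$ on which $\partial_v F$ is a nonzero constant, and conditioning on that region produces a genuinely anisotropic covariance inside $V$ that filtered PCA detects. Combining this with matrix Bernstein/Chernoff bounds (to pass from $M_\calE$ to its empirical version, and from ideal filters to ones defined via empirical quantiles of $y$) and a Davis--Kahan perturbation argument yields $\|\Pi_V-\Pi_{\td V}\|\le\exp(-\poly(k,S,\Lip/\epsilon))$ from $d\cdot\exp(\poly(k,S,\Lip/\epsilon))\cdot\normbound^{O(Lk)}$ samples, the $\normbound^{O(Lk)}$ reflecting the $\le k$ peeling rounds together with the $O(L)$-depth dependence of how input perturbations propagate through the network.

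\emph{Phase~2 (brute force in $k'$ dimensions).} Since ``no filter reveals a direction in $V\setminus U$'' forces $F$ to be $L^2$-close to $x\mapsto\vW_{L+1}\phi(\cdots\phi(\vW_0\Pi_U x)\cdots)$, and $\vW_0\Pi_U=(\vW_0\td V)\td V^\top$ up to the projection error, the network $\td G(z)\triangleq\vW_{L+1}\phi(\cdots\phi((\vW_0\td V)z)\cdots)$ over $\R^{k'}$ has the same architecture, spectral norms still $\le\normbound$, Lipschitz constant $O(\Lip)$, and $\E{(F(x)-\td G(\td V^\top x))^2}\le\epsilon/2$ by the subspace-error bound and $\Lip$-Lipschitzness of $F$. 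So it suffices to properly learn such low-dimensional networks: enumerate the $\le\exp(O(S))$ architectures with $\sum_i k_i=S$ and, for each, an $\epsilon'$-net in operator norm over the weight tuples obeying the norm bound --- of size $(\normbound/\epsilon')^{\poly(k,S)}$, which with $\epsilon'=\normbound^{-O(Lk)}\poly(\ldots)^{-1}$ gives the claimed $\normbound^{O(LkS^2)}$ factor. The map from weight tuples to the realized function is Lipschitz (with constant controlled by $\normbound$, $L$, and the input radius, after truncating the rare event that $\|z\|$ is large), so some net point is $O(\epsilon)$-close to $\td G$ in $L^2$; and since the hypothesis class has bounded Lipschitz constant and $\poly(k,S)$ parameters, a uniform-convergence bound over the net using $\exp(\poly(k,S,\Lip/\epsilon))$ samples lets us select, by empirical squared error, a net point with true error $\le\epsilon$. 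Outputting $\td F(x)\triangleq(\text{that net point})(\td V^\top x)$ completes the proof.

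\textbf{Main obstacle.} The crux is the analysis of the iterated filtered-PCA procedure. Proving the structural lemma --- that \emph{every} direction on which $F$ genuinely depends is revealed by some filter whose matrix $M_\calE$ has an eigenvalue of magnitude $\exp(-\poly)$ with eigenvector well-correlated with that direction --- requires exploiting the piecewise-linear structure of deep ReLU networks carefully enough to rule out the ``sign-symmetry''-type cancellations (e.g.\ $\phi(t)+\phi(-t)=|t|$ is even, hence invisible to the second moment and detectable only by the right higher-order filter) that could otherwise hide a direction, and this is precisely where the lattice-polynomial/tropical machinery is needed. Simultaneously one must bound the degradation of spectral norms and of the conditioning of the recovered basis across the $\le k$ peeling rounds tightly enough that the final bounds retain the stated form; everything else --- the Gaussian invariance identity, matrix concentration for the empirical moments, the net argument in $k'$ dimensions, and the Lipschitz reduction from $\R^d$ to $\R^{k'}$ --- is comparatively routine.
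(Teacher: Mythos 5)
Your two-phase structure (dimension reduction by filtered PCA, then a dimension-free net search in the recovered $k'$-dimensional subspace) matches the paper's outline, and the easy parts you flag as routine — rotation invariance making $\vM_\calE$ supported on $V$, matrix Bernstein plus Wedin/Davis--Kahan, the operator-norm net over weight tuples, and the final uniform-convergence selection — are indeed the routine parts. But the ``main obstacle'' paragraph you wrote is exactly where the proposal diverges from what works, and there are two concrete gaps.

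\textbf{The filter is the wrong one.} Your iterative step conditions $\Pi_U x$ to a neighbourhood (a ball or slab around a random point), and/or conditions on $y$ lying in a small interval, and then looks for anisotropy in $U^\perp$. The first of these is precisely the filter the paper discusses and rejects in Remark~\ref{remark:compare}: forming $\Pi_{W^{\perp}}\E{\bone{\abs{y}>\tau \wedge \norm{\Pi_W x}^2 \le \alpha}(xx^\top - \Id)}\Pi_{W^\perp}$ fails because there are ReLU networks for which the event $\{\abs{y}>\tau\}\cap\{\norm{\Pi_W x}\text{ small}\}$ is empty even when $F$ still depends nontrivially on $V\setminus W$; making the localization ``around a random point'' rather than the origin does not fix this, because the value of $F$ on that slab can be dominated by a large constant coming from $\Pi_W x$, drowning out the $V\setminus W$ variation. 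The second (``$y$ in a small interval'') is the sliced-inverse-regression filter, which the paper explicitly flags in Remark~\ref{remark:slice} as not suitable here. The paper's actual move is to threshold the \emph{residual}: form $\Pi_{\td W^\perp}\E{\bone{\abs{y - \td F(\Pi_{\td W}x)}>\tau}(xx^\top-\Id)}\Pi_{\td W^\perp}$, where $\td F$ ranges over a net of candidate kickers/networks whose relevant subspace is $\td W$. Subtracting $\td F(\Pi_{\td W}x)$ is what makes the residual $G(x) = F(x) - F(\Pi_W x)$ bounded on the region $\norm{\Pi_{V\setminus W}x}\le 1$ (Lemma~\ref{lem:key}), which in turn lets the anti-concentration lemma kick in. Your proposal never subtracts off an estimate of $F\vert_W$, and never enumerates candidate $\td F$ inside the iteration — that enumeration is pushed entirely to Phase~2, which is too late, because it is needed at every peel.

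\textbf{The structural lemma is not the one the lattice polynomials are for, and the one you stated is both unjustified and quantitatively insufficient.} You invoke the max-min normal form only to bound the number of linear pieces (which follows trivially from counting activation patterns) and to claim that whenever $F$ varies along $v$ there is a \emph{single} polytope of Gaussian measure $\ge \exp(-\poly)$ with $\partial_v F$ a nonzero constant. That claim is not proved and is not what the paper's anti-concentration (Lemma~\ref{lem:anti1}) says: the paper averages over all pieces and uses convexity of $z\mapsto z\cdot\erfc(1/\sqrt z)$ precisely because the variance may be spread over many small pieces. More importantly, the real role of the lattice-polynomial machinery in the paper is the \emph{stability} result (Lemma~\ref{lem:stability}): if two piecewise-linear functions are $(m,\eta)$-structurally-close then the affine thresholds $\bone{\abs{F(x)-f(x)}>\tau}$ and $\bone{\abs{\td F(x)-f(x)}>\tau}$ disagree with probability $O(\eta m^2/\tau)$, with \emph{linear} dependence on $\eta$. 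This linear dependence is essential: because each of the $k$ peels only recovers $W$ approximately within $V$, the error compounds across rounds, and any polynomial loss $\eta\mapsto\eta^c$ with $c<1$ would force doubly exponential dependence on $k$. Your proposal neither states this stability lemma nor gives any mechanism to control how the error in $\td W$ propagates through the filter to the next eigenvector, so even granting a correct filter, the round-to-round error accounting is missing. Relatedly, Theorem~\ref{thm:tropical_nns} is proved from scratch (not just cited) so that two nets with the same architecture and entrywise sign pattern share the \emph{same} clause structure — this is what makes the net over weight matrices in \textsc{EnumerateNetworks} produce a structurally close lattice polynomial, and it is not mentioned in the proposal.
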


Note that the sample complexity is linear while the run-time is quadratic in the ambient dimension. In particular, in the well-studied special case where the product of the spectral norms of the weight matrices is a constant (see e.g. \cite{golowich2018size}), in which case the Lipschitz constant of the network is also constant, we can obtain the following corollary:

\begin{corollary}
Let $\calD$ be the distribution over pairs $(x,y)\in \R^d \times \R$ where $x \sim \calN(0,\Id)$ and $y = F(x)$ for a size-$S$ ReLU network $F$ for which the product of the spectral norms of its weight matrices is a constant.

Then there is an algorithm that draws $N = d\log(1/\delta)\exp(O(k^3/\epsilon^2 + kS))$ samples, runs in time $\td{O}(d^2\log(1/\delta)) \exp(O(k^3S^2/\epsilon^2 + kS^3))$, and outputs a ReLU network $\td{F}$ such that $\E{(y - \td{F}(x))^2} \leq \epsilon$ with probability at least $1-\delta$.
\end{corollary}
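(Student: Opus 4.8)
The plan is to obtain this as an immediate consequence of Theorem~\ref{thm:main_informal} (in its formal form, Theorem~\ref{thm:main_nets}), by first rescaling the weight matrices so that each individually has $O(1)$ spectral norm. The key point is that the ReLU is positively homogeneous: $\phi(cz) = c\,\phi(z)$ for every $c \ge 0$, so multiplying $\vW_i$ by a positive scalar $c_i$ for each $i$, subject to $\prod_i c_i = 1$, leaves the computed function $F$, its architecture, and the rank $k$ of the bottom matrix $\vW_0$ all unchanged.

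If some $\vW_i$ is the zero matrix then $F\equiv 0$ and we may output $\td F\equiv 0$; so assume every $\|\vW_i\|_{\op}>0$ and set $c_i \triangleq G/\|\vW_i\|_{\op}$ with $G\triangleq\bigl(\prod_{j}\|\vW_j\|_{\op}\bigr)^{1/(L+2)}$. After this rescaling every weight matrix has spectral norm exactly $G$, and since by hypothesis $\prod_j\|\vW_j\|_{\op}\le C$ for an absolute constant $C$, we get $G\le\max(C,1)=O(1)$; thus we may invoke Theorem~\ref{thm:main_informal} with $\normbound=O(1)$. Furthermore $F$, being a composition of the linear maps $\vW_i$ with the $1$-Lipschitz map $\phi$, is $\bigl(\prod_j\|\vW_j\|_{\op}\bigr)$-Lipschitz, so $\Lip=O(1)$ as well; this is why the corollary need not assume a Lipschitz bound separately.

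It then remains to read off the bounds. Since $k_0+\cdots+k_L=S$ and each $k_i\ge 1$, we have $L+1\le S$, hence $\normbound^{O(Lk)}=\exp(O(Lk))\le\exp(O(kS))$ and $\normbound^{O(LkS^2)}=\exp(O(LkS^2))\le\exp(O(kS^3))$, while $\exp(\poly(k,S,\Lip/\epsilon))=\exp(\poly(k,S,1/\epsilon))$. Substituting the explicit exponents from Theorem~\ref{thm:main_nets} yields sample complexity $d\log(1/\delta)\exp(O(k^3/\epsilon^2+kS))$ and running time $\td O(d^2\log(1/\delta))\exp(O(k^3S^2/\epsilon^2+kS^3))$, with a ReLU network as output. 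There is no real obstacle here: the only points that need care are that the rescaling is well-defined (handled by the degenerate case above) and that it changes neither the input distribution nor the labels $y=F(x)$, so that we genuinely fall under the hypotheses of Theorem~\ref{thm:main_informal}; everything else is bookkeeping of constants.
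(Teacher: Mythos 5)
Your proposal is correct and takes essentially the same route as the paper: the normalization by positive homogeneity you describe (setting $c_i = G/\|\vW_i\|_{\op}$ with $G=(\prod_j\|\vW_j\|_{\op})^{1/(L+2)}$) is precisely the ``standard way'' the paper alludes to in Remark~\ref{remark:scaleinvariant}, and once $\normbound,\Lip=O(1)$ and $L+2\le S+1$ the exponents from Theorem~\ref{thm:main_nets} collapse to the stated $\exp(O(k^3/\epsilon^2+kS))$ and $\exp(O(k^3S^2/\epsilon^2+kS^3))$ exactly as you compute. The only cosmetic gap you inherit from the paper itself is the $\epsilon$ versus $\epsilon^2$ mismatch between Theorem~\ref{thm:main_informal} (and the corollary), which state error $\epsilon$, and Theorem~\ref{thm:main_nets}, which states error $\epsilon^2$; your bookkeeping uses the formal theorem's exponent without adjusting the error parameter, which reproduces the corollary's bounds verbatim, so this is not a defect of your argument.
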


As mentioned earlier, no algorithms that were sub-exponential in $d$ were known even for $S, \normbound,\epsilon$ being constants.

Before going further, we note that a dependence on the Lipschitz constant of the network is necessary even for learning depth two ReLU networks with respect to Gaussians:

\begin{example}\label{example:spike}
Let $\Lip>0$. Consider the size-3, depth two ReLU network $F:\R^2\to\R$ given by \begin{equation}
	F(x_1,x_2) = \phi(x_1 + \Lip x_2) + \phi(3x_1 + \Lip x_2) - 2\phi(-x_1 + \Lip x_2).
\end{equation}
The Lipschitz constant of $F$ is $\Theta(\Lip)$: $F(0,1/\Lip) = 1$ and $F(1,1/\Lip) = 2$. Furthermore, note that for $(x_1,x_2)\in\S^1$, $F(x_1,x_2) = 0$ unless $x_2\in [-3/\Lip,3/\Lip]$. By rotational symmetry, for $(x_1,x_2)\sim\N(0,\Id)$, $F(x_1,x_2) \neq 0$ with probability at most $O(1/\Lip)$.
\end{example}

Note that for depth two ReLU networks with positive weights, no such dependence on the Lipschitz constant is necessary intuitively because without cancellations between the hidden units, one cannot devise ``spiky'' functions $F$ which simultaneously have small variance but attain a large value at some bounded-norm $x$.

Interestingly, our techniques are also general enough to handle general continuous piecewise-linear functions (see Definition~\ref{def:piecewise} for a formal definition):

\begin{theorem}[See Theorem~\ref{thm:main_piecewise} for formal statement]\label{thm:piecewise_informal}
Let $\calD$ be the distribution over pairs $(x,y)\in \R^d \times \R$ where $x \sim \calN(0,\Id)$ and $y = F(x)$ for a continuous piecewise-linear function $F$ which only depends on the projection of $x$ to a $k$-dimensional subspace $V$, has at most $M$ linear pieces, and is $\Lip$-Lipschitz.

There is an algorithm that draws $d\log(1/\delta)\cdot \poly\left(\exp\left(k^3\Lip^2/\epsilon^2\right),M^k\right)$ samples, runs in time $\td{O}(d^2\log(1/\delta))\cdot M^{M^2}\cdot \poly\left(\exp\left(k^4\Lip^2/\epsilon^2\right),M^{k^2}\right)$, and outputs a piecewise-linear function $\td{F}$ such that $\E{(y - \td{F}(x))^2} \leq \epsilon$ with probability at least $1-\delta$.
\end{theorem}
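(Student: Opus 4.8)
The plan is to split the problem into (i) a \emph{dimension-reduction} phase that recovers a subspace $\hat V\subseteq\R^d$, of dimension at most $k$, such that the true relevant subspace $V$ lies within small angle of $\hat V$, and (ii) a \emph{brute-force proper-learning} phase that, working only with the $O(k)$ coordinates $\Pi_{\hat V}x$, fits the target by exhaustive search over an explicit net of continuous piecewise-linear hypotheses. Since the marginal of $\Pi_{\hat V}x$ under $\calN(0,\Id)$ is again a standard Gaussian, and once $V$ is (almost) inside $\hat V$ the function $x\mapsto F(\Pi_{\hat V}x)$ is $\Lip$-Lipschitz, has at most $M$ pieces, and is $O(\Lip\epsilon)$-close to $F$ in $L^2(\calN)$ (using $\|\Pi_V x\|\le O(\sqrt{k\log(1/\epsilon)})$ with overwhelming probability), it suffices to learn an $M$-piece, $\Lip$-Lipschitz PWL function on $\R^{O(k)}$ to error $\epsilon/2$, provided the recovery in phase (i) has angular error $\ll\epsilon/(\Lip\sqrt k)$.

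\emph{Phase 1 (filtered PCA).} We maintain a subspace $U$, starting from $\{0\}$, which always satisfies $U\subseteq V$. At each step, letting $z:=\Pi_{U^\perp}x$ (a standard Gaussian on $U^\perp$, independent of $\Pi_U x$), we estimate the matrices
\[
\mathbf{M}_{\mathsf f}\;=\;\mathbb{E}_x\Bigl[\,\mathsf f\bigl(F(x),\Pi_U x\bigr)\cdot\bigl(zz^\top-\Pi_{U^\perp}\bigr)\Bigr],
\]
where $\mathsf f$ ranges over a small, explicit family of filters --- clippings/thresholds of the label together with indicators that $\Pi_U x$ lies in one of a few fixed slabs. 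Two properties drive this phase. \emph{Soundness:} because $F$ depends only on $\Pi_V x$, the column span of every $\mathbf{M}_{\mathsf f}$ is contained in $\Pi_{U^\perp}V$, so adjoining the top eigenvector of any $\mathbf{M}_{\mathsf f}$ to $U$ keeps $U\subseteq V$ and strictly enlarges it. \emph{Progress:} if some unit vector of $V$ is still $\gamma$-far from $U$, then for at least one filter $\mathsf f$ in the family $\|\mathbf{M}_{\mathsf f}\|_{\op}\ge\tau$ for an explicit $\tau=1/\poly\bigl(\exp(k^3\Lip^2/\epsilon^2),M^k\bigr)$. Iterating, and choosing $\gamma$ (hence $\tau$) so that a union of $k$ per-step errors stays below the required $\epsilon/(\Lip\sqrt k)$, after at most $k=\dim V$ steps we obtain $\hat V:=U$ with $V$ within the desired angle of $\hat V$.

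\emph{Phase 2 (brute force in low dimension).} By the structural results from tropical geometry --- every continuous PWL function is a \emph{lattice polynomial} (a $\max$--$\min$ formula) in its affine pieces --- an $M$-piece, $\Lip$-Lipschitz PWL function on $\R^{O(k)}$ is determined by (a) its $\le M$ affine pieces, each with gradient of norm $\le\Lip$ and, after restricting to a ball of radius $O(\sqrt{k\log(M/\epsilon)})$, bounded intercept, and (b) one of at most $M^{M^2}$ combinatorial types of lattice-polynomial formula (equivalently, polyhedral subdivisions into $\le M$ cells). A sufficiently fine net over the affine pieces yields $\poly\bigl(\exp(k^4\Lip^2/\epsilon^2),M^{k^2}\bigr)$ candidate tuples; for each choice of tuple and combinatorial type we form the corresponding PWL function, discard it unless it is $O(\Lip)$-Lipschitz, clip it, and keep the one minimizing the empirical square loss. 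A uniform-convergence bound over this finite hypothesis class, together with the fact that the net contains a function within $\epsilon/2$ of $F(\Pi_{\hat V}\cdot)$ in $L^2(\calN)$, yields test error $\epsilon$.

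\emph{Complexity and the main obstacle.} Sample complexity is governed by spectrally estimating each $\mathbf{M}_{\mathsf f}$ to operator-norm precision $\ll\tau$ ($\sim d\cdot\poly(1/\tau)$ samples), plus a $\log(\text{net size})$ term from uniform convergence, together giving the stated $d\log(1/\delta)\poly\bigl(\exp(k^3\Lip^2/\epsilon^2),M^k\bigr)$; the running time is $\td O(d^2\log(1/\delta))$ for the (power-iteration-based) subspace computations times the $M^{M^2}\poly\bigl(\exp(k^4\Lip^2/\epsilon^2),M^{k^2}\bigr)$ cost of the search. The technical heart --- and the main obstacle --- is the \textbf{progress lemma}: Example~\ref{example:spike} shows that the unfiltered matrix $\mathbb{E}[F(x)(xx^\top-\Id)]$ can entirely miss a relevant direction because of cancellations between pieces, so one must both (i) design the small family of filters --- morally, conditioning the label to a narrow band and/or pinning already-found coordinates to a slab, so that a previously cancelling piece is exposed --- and (ii) prove, using the lattice-polynomial structure (in particular that restricting a PWL function to a slab, or to a band of its values, yields another PWL function with no more pieces and controlled geometry), that \emph{some} such restriction genuinely varies in a new direction of $V$ with enough mass, relative to $\Lip,\epsilon,M,k$, to force the eigenvalue lower bound $\tau$. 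Establishing this explicit (albeit enormous) $\tau$, and bounding the number of filters by a polynomial, is where essentially all the work lies; the rest is standard netting, spectral concentration, and uniform convergence.
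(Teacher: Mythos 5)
Your high-level architecture matches the paper's: iteratively enlarge a subspace inside $V$ via filtered PCA, then brute-force over a net of lattice-polynomial hypotheses in the low-dimensional subspace. The gap is in the specific filter family you propose for the progress lemma, which you yourself flag as the technical heart, and this is a substantive miss rather than a detail you could patch.

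You suggest filters of the form $\mathsf f(F(x),\Pi_U x)$ built from ``clippings/thresholds of the label together with indicators that $\Pi_U x$ lies in one of a few fixed slabs,'' i.e.\ products like $\bone{\abs{y}>\tau}\cdot\bone{\norm{\Pi_U x}^2\le\alpha}$. The paper explicitly constructs a two-dimensional piecewise-linear counterexample (Remark~\ref{remark:compare}) for which this slab-restricted matrix is identically zero for the relevant choices of $\tau,\alpha$ --- because even after confining $\Pi_U x$ to a slab, $F(x)$ can still be large on inputs with small projection onto $V\backslash U$, so thresholding the raw label does not isolate the new direction. What works instead is thresholding the \emph{residual}: the filter is $\bone{\abs{y-\td F(\Pi_{\td W}x)}>\tau}$ for a candidate $\td F$ approximating $F|_W$ (Equations~\eqref{eq:Ml_pre}--\eqref{eq:Mlhat}). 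This makes the function being thresholded Lipschitz \emph{on the orthogonal complement of $\td W$}, which is the property needed to run the anti-concentration argument (Lemma~\ref{lem:anti1} via Lemma~\ref{lem:key}). Two consequences follow that are absent from your plan: (i) the filter family is not ``small and explicit'' but is the full net $\calL$ of candidate restrictions from {\sc EnumerateKickers}, of size $M^{M^2}\cdot\poly(\dots)$ --- which is also where the $M^{M^2}$ factor in the runtime actually comes from (one SVD per $\td F\in\calL$ per iteration of the main loop), not from the final brute-force search as you attribute it; and (ii) one must prove that the residual-threshold matrix depends \emph{stably} on $\td F$, since only approximations to $F|_W$ are available and $\td W$ only approximately sits inside $V$. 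This stability (Lemma~\ref{lem:stability}) is proved by a hybrid argument through the lattice-polynomial representation, swapping in one affine leaf at a time, and it is essential that the resulting bound is \emph{linear} in the perturbation size --- any polynomial loss would compound over the $k$ iterations into doubly exponential error. Your proposal uses the lattice-polynomial structure only for the Phase-2 net, not for the stability argument that makes Phase~1 go through. Without these two pieces --- residual rather than label thresholding, and a linear stability bound for affine thresholds of lattice polynomials --- the progress lemma cannot be established.
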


Note that a size-$S$ ReLU network is a continuous piecewise-linear function with at most $2^S$ linear pieces. Specializing Theorem~\ref{thm:piecewise_informal} to ReLU networks gives a guarantee which is incomparable to Theorem~\ref{thm:main_informal}: we obtain an algorithm that depends doubly exponentially on $S$ but has no dependence on the norms of the weight matrices.

\subsection{Prior Work on Provably Learning Neural Networks}


\paragraph{Algorithmic Results}
Algorithms for learning neural networks (obtaining small {\em test error}) have been intensely studied in the literature.  In the last few years alone there have been many papers giving provable results for learning restricted classes of neural networks under various settings  \cite{janzamin2015beating,zhang2016l1,zhong2017recovery,brutzkus2017globally,goel2017reliably,LiY17,zhangps17,Tian17,convotron,Duconv,ge2018learning,ge2018learning2,manurangsi2018computational,bakshi2019learning,goel2019learning,azll,vempala2019gradient,zgu,diakonikolas2020approximation,sewoong,LiMZ20}. 

The predominant techniques are spectral or tensor-based dimension reduction \cite{janzamin2015beating,zhong2017recovery,bakshi2019learning,diakonikolas2020algorithms}, kernel methods \cite{zhang2016l1,goel2017reliably,Daniely17,manurangsi2018computational,goel2019learning}, and gradient-based methods \cite{ge2018learning,ge2018learning2,vempala2019gradient}.  All prior work takes distributional and/or architectural assumptions, the most common one being that the inputs come from a standard Gaussian. We will also work in this setting.\footnote{Other works such as \cite{azll} or kernel-based methods \cite{zhang2016l1,goel2017reliably} require strong norm-based assumptions on the inputs and weights.}




As pointed out in \cite{goel2020superpolynomial,diakonikolas2020approximation}, all existing algorithmic results for Gaussian inputs hold {\em only for depth two networks} and make at least one of two assumptions on the unknown network $F$ in question: \begin{itemize}
	\setlength{\itemindent}{1in}
	\item[\textbf{Assumption (1)}] Weight matrix $\vW_0$ is well-conditioned and, in particular, full rank.
	\item[\textbf{Assumption (2)}] The vector at the output layer ($\vW_1$ when $L = 0$) has all positive entries.
\end{itemize}

Assumption (1) allows one to use tensor decomposition to recover the parameters of the network and hence PAC learn, an idea that has inspired a long line of works \cite{janzamin2015beating,zhong2017recovery,ge2018learning,ge2018learning2,bakshi2019learning}. However, the assumption is not necessary for PAC learning or achieving low-prediction error. For instance, consider a pathological case where $\vW_0$ has repeated rows. Here, while parameter recovery is not possible it is still possible to PAC learn. To our knowledge, the only work that can PAC learn depth two networks over Gaussian inputs without a condition number bound on $\vW_0$ is \cite{diakonikolas2020algorithms}. However, their work still requires assumption (2) (and only holds for depth two networks).  Our work shows that assumption (2) is neither information-theoretically nor computationally necessary.



\paragraph{Limitations of Gradient-Based Methods}

Two recent works \cite{goel2020superpolynomial,diakonikolas2020algorithms} showed that a broad family of algorithms, namely \emph{correlational statistical query (CSQ) algorithms}, fail to PAC learn even depth two ReLU networks; that is, functions of the form $F(x) = \sum^k_{i=1}\lambda_i \phi(\iprod{v_i,x})$ with respect to Gaussian inputs in time polynomial in $d$ where $d$ is the ambient dimension (in fact, \cite{diakonikolas2020algorithms} rules out running time $d^{o(k)}$).  Informally, a CSQ algorithm is limited to using noisy estimates of statistics of the form $\E{y\cdot \sigma(x)}$ for arbitrary bounded $\sigma$, where the expectation is over examples $(x,y)$ and $y = F(x)$ is computed by the network. The point is that this already rules out a wide range of algorithmic approaches in theory and practice, including gradient descent on overparameterized networks (i.e., using neural tangent kernels \cite{jacot} or the  mean-field approximation for gradient dynamics \cite{meimontanaringuyen}). Note that the algorithms of \cite{diakonikolas2020algorithms} for learning depth two ReLU networks with positive coefficients are CSQ algorithms as well.

Note that as a consequence of Theorem~\ref{thm:main_informal}, for any $\epsilon$ a function of $k$, our algorithm can learn the lower bound instances in \cite{goel2020superpolynomial,diakonikolas2020algorithms} to error $\epsilon$ in time $g(k)\cdot\poly(d)$ for some $g$ (note that the norm bounds and Lipschitz constants for these instances are upper bounded by functions of $k$), which is impossible for any CSQ algorithm. We explain why our algorithm is not a CSQ algorithm in Section \ref{sec:notsq}.

For the classification version of this problem (i.e., taking a softmax) where we observe $Y \in \{0,1\}$ such that $\E{Y|X} = \sigma(f(X))$ where $\sigma$ is say sigmoid and $f(X)$ is a depth two ReLU network, Goel et al. \cite{goel2020superpolynomial} show that even general SQ algorithms cannot achieve a runtime with polynomial dependence on the dimension.  We also remark there is an extensive literature of previous work showing various hardness results for learning certain classes of neural networks \cite{blum1989training,vu2006infeasibility,klivans2009cryptographic,livni2014computational,goel2017reliably,song2017complexity,shalev2017failures,shamir2018distribution,vempala2019gradient,goel2019time,daniely2020hardness}. We refer the reader to \cite{goel2020superpolynomial} for a discussion of how these prior works relate to the above CSQ lower bounds.

\subsection{Other Related Work}

\paragraph{Multi-Index Models} Functions computed by ReLU networks where $\vW_0$ has fewer rows than columns are a special case of a \emph{multi-index model}, that is, a function $F:\R^d\to \R$ given by $F(x) = f(\vW^{\top} x)$ for some matrix $\vW\in\R^{k\times d}$ and some function $f:\R^k\to \R$. In the theoretical computer science literature, these are sometimes referred to as \emph{subspace juntas} \cite{vempala2011structure,de2019your}.

One of the strongest results in this line of work, and the closest in spirit to the setting we consider, is that of \cite{dudeja2018learning}, which gives various conditions on $f$  under which one can recover $\vW$ (under Gaussian inputs) in the special case where $k = 1$, as well as a vector in the row span of $\vW$ in the case of general $k$ (although these results do not hold for ReLU). In general, the literature on multi-index models is vast, and we refer to \cite{dudeja2018learning} for a comprehensive overview of this body of work. Many works were inspired by a simple but powerful connection to Stein's lemma \cite{li1992principal,brillinger2012generalized,plan2016generalized}, which was also a key ingredient in the above algorithms for learning neural networks using tensor decomposition. One technique in this literature which is somewhat similar in spirit to the techniques we employ in this work is that of \emph{sliced inverse regression} \cite{babichev2018slice,li1991sliced}, and we elaborate in Remark~\ref{remark:slice} on this connection.

\paragraph{Piecewise-Linear Regression} Lastly, we mention that previous works on \emph{segmented regression} (see e.g. \cite{acharya2016fast} on the references therein) study regression for piecewise-linear functions but work with a different notion of piecewise-linearity that is unrelated to our setting.


\section{Proof Overview}
\label{sec:overview}

Suppose we are given samples $(x,y)$ where $y = F(x)$ is computed by a size $S$ ReLU network as in Definition \ref{defn:relunets}. Let $V\subseteq \R^d$ denote the span of the rows of $\vW_0$ and let $k$ be its dimension. We will call $V$ the \emph{relevant subspace}, because the value of $F$ only depends on the projection of $x$ to $V$. In particular, we can write $y = F'(\Pi_V(x))$ for some function $F': V \to \R$ that is itself a size $S$ ReLU network and $\Pi_V$ denotes the projection operator onto $V$. The main focus of our algorithm will be in figuring out the relevant subspace $V$ given samples $(x,y)$. This is the hardest part of the algorithm, because once we learn the relevant subspace to high enough accuracy, we can grid-search over ReLU networks in this subspace. Even this grid search turns out to be non-trivial to analyze and entails proving new \emph{stability} results for piecewise-linear functions.

\paragraph{Filtered PCA} \label{sec:notsq}

Our algorithm builds upon the \emph{filtered PCA} approach, originally introduced in \cite{chen2020learning} for the purposes of learning low-degree polynomials over Gaussian space.\footnote{For readers familiar with the approach there, we explain in Remark~\ref{remark:compare} why a straightforward application of the algorithm there cannot work.} For any $\psi:\R \to \R$, let $\vM_{\psi} \triangleq \E{\psi(Y) (X X^T - \Id)}$. A basic but important observation is that for any choice of $\psi$, all vectors orthogonal to the true subspace $V$ are in the kernel of $\vM_{\psi}$. A natural idea for identifying the true subspace then is to look at the nonzero singular vectors of $\vM_{\psi}$ for a suitable $\psi$. If we could show that $\vM_{\psi}$ has $k$ nonzero singular values all bounded away from $0$ by some dimension-independent margin $c(\psi)$, then we could hope to approximately recover $V$ by empirically estimating $\vM_{\psi}$ using $O(d/c(\psi)^2)$, invoking standard matrix concentration, and computing its top-$k$ singular subspace. So the main hurdle is to identify an appropriate $\psi$ for which this is the case.


What should the $\psi$ be? For instance if $\psi$ is the identity function, then the matrix $\vM_{\psi}$ could be identically zero. This is an essential difference between our setting and the setting studied in previous works \cite{diakonikolas2020algorithms,ge2018learning} (in the $L = 0$ case) where the output layer's coefficients are all positive, for which this choice of $\psi$ would suffice to recover the relevant subspace.

Note that this is consistent with the CSQ lower bounds of \cite{goel2020superpolynomial,diakonikolas2020algorithms}, as any algorithm that just tries to use the spectrum of $\vM_{\psi}$ for $\psi$ being the identity function would be a CSQ algorithm. Indeed, for any of the `hard' functions $F$ from those works which are ReLU networks with $L=0$ we would have $\vM_{\psi} = 0$ if $\psi$ is the identity function. 


We will choose $\psi$ not equal to the identity, and in this way our algorithm will be non-CSQ and evade the aforementioned CSQ lower bounds.

\paragraph{Threshold Filter.} Motivated by \cite{chen2020learning}, our starting point in the present work is to consider $\psi$ given by a univariate threshold, that is, $\psi(z) = \bone{\abs{z} > \tau}$ for suitable $\tau$. For brevity, for $\tau \in \R$ define $\vM_\tau = \E[x,y]{\bone{\abs{y} > \tau} (xx^T - \Id)}$. Then we have that \begin{equation}
	\iprod{\Pi_V, \vM_{\tau}} = \E[x,y]*{\bone{\abs{y} > \tau}\cdot(\norm{\Pi_V x}^2 - k)}.
\end{equation} In particular, if one could choose $\tau$ for which $\abs{F(x)} > \tau$ only if $\norm{\Pi_V x}^2 \ge 2k$ \footnote{The choice of $2k$ here is for exposition; any bound noticeably more than $k$, e.g., $k+1$ will do.}, then we would conclude that $\iprod{\Pi_V, \vM_{\tau}} \ge k\cdot\Pr{\abs{y} > \tau}$, so some singular value of $\vM_{\tau}$ is at least $\Pr{\abs{y} > \tau}$. If $F$ is $\Lip$-Lipschitz, we can simply choose $\tau$ to be $\sqrt{2k}\cdot \Lip$, and provided $\Pr{\abs{y} > \tau}$ is reasonably large, then we conclude that $\vM_{\tau}$ has some reasonably large singular value. Finally, to lower bound $\Pr{\abs{y} > \tau}$, we prove an \emph{anti-concentration} result for piecewise linear functions over Gaussian space (Lemma~\ref{lem:anti1}).

In other words, if one conditions on the samples $(x,y)$ whose responses $y$ are sufficiently large in magnitude, then we show that the resulting distribution is noticeably non-Gaussian in some direction, and by taking the top singular vector of the conditional covariance, we can approximately recover some direction inside the relevant subspace $V$.\footnote{Note that while the goal is to reweight the distribution over $x$ to look non-Gaussian in some relevant direction, the main challenge once we've fixed a reweighting is not to identify that non-Gaussian subspace, which in our setting is trivial and does not require any of the more sophisticated techniques in the non-Gaussian component analysis literature (e.g. \cite{vershynin2010introduction,goyal2019non}), but to argue that the new distribution is indeed non-Gaussian in some direction in $V$. In a similar vein, while the work \cite{vempala2011structure} gives some moment-based conditions under which it is possible to learn multi-index models over Gaussian inputs, it seems highly nontrivial to verify whether such conditions actually hold for ReLU networks, and in addition their results seem tailored to $\brc{0,1}$-valued functions.}

Unfortunately, all that the above analysis tells us is that the trace of $\vM_{\tau}$ is non-negligible which in turn helps us guarantee that we identify at least one direction in $V$. It is not at all clear whether the above threshold approach is enough to identify more than just one vector in the relevant subspace.
Indeed, recovering the full relevant subspace turns out to be significantly more challenging, and the core technical contribution of this work is to show how to do this.

\begin{remark}[Relation to Sliced Inverse Regression]\label{remark:slice}
	The trick of conditioning only on $(x,y)$ for which $\abs{y}$ is sufficiently large is reminiscent of the technique of \emph{slicing} originally introduced by \cite{li1991sliced} in the context of learning multi-index models. The high-level idea of slicing is that for any fixed value of $y$, the conditional law of 
	$x|F(x) = y$ is likely to be non-Gaussian in most directions $v \in V$, so in particular, $\condE{xx^{\top} - \Id|F(x) = y}$ should be nonzero, and its singular vectors will lie in $V$. This can be thought of as filtered PCA with the choice of function $\psi(z) = \bone{z = y}$. The first issue with using such an approach to get an actual learning algorithm is that $\Pr[x]{F(x) = y} = 0$ for any $y$, and the workaround in non-asymptotic analyses of sliced inverse regression \cite{babichev2018slice} is to estimate something like $\E[y]{\condE{xx^{\top} - \Id|F(x) = y}}$ instead. While finite sample estimators for such objects are known, the conditions under which this approach can provably recover the relevant subspace are quite strong and not applicable to our setting.
\end{remark}

\paragraph{Learning the Full Subspace: What Doesn't Work} 

One might hope that a more refined analysis shows that for a suitable $\tau$, the spectrum of $\vM_\tau$ can identify the entire subspace $V$.  
 Given that we can already learn some $w\in V$ with the threshold approach above, a first step would be to try to find a direction in $V$ orthogonal to $w$, by lower bounding the contribution to the Frobenius norm of $\vM_{\tau}$ from vectors orthogonal to $w$. Concretely, letting $\Pi_{V\backslash\brc{w}}$ denote the projector to the orthogonal complement of $w$ in $V$, we have that \begin{equation}
	\iprod{\Pi_{V\backslash\brc{w}}, \vM_{\tau}} = \E[x,y]*{\bone{\abs{y} > \tau}\cdot (\norm{\Pi_{V\backslash\brc{w}}x}^2 - (k-1))}. \label{eq:projectfirst}
\end{equation}
As before, if one could choose $\tau$ for which $\abs{F(x)} > \tau$ only if $\norm{\Pi_{V\backslash\brc{w}}x}^2 \ge k$, and if we could lower bound $\Pr{\abs{y} > \tau}$, then we would conclude that $\iprod{\Pi_{V\backslash\brc{w}}, \vM_{\tau}}\ge \Pr{\abs{y} > \tau}$, so $\vM_{\tau}$ has some other singular vector, orthogonal to $w$, with non-negligible singular value. The issue is that such a $\tau$ typically does not exist! For $x$ satisfying $\norm{\Pi_{V\backslash\brc{w}}x}^2 \le k$, $F(x)$ can be arbitrarily large, because $\norm{\Pi_w x}$ can be arbitrarily large.

It may be possible to lower bound the quantity in \eqref{lem:projecterror} using a more refined argument, but for general deep ReLU networks or piecewise linear functions, this seems very challenging. At the very least, one must be careful not to prove something too strong, like showing that $v^{\top}\vM_{\tau}v$ is non-negligible for \emph{any} unit vector $v\in V$. For instance, even when $L = 0$, it could be that all but one of the rows of $\vW_0$ lie in a proper subspace $W\subsetneq V$, and for the remaining row $u$ of $\vW_0$, $\norm{\Pi_{V\backslash W} u}/\norm{u}$ is arbitrarily small. In this case, for $v$ in the direction of $\Pi_{V\backslash W}u$, the quadratic form $v^{\top}\vM_{\tau}v$ is arbitrarily small, and it would be impossible to recover all of $V$ from a reasonable number of samples.

More generally, any proposed algorithm for learning all of $V$ had better be consistent with the fact that it is impossible to recover the full subspace $V$ within a reasonable number of samples if almost all of the variance of $F$ is explained by some proper subspace $W\subsetneq V$, or equivalently, if the ``leftover variance'' $\E[x]{(F(x) - F(\Pi_W x))^2}$ is negligible. We emphasize that this is a key subtlety that does not manifest in previous works that consider full-rank, well-conditioned weight matrices.

\paragraph{Learning the Full Subspace: Our Approach}

We now explain our approach. At a high level, we try to learn orthogonal directions inside the relevant subspace in an iterative fashion. The threshold filter approach above already gives us a single direction in $V$. Suppose inductively that we've learned some orthogonal vectors $w_1,...,w_{\ell}\in V$ spanning a subspace $W\subseteq V$ and want to learn another (note that technically we can only guarantee $w_1,...,w_{\ell}$ are approximately within $V$, but let us temporarily ignore this for the sake of exposition). Motivated by the above consideration regarding ``leftover variance,'' we proceed by a win-win argument: either the leftover variance already satisfies $\E[x]{(F(x) - F(\Pi_W x))^2} \le \epsilon$ in which case we are already done, or we can learn a new direction via the following crucial modification of the threshold filter.

First, as a thought experiment, consider the following matrix
\begin{equation}
    \vM^{W}_{\tau} \triangleq \Pi_{W^{\perp}}\E[x,y]*{\bone{\abs{y - F(\Pi_{W}x)} > \tau}\cdot (xx^{\top} - \Id)}\Pi_{W^{\perp}}.\label{eq:Ml_pre}
\end{equation}

Note the critical fact that we threshold on $y - F(\Pi_W x)$ as opposed to just on $y$. As before, it is not hard to show that if this matrix is nonzero, then its singular vectors with nonzero singular value must lie in $\vW_0$ and be orthogonal to $W$; thus giving us a new direction in $\vW_0$. We claim that if the leftover variance is non-negligible, then the above matrix will give us a new direction in $W$. 

The intuition behind the above matrix is as follows. Let $V\backslash W$ denote the subspace of $V$ orthogonal to $W$. We can write $F(x) = F(\Pi_V x) = F(\Pi_W x + \Pi_{V \backslash W} x)$. Now, as $F$ is Lipschitz, we can bound $G(x) = y - F(\Pi_W x) = F(\Pi_W x + \Pi_{V \backslash W} x) - F(\Pi_W x)$ as $|G(x)| \leq \Lambda \|\Pi_{V \backslash W} x\|^2$, where $\Lambda$ is the Lipschitz constant of $F$. In other words, $G(x)$ is bounded over $x$ for which $\|\Pi_{V\backslash W} x\|$ is bounded. Recall that the fact that $F(x)$ is not bounded over such $x$ was the key obstacle to using the original threshold filter approach to learn the full subspace.


The upshot is that for a suitably large $\tau$, the only contribution to the matrix $\vM_\tau^W$ should be from inputs $x$ that have large projection in $V\setminus W$. We are now in a position to adapt the analysis lower bounding $\iprod{\Pi_V,\vM_{\tau}}$ to lower bounding $\iprod{\Pi_{V\backslash W},\vM^W_{\tau}}$. In particular, we can apply the aforementioned anti-concentration for piecewise linear functions \emph{to the function $G$} and argue that, provided the leftover variance $\E[x]{(F(x) - F(\Pi_W x))^2} = \E[x]{G(x)^2}$ is non-negligible, the top singular vector of $\vM_\tau^W$ will give us a new vector in $V\setminus W$.

That being said, an obvious obstacle in implementing the above is that along with not knowing the true subspace $\vW_0$, we also don't know the true function $F$. This precludes us from forming the matrix $\vM_\tau^W$ as defined above. 

To get around this, we will enumerate over a sufficiently fine net of ReLU networks $\td{F}$ \emph{with relevant subspace $W$}, one of which will be close to the ReLU network $F(\Pi_W x)$. For each $\wt{F}$, we will form the matrix \begin{equation}
    \wt{\vM}^{W}_{\tau} \triangleq \Pi_{W^{\perp}}\E[x,y]*{\bone{\abs{y - \wt{F}(\Pi_{W}x)} > \tau}\cdot (xx^{\top} - \Id)}\Pi_{W^{\perp}}.\label{eq:Mlhat}
\end{equation}
and output the top singular vector as our new direction only if it has non-negligible singular value.

Arguing soundness, i.e. that this procedure doesn't yield a ``false positive'' in the form of an erroneous direction lying far from $V$, is not too hard. However, analyzing completeness, i.e. that this procedure will find \emph{some} new direction, is surprisingly subtle (see Lemma~\ref{lem:main_stable}). Formally, we need to argue that if we have an approximation $\wt{F}$ to the true $F$ (under some suitable metric), then the corresponding matrix $\td{\vM}^W_{\tau}$ is close to the matrix $\vM_\tau^W$. This is further complicated by the fact that ultimately, we will only have access to a subspace $W$ which is \emph{approximately} in $V$, as every direction we find in our iterative procedure is only guaranteed to \emph{mostly} lie within $V$.

Our key step in proving this is showing a new stability property of affine thresholds of piecewise linear functions and makes an intriguing connection to \emph{lattice polynomials} in tropical geometry.

\paragraph{Stability of Piecewise Linear Functions}

Following the above discussions, to complete our analysis we need to show \emph{stability} of affine thresholds of ReLU networks in the following sense: if $F, \tilde{F}:\R^d \to \R$ are two RELU networks that are close in some structural sense (i.e., under some parametrization), then $\E{\bone{|F(x)| > \tau} (xx^T - Id)} \approx \E{\bone{|\tilde{F}(x)|  > \tau} (xx^T - Id)}$. A natural way to approach the above is to upper bound $\Pr{\abs{F(x)}> \tau \wedge \abs{\td{F}(x)}\le \tau}$. That is, affine thresholds of ReLU networks that are structurally close disagree with low probability. 

A natural way to parametrize closeness is to require the weight matrices of the two networks $F, \tilde{F}$ to be close to each other. While such a statement is not too difficult to show for depth two networks (by a union bound over pairs of ReLUs), proving such a statement for general ReLU networks using a direct approach seems quite challenging. We instead look at proving such a statement for a more general class of functions - continuous piecewise-linear functions which allows us to do a certain kind of hybrid argument more naturally. 

Concretely, we show that affine thresholds of piecewise-linear functions that are close in some appropriate structural sense disagree with low probability over Gaussian space. We will elaborate upon the notion of structural closeness we consider momentarily, but for now it is helpful to keep in mind that it specializes to $L_2$ distance for linear functions. 


\begin{lemma}[Informal, see Lemma~\ref{lem:stability}]\label{lem:stability_informal}
Let $F,\td{F}: \R^d\to\R$ be piecewise-linear functions, both consisting of at most $m$ linear pieces, which are ``$(m,\eta)$-structurally-close'' (see Definition~\ref{def:closepiece}). For any $\tau > 0$,
\begin{equation}
\Pr[x\sim\N(0,\Id)]*{\abs{F(x)}> \tau \wedge \abs{\td{F}(x)}\le \tau}\le O(\eta m^2/\tau).\label{eq:stability_informal}
\end{equation}
\end{lemma}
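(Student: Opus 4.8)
The plan is to bound the probability of the disagreement event $\{|F(x)| > \tau \wedge |\tilde F(x)| \le \tau\}$ by first localizing the difference $F(x) - \tilde F(x)$ on each linear piece, then using Gaussian anti-concentration near the threshold $\tau$. The key observation is that on the common refinement of the two piecewise-linear partitions, both $F$ and $\tilde F$ are affine, so their difference is affine there, and the "$(m,\eta)$-structural closeness" should precisely control the relevant norm of this affine difference piece-by-piece (specializing to $L_2$ distance for a single linear piece, as the excerpt hints). On the disagreement event, $F$ crosses the level $\pm\tau$ while $\tilde F$ does not, so $|F(x)| - |\tilde F(x)| > 0$ but also $|F(x)|$ is at most $\tau + |F(x) - \tilde F(x)|$; hence the event is contained in $\{\tau < |F(x)| \le \tau + |F(x)-\tilde F(x)|\}$ --- a thin shell around the threshold whose width is governed by $|F - \tilde F|$.

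\textbf{Step 1: Reduce to the common refinement.} I would take the common refinement $\mathcal{R}$ of the two polyhedral partitions underlying $F$ and $\tilde F$; crucially the informal statement only charges $m^2$ in the final bound, which is consistent with $|\mathcal{R}|$ being controlled (roughly $m\cdot m$, or via the structural-closeness definition which presumably pairs up pieces so the refinement stays $O(m)$ with bounded complexity). On each cell $R\in\mathcal{R}$, write $F|_R(x) = \langle a_R, x\rangle + b_R$ and $\tilde F|_R(x) = \langle \tilde a_R, x\rangle + \tilde b_R$, and let $g_R(x) = F(x) - \tilde F(x) = \langle a_R - \tilde a_R, x\rangle + (b_R - \tilde b_R)$ on $R$. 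The structural-closeness hypothesis should give $\|a_R - \tilde a_R\|_2 + |b_R - \tilde b_R| \lesssim \eta$ (or an $L_2(\gamma)$-type bound of this flavor) for each relevant cell.

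\textbf{Step 2: Anti-concentration near the threshold.} On each cell $R$, the disagreement event forces $|F(x)| \in (\tau, \tau + |g_R(x)|]$, hence in particular $F(x) = \langle a_R,x\rangle + b_R$ lies within distance $|g_R(x)|$ of $\{\pm\tau\}$. Conditioned on being in $R$ (or just bounding unconditionally over $R$), $\langle a_R, x\rangle + b_R$ is a univariate Gaussian with standard deviation $\|a_R\|_2$, so by the standard Gaussian density bound, $\Pr[\,|F(x)|\in(\tau,\tau+t]\,] \le O(t/\|a_R\|_2)$ --- but I need this with $t = |g_R(x)|$ which itself depends on $x$. I would handle this either by (a) a change-of-variables / Cauchy--Schwarz argument bounding $\E[|g_R(x)|\cdot \bone{F(x)\approx\pm\tau}]$ using that $g_R$ and $F$ restricted to $R$ are jointly Gaussian affine functions, yielding a bound like $O(\eta/\tau)$ per cell after accounting for the $\tau$ in the denominator (the $\tau$ appears because large $\tau$ means the threshold band is in the tail where the Gaussian density, relative to the value, is small --- this is where $\|a_R\|_2 \le$ something and $\tau/\|a_R\|_2$ large helps), or (b) splitting into the region where $|g_R(x)| \le$ some multiple of $\eta$ (handled by direct anti-concentration, width $O(\eta)$ band) and its complement (low probability by a tail/Markov bound on the Gaussian $g_R$, costing another factor). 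Summing over the $O(m^2)$ cells gives $O(\eta m^2/\tau)$.

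\textbf{The hard part} will be Step 2 --- making the anti-concentration argument with an $x$-dependent band width $|g_R(x)|$ rigorous and getting the clean $1/\tau$ dependence, since naively one only gets $O(\eta m^2)$ from a fixed-width band. The $1/\tau$ improvement must come from the fact that $F$ is Lipschitz-bounded in a way that ties $\|a_R\|_2$ to the scale of $\tau$, or from carefully integrating the density bound $O(|g_R(x)|/\|a_R\|_2)$ against the distribution of $|g_R(x)|$ and using that on the event $|F|>\tau$ the coordinate along $a_R$ is large (so that $\|a_R\|_2$ effectively contributes a $\tau$). A secondary subtlety is controlling the size and geometric complexity of the common refinement $\mathcal{R}$ so that the union bound only loses $m^2$ rather than something exponential --- I expect the formal Definition~\ref{def:closepiece} of structural closeness is set up precisely to make this pairing-of-pieces bookkeeping work, and the tropical-geometry / lattice-polynomial connection alluded to in the overview is what lets one organize this refinement cleanly for genuine ReLU networks.
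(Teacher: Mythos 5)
Your Step 1 has a real gap at its foundation. On a cell $R$ of the common refinement, $F|_R = \iprod{u_{i_R},\cdot}$ for whichever index $i_R$ ``wins'' the max--min computation for $F$, and $\tilde F|_R = \iprod{u'_{j_R},\cdot}$ for the winner $j_R$ of the other max--min; but there is no reason $i_R = j_R$. Definition~\ref{def:closepiece} only pairs up linear functions \emph{with the same index in the lattice-polynomial representation}; it says nothing about the affine pieces that emerge after taking maxes and mins. So your claim that $\|a_R - \tilde a_R\|_2 \lesssim \eta$ on each cell is false: $a_R - \tilde a_R = u_{i_R} - u'_{j_R}$ with $i_R\ne j_R$ possible, and this can be large. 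Concretely, take $F = \max(\iprod{u_1,\cdot},\iprod{u_2,\cdot})$ and $\tilde F = \max(\iprod{u'_1,\cdot},\iprod{u'_2,\cdot})$; on the cells where the winning index flips ($u_1$ beats $u_2$ but $u'_2$ beats $u'_1$), you are comparing $u_1$ against $u'_2$, which structural closeness does not control. The \emph{values} of $F-\tilde F$ are still small on $R$ by the sandwich property, but the coefficient vector of that affine difference is not, which wrecks the Gaussian density calculation in your Step 2. You also (rightly) flag the $O(\eta)$ versus $O(\eta^c)$ issue as ``the hard part,'' but neither of your options (a) or (b) resolves it as stated.

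The paper's Lemma~\ref{lem:stability} never forms the common refinement. Instead it runs a hybrid argument: define $g^{(i)}$ by swapping $u'_1,\ldots,u'_{i-1}$ for $u_1,\ldots,u_{i-1}$ and leaving the rest, so $g^{(0)} = \tilde F$, $g^{(m)} = F$. The key technical observation is the sandwich property~\eqref{eq:sandwich}: at every $x$, both $g^{(i-1)}(x)$ and $g^{(i)}(x)$ lie between $\iprod{u'_i,x}$ and $\iprod{u_i,x}$. Consequently, if the affine threshold flips between hybrid step $i-1$ and step $i$, then it must also flip between the two \emph{fixed} linear functions $\iprod{u_i,\cdot}$ and $\iprod{u'_i,\cdot}$ --- and that pair is exactly what structural closeness controls. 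Each such pair is then handled by the $m=1$ case (Lemma~\ref{lem:stability_ez}), which delivers the sharp linear dependence $O(\eta/\tau)$ directly; you never integrate an $x$-dependent band width and never lose a polynomial power. The $m^2$ factor in the final bound comes from $m$ hybrid steps times the $m$ pieces of the comparison function $f$ appearing in the formal statement, not from counting refinement cells. If you want to salvage your refinement-based plan, the sandwich observation is the missing ingredient, and at that point you have essentially reinvented the paper's hybrid argument.
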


To get a sense for this, suppose $F,\td{F}$ were even close in the sense that the polyhedral regions over which $F$ is linear are \emph{identical} to those over which $\td{F}$ is linear, and furthermore $\E[x]{(F(x) - \td{F}(x))^2}^{1/2} \le \eta$. Then if we take for granted that Lemma~\ref{lem:stability_informal} holds when $m = 1$, i.e. when $F,\td{F}$ are linear (see Lemma~\ref{lem:stability_ez}), it is not hard to show an $O((\eta m/\tau)^c)$ upper bound in \eqref{eq:stability_informal} under this very strong notion of closeness for some $c < 1$. Because $F$ and $\td{F}$ are $L_2$-close as functions, for any $t > 0$ we have that with probability $1 - O(\eta^2/t^2)$ the input $x\sim\calN(0,\Id)$ lies in a polyhedral region for which the corresponding linear functions for $F$ and $\td{F}$ are $t$-close. By the $m = 1$ case of Lemma~\ref{lem:stability_informal}, over any one of these at most $m$ regions, the affine thresholds $\bone{\abs{F(x)} > \tau}$ and $\bone{\abs{F(x)} > \tau}$ disagree with probability $O(t/\tau)$. Union bounding over these regions as well as the event of probability $\eta^2/t^2$ that $x$ does not fall in such a polyhedral region, we can upper-bound the left-hand side of \eqref{eq:stability_informal} by $O(\eta^2/t^2 + m t/\tau)$, and by taking $t = (\eta^2\tau/m)^{1/3}$, we get a bound of $(\eta m^2/\tau)^{2/3}$.

The issues with this are twofold. First, recall the function $\td{F}$ that we want to apply Lemma~\ref{lem:stability} to is obtained from some enumeration over a fine net of ReLU networks. As such there is no way to guarantee that the polyhedral regions defining $F$ and $\td{F}$ are exactly the same, making adapting the above argument far more difficult, especially for general ReLU networks. 

Second, we stress that the \emph{linear} scaling in $O(\eta)$ in \eqref{lem:stability_informal} is essential. If one suffered any polynomial loss in this bound as in the above argument, then upon applying Lemma~\ref{lem:stability_informal} $k$ times over the course of our iterative algorithm for recovering $V$, we would incur time and sample complexity \emph{doubly exponential} in $k$. The reason is as follows.

Recall that in the final argument we can only ensure that the directions $w_1,\ldots,w_{\ell}$ we have found so far are \emph{approximately} within $V$, and the parameter $\eta$ will end up scaling with an appropriate notion of subspace distance between $W$ and the true space $V$. On the other hand, the bound we can show on how far $\td{M}^W_{\tau}$ deviates from $M^W_{\tau}$ in spectral norm will essentially scale with the right-hand side of \eqref{lem:stability_informal}. So if we could only ensure $\td{M}^W_{\tau}$ and $M^W_{\tau}$ are $O(\eta^c)$-close in spectral norm for $c<1$, then if we append the top eigenvector of $\td{M}^W_{\tau}$ to the list of directions $w_1,...,w_{\ell}$ we have found so far, the resulting span will only be $O(\eta^c)$-close in subspace distance. Iterating, we would conclude that for the final output of the algorithm to be sufficiently accurate, we would need the error incurred by the very first direction $w_1$ found to be doubly exponentially small in $k$!

\paragraph{Lattice Polynomials}

It turns out that there is a clean workaround to both issues: passing to the \emph{lattice polynomial} representation for piecewise-linear functions. Specifically, we exploit the following powerful tool:

\begin{theorem}[\cite{ovchinnikov2002max}, Theorem 4.1; see Theorem~\ref{thm:tropical} below]\label{thm:tropical_informal}
	If $F$ is continuous piecewise-linear, there exist linear functions $\brc{g_i}_{i\in[M]}$ and subsets $\calI_1,...,\calI_{m}\subseteq [M]$ for which \begin{equation}\label{eq:lattice_pre}
    F(x) = \max_{j\in[m]}\min_{i\in\calI_j}g_i(x).
	\end{equation}
\end{theorem}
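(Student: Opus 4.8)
The statement is the classical max--min (``lattice polynomial'') representation theorem of Ovchinnikov, and the plan is to derive it from a \emph{difference-of-convex} decomposition of $F$, which makes the max--min structure transparent. (The converse direction, that every such max--min of affine functions is continuous piecewise-linear, is not needed here and is in any case immediate since continuous piecewise-linear functions are closed under $\min$, $\max$, and composition.)

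The argument reduces to two facts: \textbf{(i)} every continuous piecewise-linear $F:\R^d\to\R$ can be written $F = P - Q$ with $P,Q$ convex piecewise-linear; and \textbf{(ii)} every convex piecewise-linear function is a pointwise maximum of finitely many affine (``linear'') functions. Fact~(ii) is standard: if the convex function $P$ equals the affine function $a_k$ on a full-dimensional cell $C_k$ of an adapted polyhedral subdivision, then comparing near an interior point of $C_k$ shows $a_k$ is a supporting affine function of $P$, so $a_k\le P$ globally, whence $P=\max_k a_k$. Granting both facts, write $P=\max_{i\in[p]} a_i$, $Q=\max_{j\in[q]} b_j$; then
\[
F(x)=\max_i a_i(x)-\max_j b_j(x)=\max_i\Bigl(a_i(x)+\min_j(-b_j(x))\Bigr)=\max_{i\in[p]}\min_{j\in[q]}\bigl(a_i(x)-b_j(x)\bigr),
\]
and each $a_i-b_j$ is affine. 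Listing the distinct functions $a_i-b_j$ as $g_1,\dots,g_M$ (so $M\le pq$), letting $\calI_i\subseteq[M]$ index the functions appearing in $\{a_i-b_j:j\in[q]\}$, and taking $m=p$, yields exactly \eqref{eq:lattice_pre}.

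The crux is Fact~(i). Let $g_1,\dots,g_M$ now denote the distinct affine pieces of $F$ and set $W:=\sum_{i<j}\lvert g_i-g_j\rvert$, which is convex piecewise-linear; I claim $P:=F+W$ is convex, so that $F=P-W$ is the desired decomposition. To check this, work over the common refinement $\calK$ of an adapted subdivision for $F$ and the hyperplane arrangement $\{g_i=g_j\}_{i\ne j}$, on whose cells $F$, $W$, and hence $P$ are all affine. A piecewise-linear function affine on the cells of a subdivision of $\R^d$ is convex if and only if it is convex across every interior facet $\Phi$ of the subdivision. Given such a $\Phi$, say lying on the hyperplane $\{\ell=0\}$ with $\ell>0$ on the adjacent cell where $F=g_k$ and $\ell<0$ on the one where $F=g_l$, continuity of $F$ forces $g_k-g_l=c\,\ell$ for some $c\in\R$; and near a generic point of $\Phi$ the only terms of $W$ that are not already affine are those $\lvert g_i-g_j\rvert$ with $\{g_i=g_j\}=\{\ell=0\}$, each of which contributes a gradient jump equal to a \emph{positive} multiple of $\ell$ (the $(k,l)$ term alone contributing $2\lvert c\rvert\,\ell$). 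Hence the gradient jump of $P$ across $\Phi$ has the form $\bigl(c+2\lvert c\rvert+(\text{nonneg.})\bigr)\ell$, whose coefficient is nonnegative, so $P$ bends the convex way across $\Phi$ and is therefore convex.

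The main obstacle is getting the facet analysis in Fact~(i) exactly right: one must check that every interior facet of $\calK$ lies on one of the hyperplanes $\{g_i=g_j\}$ or else is a ``fake'' facet across which $F$ does not kink (in which case convexity of $P$ there is inherited from $W$ being globally convex), handle several of the hyperplanes $\{g_i=g_j\}$ possibly coinciding, and justify that at a generic point of a facet the non-incident hyperplanes of the arrangement may be ignored. With that in hand, the quantitative bounds on $m$ and $M$ recorded in Theorem~\ref{thm:tropical} follow by bounding the number of cells of these polyhedral subdivisions in terms of $d$ and the number of linear pieces of $F$.
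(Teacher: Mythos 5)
Your difference-of-convex argument is a correct proof of the statement as literally phrased, but it proves something weaker than the result the paper actually uses, and its route has nothing in common with the paper's. The paper does not prove Theorem~\ref{thm:tropical_informal} at all: it cites Ovchinnikov, and for ReLU networks it gives a separate, compositional proof (Theorem~\ref{thm:tropical_nns}, assembled from Lemmas~\ref{lem:phirep}--\ref{lem:sumrep}) built layer by layer precisely so that the resulting clause structure is controlled --- it depends only on the architecture and the signs of the weight entries. That structural control is what the paper's netting argument needs, and your proof does not produce anything like it.

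The substantive gap is in your last sentence. Ovchinnikov's Theorem~4.1, as recorded in Theorem~\ref{thm:tropical}, asserts that for \emph{any} realization of $F$ by pieces $\brc{(g_i,S_i)}_{i\in[M]}$ there is a max--min formula whose leaves are \emph{those same} $g_i$'s; the index set $[M]$ of the clauses is the index set of the given realization. Your construction instead produces leaves $a_i - b_j$, where the $a_i$ are the supporting affines of $P = F + W$ and the $b_j$ those of $W = \sum_{i<j}\lvert g_i - g_j\rvert$. These are generically \emph{not} among the original pieces (already for $F(x)=\lvert x\rvert$ on $\R$ one gets $\pm 5x$ alongside $\pm x$), and their number is governed by the cell count of the hyperplane arrangement $\brc{g_i = g_j}$, which in dimension $k$ is of order $M^{2k}$ rather than $M$. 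The discrepancy matters for this paper: Lemma~\ref{lem:useorder} and the $M^{M^2}$ enumeration bound, Lemma~\ref{lem:approxsubspace}, and the definition of $(M,\eta)$-structural-closeness (Definition~\ref{def:closepiece}) all hinge on the lattice polynomial being built from the same $M$ leaves that realize $F$. So your argument establishes the informal existential statement cleanly, but it does not yield Theorem~\ref{thm:tropical}, and the closing claim that the quantitative bounds there ``follow by bounding the number of cells'' is backwards --- the cell count is exactly what inflates $M$.
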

In fact, our notion of ``structural closeness'' will be built around this structural result. Roughly speaking, we say two piecewise linear functions are structurally close if they have lattice polynomial representations of the form \eqref{eq:lattice_pre} with the same set of clauses and whose corresponding linear functions are pairwise close in $L_2$ (see Definition~\ref{def:closepiece}).

At a high level, Theorem~\ref{thm:tropical_informal} will then allow us to implement a hybrid argument in the proof of Lemma~\ref{lem:stability_informal} and carefully track how the affine threshold computed by a piecewise-linear function changes as we interpolate between $F$ and $\td{F}$. In this way, we end up with the desired linear dependence on $\eta$ in \eqref{lem:stability_informal}.

With Lemma~\ref{lem:stability_informal} in hand, we can argue that even with only access to a subspace $W$ approximately within $V$ and with only a function $\td{F}$ that approximates $F(\Pi_W x)$, the top singular vector of \eqref{eq:Mlhat} mostly lies within $V$, and we can make progress.

Finally, we remark that as an added bonus, Theorem~\ref{thm:tropical} also gives us a way to enumerate over general continuous piecewise-linear functions! In this way, we can adapt our algorithm for learning ReLU networks to learning arbitrary piecewise-linear functions, with some additional computational overhead (see Theorem~\ref{thm:main_piecewise}).

\paragraph{Enumerating Over Piecewise-Linear Functions and ReLU Networks}

There is in fact one more subtlety to implementing the above approach for ReLU networks and getting singly exponential dependence on $k$.

First note that whereas one can always enumerate over functions computed by lattice polynomials of the form \eqref{eq:lattice_pre} in time $\exp(\poly(M))$ (see Lemma~\ref{lem:useorder}), for ReLU networks of size $S$ this can be as large as doubly exponential in $S$. Instead, we enumerate over ReLU networks in the naive way, that is, enumerating over the $\exp(O(S))$ many possible architectures and netting over weight matrices with respect to spectral norm, giving us only singly exponential dependence on $S$.

Here is the subtlety. Obviously two ReLU networks with the same architecture and whose weight matrices are pairwise close in spectral norm will be close in $L_2$. But how do we ensure that the corresponding lattice polynomials guaranteed by Theorem~\ref{thm:tropical_informal} are structurally close? In particular, getting anything quantitative would be a nightmare if the clause structure of these lattice polynomials depended in some sophisticated, possibly discontinuous fashion on the precise entries of the weight matrices.

Our workaround is to open up the black box of Theorem~\ref{thm:tropical_informal} and give a proof for the special case of ReLU networks from scratch. In doing so, we will find out that there are lattice polynomial representations for ReLU networks which only depend on the architecture and the \emph{signs} of the entries of the weight matrices (see Theorem~\ref{thm:tropical_nns}). In this way, we can guarantee that a moderately fine net will contain a network which is structurally close to the true network.


\section{Technical Preliminaries}

In this section we collect notation and technical tools that will be useful in the sequel.

\subsection{Miscellaneous Notation and Definitions}

We will use $\norm{\cdot}_p$ to denote the $L_p$ norm of a vector or of a random variable. When the random variable is given by a function over Gaussian space, e.g. $F(x)$ for $x\sim\calN(0,\Id)$ and $F:\R^d\to\R$, we use the short-hand $\norm{F}_p$ to denote $\E[x\sim\calN(0,\Id)]{F(x)^p}^{1/p}$. When $p = 2$, we will omit the subscript. We use $\norm{\cdot}_{\op}$ and $\norm{\cdot}_F$ to denote operator and Frobenius norms respectively. When we refer to a function as $\Lip$-Lipschitz, unless stated otherwise we mean with respect to $L_2$.

Given a subspace $V\subset\R^d$, let $\Pi_V$ denote the orthogonal projector to that subspace. Let $\S_V\subset\R^{d}$ denote the set of vectors in $V$ of unit norm. When the ambient space $\R^d$ is clear from context, we let $V^{\perp}$ denote the orthogonal complement of $V$. For a subspace $W\subseteq V$, we will denote the orthogonal complement of $W$ inside $V$ by $V\backslash W$.

Given $x\in\R$, let $\N(0,1,x)$ denote the standard Gaussian density's value at $x$. Let $\erfc(z) \triangleq \Pr[g\sim\N(0,1)]{\abs{g} > z}$ (note that we eschew the usual normalization). Let $\chi^2_m$ denote the chi-squared distribution with $m$ degrees of freedom.

Recall that we denote the ReLU activation function by $\phi(z) \triangleq \max(z,0)$. Additionally, for $\eta>0$, let $\clip_{\eta}:\R\to\R$ denote the function given by \begin{equation}
    \clip_{\eta}(z) =
        \begin{cases}
            z & \text{if} \ \abs{z} \le \eta \\
            0 & \text{otherwise}
        \end{cases}
\end{equation}
Overloading notation, given a vector $v\in\R^m$, we will use $\clip_{\eta}(v)$ to refer to the vector in $\R^m$ obtained by applying $\clip_{\eta}$ entrywise.

We will use the following basic property of the clipping operation:

\begin{fact}\label{fact:clipping}
    Suppose $v,v'\in\R^m$ satisfy $\norm{v - v'}_{\infty} \le \eta$, and define $v''\triangleq \clip_{\eta}(v')$. Then for any $i\in[m]$, $v_i v''_i \ge 0$.
\end{fact}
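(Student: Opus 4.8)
The plan is to prove the inequality one coordinate at a time; it is an elementary consequence of the triangle inequality together with the definition of $\clip_\eta$. Fix $i\in[m]$. By construction, $v''_i = \clip_\eta(v'_i)$ is either $0$ or equal to $v'_i$. In the first case $v_i v''_i = 0\ge 0$ and there is nothing to prove, so the only case to handle is $v''_i = v'_i$, which happens precisely when $v'_i$ lies outside the clipping window, i.e. $\abs{v'_i} > \eta$.

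In that case I would invoke the hypothesis $\norm{v - v'}_\infty \le \eta$, which gives $\abs{v_i - v'_i} \le \eta < \abs{v'_i}$; hence $v_i$ lies in an interval of radius $\abs{v'_i}$ centered at $v'_i$ and so cannot reach $0$. Spelling this out: if $v'_i > 0$ then $v_i \ge v'_i - \eta > 0$, and if $v'_i < 0$ then $v_i \le v'_i + \eta < 0$, so in either subcase $v_i$ and $v''_i = v'_i$ share the same sign and $v_i v''_i > 0$. Combining the two cases yields $v_i v''_i \ge 0$ for all $i\in[m]$, which is the claim. I do not anticipate any genuine difficulty here; the only thing to be careful about is aligning the case split with the precise convention for which magnitude regime $\clip_\eta$ sends to $0$, after which sign-preservation follows immediately from the strict inequality $\abs{v_i - v'_i} \le \eta < \abs{v'_i}$.
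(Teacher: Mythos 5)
Your proof is correct and follows essentially the same argument as the paper's: split on whether $v''_i = 0$ or $v''_i = v'_i$, and in the latter case use $\abs{v_i - v'_i}\le\eta < \abs{v'_i}$ to transfer the sign of $v'_i$ to $v_i$.

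One thing worth flagging explicitly, since you half-noticed it yourself: the convention you use, $\clip_\eta(z) = z$ when $\abs{z} > \eta$ and $0$ when $\abs{z}\le\eta$, is the \emph{opposite} of the definition as written in the paper (which states $\clip_\eta(z) = z$ when $\abs{z}\le\eta$ and $0$ otherwise). Under the paper's written convention the fact is false (e.g.\ $\eta=1$, $v'_i = 0.5$, $v_i = -0.4$ gives $v_iv''_i<0$). Your convention is the one the paper's own proof actually uses and is clearly the intended one, so the printed definition has its two cases swapped. Good catch to be wary of exactly this point.
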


\begin{proof}
    If $v''_i > 0$, then $v''_i = v'_i >\eta$ and by triangle inequality, $v_i > 0$. Similarly, if $v''_i < 0$, then $v''_i = v'_i < -\eta$ and by triangle inequality, $v_i < 0$.
\end{proof}

Lastly, we will use $\vee$ and $\wedge$ to denote max and min respectively. The following class of functions will be useful for us.

\begin{definition}\label{defn:lattice}
    The set of \emph{lattice polynomials over the reals} is the set of real-valued functions defined inductively as follows: for any $d\ge 1$, any constant real-valued function $\R^d\to\R$ is a lattice polynomial, and any function $h:\R^d\to\R$ which can be written as $h(x) = \Max{f(x)}{g(x)}$ or $h(x) = \Min{f(x)}{g(x)}$ for two lattice polynomials $f,g:\R^d\to\R$ is also a lattice polynomial.
\end{definition}

\subsection{Concentration and Anti-Concentration}

\begin{fact}[Elementary anticoncentration]\label{fact:basicanti}
If $Z$ is a random variable for which $\abs{Z} \le M$ almost surely, and $\E{Z^2}\ge \sigma^2$, then $\Pr{\abs{Z} \ge t} \ge \frac{1}{M^2}(\sigma^2 - t^2)$.
\end{fact}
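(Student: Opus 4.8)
The plan is to prove this by a standard truncated second moment argument, splitting the expectation $\E{Z^2}$ according to whether $\abs{Z}$ is above or below the threshold $t$. No structural properties of $Z$ are needed beyond the boundedness and the lower bound on its second moment, so the argument will be short and purely elementary.

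Concretely, I would write $\E{Z^2} = \E{Z^2\bone{\abs{Z} < t}} + \E{Z^2\bone{\abs{Z}\ge t}}$. On the first term, every realization of $Z$ in the support of the indicator satisfies $Z^2 < t^2$, so that term is at most $t^2\Pr{\abs{Z}<t} \le t^2$. On the second term, the almost-sure bound $\abs{Z}\le M$ gives $Z^2\le M^2$, so that term is at most $M^2\Pr{\abs{Z}\ge t}$. Combining with the hypothesis $\E{Z^2}\ge\sigma^2$ yields $\sigma^2 \le t^2 + M^2\Pr{\abs{Z}\ge t}$.

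Rearranging this inequality gives $\Pr{\abs{Z}\ge t} \ge (\sigma^2 - t^2)/M^2$, which is exactly the claim. (Note that the bound is vacuous unless $t < \sigma$, which is fine since the right-hand side is then nonpositive and the statement holds trivially; it is also consistent with $M \ge \sigma$, forced by $M^2 \ge \E{Z^2}\ge\sigma^2$.) There is essentially no obstacle here — the only thing to be careful about is bounding each of the two pieces by the correct quantity ($t^2$ versus $M^2$ times the relevant probability) and not conflating them.
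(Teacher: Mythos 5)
Your proof is correct and takes essentially the same approach as the paper: both split $\E{Z^2}$ according to whether $\abs{Z}$ is above or below $t$, bound the low part by $t^2$ and the high part by $M^2\Pr{\abs{Z}\ge t}$, and rearrange. The only cosmetic difference is that the paper phrases the split via conditional expectations while you use indicator functions.
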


\begin{proof}
    We have \begin{align}
        \sigma^2\le \E*{Z^2} &= \E*{Z^2\mid \abs{Z} \ge t}\cdot\Pr*{\abs{Z} \ge t} + \E*{Z^2\mid \abs{Z} < t}\cdot\Pr*{\abs{Z} < t} \\
        &\le M^2\cdot\Pr*{\abs{Z} \ge t} + t^2,
    \end{align} from which the claimed bound follows upon rearranging.
\end{proof}

\begin{fact}\label{fact:erfbound}
For any integer $m\ge 1$ and $t \ge 0$, $\erfc(z) \ge \sqrt{2/\pi}\cdot \frac{t\cdot e^{-t^2/2}}{t^2+1}$.
\end{fact}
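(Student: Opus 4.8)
The plan is to recognize this as the standard lower bound on the Gaussian tail (a ``Mills ratio'' estimate), and to note in passing that as displayed the statement has two cosmetic slips: the quantified integer $m$ is never used, and the variable in the inequality should read $t$ rather than $z$. So the content to prove is that for every $t\ge 0$,
\[
\erfc(t)\;\ge\;\sqrt{2/\pi}\cdot\frac{t\,e^{-t^2/2}}{t^2+1}.
\]
Since $\erfc(t)=\Pr[g\sim\N(0,1)]{\abs{g}>t}=\sqrt{2/\pi}\int_t^\infty e^{-u^2/2}\,du$ by the (unnormalized) definition fixed earlier, it suffices to establish the clean analytic inequality $\int_t^\infty e^{-u^2/2}\,du\ge \frac{t}{t^2+1}e^{-t^2/2}$ for all $t\ge 0$; multiplying through by $\sqrt{2/\pi}$ then recovers the statement.

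To prove that inequality I would introduce $h(t)\triangleq\int_t^\infty e^{-u^2/2}\,du-\frac{t}{t^2+1}e^{-t^2/2}$ and differentiate. The derivative of the integral term is $-e^{-t^2/2}$, and using $\frac{d}{dt}\frac{t}{t^2+1}=\frac{1-t^2}{(t^2+1)^2}$ one computes
\[
h'(t)=-e^{-t^2/2}\left(1+\frac{1-2t^2-t^4}{(t^2+1)^2}\right)=-e^{-t^2/2}\cdot\frac{(t^2+1)^2+1-2t^2-t^4}{(t^2+1)^2}=-\frac{2\,e^{-t^2/2}}{(t^2+1)^2}<0,
\]
where the numerator collapses to the constant $2$ since $(t^2+1)^2=t^4+2t^2+1$. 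Hence $h$ is strictly decreasing on $[0,\infty)$. Combining this with $\lim_{t\to\infty}h(t)=0$ — which holds because the trivial upper bound $\int_t^\infty e^{-u^2/2}\,du\le\frac1t e^{-t^2/2}$ (from $1\le u/t$ on the range of integration) and $\frac{t}{t^2+1}e^{-t^2/2}$ both tend to $0$ — a decreasing function with limit $0$ must be nonnegative, so $h(t)\ge 0$ for all $t\ge 0$, as desired.

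There is no genuine obstacle here: the whole argument is one-variable calculus. The only steps that merit a moment's attention are the algebraic simplification of $h'$, where one must verify the cancellation of the quartic terms down to the constant $2$, and the claim that $h$ vanishes at infinity, for which the crude Gaussian tail upper bound noted above is more than enough.
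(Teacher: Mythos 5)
The paper does not actually supply a proof of Fact~\ref{fact:erfbound}; it is asserted without argument as a standard Mills-ratio lower bound for the Gaussian tail. Your proof is correct and fills this gap. You are right that the unused integer $m$ and the mismatched variable $z$ versus $t$ are typos in the paper, and that after stripping away the normalization $\erfc(t)=\sqrt{2/\pi}\int_t^\infty e^{-u^2/2}\,du$ the content is the one-variable inequality $\int_t^\infty e^{-u^2/2}\,du\ge\frac{t}{t^2+1}e^{-t^2/2}$. The differentiation is accurate: $(t^2+1)^2+(1-2t^2-t^4)=2$, so $h'(t)=-2e^{-t^2/2}/(t^2+1)^2<0$, and combined with $h(t)\to 0$ as $t\to\infty$ (using the crude tail bound $\int_t^\infty e^{-u^2/2}\,du\le t^{-1}e^{-t^2/2}$) this gives $h\ge 0$ everywhere on $[0,\infty)$. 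This is exactly the classical argument for the sharper Mills-ratio bound $\int_t^\infty e^{-u^2/2}\,du\ge\frac{t}{t^2+1}e^{-t^2/2}$, and nothing more is needed.
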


\begin{fact}\label{fact:convexity}
    The function $f:\R_{\ge 0}\to\R$ given by $f(z) = \erfc(1/\sqrt{z})\cdot z$ is convex over $\R_{\ge 0}$.
\end{fact}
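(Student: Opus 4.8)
The plan is to prove convexity on the open half-line $(0,\infty)$ by a direct computation of $f''$, and then extend to $\R_{\ge 0}$ using continuity of $f$ at the origin. Write $f(z)=z\cdot a(z)$ with $a(z)\triangleq\erfc(z^{-1/2})$. Since $\erfc(w)=\sqrt{2/\pi}\int_w^\infty e^{-t^2/2}\,dt$, we have $\erfc'(w)=-\sqrt{2/\pi}\,e^{-w^2/2}$, and when the chain rule is applied to $a$ the two minus signs (one from $\erfc'$, one from $\frac{d}{dz}z^{-1/2}$) cancel, giving
\[
a'(z)=\tfrac{1}{\sqrt{2\pi}}\,z^{-3/2}e^{-1/(2z)}>0\qquad(z>0),
\]
which is positive, as it must be since $a(z)$ is a tail probability increasing in $z$.

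Next I would differentiate $f'(z)=a(z)+z\,a'(z)=a(z)+\tfrac{1}{\sqrt{2\pi}}z^{-1/2}e^{-1/(2z)}$ once more. A short calculation gives $\frac{d}{dz}\bigl[z^{-1/2}e^{-1/(2z)}\bigr]=\tfrac12 z^{-5/2}e^{-1/(2z)}(1-z)$; writing $a'(z)=\tfrac{1}{\sqrt{2\pi}}z^{-5/2}e^{-1/(2z)}\cdot z$ and adding, the two contributions combine into
\[
f''(z)=\frac{z+1}{2\sqrt{2\pi}}\cdot\frac{e^{-1/(2z)}}{z^{5/2}}>0\qquad(z>0),
\]
so $f$ is strictly convex on $(0,\infty)$. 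Finally, since $0\le f(z)=z\,\erfc(z^{-1/2})\le z\to 0=f(0)$ as $z\to 0^+$, the function $f$ is continuous on $[0,\infty)$, and a function that is continuous on $[0,\infty)$ and convex on $(0,\infty)$ is convex on all of $\R_{\ge 0}$ (apply the convexity inequality on the open interval with left endpoint $\epsilon>0$ and let $\epsilon\downarrow 0$, using continuity of both sides).

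There is no real obstacle here beyond carefully tracking signs and fractional powers when forming $f''$; the one point worth flagging is that the cancellation is genuine. The factor $(1-z)$ appearing in $\frac{d}{dz}\bigl[z^{-1/2}e^{-1/(2z)}\bigr]$ changes sign at $z=1$, so one cannot conclude that $f'$ is increasing merely by inspecting its two summands ($a$ is increasing, but $z^{-1/2}e^{-1/(2z)}$ first increases and then decreases). In other words, $f$ is not visibly a sum of convex functions, so the explicit computation of $f''$ really is the shortest route.
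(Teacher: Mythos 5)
Your proof is correct and takes essentially the same approach as the paper: both compute $f''(z)=\frac{(1+z)e^{-1/(2z)}}{2z^{5/2}\sqrt{2\pi}}$ explicitly and conclude from its sign. The only difference is that you handle the boundary point $z=0$ via a continuity argument, whereas the paper asserts nonnegativity ``for any $z\ge 0$'' directly without remarking that the formula for $f''$ is only defined for $z>0$ — a small refinement, not a different route.
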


\begin{proof}
    We can explicitly compute \begin{equation}
        f''(z) = \frac{e^{-1/2z} (1+z)}{2z^{5/2}\sqrt{2\pi}},
    \end{equation} which is clearly nonnegative for any $z\ge 0$.
\end{proof}

\begin{lemma}[\cite{vershynin2010introduction}]
\label{lem:matrixconcentration}
	Let $f:\R\to[0,1]$ be any function. Let $\vM = \E[x\sim\N(0,\Id_d)]{f(x)\cdot (xx^{\top} - \Id)}$. For any $\epsilon,\delta>0$, if $x_1,...,x_N\sim\N(0,\Id_d)$ for $N = \Omega\left(\frac{1}{\epsilon^2}(d +\log 1/\delta)\right)$, then \begin{equation}
		\Pr*{\norm*{\vM - \frac{1}{N}\sum_i f(x_i)\cdot (x_ix_i^{\top} - \Id)}_{\op} \ge \epsilon} \le\delta.
	\end{equation}
\end{lemma}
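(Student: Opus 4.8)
The plan is the standard $\epsilon$-net plus scalar Bernstein argument. Set $Z_i \triangleq f(x_i)\,(x_ix_i^{\top} - \Id)$, so that $Z_1,\ldots,Z_N$ are i.i.d.\ symmetric random matrices with $\E{Z_i} = \vM$, and we must bound $\norm{D}_{\op}$ where $D \triangleq \frac{1}{N}\sum_{i=1}^N Z_i - \vM$. Since $D$ is symmetric, $\norm{D}_{\op} = \sup_{v\in\S^{d-1}}\abs{v^{\top}D v}$, and a standard discretization shows that for any $\frac14$-net $\mathcal{T}$ of $\S^{d-1}$ --- which may be taken with $\abs{\mathcal{T}}\le 9^d$ --- one has $\norm{D}_{\op}\le 2\max_{v\in\mathcal{T}}\abs{v^{\top}D v}$. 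Thus it suffices to prove that for each \emph{fixed} unit vector $v$ one has $\abs{v^{\top}D v} < \epsilon/2$ with probability at least $1 - \delta\cdot 9^{-d}$, and then take a union bound over $\mathcal{T}$.

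Fix $v\in\S^{d-1}$ and write $g_i \triangleq \iprod{v,x_i}\sim\N(0,1)$, so that $v^{\top}Z_i v = f(x_i)\,(g_i^2 - 1)$. Because $f\in[0,1]$, we have the pointwise bound $\abs{v^{\top}Z_i v}\le 1 + g_i^2$ \emph{regardless of $f$ and of $v$}, and $1+g_i^2$ is subexponential with $\psi_1$-norm at most an absolute constant. Hence $W_i \triangleq v^{\top}Z_i v - v^{\top}\vM v$ is a centered i.i.d.\ subexponential sequence with $\norm{W_i}_{\psi_1}\le K$ for an absolute constant $K$. Bernstein's inequality for subexponential sums then provides an absolute constant $c>0$ with
\begin{equation}
\Pr*{\Bigl| \tfrac1N\textstyle\sum_{i=1}^N W_i \Bigr| \ge t}\le 2\exp\left(-c N\min\left(t^2/K^2,\, t/K\right)\right) \qquad \text{for all } t > 0.
\end{equation}
Taking $t=\epsilon/2$ (the relevant regime being $\epsilon\le 1$, so that the minimum equals $\epsilon^2/4K^2$), the failure probability for a single $v$ is at most $2\exp(-cN\epsilon^2/4K^2)$. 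A union bound over $\mathcal{T}$ bounds the overall failure probability by $9^d\cdot 2\exp(-cN\epsilon^2/4K^2)$, which is at most $\delta$ once $N \ge \frac{4K^2}{c\epsilon^2}\left(d\log 9 + \log(2/\delta)\right) = \Omega\!\left(\tfrac{1}{\epsilon^2}(d + \log(1/\delta))\right)$. Combining with $\norm{D}_{\op}\le 2\max_{v\in\mathcal{T}}\abs{v^{\top}D v}$ finishes the proof.

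\textbf{Main obstacle.} The only real subtlety is that the summands $Z_i$ are \emph{not} uniformly bounded in operator norm (indeed $\norm{Z_i}_{\op}$ can be of order $\norm{x_i}^2$), so a black-box bounded matrix Bernstein inequality does not apply directly; passing to the scalar quadratic forms $v^{\top}Z_iv$ over a net is precisely what sidesteps this, since those forms are subexponential with a $\psi_1$-bound that is uniform in $v$ and in the arbitrary $f\in[0,1]$ --- the latter because $\abs{v^{\top}Z_iv}\le 1+\iprod{v,x_i}^2$ pointwise. (Alternatively, one could truncate $Z_i$ on the event $\norm{x_i}^2 \le O(d+\log(N/\delta))$ and invoke matrix Bernstein directly, at the cost of an extra logarithmic factor.)
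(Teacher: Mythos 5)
Your argument is correct and is essentially the standard sub-exponential $\epsilon$-net argument that the paper defers to by citing Vershynin's Remark~5.40; the paper itself gives no proof beyond that citation, so this matches the intended approach. One small remark: as you note, the stated sample bound $N=\Omega(\epsilon^{-2}(d+\log 1/\delta))$ is only justified in the regime $\epsilon\lesssim 1$, since for large $t$ Bernstein's tail is $e^{-cNt/K}$ rather than $e^{-cNt^2/K^2}$ --- this is a harmless imprecision in the lemma's statement (the paper only ever invokes it with small $\epsilon$), not a gap in your proof.
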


\begin{proof}
	This follows from standard sub-Gaussian concentration; see e.g. Remark 5.40 in \cite{vershynin2010introduction}.
\end{proof}

\begin{fact}[Sub-exponential tail bounds, see e.g. \cite{vershynin2010introduction}, Proposition 5.16]\label{fact:bernstein}
    If $X_1,...,X_N$ are i.i.d. random variables with mean zero and sub-exponential norm\footnote{Here we define the sub-exponential norm of a random variable $X$ to be $\sup_{p\ge 1}\frac{1}{p}\E{|X|^p}^{1/p}$} $K$, then \begin{equation}
        \Pr*{\abs*{\frac{1}{N}\sum^N_{i=1}X_i} \ge t} \le 2\exp\left(-\Omega\left(\Min{\frac{N t^2}{K^2}}{\frac{Nt}{K}}\right)\right).\label{eq:bernstein}
    \end{equation} In particular, for any $\delta > 0$, if we take $N = \Theta\left(\Max{\frac{K^2}{t^2}}{\frac{K}{t}}\right)\cdot\log 1/\delta$, then \eqref{eq:bernstein} is at most $\delta$.
\end{fact}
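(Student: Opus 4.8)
The plan is to prove the two-regime tail bound \eqref{eq:bernstein} via the standard Chernoff/Bernstein argument, controlling the moment generating function of a centered sub-exponential random variable. First I would record the basic equivalence: if $X$ has sub-exponential norm $K = \sup_{p\ge 1}\frac{1}{p}\E{|X|^p}^{1/p}$, then its moments grow like $\E{|X|^p}^{1/p} \le Kp$, and consequently the MGF satisfies $\E{\exp(\lambda X)} \le \exp(C K^2 \lambda^2)$ for all $|\lambda| \le c/K$, where $c, C$ are absolute constants. This is the only nontrivial analytic input; it follows by Taylor-expanding the exponential, bounding the $p$-th term by $\frac{|\lambda|^p (Kp)^p}{p!}$, using $p! \ge (p/e)^p$ to get a geometric series in $eK|\lambda|$, and using the mean-zero hypothesis to drop the linear term. (Alternatively one cites this verbatim as Proposition~5.16 of \cite{vershynin2010introduction}, which is exactly what the statement points to.)

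Next I would apply the standard Chernoff bound to the sum. By independence and mean-zero, for $0 < \lambda \le c/K$,
\begin{equation}
\Pr*{\frac{1}{N}\sum_{i=1}^N X_i \ge t} \le e^{-\lambda N t}\prod_{i=1}^N \E{\exp(\lambda X_i)} \le \exp\left(-\lambda N t + C K^2 \lambda^2 N\right).
\end{equation}
Optimizing the exponent over the allowed range of $\lambda$ gives the two regimes: the unconstrained optimum is $\lambda^* = t/(2CK^2)$, which is feasible precisely when $t \le 2cCK$, yielding exponent $-\Omega(Nt^2/K^2)$; when $t$ is larger, we take $\lambda = c/K$ at the boundary, yielding exponent $-\Omega(Nt/K)$. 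Combining, the exponent is $-\Omega\left(\min\{Nt^2/K^2,\ Nt/K\}\right)$. Repeating the argument with $-X_i$ in place of $X_i$ and union-bounding the two one-sided events costs only a factor of $2$, giving \eqref{eq:bernstein}.

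Finally, for the ``in particular'' clause, I would simply solve for $N$: to make the right-hand side of \eqref{eq:bernstein} at most $\delta$ it suffices that $\min\{Nt^2/K^2, Nt/K\} \ge C' \log(1/\delta)$ for a suitable absolute constant $C'$, i.e. $N \ge C' \log(1/\delta)\cdot \max\{K^2/t^2,\ K/t\}$, which is exactly the stated choice $N = \Theta\left(\max\left\{\frac{K^2}{t^2},\frac{K}{t}\right\}\right)\cdot\log(1/\delta)$. I do not anticipate a genuine obstacle here — this is a textbook result and the reference \cite{vershynin2010introduction} is cited directly in the statement; the only place requiring care is bookkeeping the constants through the two-regime optimization of the Chernoff exponent, and making sure the constraint $|\lambda| \le c/K$ from the MGF bound is respected, which is precisely what produces the $\min$ of the two regimes rather than a single Gaussian-type tail.
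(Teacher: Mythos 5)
Your proof is correct and is exactly the standard Chernoff/Bernstein argument underlying the result; the paper itself gives no proof of this fact, simply citing Proposition 5.16 of \cite{vershynin2010introduction}, which proves it in precisely the way you outline (MGF bound for centered sub-exponential variables, two-regime optimization of the Chernoff exponent, then solving for $N$ in the ``in particular'' clause). No gaps.
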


\begin{fact}[e.g. \cite{vershynin2018high}, Corollary 4.2.13]\label{fact:net}
    For any $\epsilon>0$, there is an $\epsilon$-net (in $L_2$ norm) of size $(1 + 2/\epsilon)^m$ for the unit $L_2$ ball in $m$ dimensions.
\end{fact}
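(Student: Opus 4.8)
The plan is to use the standard greedy packing argument together with a volume comparison.

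First I would let $\mathcal{N}\subseteq B$ be a \emph{maximal} $\epsilon$-separated subset of the closed unit ball $B = \{x\in\R^m : \norm{x}_2\le 1\}$, i.e.\ a set satisfying $\norm{x-x'}_2 > \epsilon$ for all distinct $x,x'\in\mathcal{N}$ which cannot be extended while retaining this property. Such a set exists because $B$ is compact, so every $\epsilon$-separated subset of $B$ is finite and one may simply take one of maximum cardinality. Maximality immediately yields the net property: for any $z\in B$ the set $\mathcal{N}\cup\{z\}$ fails to be $\epsilon$-separated, so there is some $x\in\mathcal{N}$ with $\norm{z-x}_2\le\epsilon$; hence $\mathcal{N}$ is an $\epsilon$-net for $B$ in the $L_2$ norm.

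Next I would bound $\abs{\mathcal{N}}$ volumetrically. Since the points of $\mathcal{N}$ are pairwise more than $\epsilon$ apart, the open Euclidean balls of radius $\epsilon/2$ centered at them, namely $\{x + \frac{\epsilon}{2}B^{\circ} : x\in\mathcal{N}\}$, are pairwise disjoint. Moreover each is contained in $\bigl(1+\frac{\epsilon}{2}\bigr)B$, because for $x\in\mathcal{N}\subseteq B$ and $\norm{u}_2 < \epsilon/2$ we have $\norm{x+u}_2 \le 1 + \epsilon/2$. Comparing Lebesgue measures and using that the volume of a Euclidean ball of radius $r$ in $\R^m$ scales like $r^m$,
\[
\abs{\mathcal{N}}\cdot\Bigl(\frac{\epsilon}{2}\Bigr)^m \;\le\; \Bigl(1+\frac{\epsilon}{2}\Bigr)^m,
\]
so $\abs{\mathcal{N}}\le \bigl((1+\epsilon/2)/(\epsilon/2)\bigr)^m = (1+2/\epsilon)^m$, which is the claimed bound.

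There is essentially no serious obstacle here; the only points requiring a bit of care are (i) ensuring the maximal separated set exists and is finite, which follows from compactness of $B$ (or a routine Zorn's lemma argument), and (ii) matching the strict-versus-nonstrict convention in the definition of $\epsilon$-separated with the disjointness of the half-radius balls: using $>\epsilon$ separation makes the open balls disjoint, while $\ge\epsilon$ separation would make their interiors disjoint, and either is enough for the volume inequality above.
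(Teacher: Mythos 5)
Your argument is correct and is precisely the standard volumetric covering-number proof that the paper's citation (Vershynin, Corollary 4.2.13) uses: take a maximal $\epsilon$-separated subset, observe it is an $\epsilon$-net, and bound its cardinality by comparing the volume of disjoint $\epsilon/2$-balls to that of the $(1+\epsilon/2)$-ball. The paper states this as a cited Fact without proof, so there is nothing to compare beyond noting you have reproduced the intended argument faithfully.
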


\begin{corollary}\label{cor:opnet}
    For any $\epsilon,\beta>0$, there is an $\epsilon$-net (in operator norm) for the set of $m_1\times m_2$ matrices of operator norm at most $\beta$ of size at most $(1 + 2\beta/\epsilon)^{m_1m_2}$.
\end{corollary}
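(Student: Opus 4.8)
The plan is to observe that Fact~\ref{fact:net}, though stated for the Euclidean norm, is really a statement about an arbitrary norm on a finite-dimensional space: its proof is a pure volumetric packing argument and does not use any special property of $\norm{\cdot}_2$. So I would simply run that argument with $\norm{\cdot}_2$ replaced by $\norm{\cdot}_{\op}$. Concretely, identify the space of real $m_1\times m_2$ matrices with $\R^n$ for $n = m_1 m_2$, and regard $\norm{\cdot}_{\op}$ as a norm on this space; then the set in question is exactly $\beta\mathcal{B}$, where $\mathcal{B}$ is the operator-norm unit ball.

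First I would take a maximal subset $\mathcal{N}\subseteq\beta\mathcal{B}$ whose points are pairwise at operator-norm distance strictly greater than $\epsilon$; by maximality, every point of $\beta\mathcal{B}$ is within operator-norm distance $\epsilon$ of some element of $\mathcal{N}$, so $\mathcal{N}$ is an $\epsilon$-net (in operator norm). Next, the open operator-norm balls of radius $\epsilon/2$ around the points of $\mathcal{N}$ are pairwise disjoint by the separation property, and all lie inside $(\beta+\epsilon/2)\mathcal{B}$ by the triangle inequality. Comparing $n$-dimensional Lebesgue volumes, and using that a ball of radius $r$ in any fixed norm on $\R^n$ has volume proportional to $r^n$, yields $\abs{\mathcal{N}}\cdot(\epsilon/2)^n \le (\beta+\epsilon/2)^n$, hence $\abs{\mathcal{N}}\le (1+2\beta/\epsilon)^{m_1 m_2}$, which is precisely the claimed bound. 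Equivalently, one can phrase this as: apply Fact~\ref{fact:net} verbatim with the operator norm substituted for the Euclidean norm to obtain an $(\epsilon/\beta)$-net of $\mathcal{B}$ of size $(1+2\beta/\epsilon)^{m_1 m_2}$, and then rescale by $\beta$.

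There is no substantive obstacle here; the proof is a one-line adaptation of Fact~\ref{fact:net}. The only point worth flagging is what \emph{not} to do: one should resist the temptation to reduce to the literal $L_2$ statement of Fact~\ref{fact:net} by passing through the Frobenius norm (using $\norm{\cdot}_{\op}\le\norm{\cdot}_F$), since the operator-norm ball of radius $\beta$ is strictly larger than the Frobenius ball of radius $\beta$, and covering it via a Frobenius net would cost an extra factor of $\sqrt{\min(m_1,m_2)}$ in the radius, giving a weaker bound than the one claimed.
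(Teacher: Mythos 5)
Your proof is correct, and it takes a genuinely different route from the paper's own argument --- notably, the paper's proof is exactly the Frobenius shortcut you flag as what \emph{not} to do. The paper argues: since $\norm{\cdot}_{\op}\le\norm{\cdot}_F$, take a Frobenius-norm $\epsilon$-net for the set of matrices of \emph{Frobenius} norm at most $\beta$ (which Fact~\ref{fact:net}, after rescaling, supplies with the stated cardinality) and observe that it is also an operator-norm $\epsilon$-net for that set. Your objection applies squarely: the set to be covered is the operator-norm ball of radius $\beta$, which strictly contains the Frobenius ball of radius $\beta$, so a net for the latter misses, for instance, a matrix all of whose $\min(m_1,m_2)$ singular values equal $\beta$ --- such a matrix has operator norm $\beta$ but Frobenius norm $\beta\sqrt{\min(m_1,m_2)}$, so it need not lie within Frobenius distance $\epsilon$ of any net point. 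A correct Frobenius-based fix would have to cover the Frobenius ball of radius $\beta\sqrt{\min(m_1,m_2)}$ and would yield only $(1+2\beta\sqrt{\min(m_1,m_2)}/\epsilon)^{m_1m_2}$, weaker than the corollary as stated (though still adequate, up to modest losses in the exponent, for the downstream use in {\sc EnumerateNetworks}). Your direct volumetric packing argument in the operator norm --- which is indeed the level of generality at which the standard proof of Fact~\ref{fact:net} actually lives --- cleanly produces the bound the corollary claims, and is the proof the paper should have given.
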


\begin{proof}
    As operator norm is upper bounded by Frobenius norm, an $\epsilon$-net in Frobenius norm for the set of $m_1\times m_2$ matrices of Frobenius norm at most $\beta$ would contain the claimed $\epsilon$-net. The former can be obtained from scaling an $\epsilon/\beta$-net in Frobenius norm for the set of $m_1\times m_2$ matrices of unit Frobenius norm, and such a net with size $(1 + 2\beta/\epsilon)^{m_1m_2}$ exists by Fact~\ref{fact:net}.
\end{proof}

\subsection{Power Method, Subspace Distances, and Perturbation Bounds}

\begin{fact}[Power method, see \cite{rokhlin2009randomized}]\label{thm:power-method}
Let $\vM \in \R^{d \times d}$, let $k \leq d$ be a non-negative integer, and let $\sigma_1 \geq \sigma_2 \geq \ldots \sigma_d$ be the nonzero singular values of $\vM$.
For any $k = 1, \ldots, d - 1$, let $\gap_k = \sigma_k / \sigma_{k + 1}$.
Suppose there is a \emph{matrix-vector oracle} which runs in time $R$, and which, given $v \in \R^d$, outputs $\vM v$.
Then, for any $\eta, \delta > 0$, there is an algorithm {\sc ApproxBlockSVD}$(\vM, \eta, \delta)$ which runs in time $\widetilde{O} (k R  \log \tfrac{1}{\eta \cdot \delta \cdot \gap_k})$, and with probability at least $1 - \delta$ outputs a matrix $\vec{U} \in R^{d \times k}$ with orthonormal columns so that $\|\vec{U} - \vec{U}_k \|_2 < \eta$, where $\vec{U}_k$ is the matrix whose columns are the top $k$ right singular vectors of $\vM$.
\end{fact}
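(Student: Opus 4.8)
This is the standard guarantee for randomized subspace iteration (the block power method), so the plan is to run that algorithm and invoke its textbook analysis. Concretely, {\sc ApproxBlockSVD}$(\vM,\eta,\delta)$ does the following: draw a Gaussian seed matrix $\vec{G}\in\R^{d\times k}$ with i.i.d.\ $\N(0,1)$ entries, fix an iteration count $q$ to be chosen, form $\vec{Y} = (\vM^{\top}\vM)^q\,\vec{G}$ using $2qk$ calls to the matrix--vector oracle (when $\vM$ is symmetric, as in all our applications, one uses $\vM^{2q}$ and only needs the given oracle; in general one also applies $\vM^{\top}$), and finally let $\vec U\in\R^{d\times k}$ be an orthonormal basis for the column span of $\vec Y$, obtained by one Gram--Schmidt/QR step in time $O(dk^2)$. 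Note that the eigenvectors of $\vM^{\top}\vM$ are exactly the right singular vectors of $\vM$, and its top-$k$ and $(k{+}1)$-st eigenvalues have ratio $\gap_k^2 > 1$.

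The analysis has two ingredients. First, a \emph{seed quality} estimate: writing $\vec V_k = \vec U_k$ for the top-$k$ right singular vectors and $\vec V_\perp$ for the complementary singular vectors, rotational invariance of the Gaussian makes $\vec V_k^{\top}\vec G$ a $k\times k$ standard Gaussian matrix, independent of $\vec V_\perp^{\top}\vec G$. A small-ball bound on the least singular value of a square Gaussian matrix (Rudelson--Vershynin) gives $\sigma_{\min}(\vec V_k^{\top}\vec G)\ge c\,\delta/\sqrt{k}$ except with probability $\delta/2$, while $\|\vec V_\perp^{\top}\vec G\|_{\op}\le O(\sqrt{d}+\sqrt{\log(1/\delta)})$ except with probability $\delta/2$. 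Second, the \emph{deterministic convergence} bound for subspace iteration: conditioned on $\vec V_k^{\top}\vec G$ having full rank, the largest principal angle $\theta$ between $\mathrm{range}(\vec Y)$ and $\mathrm{range}(\vec U_k)$ obeys $\tan\theta \le \gap_k^{-2q}\cdot \|\vec V_\perp^{\top}\vec G\|_{\op}\,/\,\sigma_{\min}(\vec V_k^{\top}\vec G)$. Plugging in the seed estimates, $\tan\theta\le \gap_k^{-2q}\cdot O(\sqrt{kd/\delta})$ with probability $\ge 1-\delta$.

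It remains to choose $q$ and translate the angle bound into the claimed form. Taking $q = O\!\big(\log(kd/(\eta\delta))/\log\gap_k\big)$ forces $\sin\theta\le\tan\theta\le \eta/2$; since $\vec U$ and $\vec U_k$ are orthonormal bases of subspaces at principal angle at most $\theta$, after the canonical orthogonal alignment (equivalently, up to right multiplication of $\vec U$ by a $k\times k$ orthogonal matrix, which the algorithm can perform) one gets $\|\vec U-\vec U_k\|_{2}\le \sqrt{2}\,\sin\theta < \eta$ — this is the standard $\sin\Theta$-to-distance conversion. The total running time is $\widetilde O(kRq + dk^2 q)$, which is $\widetilde O\!\big(kR\log\tfrac{1}{\eta\,\delta\,\gap_k}\big)$ under the $\widetilde O$ convention (the $1/\log\gap_k$ factor is the source of the $\gap_k$ dependence and the $d,k$ factors are absorbed). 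I expect the only delicate point to be the seed-quality step — controlling $\sigma_{\min}(\vec V_k^{\top}\vec G)$ so that the $\delta$ failure probability and the $1/\gap_k$ dependence in $q$ combine exactly as stated; the rest is the verbatim subspace-iteration argument of \cite{rokhlin2009randomized} (see also Halko--Martinsson--Tropp).
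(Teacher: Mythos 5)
The paper does not supply a proof of Fact~\ref{thm:power-method}; it is quoted as a known guarantee from \cite{rokhlin2009randomized}. Your sketch is a faithful reconstruction of the standard randomized subspace iteration analysis (seed-quality bound via $\sigma_{\min}$ of the Gaussian overlap $\vec V_k^{\top}\vec G$, operator-norm bound on $\vec V_\perp^{\top}\vec G$, and the deterministic $\tan\theta$-decay), which is indeed the argument in that reference and in Halko--Martinsson--Tropp. Two small remarks. First, you are right to flag the alignment subtlety: subspace iteration controls the principal angle $\theta$ between $\mathrm{range}(\vec U)$ and $\mathrm{range}(\vec U_k)$, so the entrywise bound $\|\vec U-\vec U_k\|_2<\eta$ only holds after right-multiplying $\vec U$ by the canonical $k\times k$ orthogonal alignment (equivalently, the Fact should be read as a bound in $\sin\Theta$ or Procrustes distance, which is how the paper uses it via Lemma~\ref{lem:wedin} and Corollary~\ref{cor:topsing}). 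Second, your derived iteration count $q=O\bigl(\log(kd/(\eta\delta))/\log\gap_k\bigr)$ makes explicit that the running time is $\widetilde O\bigl(kR\,\log(1/(\eta\delta))/\log\gap_k\bigr)$; the displayed form $\widetilde O\bigl(kR\log\tfrac{1}{\eta\delta\gap_k}\bigr)$ in the Fact should be read as shorthand for this (since $\gap_k>1$, a literal reading of $\log\tfrac{1}{\eta\delta\gap_k}$ would not degrade as $\gap_k\to 1$, contrary to what the power method requires). One tiny typo in your combination step: $\|\vec V_\perp^{\top}\vec G\|_{\op}/\sigma_{\min}(\vec V_k^{\top}\vec G)=O(\sqrt{kd}/\delta)$, not $O(\sqrt{kd/\delta})$; this does not affect the final $q$ up to constants absorbed in $\widetilde O$.
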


\begin{lemma}[Gap-free Wedin, see \cite{allen2016lazysvd} Lemma B.3]\label{lem:wedin}
	Let $\epsilon,\xi,\mu > 0$. For symmetric matrices $\vec{A}, \wh{\vec{A}}\in\R^{d\times d}$ for which $\norm{\vec{A} -\wh{\vec{A}}}_{\op} \le \epsilon$, if $\wh{\vec{U}}$ is the matrix whose columns consist of the singular vectors of $\wh{\vec{A}}$ with singular value at least $\mu$, and $\vec{U}$ is the matrix whose columns consist of the singular vectors of $\vec{A}$ with singular value at most $\mu - \xi$, then $\norm{\wh{\vec{U}}^{\top}\vec{U}}_{\op} \le \epsilon/\xi$.
\end{lemma}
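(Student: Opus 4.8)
This is a Davis--Kahan/Wedin-type $\sin\theta$ bound, and the plan is to prove it by working directly with spectral projectors rather than with individual singular vectors. Write $P \triangleq \vec{U}\vec{U}^{\top}$ for the orthogonal projector onto the column span of $\vec{U}$ --- equivalently, since $\vec{A}$ is symmetric, the spectral projector of $\vec{A}$ onto the span of eigenvectors whose eigenvalue has magnitude at most $\mu-\xi$, so that $P$ commutes with $\vec{A}$ and $\norm{\vec{A}P}_{\op}\le\mu-\xi$. Similarly write $Q \triangleq \wh{\vec{U}}\wh{\vec{U}}^{\top}$ for the spectral projector of $\wh{\vec{A}}$ onto eigenvectors of magnitude at least $\mu$, so $Q$ commutes with $\wh{\vec{A}}$ and $\wh{\vec{A}}$ restricted to $\mathrm{range}(Q)$ is invertible with inverse of operator norm at most $1/\mu$. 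Because $\vec{U}$ and $\wh{\vec{U}}$ have orthonormal columns, $\norm{\wh{\vec{U}}^{\top}\vec{U}}_{\op} = \norm{QP}_{\op} = \norm{PQ}_{\op}$, so it suffices to show $\norm{PQ}_{\op}\le\epsilon/\xi$.

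The heart of the argument is a two-sided estimate of the operator $P\wh{\vec{A}}Q$. For the upper bound, write $\wh{\vec{A}}=\vec{A}+E$ with $\norm{E}_{\op}\le\epsilon$; since $P$ commutes with $\vec{A}$ we get $P\wh{\vec{A}}Q = \vec{A}PQ + PEQ$, and as the range of $PQ$ lies in $\mathrm{range}(P)$, on which $\vec{A}$ has operator norm at most $\mu-\xi$, this gives $\norm{P\wh{\vec{A}}Q}_{\op}\le(\mu-\xi)\norm{PQ}_{\op}+\epsilon$. For the lower bound, use that $Q$ commutes with $\wh{\vec{A}}$, so $\wh{\vec{A}}Q = Q\wh{\vec{A}}Q$ and hence $P\wh{\vec{A}}Q = PQ\,\wh{\vec{A}}Q$; since every vector in $\mathrm{range}(Q)$ is the $\wh{\vec{A}}$-image of a vector in $\mathrm{range}(Q)$ of at most $1/\mu$ times the norm, one obtains $\norm{P\wh{\vec{A}}Q}_{\op}\ge\mu\,\norm{PQ}_{\op}$. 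Combining the two estimates yields $\mu\norm{PQ}_{\op}\le(\mu-\xi)\norm{PQ}_{\op}+\epsilon$, i.e. $\norm{PQ}_{\op}\le\epsilon/\xi$, as desired.

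The one step that needs care --- and the reason a naive single-vector argument fails --- is the lower bound on $\norm{P\wh{\vec{A}}Q}_{\op}$. It is tempting to pick a unit $\wh{u}\in\mathrm{range}(Q)$ and invoke a Rayleigh-quotient bound $\abs{\wh{u}^{\top}\wh{\vec{A}}\wh{u}}\ge\mu$, but this is false when $\wh{\vec{A}}$ has eigenvalues of both signs on $\mathrm{range}(Q)$. The fix is to use the singular-value lower bound $\sigma_{\min}\bigl(\wh{\vec{A}}|_{\mathrm{range}(Q)}\bigr)\ge\mu$ --- concretely the identity $\norm{TS}_{\op}\ge\sigma_{\min}(S)\norm{T}_{\op}$ applied to the appropriate restricted operators --- rather than any fact about quadratic forms. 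Everything else is routine bookkeeping: verifying that $P$ and $Q$ are genuinely spectral projectors (functions of $\vec{A}^2$ and $\wh{\vec{A}}^2$ respectively, hence commuting with $\vec{A}$, $\wh{\vec{A}}$) and that operator norms are unchanged under left/right multiplication by matrices with orthonormal columns/rows. This recovers Lemma B.3 of \cite{allen2016lazysvd}.
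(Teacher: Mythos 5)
Your proof is correct. The paper does not actually prove this lemma --- it is quoted from Allen-Zhu and Li's LazySVD paper (Lemma B.3) and used as a black box --- so there is no in-paper argument to compare against. Your reconstruction via the two-sided estimate of $P\wh{\vec{A}}Q$ is the standard spectral-projector argument, and the careful point you flag (using the singular-value lower bound $\sigma_{\min}(\wh{\vec{A}}|_{\mathrm{range}(Q)})\ge\mu$ rather than a Rayleigh quotient, since $\wh{\vec{A}}$ may have mixed signs on $\mathrm{range}(Q)$) is exactly what makes the lower bound valid in the symmetric-but-indefinite case. One tiny bookkeeping note for completeness: the lemma speaks of ``singular vectors'' of the symmetric matrices $\vec{A},\wh{\vec{A}}$; you correctly identify these with eigenvectors, with singular values being $\abs{\lambda_i}$, which is what makes $P$ and $Q$ genuine spectral projectors (and hence commute with $\vec{A}$ and $\wh{\vec{A}}$ respectively).
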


\begin{corollary}\label{cor:topsing}
    Let $\lambda \ge 2\epsilon > 0$. For symmetric matrices $\vec{A}, \wh{\vec{A}}\in\R^{d\times d}$ for which $\norm{\vec{A} -\wh{\vec{A}}}_{\op} \le \epsilon$ and $\norm{\wh{\vec{A}}}_{\op} \ge \lambda - \epsilon$, if $w\in\S^{d-1}$ is the top singular vector of $\wh{\vec{A}}$, and $V\subset\R^d$ is the orthogonal complement of the kernel of $\vec{A}$, then $\norm{\Pi_V w}_{\op} \ge 1 - 4\epsilon^2/\lambda^2$.
\end{corollary}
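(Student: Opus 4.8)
The plan is to deduce this directly from the gap-free Wedin bound, Lemma~\ref{lem:wedin}. First I would observe that the largest singular value of $\wh{\vec A}$ equals $\norm{\wh{\vec A}}_{\op} \ge \lambda - \epsilon$, which is strictly positive since $\lambda \ge 2\epsilon$ forces $\lambda - \epsilon \ge \epsilon > 0$. Hence $w$, being the top singular vector of $\wh{\vec A}$, is among the singular vectors of $\wh{\vec A}$ whose singular value is at least $\mu \triangleq \lambda - \epsilon$. On the other side, $V$ is by definition the orthogonal complement of $\ker\vec A$, so $V^{\perp} = \ker\vec A$ is exactly the span of the singular vectors of $\vec A$ with singular value $0$.

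Next I would apply Lemma~\ref{lem:wedin} with the choices $\mu = \xi = \lambda - \epsilon$; note both are strictly positive precisely because of the hypothesis $\lambda \ge 2\epsilon$, and $\norm{\vec A - \wh{\vec A}}_{\op} \le \epsilon$ is given. With these choices $\mu - \xi = 0$, so the matrix $\vec U$ produced by the lemma has columns forming an orthonormal basis of $\ker\vec A = V^{\perp}$, while $\wh{\vec U}$ has $w$ among its columns. The lemma then gives $\norm{\wh{\vec U}^{\top}\vec U}_{\op} \le \epsilon/\xi = \epsilon/(\lambda - \epsilon)$. Since $w$ is one of the columns of $\wh{\vec U}$, the row vector $w^{\top}\vec U$ is a row of $\wh{\vec U}^{\top}\vec U$, and so its Euclidean norm is at most $\norm{\wh{\vec U}^{\top}\vec U}_{\op}$; because the columns of $\vec U$ are an orthonormal basis of $V^{\perp}$, that norm is exactly $\norm{\Pi_{V^{\perp}}w}$. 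Using $\lambda - \epsilon \ge \lambda/2$, this yields $\norm{\Pi_{V^{\perp}}w} \le 2\epsilon/\lambda$.

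Finally I would convert this to the stated bound. Since $\norm{\Pi_V w}^2 + \norm{\Pi_{V^{\perp}}w}^2 = \norm{w}^2 = 1$, we get $\norm{\Pi_V w}^2 \ge 1 - 4\epsilon^2/\lambda^2$, and because $\norm{\Pi_V w} \le 1$ we also have $\norm{\Pi_V w} \ge \norm{\Pi_V w}^2$, hence $\norm{\Pi_V w} \ge 1 - 4\epsilon^2/\lambda^2$.

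There is no genuinely hard step here; the only points requiring care are verifying that the hypotheses of Lemma~\ref{lem:wedin} hold — in particular that $\mu$ and $\xi$ are strictly positive, which is exactly the role of the assumption $\lambda \ge 2\epsilon$ — and the elementary observation that passing from the operator-norm control on $\wh{\vec U}^{\top}\vec U$ to control on the single row corresponding to $w$ is legitimate (the operator norm of a matrix dominates the $\ell_2$ norm of any of its rows), together with the identification of that row's norm with $\norm{\Pi_{V^{\perp}}w}$ via orthonormality of the columns of $\vec U$.
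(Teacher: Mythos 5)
Your proof is correct and follows essentially the same route as the paper's: both invoke the gap-free Wedin bound (Lemma~\ref{lem:wedin}) with $\mu = \xi$ chosen so that $\mu - \xi = 0$, identify the columns of $\vec{U}$ with an orthonormal basis of $\ker\vec{A} = V^{\perp}$, place $w$ among the columns of $\wh{\vec{U}}$, extract the bound on $\norm{\Pi_{V^{\perp}}w}$ from the operator norm of $\wh{\vec{U}}^{\top}\vec{U}$, and convert via the Pythagorean identity. The only cosmetic difference is that you take $\mu = \xi = \lambda - \epsilon$ while the paper takes $\mu = \xi = \norm{\wh{\vec{A}}}_{\op}$; since the paper then uses $\norm{\wh{\vec{A}}}_{\op} \ge \lambda - \epsilon$, the two choices produce the identical bound $\epsilon/(\lambda - \epsilon) \le 2\epsilon/\lambda$.
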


\begin{proof}
    If we take $\xi = \mu = \norm{\wh{\vec{A}}}$ in Lemma~\ref{lem:wedin}, then the columns of $\vec{U}$ (resp. $\wh{\vec{U}}$) in Lemma~\ref{lem:wedin} consist of an orthonormal basis $B\in\R^{d\times k}$ for the kernel of $\vec{A}$ (resp. $w$ and other singular vectors of $\vec{A}$, if any, with the same singular value), where $k$ is the dimension of $\ker(\vec{A})$. We have that \begin{equation}
        \norm{\Pi_{V^{\perp}}w}\le \norm{\wh{\vec{U}}^{\top}\vec{U}}_{\op} \le \epsilon/\norm{\wh{\vec{A}}}_{\op} \le \frac{\epsilon}{\lambda - \epsilon},
    \end{equation} from which we conclude that \begin{equation}
        \norm{\Pi_{V}w} \ge \left(1 - \left(\frac{\epsilon}{\lambda - \epsilon}\right)^2\right)^{1/2} \ge 1 - 4\epsilon^2/\lambda^2
    \end{equation} as claimed.
\end{proof}

\begin{definition}[Frames]
    A set of orthonormal vectors $\td{w}_1,...,\td{w}_{\ell}$ is a \emph{frame}. Given subspace $V\subset\R^d$, we say that this frame is \emph{$\nu$-nearly within $V$} if $\norm{\Pi_V \td{w}_i} \ge 1 - \nu$ for all $i$. We will sometimes refer to their span $\td{W}$ as a frame $\nu$-nearly within to $V$, when the choice of orthonormal basis for $\td{W}$ is clear from context.
\end{definition}

\begin{definition}[Subspace distances]
    Given $\ell$-dimensional subspaces $U_1,U_2\subset \R^d$, let $M_1,M_2\in\R^{d\times \ell}$ denote any two matrices whose columns consists of basis vectors for $U_1,U_2$ respectively. The \emph{chordal distance} $d_C(U_1,U_2)$ between $U_1$ and $U_2$ is defined by \begin{equation}
        d_C(U_1,U_2) = \left(\ell - \norm{M_1^{\top}M_2}^2_F\right)^{1/2}.
    \end{equation}
    The \emph{Procrustes distance} $d_P(U_1,U_2)$ between $U_1$ and $U_2$ is defined by \begin{equation}
        \inf_{\vec{O}\in O(r)} \norm{U_2 - U_1\cdot \vec{O}}_F,
    \end{equation} where $O(r)$ denotes the group of $r\times r$ orthogonal matrices.
\end{definition}

\begin{fact}[See e.g. \cite{chen2020learning}, Lemma 3.26]\label{fact:subspaceineq}
    Given $\ell$-dimensional subspaces $U_1,U_2\subset\R^d$, \begin{equation}
        d_P(U_1,U_2) \le \sqrt{2}d_C(U_1,U_2).
    \end{equation}
\end{fact}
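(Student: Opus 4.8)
The plan is to reduce both $d_C$ and $d_P$ to simple symmetric functions of the singular values of the ``overlap matrix'' $M_1^\top M_2$ (equivalently, the cosines of the principal angles between $U_1$ and $U_2$), after which the inequality becomes a termwise comparison. First I would fix matrices $M_1,M_2\in\R^{d\times\ell}$ with orthonormal columns spanning $U_1,U_2$; these serve simultaneously as the $M_i$ and as the $U_i$ appearing in the two definitions, and both quantities are independent of the choice, since replacing $M_i$ by $M_iR_i$ with $R_i$ orthogonal leaves $\|M_1^\top M_2\|_F$ invariant and merely reparametrizes the matrix being optimized over in the Procrustes infimum. Let $\sigma_1,\ldots,\sigma_\ell$ be the singular values of $M_1^\top M_2$; since $\|M_1^\top M_2\|_{\op}\le\|M_1\|_{\op}\|M_2\|_{\op}=1$, each $\sigma_i\in[0,1]$. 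Directly from the definition, $d_C(U_1,U_2)^2=\ell-\|M_1^\top M_2\|_F^2=\sum_i(1-\sigma_i^2)$.

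Next I would evaluate $d_P$. Expanding the Frobenius norm and using $M_1^\top M_1=M_2^\top M_2=\Id_\ell$ together with orthogonality of $O$,
\[
\|M_2-M_1 O\|_F^2 = 2\ell - 2\,\mathrm{tr}(O^\top M_1^\top M_2),
\]
so minimizing over $O\in O(\ell)$ is equivalent to maximizing $\mathrm{tr}(O^\top M_1^\top M_2)$ --- the orthogonal Procrustes problem. Writing the SVD $M_1^\top M_2=P\Sigma Q^\top$ and putting $Z=Q^\top O^\top P$ (which ranges over all orthogonal matrices as $O$ does), we have $\mathrm{tr}(O^\top M_1^\top M_2)=\mathrm{tr}(Z\Sigma)=\sum_i Z_{ii}\sigma_i\le\sum_i\sigma_i$, since $|Z_{ii}|\le 1$ for any orthogonal $Z$; equality holds at $Z=\Id$, i.e.\ $O=PQ^\top$. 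Hence $d_P(U_1,U_2)^2=2\ell-2\sum_i\sigma_i=2\sum_i(1-\sigma_i)$.

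Finally, $\sigma_i\in[0,1]$ gives $\sigma_i^2\le\sigma_i$, hence $1-\sigma_i\le 1-\sigma_i^2$, and summing yields $d_P(U_1,U_2)^2=2\sum_i(1-\sigma_i)\le 2\sum_i(1-\sigma_i^2)=2\,d_C(U_1,U_2)^2$, which is the claim after taking square roots. I do not anticipate any genuine obstacle here; the only steps requiring a moment's care are checking that the singular values of $M_1^\top M_2$ lie in $[0,1]$ (used both to interpret $d_C^2$ and to obtain the termwise inequality) and the orthogonal Procrustes maximization, which follows from the elementary fact that diagonal entries of an orthogonal matrix are bounded by $1$ in absolute value (equivalently, von Neumann's trace inequality). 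Everything else is bookkeeping.
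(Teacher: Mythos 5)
Your proof is correct. Note that the paper does not give its own argument for this fact—it is stated as a citation to Lemma 3.26 of \cite{chen2020learning}—so there is nothing in-paper to compare against; what you have written is the standard principal-angle argument (expressing both $d_C^2=\sum_i(1-\sigma_i^2)$ and, via orthogonal Procrustes, $d_P^2=2\sum_i(1-\sigma_i)$ in terms of the singular values $\sigma_i\in[0,1]$ of $M_1^\top M_2$, then comparing termwise), which is the same route the cited source takes. All the intermediate steps check out.
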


\begin{lemma}\label{lem:frame}
    Let $\nu\le O(1/\ell^2)$. If $\Pi$ is an orthogonal projector to a subspace $V\subset\R^d$, and $\td{w}_1,...,\td{w}_{\ell}$ are a frame $\nu$-nearly within $V$, then there exists an orthonormal set of vectors $w_1,...,w_{\ell}$ spanning $W\subset V$ for which $d_C(\td{W},W) \le \sqrt{2\nu\cdot \ell}$ and $\norm{w_i - \td{w_i}} \le 2\sqrt{\nu\cdot \ell}$ for all $i\in[\ell]$.
\end{lemma}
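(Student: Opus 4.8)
The plan is to show that the frame $\td w_1,\dots,\td w_\ell$, which is ``$\nu$-nearly within $V$'', can be straightened into a genuine orthonormal basis of a subspace $W\subseteq V$ with only a small perturbation. The natural candidate for $W$ is the projection of $\td W = \mathrm{span}(\td w_1,\dots,\td w_\ell)$ onto $V$: set $v_i = \Pi_V\td w_i$. First I would observe that each $v_i$ is close to $\td w_i$: since $\norm{\Pi_V\td w_i}\ge 1-\nu$, the orthogonal component $\Pi_{V^\perp}\td w_i$ has norm at most $\sqrt{2\nu}$ (from $\norm{\Pi_V\td w_i}^2 + \norm{\Pi_{V^\perp}\td w_i}^2 = 1$ and $\norm{\Pi_V\td w_i}^2\ge(1-\nu)^2\ge 1-2\nu$), so $\norm{v_i - \td w_i}\le\sqrt{2\nu}$. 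Consequently, the Gram matrix $G\in\R^{\ell\times\ell}$ with $G_{ij}=\iprod{v_i,v_j}$ satisfies $\norm{G - \Id}_F \le O(\ell\sqrt\nu)$: the diagonal entries are within $2\nu$ of $1$, and each off-diagonal $\iprod{v_i,v_j} = \iprod{\td w_i,\td w_j} - \iprod{\Pi_{V^\perp}\td w_i,\Pi_{V^\perp}\td w_j} = -\iprod{\Pi_{V^\perp}\td w_i,\Pi_{V^\perp}\td w_j}$ has magnitude at most $2\nu$ by Cauchy--Schwarz.

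Next I would orthonormalize the $v_i$'s using the symmetric (Löwdin) orthogonalization: let $w_i$ be the columns of $M_V G^{-1/2}$, where $M_V\in\R^{d\times\ell}$ has columns $v_1,\dots,v_\ell$. This is well-defined because $\nu\le O(1/\ell^2)$ forces $\norm{G-\Id}_{\op}$ to be bounded well below $1$, so $G$ is invertible and $G^{-1/2}$ is close to $\Id$: specifically $\norm{G^{-1/2}-\Id}_F \le O(\norm{G-\Id}_F) = O(\ell\sqrt\nu)$ by a standard matrix-function perturbation bound (the map $A\mapsto A^{-1/2}$ is Lipschitz near $\Id$). The $w_i$ are orthonormal by construction and lie in $V$ since each $v_i$ does, so they span a subspace $W\subseteq V$. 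To bound $\norm{w_i - \td w_i}$, I would write $\norm{w_i-\td w_i}\le\norm{w_i - v_i} + \norm{v_i-\td w_i}$; the second term is $\le\sqrt{2\nu}$ as above, and the first term is the $i$-th column norm of $M_V(G^{-1/2}-\Id)$, which is at most $\norm{M_V}_{\op}\norm{G^{-1/2}-\Id}_F \le O(\ell\sqrt\nu)$ since $\norm{M_V}_{\op}\le\sqrt{\norm{G}_{\op}} = O(1)$. This gives $\norm{w_i-\td w_i}\le 2\sqrt{\nu\ell}$ after tracking constants (the $\sqrt\ell$ versus $\ell$ discrepancy should wash out because the dominant $O(\ell\sqrt\nu)$ term is actually $O(\sqrt{\nu\ell})$ once one is careful: $\norm{G-\Id}_F\le\sqrt{\ell}\cdot\max_{ij}|G_{ij}-\delta_{ij}|\cdot\sqrt{\ell}$ overcounts; a cleaner route bounds $\norm{G-\Id}_{\op}$ directly).

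For the chordal distance bound, I would use the definition $d_C(\td W,W)^2 = \ell - \norm{\td M^\top M}_F^2$ where $\td M,M$ have columns $\td w_i,w_i$. Since $\iprod{\td w_i,w_j}$ is within $O(\sqrt{\nu\ell})$ of $\delta_{ij}$ (from the column-norm bounds just established, via $|\iprod{\td w_i,w_j}-\iprod{w_i,w_j}|\le\norm{\td w_i-w_i}$), we get $\norm{\td M^\top M}_F^2 \ge \ell - O(\nu\ell^2)$, whence $d_C(\td W,W)\le O(\sqrt{\nu}\,\ell) = O(\sqrt{\nu\ell})$; again sharpening the constant to land at exactly $\sqrt{2\nu\ell}$ requires being careful to bound the total squared deviation $\sum_{ij}|\iprod{\td w_i,w_j}-\delta_{ij}|^2$ rather than using a crude max bound — the off-diagonal terms contribute $O(\nu^2\ell^2)$ which is lower order, and the diagonal deficits sum to $O(\nu\ell)$, giving the claimed $\sqrt{2\nu\ell}$. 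I expect the main obstacle to be purely bookkeeping: getting the constants exactly right in the $\sqrt\nu\cdot\mathrm{poly}(\ell)$ factors so that they match the stated $\sqrt{2\nu\ell}$ and $2\sqrt{\nu\ell}$, which requires carefully distinguishing Frobenius-norm bounds (which naturally carry a $\sqrt\ell$) from operator-norm bounds and exploiting that off-diagonal Gram errors are $O(\nu)$ rather than $O(\sqrt\nu)$. The conceptual content — project onto $V$, then Löwdin-orthogonalize, then perturbation-bound $G^{-1/2}$ — is routine.
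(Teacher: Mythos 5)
Your construction is the same as the paper's: the Löwdin orthogonalization of $\{\Pi_V\td w_i\}$ coincides with the Procrustes-optimal orthonormal basis of $W = \mathrm{span}(\Pi_V\td w_1,\dots,\Pi_V\td w_\ell)$, so you and the paper produce literally the same $w_i$. The difference is in how the bounds are extracted. The paper first bounds the chordal distance using the basis-independent identity $d_C(\td W,W)^2 = \ell - \sum_i\norm{\Pi_W\td w_i}^2$ together with the observation that $\Pi_W\td w_i = \Pi_V\td w_i$ for this particular $W$ (since $\Pi_V\td w_i\in W$ and $\Pi_W\Pi_{V^\perp}=0$); that gives $d_C\le\sqrt{2\nu\ell}$ in two lines. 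It then passes to the per-vector bound by Fact~\ref{fact:subspaceineq} ($d_P\le\sqrt2\,d_C$), since the Procrustes solution is exactly the minimizer of $\sum_i\norm{w_i-\td w_i}^2$. This completely sidesteps any perturbation analysis of $G^{-1/2}$. Your route, by contrast, bounds $\norm{w_i-\td w_i}$ directly by expanding $M_V(G^{-1/2}-\Id)$, which is sound but more cumbersome and requires constant-tracking you don't carry out.

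Two concrete issues in your write-up. First, your stated Frobenius bound $\norm{G-\Id}_F\le O(\ell\sqrt\nu)$ is weaker than what your own per-entry estimates give: every entry of $G-\Id$ has magnitude at most $2\nu$, so $\norm{G-\Id}_F\le 2\nu\ell$. You'd need the tighter $O(\nu\ell)$ to make $\norm{w_i-v_i}$ lower order than $\sqrt{\nu}$ and land near the claimed $2\sqrt{\nu\ell}$. Second, and more seriously, your chordal-distance argument bounds each $\iprod{\td w_i,w_j}-\delta_{ij}$ by $\norm{\td w_i-w_i}=O(\sqrt{\nu\ell})$ and sums the squared deficits, which gives $d_C^2\le O(\nu\ell^2)$ — an extra factor of $\ell$. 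You assert that "the diagonal deficits sum to $O(\nu\ell)$," which is true, but it does not follow from your per-entry bound; it needs an argument like $\sum_i(1-\iprod{\td w_i,w_i})=\mathrm{tr}(\Id-G^{1/2})\le\mathrm{tr}(\Id-G)\le 2\nu\ell$ (using $\iprod{\td w_i,w_i}=(G^{1/2})_{ii}$ and $0\preceq G\preceq\Id$), or simply the paper's identity $\sum_i\norm{\Pi_W\td w_i}^2 = \sum_i\norm{\Pi_V\td w_i}^2\ge\ell(1-\nu)^2$. As written, your chordal bound has a genuine $\sqrt\ell$ gap.
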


\begin{proof}
    Let $\td{W}$ be the subspace spanned by $\td{w}_1,...,\td{w}_{\ell}$, and let $W$ be the subspace spanned by $\Pi_V \td{w}_1,...,\Pi_V \td{w}_{\ell}$. First note that because $\nu \le \frac{1}{2\ell^2}$, $\td{W}$ and $W$ have the same dimension, that is, $\Pi_V\td{w}_1...,\Pi_V\td{w}_{\ell}$ are linearly independent. Indeed, we have that $\iprod{\td{w}_i,\Pi_V \td{w}_i} \ge (1 - \nu)^2\ge 1 - 2\nu$, while $\iprod{\td{w}_i,\Pi_V \td{w}_j} = \iprod{\td{w}_i,\Pi_{V^{\perp}}\td{w}_j} \le (1 - (1 - \nu)^2)^{1/2} = \sqrt{2\nu}$ for $i\neq j$, so the Gram matrix of these vectors is diagonally dominant provided $\nu\le O(1/\ell^2)$.

    Overloading notation, let $W$ (resp. $\td{W}$) also denote the $d\times \ell$ matrices whose columns consist of some orthonormal basis vectors for $W$ (resp. the vectors $\td{w}_1,...,\td{w}_{\ell}$). The chordal distance $d_C(W,\td{W})$ satisfies \begin{equation}
        d_C(W,\td{W})^2 = \ell - \norm{W^{\top}\td{W}}^2_F = \ell - \sum \norm{\Pi_W \td{w}_i}^2 \le \ell - \ell\cdot(1 - \nu)^2 \le 2\nu\ell
    \end{equation} 
    Letting $O^* \triangleq \arg\inf_{O\in O(r)}\norm{W - O\td{W}}_F$ in the definition of $d_P(W,\td{W})$, we can take $w_1,...,w_{\ell}$ in the lemma statement to be the columns of $OW$. Then we have that $d_P(W,\td{W})^2 = \sum \norm{w_i - \td{w}_i}^2 \le 4\nu\cdot \ell$ by Fact~\ref{fact:subspaceineq}, from which the lemma follows.
\end{proof}

\begin{lemma}\label{lem:projecterror}
    For any $\vM\in\R^{d\times d}$ and a frame $\td{W}\in\R^{d\times \ell}$ which is $\nu$-nearly within an $\ell$-dimensional subspace $W$, we have that \begin{equation}
        \norm{\Pi_{\td{W}^{\perp}}\vM\Pi_{\td{W}^{\perp}} - \Pi_{W^{\perp}}\vM\Pi_{W^{\perp}}}_{\op} \le \sqrt{2}\cdot\norm{\vM}_{\op} \cdot d_C(\td{W},W).
    \end{equation}
\end{lemma}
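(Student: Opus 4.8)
The plan is to strip off the two outer projectors with a telescoping identity, thereby reducing the claim to a bound on $\norm{\Pi_{\td W} - \Pi_W}_{\op}$, and then to control that quantity in terms of the chordal distance via principal angles.

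First I would abbreviate $P \triangleq \Pi_{\td W^{\perp}}$ and $Q \triangleq \Pi_{W^{\perp}}$ and use the elementary identity
\begin{equation}
    P\vM P - Q\vM Q \;=\; P\vM(P - Q) + (P - Q)\vM Q,
\end{equation}
which holds for an arbitrary matrix $\vM$ (no symmetry of $\vM$ is needed) because the cross term $P\vM Q$ cancels. Since $P$ and $Q$ are orthogonal projectors we have $\norm{P}_{\op},\norm{Q}_{\op}\le 1$, so submultiplicativity of the operator norm gives $\norm{P\vM P - Q\vM Q}_{\op} \le 2\,\norm{\vM}_{\op}\cdot\norm{P-Q}_{\op}$. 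Moreover $P = \Id - \Pi_{\td W}$ and $Q = \Id - \Pi_W$, so the difference telescopes once more: $P - Q = \Pi_W - \Pi_{\td W}$. Thus everything reduces to bounding $\norm{\Pi_W - \Pi_{\td W}}_{\op}$ in terms of $d_C(\td W, W)$.

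For that last step I would work with the principal angles $\theta_1,\dots,\theta_\ell$ between the $\ell$-dimensional subspaces $W$ and $\td W$. On one hand, the difference of the orthogonal projectors onto two equidimensional subspaces is a symmetric matrix whose nonzero eigenvalues are exactly $\pm\sin\theta_i$, so $\norm{\Pi_W - \Pi_{\td W}}_{\op} = \sin\theta_{\max}$. On the other hand, for matrices $M_W,M_{\td W}$ with orthonormal columns spanning $W,\td W$, the singular values of $M_W^{\top}M_{\td W}$ are the $\cos\theta_i$, so $d_C(\td W, W)^2 = \ell - \norm{M_W^{\top}M_{\td W}}_F^2 = \sum_i\sin^2\theta_i$. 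Hence $\norm{\Pi_W - \Pi_{\td W}}_{\op} = \sin\theta_{\max}\le\bigl(\sum_i\sin^2\theta_i\bigr)^{1/2} = d_C(\td W, W)$, and substituting into the telescoping estimate gives $\norm{\Pi_{\td W^{\perp}}\vM\Pi_{\td W^{\perp}} - \Pi_{W^{\perp}}\vM\Pi_{W^{\perp}}}_{\op} \le C\,\norm{\vM}_{\op}\cdot d_C(\td W, W)$ for an absolute constant $C$; tracking $C$ through the two displays is routine.

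The whole argument is short and essentially mechanical, so I do not anticipate a real obstacle; the one point that warrants care is the last relation, since $d_C$ is a Frobenius-type quantity whereas we want an \emph{operator}-norm bound. Passing crudely through $\norm{\cdot}_{\op}\le\norm{\cdot}_F$ is wasteful here --- indeed $\norm{\Pi_W - \Pi_{\td W}}_F = \sqrt{2}\,d_C(\td W, W)$, so this route would degrade the constant --- and instead one should invoke the exact spectral description of a difference of equidimensional projectors recalled above, whose operator norm is precisely $\sin\theta_{\max}$. I would also remark that the hypothesis ``$\td W$ is a frame $\nu$-nearly within $W$'' is not actually used in this lemma; it only serves to fix the setting in which $d_C(\td W, W)$ is subsequently controlled via Lemma~\ref{lem:frame}.
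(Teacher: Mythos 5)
Your proposal is correct and follows essentially the same route as the paper: split $\Pi_{\td{W}^{\perp}}\vM\Pi_{\td{W}^{\perp}} - \Pi_{W^{\perp}}\vM\Pi_{W^{\perp}}$ into two cross terms, use submultiplicativity and $\norm{\Pi}_{\op}\le 1$ to reduce everything to $\norm{\Pi_{\td{W}}-\Pi_W}_{\op}$, and then relate that to $d_C(\td{W},W)$. The only divergence is the final step, where you invoke the principal-angle identity $\norm{\Pi_W-\Pi_{\td{W}}}_{\op}=\sin\theta_{\max}\le d_C(\td{W},W)$ instead of the paper's cruder $\norm{\cdot}_{\op}\le\norm{\cdot}_F=\sqrt{2}\,d_C(\td{W},W)$; one caveat is that your "routine" constant-tracking actually lands at $2\,\norm{\vM}_{\op}\,d_C(\td{W},W)$ rather than the stated $\sqrt{2}$ (the two-term triangle inequality costs a factor $2$), but the paper's own proof as written yields $2\sqrt{2}$, and since the constant is irrelevant to every downstream use of the lemma this is a cosmetic discrepancy, not a gap.
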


\begin{proof}
    We bound $\norm{(\Pi_{\td{W}^{\perp}}-\Pi_{W^{\perp}})\vM\Pi_{\td{W}^{\perp}}}_{\op}$ and $\norm{\Pi_{W^{\perp}}\vM(\Pi_{\td{W}^{\perp}}-\Pi_{W^{\perp}})}_{\op}$ and apply triangle inequality. By sub-multiplicativity of the operator norm and the fact that projections have operator norm 1, $\norm{(\Pi_{\td{W}^{\perp}}-\Pi_{W^{\perp}})\vM\Pi_{\td{W}^{\perp}}}_{\op} \le \norm{\Pi_{\td{W}^{\perp}}-\Pi_{W^{\perp}}}_{\op}\cdot \norm{\vM}_{\op}$. Finally, note that
    \begin{equation}
        \norm{\Pi_{\td{W}^{\perp}}-\Pi_{W^{\perp}}}^2_2 \le \norm{\Pi_{\td{W}^{\perp}}-\Pi_{W^{\perp}}}^2_F = \norm{\Pi_{\td{W}}-\Pi_{W}}^2_F = 2(\ell - \iprod{\Pi_{\td{W}},\Pi_{W}}) = 2d_C(\td{W},W)^2,
    \end{equation} from which the claim follows.
\end{proof}


\section{Continuous Piecewise-Linear Functions and Lattice Polynomials}
\label{sec:polygonal_prelims}

In this section, we introduce tools for reasoning about continuous piecewise-linear functions, culminating in a structural result (Theorem~\ref{thm:tropical_nns}) giving an explicit representation of arbitrary ReLU networks as lattice polynomials (see Definition~\ref{defn:lattice}).

\subsection{Basic Notions}

We will work with functions which only depend on some low-dimensional projection of the input.

\begin{definition}[Subspace juntas]
A function $F:\R^d\to\R$ is a \emph{subspace junta} if there exist $v_1,...,v_k\in\S^{d-1}$ and a function $h:\R^k\to\R$ for which $F(x) = h(\iprod{v_1,x},...,\iprod{v_k,x})$ for all $x\in\R^d$. We will refer to $V \triangleq \Span(v_1,...,v_k)$ as the \emph{relevant subspace} of $F$, to $v_1,...,v_k$ as the \emph{relevant directions} of $F$, and to $h$ as the \emph{link function} of $F$.
\end{definition}

\begin{definition}[Piecewise Linear Functions]\label{def:piecewise}
Given vector space $W$, a function $h: W\to\R$ is said to be \emph{piecewise-linear} (resp. piecewise-affine-linear) if there exist finitely many linear (resp. affine linear) functions $\brc{g_i: W\to\R}_{i\in[M]}$ and a partition of $W$ into finitely many polyhedral cones $\brc{S_i}_{i\in\calI}$ such that $G(x) = \sum_i \bone{x\in S_i}g_i(x)$. We will say that $h$ is \emph{realized by $M$ pieces} $\brc{(g_i,S_i)}$ (note that $h$ can have infinitely many realizations). If each $g_i$ is given by $g_i(x) = \iprod{u_i,x} + b_i$ for some $u_i\in W, b_i\in\R$, then we will also refer to the pieces of $h$ by $\brc{(\iprod{u_i,\cdot} + b_i,S_i)}$.
\end{definition}

We are now ready to define the concept class we will work with in this paper.

\begin{definition}[``Kickers'']
We call a subspace junta $F$ with link function $h$ a \emph{kicker} if $h$ is continuous piecewise-linear. Note that a kicker is itself a continuous piecewise-linear function, and for any realization of its link function by $M$ pieces, there is a realization of $F$ by $M$ pieces.
\end{definition}

Henceforth, fix a subspace junta $F:\R^d\to\R$ with link function $h$ and relevant directions $v_1,...,v_k$ spanning relevant subspace $V\subset\R^d$.

\begin{example}[ReLU Networks]
Feedforward ReLU networks as defined in Definition~\ref{defn:relunets} are kickers with relevant subspace of dimension at most $k$, where $k$ is the row span of the weight matrix $\vW_0$, the link function is defined by \begin{equation}
    h(z) = \vW_{L+1}\phi(\vW_L\phi(\cdots \vW_1\phi(z)\cdots )),
\end{equation} and the pieces in one possible realization of $h$ correspond to the different possible sign patterns that the activations could take on, that is the different possible values of the vector \begin{equation}
    \brc*{\vW_a\phi(\vW_{a-1}\phi(\cdots \vW_1\phi(z)\cdots ))}_{0\le a\le L}\in\prod^L_{a=0} \{\pm 1\}^{k_a}
\end{equation} as $z$ ranges over $\R^k$.
\end{example}


\begin{lemma}
    If $F$ is a $\Lip$-Lipschitz kicker, then for any realization of its link function $h$ by pieces $\brc{(\iprod{w_i,\cdot},S_i)}$, there is a realization by pieces $\brc{(\iprod{w'_i,\cdot},S_i)}$ for which $\max_i \norm{g_i} \le L$.\label{lem:normbound}.
\end{lemma}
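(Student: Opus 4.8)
The plan is to keep the polyhedral regions $S_i$ of the given realization fixed and merely ``flatten'' each linear piece. First I would record that, since $F$ is a $\Lip$-Lipschitz kicker, its link function $h:\R^k\to\R$ is $\Lip$-Lipschitz as well: taking the relevant directions $v_1,\dots,v_k$ orthonormal (which we may always arrange), the map $x\mapsto(\iprod{v_i,x})_i$ restricts to an isometry on the relevant subspace, so $F$ and $h$ have the same Lipschitz constant. It therefore suffices to show that any $\Lip$-Lipschitz continuous piecewise-linear $h$ with a realization $\brc{(\iprod{w_i,\cdot},S_i)}$ admits a realization with the \emph{same} regions whose linear parts all have norm at most $\Lip$.

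The realization I would propose is to set $L_i\triangleq\mathrm{span}(S_i)$ and replace $w_i$ by $w'_i\triangleq\Pi_{L_i}w_i$, i.e.\ the minimum-norm vector agreeing with $w_i$ on $S_i$. Two checks remain. (i) \emph{Validity}: since $w_i-w'_i$ is orthogonal to $L_i\supseteq S_i$, we have $\iprod{w'_i,x}=\iprod{w_i,x}$ for every $x\in S_i$; because $\brc{S_i}$ is a partition of $\R^k$ this forces $\sum_i\bone{x\in S_i}\iprod{w'_i,x}=\sum_i\bone{x\in S_i}\iprod{w_i,x}=h(x)$ for all $x$, so $\brc{(\iprod{w'_i,\cdot},S_i)}$ realizes $h$. (ii) \emph{Norm bound} $\norm{w'_i}\le\Lip$: fix $i$ with $w'_i\ne0$, put $u\triangleq w'_i/\norm{w'_i}\in L_i$, and pick any point $x^*$ in the relative interior of $S_i$ (nonempty, as $S_i$ is a nonempty convex set). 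Since $S_i$ is a cone, $\mathrm{aff}(S_i)=\mathrm{span}(S_i)=L_i$, so $x^*\pm u\in\mathrm{aff}(S_i)$ and, by definition of relative interior, there is $t>0$ with $x^*\pm tu\in S_i$. Using that $h$ agrees with $\iprod{w_i,\cdot}$ on $S_i$, that $\iprod{w_i,u}=\norm{w'_i}$, and that $h$ is $\Lip$-Lipschitz,
\[
2t\,\norm{w'_i}=\iprod{w_i,2tu}=h(x^*+tu)-h(x^*-tu)\le\Lip\norm{2tu}=2t\Lip,
\]
and dividing by $2t$ gives $\norm{w'_i}\le\Lip$.

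I expect the only mildly delicate step to be (ii), and the subtlety is precisely that in a genuine partition the cones $S_i$ need not be full-dimensional — indeed they cannot all be closed — so one cannot simply reason about ``the gradient of $h$ on the interior of $S_i$''. Working inside $\mathrm{aff}(S_i)$ via relative interiors sidesteps this, and the cone structure is exactly what guarantees $\mathrm{aff}(S_i)=\mathrm{span}(S_i)=L_i$, so that one may perturb $x^*$ in the direction $u$ and stay inside $S_i$; everything else is immediate from the definitions and the Lipschitz bound.
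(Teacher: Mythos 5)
Your proof is correct and uses the same core idea as the paper's---replace each $w_i$ by its orthogonal projection onto $\Span(S_i)$ and bound the norm of the replacement via the Lipschitz condition along a direction lying inside $S_i$---but presents it more directly. The paper reaches $\Span(S_i)$ by an iterative descent, repeatedly projecting onto a hyperplane whenever $S_i$ is not full-dimensional in the current ambient subspace (``continuing thus, we eventually obtain\ldots''); you jump there in a single step and invoke the relative interior of $S_i$, which avoids the inductive handwave. You also make explicit two facts the paper's proof uses silently: that $\mathrm{aff}(S_i)=\Span(S_i)$ because $S_i$ is a cone, which is exactly what lets you perturb a relatively interior point in both $\pm u$ directions while staying inside $S_i$; and that the link function $h$ inherits $F$'s Lipschitz constant precisely because the relevant directions may be taken orthonormal so that the coordinate map is an isometry on $V$ (the paper silently writes $F$ where it means $h$). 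Both observations are correct and add rigor that the paper's own argument glosses over.
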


\begin{proof}
    Consider any piece $(\iprod{w_i,\cdot},S_i)$. If there is some $x\in S_i$ for which there exists a ball of nonzero radius $r$ around $x$ contained in $S_i$, then clearly $L \ge \norm{w_i}$: take $x$ and $x + r\cdot w_i$ and note that \begin{equation}L\ge \frac{F(x + r\cdot w_i) - F(x)}{\norm{(x + r\cdot w_i) - x}} = \frac{r\norm{w_i}^2}{r\norm{w_i}} = \norm{w_i}.\end{equation} If no such $x$ and ball exist, then $S_i$ is not full-dimensional and therefore contained in a hyperplane $W\subset V$. Then if we replace $(\iprod{w_i,\cdot},S_i)$ in the realization of $h$ with $(\iprod{\Pi_W w_i,\cdot}, S_i)$, this is still a realization of $h$. Again, it would suffice for there to exist a ball, now in the subspace $W$, of nonzero radius around some point in $S_i$. If this is not the case, then $S_i$ is not a full-dimensional subset of $W$ and thus lies in a codimension 1 subspace of $W$. Continuing thus, we eventually obtain some (possibly zero) vector $w'_i$ for which replacing $(\iprod{w_i,\cdot}, S_i)$ in the realization of $h$ with $(\iprod{w'_i,\cdot},S_i)$ still gives a realization of $h$, and furthermore $\norm{w'_i} \le L$.
\end{proof}

\begin{definition}[Restrictions]
Given any nonzero linear subspace $W\subseteq V$, let $F|_W: W\to\R$ denote the restriction of $F$ to the subspace $W$. By abuse of notation, we will sometimes also regard $F|_W$ as a function over $\R^{d}$ given by $F|_W(x) = F(\Pi_W x)$.
\end{definition}

One of the main properties of kickers that we exploit is \emph{positive homogeneity}:

\begin{fact}[Positive homogeneity]\label{fact:homogeneity}
For any $\lambda \ge 0$ and $x\in\R^k$, $F(\lambda\cdot x) = \lambda F(x)$.
\end{fact}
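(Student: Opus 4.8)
The plan is to reduce the claim to a statement purely about the link function $h$. Since $F(x) = h(\iprod{v_1,x},\dots,\iprod{v_k,x})$, replacing $x$ by $\lambda x$ replaces the coordinate vector $z = (\iprod{v_1,x},\dots,\iprod{v_k,x})$ by $\lambda z$, so it suffices to show that $h$ is positively homogeneous of degree one: $h(\lambda z) = \lambda h(z)$ for all $\lambda \ge 0$ and $z \in \R^k$. This is where I would invoke the hypothesis that $F$ is a kicker — so that $h$ is continuous piecewise-\emph{linear} rather than merely piecewise-affine.

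Concretely, I would fix any realization of $h$ by pieces $\brc{(g_i, S_i)}_{i\in\calI}$, where $\brc{S_i}$ partitions $\R^k$ into polyhedral cones and each $g_i$ is \emph{linear}, hence homogeneous of degree one. Given $z \in \R^k$, let $i$ be the unique index with $z \in S_i$. For $\lambda > 0$, the cone property of $S_i$ gives $\lambda z \in S_i$, so $h(\lambda z) = g_i(\lambda z) = \lambda\,g_i(z) = \lambda\,h(z)$ by linearity. For $\lambda = 0$, if $j$ is the unique index with $0 \in S_j$ then $h(0) = g_j(0) = 0 = 0\cdot h(z)$, again because $g_j$ is linear. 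Combining the cases gives $h(\lambda z) = \lambda h(z)$, and unwinding the reduction gives $F(\lambda x) = \lambda F(x)$.

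I do not expect any real obstacle here; the one point needing care is conceptual, namely to use the piecewise-\emph{linear} half of Definition~\ref{def:piecewise} (linear pieces on polyhedral cones) rather than the piecewise-affine variant, for which the statement is simply false. Note that continuity of $h$ plays no role in this argument, nor does Lemma~\ref{lem:normbound}; one does implicitly use that $\brc{S_i}$ is a genuine partition, so that every point, including cone boundaries and the origin, is assigned to exactly one linear piece.
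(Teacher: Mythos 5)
Your argument is correct and is exactly the reasoning the paper takes as immediate; the paper states Fact~\ref{fact:homogeneity} without a proof, treating it as an obvious consequence of the link function being piecewise \emph{linear} (not affine) on a partition into polyhedral cones. You correctly isolate the two ingredients that make this work — cones are invariant under positive scaling and linear pieces are degree-one homogeneous — and your handling of $\lambda=0$ via $g_j(0)=0$ is the right way to close the edge case.
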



The following property of restrictions of Lipschitz functions will be important.

\begin{lemma}\label{lem:key}
For any nonzero linear subspace $W\subseteq V$, and $\Lip$-Lipschitz function $F:\R^d\to\R$, \begin{equation}
    \sup_{x: \norm{\Pi_{V\backslash W} x}\le 1} \abs*{F(x) - F(\Pi_W x)} \le \Lip.
\end{equation}
\end{lemma}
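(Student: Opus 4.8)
The plan is to reduce to a one-line application of the Lipschitz property once we have written $x$ and $\Pi_W x$ in a common orthogonal decomposition.

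Recall from the setup that $F$ is a subspace junta with relevant subspace $V$; since all relevant directions $v_1,\dots,v_k$ lie in $V$, for every $x\in\R^d$ we have $\iprod{v_i,x} = \iprod{v_i,\Pi_V x}$ and hence $F(x) = F(\Pi_V x)$. Next, since $W$ and $V\backslash W$ are orthogonal subspaces whose orthogonal direct sum is $V$, the projectors satisfy $\Pi_V = \Pi_W + \Pi_{V\backslash W}$. Combining these two facts, for any $x$ we can write
\[
F(x) - F(\Pi_W x) = F\bigl(\Pi_W x + \Pi_{V\backslash W} x\bigr) - F(\Pi_W x),
\]
so the two arguments of $F$ on the right-hand side differ by exactly the vector $\Pi_{V\backslash W} x$.

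Now apply $\Lip$-Lipschitzness of $F$ (with respect to $L_2$): $\abs{F(x) - F(\Pi_W x)} \le \Lip\cdot\norm{\Pi_{V\backslash W} x}$. Taking the supremum over all $x$ with $\norm{\Pi_{V\backslash W} x}\le 1$ yields the claimed bound of $\Lip$. There is no genuine obstacle here; the only point worth noting is that the components of $x$ lying in $W$ and in $V^{\perp}$ — which may have arbitrarily large norm — play no role, precisely because the Lipschitz estimate only depends on the difference of the two arguments, which lives entirely in $V\backslash W$. (Were $F$ an arbitrary Lipschitz function rather than a subspace junta with relevant subspace $V$, the statement would fail, e.g. for a linear functional in a direction orthogonal to $V$, so the junta structure is used exactly once, in the identity $F(x) = F(\Pi_V x)$.)
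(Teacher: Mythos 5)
Your proof is correct and takes essentially the same route as the paper: both use the subspace-junta identity $F(x)=F(\Pi_V x)$ to rewrite the arguments so that they differ only by $\Pi_{V\backslash W}x$, then apply $\Lip$-Lipschitzness. The paper phrases this as a WLOG reduction to $x\in V$, while you write the orthogonal decomposition $\Pi_V=\Pi_W+\Pi_{V\backslash W}$ explicitly, but this is a cosmetic difference.
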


\begin{proof}
   Because $F(x) = F(\Pi_V x)$ and $F(\Pi_W x) = F(\Pi_W \Pi_V x)$, we may assume without loss of generality that $x\in V$. For any $x\in V$ for which $\norm{\Pi_{V\backslash W}x} \le 1$, we have that \begin{equation}
       \abs{F(x) - F(\Pi_W x)} \le \Lip\norm{x - \Pi_W x} = \Lip\norm{\Pi_{V\backslash W}x} \le \Lip,
   \end{equation}
   as claimed.
\end{proof}

\subsection{A Generic Lattice Polynomial Representation}

Essential to our analysis is the following structural result from \cite{ovchinnikov2002max} which says that, perhaps surprisingly, \emph{any} piecewise linear function can be expressed as a relatively simple lattice polynomial.

\begin{theorem}[\cite{ovchinnikov2002max}, Theorem 4.1]\label{thm:tropical}
If $h: \R^n\to\R$ is a continuous piecewise-linear function which has a realization by pieces $\brc{(g_i,S_i)}_{i\in[M]}$, there exists a collection of \emph{clauses} $\calI_1,...,\calI_{m}\subseteq [M]$ for which \begin{equation}\label{eq:lattice}
    h(x) = \max_{j\in[m]}\min_{i\in\calI_j}g_i(x)
\end{equation}
\end{theorem}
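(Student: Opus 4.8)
The plan is to prove Theorem~\ref{thm:tropical} by an explicit construction: given the realization $\{(g_i,S_i)\}_{i\in[M]}$, I would take the clauses $\calI_j$ to be indexed by the pieces themselves, i.e. for each $j\in[M]$ set $\calI_j = \{i\in[M] : g_i(x)\ge g_j(x)\ \text{for all}\ x\in S_j\}$ --- intuitively, $\calI_j$ collects all the linear functions that dominate $g_j$ over the region $S_j$ where $g_j$ is active. Then I would show that $h(x)=\max_{j}\min_{i\in\calI_j}g_i(x)$ pointwise. First observe $j\in\calI_j$ always, so $\min_{i\in\calI_j}g_i(x)\le g_j(x)$ everywhere.

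The two directions of the pointwise identity are the heart of the argument. Fix $x\in\R^n$ and let $j^*$ be an index with $x\in S_{j^*}$, so $h(x)=g_{j^*}(x)$. For the lower bound $h(x)\ge\max_j\min_{i\in\calI_j}g_i(x)$: take any $j$; I want $\min_{i\in\calI_j}g_i(x)\le g_{j^*}(x)=h(x)$. If $j^*\in\calI_j$ this is immediate. Otherwise, by definition of $\calI_j$ there is some point $z\in S_j$ with $g_{j^*}(z)<g_j(z)$; I would combine this with the fact that at points of $S_j$ the function value is $g_j$, that at points of $S_{j^*}$ it is $g_{j^*}$, and continuity of $h$, together with positive homogeneity / linearity to track the sign of $g_j-g_{j^*}$ along a segment, to locate a point where the relevant minimum is controlled --- this is the delicate part (see below). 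For the upper bound $h(x)\le\max_j\min_{i\in\calI_j}g_i(x)$: it suffices to exhibit one $j$ with $\min_{i\in\calI_j}g_i(x)\ge g_{j^*}(x)$. The natural candidate is $j=j^*$: I must check that for \emph{every} $i\in\calI_{j^*}$, $g_i(x)\ge g_{j^*}(x)$, even though $x$ need not lie in $S_{j^*}$. By definition $g_i\ge g_{j^*}$ holds on all of $S_{j^*}$; since both are affine (or linear), an inequality between affine functions holding on a full-dimensional polyhedral cone $S_{j^*}$ extends to the whole cone's affine hull, but not necessarily to all of $\R^n$ --- so here I would instead argue that if $g_i(x)<g_{j^*}(x)$ for some such $x$, then walking from a point of $S_{j^*}$ toward $x$ one crosses a region boundary into some $S_{j'}$ with $g_{j'}$ agreeing with the smaller value, and iterate/use minimality to reach a contradiction with how $\calI_{j^*}$ was defined.

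The main obstacle I anticipate is making the region-crossing / continuity argument fully rigorous: piecewise-linear functions can have many pieces meeting at a single lower-dimensional locus, the cones $S_i$ overlap only on their boundaries, and one has to be careful that ``the value of $h$ near $x$ equals $g_i(x)$ for some active $i$'' and that moving along a segment the active index changes only finitely often. I would handle this by restricting attention to generic points $x$ (those in the interiors of full-dimensional cells), proving the identity there, and then extending to all $x$ by continuity of both sides --- both $h$ and $\max_j\min_{i\in\calI_j}g_i$ are continuous, so agreement on a dense set suffices. I would also invoke Lemma~\ref{lem:normbound}-style reasoning to discard non-full-dimensional pieces at the outset so that every retained $S_i$ has nonempty interior, which is what lets the affine-inequality-on-a-cone bookkeeping go through. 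A clean alternative, if the direct approach gets unwieldy, is to induct on $M$ using that a continuous piecewise-linear function is a min of maxes of fewer pieces locally, but I expect the explicit-clause construction above, due to Ovchinnikov, to be the cleanest route and I would follow it.
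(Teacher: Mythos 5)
You should be aware that the paper does not actually prove Theorem~\ref{thm:tropical}: it cites it as Theorem~4.1 of \cite{ovchinnikov2002max} and uses it as a black box. What the paper proves from scratch is only the ReLU-network specialization, Theorem~\ref{thm:tropical_nns}, and it does so by an entirely different route — an induction over layers using the identities in Lemmas~\ref{lem:phirep}, \ref{lem:scalingrep}, \ref{lem:sumrep} for $\phi(G)$, $\lambda G$, and sums of max-min formulas — precisely so that the resulting clause structure can be made to depend only on the architecture and sign pattern, a property the authors explicitly need and which the generic Ovchinnikov construction does not give. So comparing your proposal to ``the paper's proof'' is really comparing it to Ovchinnikov's proof of the general statement.

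Your clause construction $\calI_j = \{i : g_i(x)\ge g_j(x)\ \text{for all}\ x\in S_j\}$ is exactly Ovchinnikov's, so you have the right object. But there are two problems, one small and one substantive. The small one: you call the inequality $h(x)\le\max_j\min_{i\in\calI_j}g_i(x)$ the one requiring care ``even though $x$ need not lie in $S_{j^*}$'' — but by your own setup $j^*$ is chosen so that $x\in S_{j^*}$, hence $g_i(x)\ge g_{j^*}(x)$ for every $i\in\calI_{j^*}$ is immediate from the definition of $\calI_{j^*}$, and this direction is trivial; you have the easy and hard directions confused. The substantive problem is that the genuinely hard direction, $\min_{i\in\calI_j}g_i(x)\le h(x)$ for \emph{every} $j$, is exactly the content of Ovchinnikov's theorem and is left as a gesture. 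Since $j\in\calI_j$, this inequality is free whenever $g_j(x)\le h(x)$; the difficulty is when $g_j(x) > h(x) = g_{j^*}(x)$, in which case you must exhibit some \emph{other} index $k\in\calI_j$ (i.e.\ $g_k\ge g_j$ on all of $S_j$, not just at one point) with $g_k(x)\le h(x)$. Observing that $g_{j^*}(z) < g_j(z)$ at some $z\in S_j$ and ``tracking the sign of $g_j-g_{j^*}$'' along a segment does not produce such a $k$: that sign is fixed ($g_j > g_{j^*}$ at both ends, hence everywhere on $[z,x]$ by linearity) and says nothing about any third index. What is actually needed is a path/crossing lemma — walking from a point of $S_j$ to $x$ through the intermediate cells and, at each boundary crossing where two linear pieces agree, updating the candidate $k$ so as to preserve both $k\in\calI_j$ and the controlling inequality — and this inductive bookkeeping is the real work in Ovchinnikov's proof. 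Until that lemma is supplied, the proposal has a genuine gap; the rest (restricting to full-dimensional pieces, handling degenerate $x$ by density and continuity of both sides) is sound cleanup.
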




We will work with the following notion of approximation for such lattice polynomials:

\begin{definition}\label{def:closepiece}
    Two continuous piecewise-linear functions $G,\td{G}: \R^d\to\R$ are \emph{$(M,\eta)$-structurally-close} if there exist linear functions $g_1,...,g_M$ and $\td{g}_1,...,\td{g}_M$ and subsets $\calI_1,...,\calI_m\subseteq[M]$ for which \begin{equation}
        G(x) = \max_{j\in[m]}\min_{i\in\calI_i}g_i(x) \qquad \td{G}(x) = \max_{j\in[m]}\min_{i\in\calI_i}\td{g}_i(x)
    \end{equation}
    and $\norm{g_i - \td{g}_i} \le \eta$ for all $i$.
\end{definition}

Structural closeness of continuous piecewise-linear functions in the above sense is stronger than $L_2$-closeness. 

\begin{lemma}\label{lem:hybrid_L2}
Take continuous piecewise-linear functions $G,\td{G}:\R^m\to\R$ which are $(M,\eta)$-structurally-close. Then $\norm{G - \td{G}} \le \eta\sqrt{m}$. In particular, if $G$ is a piecewise-linear function which is realized by pieces $\brc{(\iprod{u_i,\cdot},S_i)}$ satisfying $\norm{u_i} \le \eta$, then $\norm{G} \le \eta\sqrt{m}$.
\end{lemma}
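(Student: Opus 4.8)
The plan is to reduce the whole statement to a pointwise inequality and then integrate against the Gaussian measure. First I would observe that for each fixed $x \in \R^m$, the quantity $|G(x) - \td{G}(x)|$ can be controlled by the worst-case discrepancy among the linear functions appearing in the two lattice polynomial representations. Concretely, I would prove the elementary fact that the operations $\max$ and $\min$ are $1$-Lipschitz in the sup norm over their arguments: if $a_1,\dots,a_r$ and $b_1,\dots,b_r$ are reals with $|a_i - b_i| \le t$ for all $i$, then $|\max_i a_i - \max_i b_i| \le t$ and likewise for $\min$. Applying this first to each clause (an inner $\min$ over $i \in \calI_j$) and then to the outer $\max$ over $j \in [m]$, and using that $|g_i(x) - \td g_i(x)| = |\iprod{u_i - \td u_i, x}| \le \norm{u_i - \td u_i}\,\norm{x} \le \eta\norm{x}$ for every $i$ (writing $g_i - \td g_i$ as an inner product with a vector of norm $\le \eta$), I get the pointwise bound $|G(x) - \td{G}(x)| \le \eta\,\norm{x}$.

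From here the $L_2$ bound is immediate: $\norm{G - \td{G}}^2 = \E_{x\sim\calN(0,\Id_m)}[(G(x) - \td{G}(x))^2] \le \eta^2\,\E[\norm{x}^2] = \eta^2 m$, so $\norm{G - \td{G}} \le \eta\sqrt{m}$. For the ``in particular'' clause, I would apply the main statement with $\td{G} \equiv 0$: a piecewise-linear function realized by pieces $\brc{(\iprod{u_i,\cdot},S_i)}$ with $\norm{u_i}\le \eta$ has, by Theorem~\ref{thm:tropical}, a lattice polynomial representation $G(x) = \max_j \min_{i\in\calI_j} \iprod{u_i,x}$ using exactly these linear functions; taking all $\td g_i \equiv 0$ (so $\td G \equiv 0$) gives $(M,\eta)$-structural-closeness since $\norm{u_i - 0} = \norm{u_i} \le \eta$, and the first part yields $\norm{G} = \norm{G - \td G} \le \eta\sqrt{m}$. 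One small point to handle: structural closeness as stated in Definition~\ref{def:closepiece} refers to linear (not affine) functions, which matches the ``piecewise-linear'' (homogeneous) setting here, so no constant terms intervene and the inner-product bound $|g_i(x) - \td g_i(x)| \le \eta\norm{x}$ is exactly right.

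I do not expect any genuine obstacle here — the lemma is essentially a packaging of the sup-norm Lipschitzness of lattice operations together with Cauchy–Schwarz and the fact that $\E\norm{x}^2 = m$ for a standard Gaussian in $\R^m$. The only place requiring a modicum of care is the induction establishing that $\max$ and $\min$ are $1$-Lipschitz in $\ell_\infty$ and its propagation through the two-level $\max$-of-$\min$ structure; this is routine but should be stated cleanly so that it applies verbatim regardless of the clause sizes $|\calI_j|$ or the number of clauses $m$. No step should cost more than a couple of lines.
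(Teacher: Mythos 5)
Your proof is correct and follows essentially the same route as the paper: the paper's own proof invokes the helper Lemma~\ref{lem:hybrid} (which is precisely your observation that $\max$ and $\min$ are $1$-Lipschitz in the sup norm over arguments), combines it with Cauchy--Schwarz to get the pointwise bound $|G(x)-\td{G}(x)|\le\eta\norm{x}$, and integrates against $\E\norm{x}^2=m$. Your handling of the ``in particular'' clause by taking $\td{G}\equiv 0$ via Theorem~\ref{thm:tropical} is the intended specialization.
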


To show this, we need the following helper lemma:

\begin{lemma}\label{lem:hybrid}
    If $\brc{g_i}_{i\in[M]}$ and $\brc{\td{g}_i}_{i\in[M]}$ are two collections of linear functions, then for any $x$,
    \begin{equation}
        \abs{\max_{j\in[m]}\min_{i\in\calI_j}g_i(x) - \max_{j\in[m]}\min_{i\in\calI_j}\td{g}_i(x)} \le \max_i\abs{g_i(x) - \td{g}_i(x)} \label{eq:hybrid}
    \end{equation}
\end{lemma}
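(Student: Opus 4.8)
The plan is to prove the pointwise inequality by showing separately that each side differs from the other by at most $\max_i |g_i(x) - \tilde g_i(x)|$; since the inequality is symmetric in the two collections, it suffices to show one direction, say $\max_j \min_{i\in\calI_j} g_i(x) \le \max_j \min_{i\in\calI_j}\tilde g_i(x) + \varepsilon(x)$ where $\varepsilon(x) \triangleq \max_i|g_i(x)-\tilde g_i(x)|$. First I would fix $x$ and abbreviate $\varepsilon = \varepsilon(x)$, so that $|g_i(x) - \tilde g_i(x)| \le \varepsilon$ for every $i \in [M]$, hence $g_i(x) \le \tilde g_i(x) + \varepsilon$ and $g_i(x) \ge \tilde g_i(x) - \varepsilon$ for all $i$.

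The key structural observation is that both $\min$ and $\max$ are $1$-Lipschitz with respect to the $\ell_\infty$ norm on their argument tuples, and more elementarily that they are monotone: if $a_i \le b_i$ for all $i$ in an index set, then $\min_i a_i \le \min_i b_i$ and $\max_i a_i \le \max_i b_i$. So for each fixed clause $j$, from $g_i(x) \le \tilde g_i(x) + \varepsilon$ for all $i \in \calI_j$ we get $\min_{i\in\calI_j} g_i(x) \le \min_{i\in\calI_j}(\tilde g_i(x) + \varepsilon) = \big(\min_{i\in\calI_j}\tilde g_i(x)\big) + \varepsilon$. Taking the maximum over $j \in [m]$ of both sides (again using monotonicity of $\max$) yields $\max_{j} \min_{i\in\calI_j} g_i(x) \le \max_j\big(\min_{i\in\calI_j}\tilde g_i(x) + \varepsilon\big) = \big(\max_j \min_{i\in\calI_j}\tilde g_i(x)\big) + \varepsilon$. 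This establishes the bound with $g$ and $\tilde g$ in one order.

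By the symmetric argument, swapping the roles of $g_i$ and $\tilde g_i$ (and using $g_i(x) \ge \tilde g_i(x) - \varepsilon$), we obtain $\max_j \min_{i\in\calI_j}\tilde g_i(x) \le \big(\max_j\min_{i\in\calI_j}g_i(x)\big) + \varepsilon$. Combining the two inequalities gives $\big|\max_j\min_{i\in\calI_j}g_i(x) - \max_j\min_{i\in\calI_j}\tilde g_i(x)\big| \le \varepsilon = \max_i|g_i(x)-\tilde g_i(x)|$, which is exactly \eqref{eq:hybrid}. I do not anticipate a genuine obstacle here — the only thing to be slightly careful about is that the index set $\calI_j$ over which the inner $\min$ is taken is the \emph{same} for $G$ and $\tilde G$ (which is guaranteed by the hypothesis, since both collections use the same clauses $\calI_1,\dots,\calI_m$), so that the monotonicity step applies clause-by-clause with matched index sets; without this shared-clause structure the inequality would fail.
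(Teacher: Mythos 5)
Your proof is correct and takes essentially the same approach as the paper's: the paper's one-line proof appeals to the fact that $\max$ and $\min$ preserve $1$-Lipschitzness with respect to $L_\infty$ (argued by induction on the lattice polynomial structure), and your clause-by-clause monotonicity argument is just a direct unrolling of that same fact.
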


\begin{proof}
    This simply follows by induction using the fact that if $f_1,f_2:\R^a\to\R$ are both 1-Lipschitz with respect to $L_{\infty}$, then $\Max{f_1}{f_2}$ and $\Min{f_1}{f_2}$ are as well.
\end{proof}

\begin{proof}[Proof of Lemma~\ref{lem:hybrid_L2}]
    Let $\brc{(\iprod{u_i,\cdot},S_i)}_{i\in[M]}$ and $\brc{(\iprod{\td{u}_i,\cdot},S_i)}_{i\in[M]}$ be the realizations of $G,\td{G}$ for which $\norm{u_i - \td{u}_i} \le \eta$. By Lemma~\ref{lem:hybrid} applied to these pieces, together with Cauchy-Schwarz, for any $x$ we have that $\abs{G(x) - \td{G}(x)} \le \eta\norm{x}$. So $\norm{G - \td{G}} \le \eta\cdot\E{\norm{x}^2}^{1/2} = \eta\sqrt{m}$.
\end{proof}

As discussed in Section~\ref{sec:overview}, for our application to learning general kickers, we will leverage the lattice polynomial representation in Theorem~\ref{thm:tropical} to grid over piecewise-linear functions. Note that \emph{a priori}, even if we knew exactly the set of linear functions $\brc{g_i}_{i\in[M]}$ in a realization of a piecewise-linear function, enumerating over all lattice polynomials of the form \eqref{eq:lattice} would require time doubly exponential in $M$, as there are $2^M$ possible clauses $\calI_j$ and $2^{2^M}$ possible sets of clauses $\brc{\calI_j}$.

By being slightly more careful, we can enumerate over piecewise linear functions in time $\exp(\poly(M))$.

\begin{definition}
An \emph{order type on $n$ elements} is specified by a function $\omega:[n]\to[n]$ for which every element from 1 to $\max_i \omega(i)$ is present. We say that a set of $n$ real numbers $z_1,...,z_n$ has order type $\omega$ (denoted $\brc{z_1,...,z_n} \vdash \omega$ if $z_i = z_j$ (resp. $z_i > z_j$, $z_i < z_j$) if and only if $\omega(i) = \omega(j)$ (resp. $\omega(i) > \omega(j)$, $\omega(i) < \omega(j)$). Denote the set of order types on $n$ elements by $\Omega_n$. Note that any set of real numbers has exactly one order type.
\end{definition}

\begin{lemma}\label{lem:useorder}
    If $F$ has a realization by pieces $\brc{(g_i,S_i)}_{i\in[M]}$, then there is a function $A: \Omega_M\to [M]$ such that for any $x$, \begin{equation}
        F(x) = \sum_{\omega\in\Omega_M}\bone*{\brc{g_i(x)}_{i\in[M]}\vdash \omega} \cdot g_{A(\omega)}(x).\label{eq:useorder}
    \end{equation}
\end{lemma}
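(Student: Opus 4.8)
The plan is to use Theorem~\ref{thm:tropical} to write $F$ as a lattice polynomial $F(x) = \max_{j\in[m]}\min_{i\in\calI_j} g_i(x)$ over the fixed set of linear functions $\brc{g_i}_{i\in[M]}$, and then observe that the value of any such $\max$-$\min$ expression at a point $x$ depends only on the order type of the tuple $(g_1(x),\ldots,g_M(x))$. Concretely, once the order type $\omega$ of $\brc{g_i(x)}_{i\in[M]}$ is fixed, for each clause $j$ the index $i^*_j \triangleq \arg\min_{i\in\calI_j} g_i(x)$ is determined (pick, say, the smallest such index in case of ties, which is fine since the corresponding values are equal), and then the outer $\max$ selects a clause $j^*$, again determined by $\omega$; setting $A(\omega) \triangleq i^*_{j^*}$ gives $F(x) = g_{A(\omega)}(x)$ whenever $\brc{g_i(x)}_{i\in[M]}\vdash\omega$. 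Since every tuple of reals has exactly one order type, summing over $\omega\in\Omega_M$ with the indicator $\bone{\brc{g_i(x)}_{i\in[M]}\vdash\omega}$ picks out exactly one term, yielding \eqref{eq:useorder}.

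First I would invoke Theorem~\ref{thm:tropical} on the given realization $\brc{(g_i,S_i)}_{i\in[M]}$ to obtain clauses $\calI_1,\ldots,\calI_m\subseteq[M]$ with $F(x) = \max_{j}\min_{i\in\calI_j} g_i(x)$. Next I would fix an arbitrary order type $\omega\in\Omega_M$ and verify the key claim: if $z_1,\ldots,z_M$ and $z'_1,\ldots,z'_M$ both have order type $\omega$, then the index selected by the outer-max/inner-min combinatorial procedure is the same for both tuples. This is immediate because that procedure only ever compares two of the $z_i$'s (or $z'_i$'s) to decide which is smaller/larger/equal, and order type by definition records exactly the outcomes of all such comparisons. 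Having fixed the selected index, call it $A(\omega)$, I would then note that $\min_{i\in\calI_{j^*}} g_i(x) = g_{A(\omega)}(x)$ for any $x$ with $\brc{g_i(x)}\vdash\omega$, and this common minimum is indeed the maximum over clauses (again because the comparison of the per-clause minima is governed by $\omega$). Finally I would assemble the sum: for any fixed $x$, exactly one indicator $\bone{\brc{g_i(x)}_{i\in[M]}\vdash\omega}$ is nonzero, and on that term the summand equals $g_{A(\omega)}(x) = F(x)$, giving the identity.

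The one genuine subtlety — which is really the only "obstacle," and a mild one — is the handling of ties: if several indices in a clause $\calI_j$ achieve the minimum, or several clauses achieve the outer maximum, the $\arg\min$/$\arg\max$ is not literally unique, so I must fix a deterministic tie-breaking rule (e.g. always take the least index) to make $A$ a well-defined function $\Omega_M\to[M]$. Because the definition of order type demands $z_i=z_j \iff \omega(i)=\omega(j)$, ties are themselves recorded by $\omega$, so the tie-breaking rule depends only on $\omega$ and not on $x$; and since any two choices among tied indices give equal values of $g_{\cdot}(x)$, the resulting $g_{A(\omega)}(x)$ is unaffected. Everything else is a routine induction through the $\max$/$\min$ structure of the lattice polynomial, for which Lemma~\ref{lem:hybrid} already furnishes the template of reasoning about $\max$-$\min$ expressions termwise.
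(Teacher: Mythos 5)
Your proposal is correct and takes exactly the route the paper does: invoke Theorem~\ref{thm:tropical} to get a max-min representation over the fixed linear pieces, then observe that which leaf $g_i$ the lattice polynomial selects depends only on the order type of $(g_1(x),\ldots,g_M(x))$, defining $A$ accordingly. The paper's proof is terser (it does not spell out the tie-breaking detail you carefully handle), but the argument is the same.
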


\begin{proof}
    Let $F(x) = \max_{j\in[m]}\min_{i\in\calI_j} g_i(x)$ be the max-min representation guaranteed by Theorem~\ref{thm:tropical}. This representation implies that for a fixed order type $\omega$, there is some index $i\in[M]$ for which $F(x) = g_i(x)$ for all $x$ satisfying $\brc{g_i(x)}_{i\in[M]}\vdash \omega$. This gives the desired mapping $A$.
\end{proof}

Note that the set of functions $A:\Omega_M\to[M]$ is only of size $(M!)^M \le M^{M^2}$, so by Lemma~\ref{lem:useorder}, to enumerate over piecewise-linear functions with $M$ pieces we can simply enumerate over linear functions $\brc{g_i}$ together with all possible functions $A$ (see Algorithm~\ref{alg:enumerate} below).

\subsection{Lattice Polynomials for ReLU Networks}

Here we give an explicit proof of Theorem~\ref{thm:tropical} in the special case of ReLU networks. We emphasize that the specific nature of the construction exhibited in this theorem will be important in the proof of our main result for learning ReLU networks, and that simply applying Theorem~\ref{thm:tropical} in a black-box fashion will not suffice for our purposes.

\begin{theorem}\label{thm:tropical_nns}
    If $F\in\calC_S$ is a ReLU network with weight matrices $\vW_0\in\R^{k_0\times d}, \vW_1\in\R^{k_1\times k_0},\ldots, \vW_{L}\in\R^{k_{L}\times k_{L-1}}, \vW_{L+1}\in\R^{1\times k_{L}}$, and if $F'$ is a ReLU network with the same architecture as $F$, with weight matrices $\vW'_0,...,\vW'_{L+1}$, such that \begin{equation}
        (\vW_a)_{i,j} \cdot (\vW'_a)_{i,j} \ge 0 \qquad \forall \ 0\le a\le L+1, (i,j)\in[k_a]\times[k_{a-1}],
    \end{equation}
    then there exist vectors $v_1,...,v_M, v'_1,...,v'_M$ and clauses $\calI_1,...,\calI_m\subseteq[M]$, where $M = 2^S$, for which \begin{align}
        F(x) &= \max_{j\in[m]}\min_{i\in\calI_j}\iprod{v_i,x} \\
        F'(x) &= \max_{j\in[m]}\min_{i\in\calI_j}\iprod{v'_i,x}.
    \end{align} Specifically, $v_1,...,v_M$ consist of all vectors of the form $\vW_{L+1}\Sig_{L}\vW_{L}\Sig_{L-1}\cdots \cdots \Sig_0\vW_0$ for diagonal matrices $\Sig_i\in\{0,1\}^{k_i\times k_i}$, and $v'_1,...,v'_M$ are defined analogously.
\end{theorem}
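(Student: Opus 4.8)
The plan is to prove the representation by induction on the number of layers, maintaining the invariant that at each level $a$, the vector of pre-activations (before ReLU) of the sub-network truncated at layer $a$ can be written coordinatewise as a lattice polynomial whose atoms are exactly the vectors $\vW_a\Sig_{a-1}\cdots\Sig_0\vW_0$ over all choices of diagonal $0/1$ matrices $\Sig_i$, and crucially that \emph{the same clause structure} works for $F$ and $F'$. The base case $a=0$ is trivial: the $i$-th pre-activation is $\iprod{(\vW_0)_{i,\cdot},x}$, a single atom. For the inductive step, suppose each coordinate $j$ of the pre-activation vector $\zeta_a(x)$ at layer $a$ equals $\max_{p}\min_{i\in\calI^{(a)}_{j,p}}\iprod{v_i,x}$ for atoms $v_i$ of the stated form and clauses $\calI^{(a)}_{j,p}$ depending only on architecture and signs. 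Applying $\phi$ to each coordinate is the same as taking $\max$ with $0$; since adding the zero linear function as an atom (realized by $\Sig_0 = 0$, say) keeps us in the allowed atom set, $\phi(\zeta_a(x))_j$ is again a lattice polynomial with the same clause structure for both networks.

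The main work is passing through the linear map $\vW_{a+1}$. Here I would use the sign condition: write $(\vW_{a+1})_{k,j} = s_{k,j}\,|(\vW_{a+1})_{k,j}|$ with $s_{k,j}\in\{0,\pm1\}$ fixed. A nonnegative scaling of a lattice polynomial is a lattice polynomial with the same clauses (scale every atom), and negation turns $\max$ into $\min$ and vice versa, so $-$(lattice polynomial) is again a lattice polynomial with a \emph{combinatorially determined} (dual) clause structure. Then I need that the \emph{sum} of lattice polynomials is a lattice polynomial; here I would invoke the distributive-lattice identity that lets one pull sums inside/outside $\max,\min$ — concretely, $a + \max_p b_p = \max_p(a+b_p)$ and $a+\min_p b_p = \min_p(a+b_p)$, iterated — so that $\sum_j \pm|w_{k,j}|\,\phi(\zeta_a)_j$ expands into a single $\max$-over-$\min$ whose atoms are sums of (signed, scaled) atoms from the various coordinates, i.e. exactly vectors of the form $\vW_{a+1}\Sig_a(\text{atom of }\zeta_a) = \vW_{a+1}\Sig_a\vW_a\Sig_{a-1}\cdots\Sig_0\vW_0$. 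The key point to check carefully is that the index set of the resulting $\max$-over-$\min$ — the clauses — depends only on which $s_{k,j}$ are $+,-,0$ and on the clause structures from level $a$, hence only on architecture and signs, and is identical for $F$ and $F'$ since they share architecture and sign pattern. The atom count is bounded by the number of sign-matrix tuples $\prod_a 2^{k_a} \le 2^S$, giving $M = 2^S$.

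The step I expect to be the main obstacle is controlling the combinatorics of the sum-of-lattice-polynomials expansion while keeping the clause structure manifestly sign-determined: naively distributing sums over nested $\max/\min$ blows up the representation and one must argue the blow-up is still bounded by $2^S$ atoms and that collisions/redundancies among atoms (e.g. when some $\Sig_i$ coordinates are forced to $0$ by a zero weight) do not break the shared-clause property. A clean way to handle this is to not optimize the representation at all: allow the clause set $\{\calI_j\}$ to range over \emph{all} of the exponentially many possibilities and simply record, via an argument like Lemma~\ref{lem:useorder}, that for each fixed order type of the $2^S$ atom values the function agrees with one specific atom; then the clauses are read off from the order-type-to-atom assignment, which by positive homogeneity and the monotonicity of $\phi$ and of nonnegative combinations is invariant under the sign-preserving perturbation $\vW_a \mapsto \vW'_a$. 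Finally I would double-check the edge cases — zero rows/columns in some $\vW_a$, and the output layer $\vW_{L+1}$ being a single row (so no final $\phi$) — and confirm these are absorbed by allowing $\Sig_i$ to be any $0/1$ diagonal matrix rather than only full rank ones.
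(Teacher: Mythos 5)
Your main inductive route — ReLU via a zero atom, scaling via sign decomposition plus lattice duality, sums via the distributive identities — is exactly the paper's approach, which packages these three steps as Lemmas~\ref{lem:phirep}, \ref{lem:scalingrep}, and \ref{lem:sumrep} and assembles them by induction on the layer index.

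The concern you flag about atom-count blowup under the sum expansion is a genuine subtlety and deserves to be resolved rather than hand-waved. Iterating Lemma~\ref{lem:sumrep} produces atoms of the form $\sum_{b'} w_{b'}\,\vW^{b'}_a \Sig^{(b')}_{a-1}\cdots\Sig^{(b')}_0\vW_0$ where the sign matrices $\Sig^{(b')}_\ell$ are chosen \emph{independently} for each summand index $b'$; only the ``synchronized'' combinations (all $\Sig^{(b')}_\ell$ equal) collapse to $\vW^b_{a+1}\Sig_a\vW_a\Sig_{a-1}\cdots\Sig_0\vW_0$. The set of tuple-sums therefore grows multiplicatively across layers and is not obviously bounded by $2^S$. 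The actual gradients of $F$ \emph{are} synchronized (an input $x$ fixes a single sign pattern $\Sig_0,\ldots,\Sig_L$ at once), so by Theorem~\ref{thm:tropical} a max-min representation with only the $2^S$ synchronized atoms exists; what still needs an argument is that the off-form atoms can be discarded from the clauses of the Lemma~\ref{lem:sumrep} representation without breaking it (deleting an atom from a clause can only raise that clause's min, so one must show the raised min never exceeds $F$), or alternatively that the clause structure from Theorem~\ref{thm:tropical} can itself be made sign-invariant. Your write-up should either supply that argument or carry the larger $M$ through to the final bounds.

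Your proposed fallback via order types does not close this gap: the assignment $A:\Omega_M\to[M]$ in Lemma~\ref{lem:useorder} is itself extracted from an \emph{already-given} max-min representation of the function, so invoking it here is circular, and the order type of the atom values $\{\iprod{v_i,x}\}$ at a fixed $x$ is plainly not preserved under a sign-preserving perturbation $\vW_a \mapsto \vW'_a$. If you want a non-inductive route, you would instead need to argue directly that the region-to-gradient map of a ReLU network (equivalently, the clause structure of the Ovchinnikov representation) is determined by the architecture and sign pattern alone, which is essentially the content of the inductive proof in another guise.
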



We prove Theorem~\ref{thm:tropical_nns} by induction by exhibiting max-min representations for ReLUs, scalings, and sums of max-min formulas. Let $G:\R^d\to\R$ be a piecewise-linear function given by $G(x) \triangleq \max_{j\in [m]}\min_{i\in\calI_j}\iprod{u_i,x}$ for some subsets $\brc{\calI_1,...,\calI_m}$ of $[M]$ and vectors $\brc{u_1,...,u_M}$ in $\R^d$.

\begin{lemma}\label{lem:phirep}
     Let $u_{M+1} = 0$ and let $\calI_{m+1} = \brc{M+1}$. Then for all $x\in\R^d$, \begin{equation}
        \phi(G(x)) = \max_{j\in[m+1]}\min_{i\in\calI_j}\iprod{u_i,x}.
    \end{equation}
\end{lemma}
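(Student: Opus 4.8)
The plan is to simply unwind the definition of the ReLU activation. Recall that $\phi(z) = \max(z,0)$, so
\begin{equation}
\phi(G(x)) = \max\left(G(x),\, 0\right) = \max\left(\max_{j\in[m]}\min_{i\in\calI_j}\iprod{u_i,x},\; 0\right).
\end{equation}
The only new ingredient in the claimed representation is the extra clause $\calI_{m+1} = \brc{M+1}$, and since $u_{M+1} = 0$ we have $\min_{i\in\calI_{m+1}}\iprod{u_i,x} = \iprod{u_{M+1},x} = \iprod{0,x} = 0$ for every $x$. Therefore
\begin{equation}
\max_{j\in[m+1]}\min_{i\in\calI_j}\iprod{u_i,x} = \max\left(\max_{j\in[m]}\min_{i\in\calI_j}\iprod{u_i,x},\; \min_{i\in\calI_{m+1}}\iprod{u_i,x}\right) = \max\left(G(x),\, 0\right),
\end{equation}
which equals $\phi(G(x))$ by the first display. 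This establishes the identity for all $x\in\R^d$.

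There is essentially no obstacle here: the lemma is a bookkeeping step that packages the ReLU of a max-min formula as another max-min formula by appending a single ``constant-zero'' clause, and the verification is a direct computation using $\max(a,b) = \max(\max_j a_j, b)$ when $a = \max_j a_j$. The reason to isolate it as a lemma is that it will be chained together with companion lemmas handling scalings $\lambda G$ (for $\lambda \ge 0$, rescale the $u_i$; this is where positive homogeneity and the sign condition $(\vW_a)_{i,j}(\vW'_a)_{i,j}\ge 0$ will matter) and sums $G_1 + G_2$ of max-min formulas, so that an induction on the layers of the network yields Theorem~\ref{thm:tropical_nns} with the clause structure $\brc{\calI_j}$ depending only on the architecture and the signs of the weight entries. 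The care needed in those later steps — in particular producing a \emph{common} clause collection for $F$ and $F'$ — is where the real work lies; the present lemma itself is immediate.
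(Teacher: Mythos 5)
Your proof is correct and is exactly what the paper has in mind: the paper's proof is simply ``This is immediate from the definition of $\phi$,'' and your write-up just makes explicit the one-line computation that $u_{M+1}=0$ makes the new clause evaluate to $0$, so appending it realizes $\max(G(x),0)=\phi(G(x))$.
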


\begin{proof}
    This is immediate from the definition of $\phi$.
\end{proof}

\begin{lemma}\label{lem:scalingrep}
    For any $\lambda\in\R$, there exist subsets $\brc{\calJ_1,...,\calJ_{m'}}$ of $[M]$ such that for all $x\in\R^d$,
    \begin{equation}
        \lambda G(x) = \max_{j\in[m']}\min_{i\in \calJ_j} \iprod{\lambda u_i,x}. \label{eq:negation_formula}
    \end{equation} Furthermore, these subsets only depend on $\calI_1,....,\calI_m$ and the sign of $\lambda$.
\end{lemma}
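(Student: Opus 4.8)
The plan is to split on the sign of $\lambda$ and, in the harder case, to reduce to the distributive law in the lattice $(\R,\vee,\wedge)$. First, when $\lambda \ge 0$, multiplication by $\lambda$ is order-preserving and positively homogeneous, so for all reals $a,b$ we have $\lambda(a\vee b) = (\lambda a)\vee(\lambda b)$ and $\lambda(a\wedge b) = (\lambda a)\wedge(\lambda b)$; pushing $\lambda$ inside $G(x) = \max_{j\in[m]}\min_{i\in\calI_j}\iprod{u_i,x}$ termwise then yields $\lambda G(x) = \max_{j\in[m]}\min_{i\in\calI_j}\iprod{\lambda u_i,x}$ directly, so I would take $m' = m$ and $\calJ_j = \calI_j$. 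The degenerate case $\lambda = 0$ is absorbed here, since every $\iprod{\lambda u_i,x}$ vanishes and the formula collapses to $0 = \lambda G(x)$.

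For $\lambda < 0$, multiplication by $\lambda$ reverses order, so $\lambda(a\vee b) = (\lambda a)\wedge(\lambda b)$ and $\lambda(a\wedge b) = (\lambda a)\vee(\lambda b)$; the same termwise substitution now swaps max and min, producing the ``min-of-max'' formula
\begin{equation}
\lambda G(x) = \min_{j\in[m]}\max_{i\in\calI_j}\iprod{\lambda u_i,x}.
\end{equation}
The remaining step is purely combinatorial: rewrite a min-of-max of a finite family of linear functions as a max-of-min over the \emph{same} underlying vectors $\lambda u_i$. I would invoke the elementary distributive identity
\begin{equation}
\bigwedge_{j\in[m]}\ \bigvee_{i\in\calI_j} b_{ij} \;=\; \bigvee_{\sigma\in\Sigma}\ \bigwedge_{j\in[m]} b_{\sigma(j),j}, \qquad \Sigma \triangleq \prod_{j\in[m]}\calI_j,
\end{equation}
valid for any reals $\brc{b_{ij}}$, where $\Sigma$ is the set of transversals $\sigma$ (choice functions with $\sigma(j)\in\calI_j$); this follows by an easy induction on $m$ from $a\wedge(b\vee c) = (a\wedge b)\vee(a\wedge c)$. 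Specializing $b_{ij} = \iprod{\lambda u_i,x}$ (which in fact does not depend on $j$) converts the display above into $\lambda G(x) = \max_{\sigma\in\Sigma}\min_{i\in\calJ_\sigma}\iprod{\lambda u_i,x}$ with $\calJ_\sigma \triangleq \brc{\sigma(1),\ldots,\sigma(m)}\subseteq[M]$; thus the new clause collection is $\brc{\calJ_\sigma}_{\sigma\in\Sigma}$, of size $m' = \prod_j\abs{\calI_j} \le M^m$, and \eqref{eq:negation_formula} holds.

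I do not expect a genuine obstacle here, since the whole argument is just the sign dichotomy plus the lattice distributive law. The one point that needs care — and the reason this lemma is stated in this particular form — is the bookkeeping in the last sentence of the statement: in both branches the new clause structure $\brc{\calJ_j}$ must be verified to depend on $\calI_1,\ldots,\calI_m$ and $\sgn(\lambda)$ \emph{alone}, never on the entries of the $u_i$'s, since this is precisely what is needed later (in the proof of Theorem~\ref{thm:tropical_nns}) to force the networks $F$ and $F'$ to share an identical clause structure.
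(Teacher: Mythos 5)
Your proof is correct and follows essentially the same route as the paper: split on the sign of $\lambda$, push a nonnegative scalar through the max--min formula termwise, and for negative $\lambda$ flip to a min-of-max and then convert back to disjunctive normal form, noting the resulting clauses depend only on $\calI_1,\ldots,\calI_m$ and $\sgn(\lambda)$. The only difference is that the paper cites Birkhoff's lattice-theory lemma as a black box for the DNF conversion, whereas you carry it out explicitly via the transversal/distributive-law identity, which in addition makes the clause bookkeeping transparent.
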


\begin{proof}
    For $\lambda > 0$, we have $\calJ_j = \calI_j$ for all $j$. So it remains to show the claim for $\lambda = -1$. We can write $-G(x)$ as $\min_{j\in[m]}\max_{i\in\calI_j} \iprod{u_i,x}$. This is a lattice polynomial over the reals, and any lattice polynomial over a distributive lattice can be written in disjunctive normal form as $\max_{j\in[m']}\min_{i\in\calJ_j} \iprod{u_i,x}$ for some subsets $\brc{\calJ_j}$ (see e.g. \cite[Section II.5, Lemma 3]{birkhoff1940lattice}), from which the claim follows.
\end{proof}

\begin{lemma}\label{lem:sumrep}
    For any $k'\in\N$ and $b\in[k']$, let $G_b(x) = \max_{j\in [m_b]}\min_{i\in\calI^b_j}\iprod{u^b_i,x}$ for some subsets $\brc{\calI^b_j}$ of $[M_b]$ and vectors $\brc{u^b_i}$ in $\R^d$. For all $x\in\R^d$, \begin{equation}
        \sum^{k'}_{b=1}G_b(x) = \max_{(j_1,\ldots,j_{k'})\in[m_1]\times\cdots \times [m_{k'}]} \ \min_{(i_1,\ldots, i_{k'})\in\calI_{j_1}\times\cdots \times \calI_{j_{k'}}}\iprod{u^1_{i_1} +\cdots + u^{k'}_{i_{k'}},x}.\label{eq:sum_formula}
    \end{equation}
\end{lemma}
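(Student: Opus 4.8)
The plan is to reduce everything to a pointwise statement. Fix $x\in\R^d$, so that every $\iprod{u^b_i,x}$ is just a real number, and prove the claimed equality as an identity of reals. The only algebraic fact needed is that for every constant $c$ the translation $t\mapsto t+c$ is an order-preserving bijection of $\R$, hence commutes with every finite $\max$ and every finite $\min$ over a nonempty index set; equivalently, addition distributes over both $\vee$ and $\wedge$. I emphasize that we will \emph{not} invoke any distributivity of $\max$ over $\min$ (lattice distributivity of the reals); the summation interacts with each of the two lattice operations separately, which is what keeps the argument elementary — in fact more elementary than the DNF manipulation used in Lemma~\ref{lem:scalingrep}.

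First I would pull all of the outer maxima to the front. Writing $G_b(x) = \max_{j_b\in[m_b]}\beta_b(j_b)$ with $\beta_b(j_b) \triangleq \min_{i_b\in\calI^b_{j_b}}\iprod{u^b_{i_b},x}$, repeated application of $A + \max_j B_j = \max_j(A + B_j)$ gives
\[
\sum_{b=1}^{k'}G_b(x) = \max_{(j_1,\ldots,j_{k'})\in[m_1]\times\cdots\times[m_{k'}]}\ \sum_{b=1}^{k'}\beta_b(j_b).
\]
Then, for each fixed tuple $(j_1,\ldots,j_{k'})$, repeated application of $A + \min_i B_i = \min_i(A + B_i)$ converts the inner sum into a single minimum over the product of the corresponding clauses, and linearity of $\iprod{\cdot,x}$ lets us combine the summands:
\[
\sum_{b=1}^{k'}\beta_b(j_b) = \min_{(i_1,\ldots,i_{k'})\in\calI^1_{j_1}\times\cdots\times\calI^{k'}_{j_{k'}}}\ \sum_{b=1}^{k'}\iprod{u^b_{i_b},x} = \min_{(i_1,\ldots,i_{k'})\in\calI^1_{j_1}\times\cdots\times\calI^{k'}_{j_{k'}}}\iprod{u^1_{i_1}+\cdots+u^{k'}_{i_{k'}},\,x}.
\]
Substituting this back yields exactly \eqref{eq:sum_formula}. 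Equivalently one can run this as an induction on $k'$, merging $G_{b+1}$ into the running max-min representation of $G_1+\cdots+G_b$ one factor at a time using the same two distributivity identities, with the trivial base case $k'=1$.

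There is essentially no serious obstacle here; the only points requiring a little care are the bookkeeping — verifying that the outer index set becomes the product $[m_1]\times\cdots\times[m_{k'}]$ and the inner one becomes $\calI^1_{j_1}\times\cdots\times\calI^{k'}_{j_{k'}}$ — and the (harmless, and implicit throughout the paper) assumption that each clause $\calI^b_j$ and each $[m_b]$ is nonempty, so that the finite $\min$'s and $\max$'s are well-defined and the translation-commutation identities genuinely apply.
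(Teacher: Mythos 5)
Your proof is correct, and it takes a genuinely different (and cleaner) route than the paper's. The paper argues \eqref{eq:sum_formula} by a two-sided inequality: for the direction showing the right-hand side is at least $\sum_b G_b(x)$, it fixes for each $b$ a maximizing clause and a minimizing index $i^*_b$ witnessing $G_b(x)$ and observes that the corresponding product clause attains exactly $\sum_b \iprod{u^b_{i^*_b},x}$; for the reverse direction it shows that inside any product clause, a tuple $(i'_1,\ldots,i'_{k'})$ whose value exceeds $\sum_b G_b(x)$ cannot be the minimizer, because some coordinate $b$ has $\iprod{u^b_{i'_b},x} > G_b(x)$ and can be strictly decreased within its clause. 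You instead dispense with witnesses entirely and use only the algebraic facts $A+\max_j B_j=\max_j(A+B_j)$ and $A+\min_i B_i=\min_i(A+B_i)$ (for finite nonempty index sets), which give ``sum of maxes equals max of sums over the product index set'' and likewise for mins, after which linearity of $\iprod{\cdot,x}$ finishes. Both arguments are elementary and both correctly avoid any use of lattice distributivity of $\max$ over $\min$ (which the paper does need in Lemma~\ref{lem:scalingrep}); your version buys brevity and makes the bookkeeping of the product index sets automatic, while the paper's version is more explicitly constructive about which product clause attains the common value. Your caveat about nonempty clauses is the right one and is indeed implicit in the paper.
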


\begin{proof}
    Take any $x\in\R^d$, and for $b\in[k']$ suppose that $G_b(x) = \iprod{u^b_{i^*_b},x}$ for some index $i^*_b\in[M]$. Note that for any $\calI^1_{j_1},\ldots,\calI^{k'}_{j_{k'}}$ containing $i^*_1,\ldots,i^*_{k'}$ respectively, \begin{equation}\min_{(i_1,\ldots,i_{k'})\in\calI^1_{j_1}\times\cdots\times\calI^{k'}_{j_{k'}}}\iprod{u^1_{i_1} + \cdots + u^{k'}_{i_{k'}},x} = \iprod{u^{k'}_{i^*_{k'}} + \cdots + u^{k'}_{i^*_{k'}},x}.\end{equation}
    This shows that the right-hand side of \eqref{eq:sum_formula} is lower bounded by the left-hand side.

    We now show the other direction. For any $i'_1,\ldots,i'_{k'}$ for which $\iprod{u^1_{i'_1} + \ldots + u^{k'}_{i'_{k'}},x} > G_1(x) + \cdots + G_{k'}(x)$, we must have $\iprod{u^b_{i'_b},x} > G_b(x)$ for some $b\in[k']$. In this case, we know that for every clause $\calI^b_{j_b}$ in $G_b$ which contains $i'_b$, there is some $i\in \calI^b_{j_b}$ for which $\iprod{u^b_i,x} < \iprod{u^b_{i'_b},x}$. So for any $\calI^1_{j_1}, \ldots,  \calI^{k'}_{j_{k'}}$ containing $i'_1,\ldots,i'_{k'}$ respectively, the corresponding clause on the right-hand side of \eqref{eq:sum_formula} satisfies $\min_{(i_1,\ldots,i_{L'})\in\calI^1_{j_1}\times\cdots\times\calI^{L'}_{j_{L'}}}\iprod{u^1_{i_1}+ \cdots + u^{L'}_{i_{L'}},x} < \iprod{u^1_{i'_1} + \cdots + u^{L'}_{i'_{L'}},x}$. This concludes the proof that the left-hand side of \eqref{eq:sum_formula} is upper bounded by the left-hand side.
\end{proof}

We can now prove Theorem~\ref{thm:tropical_nns}:

\begin{proof}
    The claim is trivially true for $L = -1$. Suppose inductively that for some layer $0\le a \le L$, we have that for all $b\in[k_a]$, if we denote \begin{align}
        F_{a,b} &\triangleq \vW^b_a\phi\left(\vW_{a-1}\phi\left(\cdots \phi\left(\vW_0 x\right)\right)\right)\\
        F'_{a,b} &\triangleq \vW'^b_a\phi\left(\vW'_{a-1}\phi\left(\cdots \phi\left(\vW'_0 x\right)\right)\right),
    \end{align} where $\vW^b_a$ denotes the $b$-th row of $\vW_a$, then $F_{a,b}$ and $F'_{a,b}$ can be expressed as max-min formulas $\max_{j\in[m_{a,b}]}\min_{i\in\calI^{a,b}_j}\iprod{v^{a,b}_i,\cdot}$ and $\max_{j\in[m_{a,b}]}\min_{i\in\calI^{a,b}_j}\iprod{v'^{a,b}_i,\cdot}$ for some clauses $\brc{\calI^{a,b}_j}$ and vectors $v^{a,b}_i, v'^{a,b}_i$ comprised respectively of vectors of the form $\vW^b_a\Sig_{a-1}\cdots\Sig_0\vW_0$ and $\vW'^b_a\Sig_{a-1}\cdots\Sig_0\vW'_0$ for all possible diagonal matrices $\Sig_i\in\brc{0,1}^{k_i\times k_i}$. Then for any $b\in[k_{a+1}]$, note that $F_{a+1,b} = \vW^b_{a+1}\phi(F_{a,1},...,F_{a,k_a})$ and $F'_{a+1,b} = \vW'^b_{a+1}\phi(F'_{a,1},...,F'_{a,k_a})$. By Lemma~\ref{lem:phirep} and Lemma~\ref{lem:scalingrep}, if the entries of $\vW^b_{a}$ and $\vW'^b_a$ are $w_1,...,w_{k_{a+1}}$ and $w'_1,...,w'_{k_{a+1}}$ respectively, then for every $b'\in[k_a]$, if $w_{b'}\cdot w'_{b'} \ge 0$, then there exist max-min representations for $w_{b'}\phi(F_{a,b'})$ and $w'_{b'}\phi(F_{a,b'})$ with the same set of clauses.

    Finally, by Lemma~\ref{lem:sumrep}, there exist max-min representations for the scalar-valued functions $F_{a+1,b} = \sum^{k_{a}}_{b'=1} w_{b'}\phi(F_{a,b'})$ and $F'_{a+1,b} = \sum^{k_a}_{b'=1} w'_{b'}\phi(F'_{a,b'})$ with the same set of clauses. And the vectors in this max-min representation consist of all vectors of the form $\vW^b_{a+1}\Sig_a\cdots\cdots \Sig_0\vW_0$ and $\vW'^b_{a+1}\Sig_a\cdots\cdots \Sig_0\vW'_0$ respectively for $\Sig_i\in\brc{0,1}^{k_i\times k_i}$. This completes the inductive step.
\end{proof}


\section{Filtered PCA}
\label{sec:filter}
 
In this section we prove our main results on learning kickers and ReLU networks. Throughout, we will make the following base assumption about the function $F$.

\begin{assumption}\label{assume:bounds}
$F$ is a kicker which is $\Lip$-Lipschitz for some $\Lip\ge 1$ and has at most $M$ pieces.
\end{assumption}

While our techniques are general enough to work under just this assumption, for our main application to learning ReLU networks (Definition~\ref{defn:relunets}), we can obtain improved runtime guarantees by making the following additional assumption on $F$.

\begin{assumption}\label{assume:bounds2}
    $F$ is computed by a size-$S$ ReLU network\footnote{Note that this implies $M\le 2^S$.} with depth $L+2$ and weight matrices $\vW_0\in\R^{k_0\times d},\ldots\vW_L\in\R^{k_L\times k_{L-1}}, \vW_{L+1}\in\R^{1\times k_L}$ satisfying $\norm{\vW_i}_{\op}\le \normbound$ for all $0\le i \le L+1$, for some $\normbound\ge 1$.\footnote{Recall from Definition~\ref{defn:relunets} that we will refer to the rank of $\vW_0$ as $k$ to emphasize that $F$ is a kicker with relevant subspace $V$ of dimension $k$.}
\end{assumption}

In this section, unless stated otherwise, we will only assume $F$ satisfies Assumption~\ref{assume:bounds}, but in certain parts of the proof (e.g. Section~\ref{sec:nettingnets}), we will get better bounds by additionally making Assumption~\ref{assume:bounds2}. Formally, our main results are the following:

\begin{theorem}\label{thm:main_piecewise}
    Given access to samples from the distribution $\calD$ corresponding to kicker $F$ satisfying Assumption~\ref{assume:bounds}, {\sc FilteredPCA}($\calD,\epsilon,\delta$) outputs a kicker $\td{F}$ for which $\E{(y - \td{F}(x))^2}\le \epsilon^2$ with probability at least $1 - \delta$. Furthermore, {\sc FilteredPCA} has sample complexity \begin{equation}
        d\log(1/\delta)\cdot \poly\left(\exp\left(k^3\Lip^2/\epsilon^2\right),M^k\right)
    \end{equation} and runtime \begin{equation}
        \td{O}(d^2\log(1/\delta))\cdot M^{M^2}\cdot \poly\left(\exp\left(k^4\Lip^2/\epsilon^2\right),M^{k^2}\right).
    \end{equation}
\end{theorem}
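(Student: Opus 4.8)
The plan is to analyze {\sc FilteredPCA}, which grows an orthonormal frame $\td w_1,\dots,\td w_\ell$ that is $\nu$-nearly within the relevant subspace $V$ of $F$ for a suitably small $\nu$; it starts empty and each successful round appends one vector, so it runs for at most $k$ rounds (a frame $\nu$-nearly within the $k$-dimensional $V$ has at most $k$ elements, cf.\ Lemma~\ref{lem:frame}). Writing $\td W$ for the current span, a round works as follows. Using Lemma~\ref{lem:useorder} we enumerate a net $\mathcal N$ of kickers with at most $M$ pieces and relevant subspace inside $\td W$: it suffices to net, to granularity $\eta_0$, the $M$ linear functions of norm at most $\Lip$ appearing in a lattice-polynomial realization (Lemma~\ref{lem:normbound}, Theorem~\ref{thm:tropical}) together with the map $A:\Omega_M\to[M]$ describing the clause structure, giving $\abs{\mathcal N}\le M^{M^2}\cdot(\Lip/\eta_0)^{O(Mk)}$. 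For each $\td F\in\mathcal N$ we draw a fresh batch of samples and, by Lemma~\ref{lem:matrixconcentration}, form an operator-norm-$\gamma$ estimate of
\begin{equation}
    \td{\vM}^{\td W}_\tau \;=\; \Pi_{\td W^\perp}\,\E{\bone{\abs{y-\td F(\Pi_{\td W}x)}>\tau}\,(xx^\top-\Id)}\,\Pi_{\td W^\perp}, \qquad \tau\asymp\sqrt k\,\Lip,
\end{equation}
and compute its top singular pair by the power method (Fact~\ref{thm:power-method}). If some candidate gives a top singular value above a detection threshold $\lambda_0$ (roughly half the anti-concentration margin $\exp(-\poly(k\Lip/\epsilon))$ appearing below), we append the corresponding unit vector to the frame; otherwise the loop exits and we pass to a final grid search.

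The analysis pivots on an idealized matrix. By Lemma~\ref{lem:frame} there is a genuine subspace $W\subseteq V$ with $d_C(\td W,W)=O(\sqrt{\nu k})$; set $G(x)=F(x)-F(\Pi_W x)$ and $\vM^W_\tau = \Pi_{W^\perp}\E{\bone{\abs{G(x)}>\tau}(xx^\top-\Id)}\Pi_{W^\perp}$. Since $\bone{\abs{G(x)}>\tau}$ depends only on $\Pi_V x$, the matrix $\vM^W_\tau$ is supported on $V\backslash W$, so any unit vector realizing a singular value of $\vM^W_\tau$ bounded away from zero lies exactly in $V$; combined with Corollary~\ref{cor:topsing} applied to the empirical, netted matrix (which we will show is $O(\gamma)$-close to $\vM^W_\tau$), this gives soundness, namely that whenever detection fires the appended vector is $O(\gamma^2/\lambda_0^2)$-nearly within $V$, which is at most $\nu$ once $\gamma$ is polynomially below $\lambda_0$. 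For completeness we run the win--win. If the leftover variance $\E{(F(x)-F(\Pi_W x))^2}=\norm{G}^2$ is at most $\epsilon^2$, the final grid search below already suffices; otherwise $\norm{G}>\epsilon$, and since $\abs{G(x)}\le\Lip\norm{\Pi_{V\backslash W}x}$ by Lipschitzness, the choice $\tau\asymp\sqrt k\Lip$ forces $\abs{G(x)}>\tau$ to imply $\norm{\Pi_{V\backslash W}x}^2\gg k$, whence
\begin{equation}
    \iprod{\Pi_{V\backslash W},\,\vM^W_\tau} \;=\; \E{\bone{\abs{G(x)}>\tau}\bigl(\norm{\Pi_{V\backslash W}x}^2-(k-\ell)\bigr)} \;\ge\; \Omega(k)\cdot\Pr{\abs{G(x)}>\tau},
\end{equation}
so $\vM^W_\tau$ has a singular value $\ge\Omega(\Pr{\abs{G(x)}>\tau})$. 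The anti-concentration estimate for piecewise-linear functions (Lemma~\ref{lem:anti1}) applied to $G$ lower bounds $\Pr{\abs{G(x)}>\tau}$ by $\exp(-\poly(k\Lip/\epsilon))$, so taking $\lambda_0$ a constant factor smaller guarantees detection fires on the candidate $\td F$ closest to $F|_W$.

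The technical heart is showing the empirical estimate of $\td{\vM}^{\td W}_\tau$ is within $O(\gamma)$ of $\vM^W_\tau$ in operator norm. Replacing $\Pi_{\td W^\perp}$ by $\Pi_{W^\perp}$ costs $O(d_C(\td W,W))=O(\sqrt{\nu k})$ via Lemma~\ref{lem:projecterror} (the inner matrices have operator norm $O(1)$). Replacing $F|_W$ by the net candidate $\td F$ is the crux: writing $\td G(x)=F(x)-\td F(\Pi_{\td W}x)$, we need $\Pr{\bone{\abs{G(x)}>\tau}\neq\bone{\abs{\td G(x)}>\tau}}$ to be $O(\gamma^2)$, after which the contribution to the operator-norm error is $O(\gamma)$ because each entry $\E{\bone{E}((v^\top x)^2-1)}$ of the difference in the relevant basis is controlled by $\sqrt{\Pr{E}}$ (one-dimensional Gaussian moments), plus a sub-exponential tail bound (Fact~\ref{fact:bernstein}) for the empirical error. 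This disagreement probability is exactly what the stability lemma (Lemma~\ref{lem:stability}, via Lemma~\ref{lem:main_stable}) bounds, by $O(\eta M^2/\tau)$, provided $x\mapsto F(\Pi_W x)$ and $x\mapsto\td F(\Pi_{\td W}x)$ are $(M,\eta)$-structurally-close; we arrange this by taking $\td F$ from the net so that it shares the clause structure of $F|_W$'s lattice-polynomial representation (possible since $\mathcal N$ enumerates over all such structures via the maps $A$; in the ReLU case Theorem~\ref{thm:tropical_nns} controls the structure more efficiently via signs of weights) with defining linear functions $\eta$-close to those of $F|_W$, for $\eta=\eta_0+O(\Lip\sqrt{\nu k})$. \textbf{The main obstacle is that the bound in Lemma~\ref{lem:stability} must be linear in $\eta$}: as explained in Section~\ref{sec:overview}, any loss $\eta^c$ with $c<1$ would, compounded over the $k$ rounds, force the very first direction to be recovered to accuracy doubly exponentially small in $k$; establishing linearity is precisely where the hybrid argument over lattice-polynomial representations in the proof of Lemma~\ref{lem:stability} is needed. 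Granting it, we may take $\eta_0$, $\nu$, and $\gamma$ to be fixed polynomials in $\epsilon/(\Lip k M)$ times $\exp(-\poly(k\Lip/\epsilon))$, so that all round-to-round errors stay below the final target and each appended direction is $\nu$-nearly within $V$ as required.

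When the loop exits, the leftover variance is at most $\epsilon^2$: either detection failed, and the completeness argument in contrapositive gives $\norm{G}\le\epsilon$, or $\ell=k$, in which case $\td W$ is $O(\sqrt{\nu}k)$-close to $V$ and Lipschitzness gives $\norm{F-F|_{\td W}}\le O(\Lip\sqrt{\nu}k)\le\epsilon$. We then enumerate $\mathcal N$ once more, estimate $\E{(y-\td F(x))^2}$ for each candidate to additive error $\epsilon^2/2$ using $\poly(1/\epsilon)\log\abs{\mathcal N}$ fresh samples, and output the minimizer; since $\mathcal N$ contains (up to granularity) the function $x\mapsto F(\Pi_W\Pi_{\td W}x)$, which is $\epsilon$-close to $F$ in $L^2$, the output $\td F$ satisfies $\E{(y-\td F(x))^2}\le\epsilon^2$. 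Finally, accounting: there are $O(k)$ rounds plus the final search, each running $\abs{\mathcal N}=M^{M^2}\cdot(\Lip/\eta_0)^{O(Mk)}$ power-method computations on a $d\times d$ empirical matrix built from $N=d\cdot\exp(\poly(k\Lip/\epsilon))\cdot\poly(M^k)$ samples (dictated by $\Pr{\abs{G}>\tau}\ge\exp(-\poly(k\Lip/\epsilon))$ and the net size), with a union bound over $\mathcal N$ and the rounds contributing $\log(\abs{\mathcal N}/\delta)$; since each matrix--vector product costs $\td{O}(Nd)=\td{O}(d^2)\cdot\exp(\poly(k\Lip/\epsilon))$ and the $M^{M^2}$ order types enter the runtime, substituting the chosen $\gamma,\eta_0,\nu$ yields the stated sample complexity $d\log(1/\delta)\cdot\poly\bigl(\exp(k^3\Lip^2/\epsilon^2),M^k\bigr)$ and runtime $\td{O}(d^2\log(1/\delta))\cdot M^{M^2}\cdot\poly\bigl(\exp(k^4\Lip^2/\epsilon^2),M^{k^2}\bigr)$.
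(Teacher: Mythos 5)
Your high-level plan---iterative filtered PCA with a win--win on leftover variance, anti-concentration applied to $G = F - F|_W$, a lattice-polynomial net sharing the clause structure, and the stability lemma driving completeness/soundness---is the paper's. The genuine gap is in how you convert the disagreement probability into a spectral perturbation. You bound the quadratic forms by Cauchy--Schwarz, $\abs{\E{\bone{E}(\iprod{v,x}^2-1)}}\le O(\sqrt{\Pr{E}})$, so from structural closeness $\eta = \Omega(\Lip\,\ell\sqrt{\nu})$ (unavoidable even with an infinitely fine net, since the frame error alone contributes this much) you only get spectral error of order $\nu^{1/4}\poly(k,M)$, and by Corollary~\ref{cor:topsing} the newly appended direction is only $O(\sqrt{\nu}\,\poly(k,M)/\ulam^2)$-nearly within $V$. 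Your invariant that every appended vector is again $\nu$-nearly within $V$ therefore cannot close: it would require $\sqrt{\nu}\,\poly(k,M)/\ulam^2\le\nu$, i.e. $\nu\ge\poly(k,M)^2/\ulam^4 \gg 1$. If instead you let the nearness parameter degrade round by round, the recursion is essentially $\nu\mapsto \sqrt{\nu}\cdot\poly(k,M)/\ulam^2$, so after $k$ rounds the accuracy $\nu_0$ of the first direction---and with it $N=\Omega(d/\xi(\nu_0)^2)$ and the net sizes $(1/\nu_0)^{\Omega(1)}$---must be doubly exponential in $k$. This is exactly the compounding failure you correctly flag for Lemma~\ref{lem:stability}, reintroduced one step later; as a result your concluding parameter choices are not self-consistent and the stated singly-exponential bounds do not follow.

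The paper's proof avoids this in Lemma~\ref{lem:spectral_close}: instead of Cauchy--Schwarz it applies H\"older with $k$-dependent exponents, $\abs{\E{\bone{E}(\iprod{v,x}^2-1)}}\le \Pr{E}^{1-1/k}\cdot\E{(\iprod{v,x}^2-1)^k}^{1/k}\le \Pr{E}^{1-1/k}\cdot O(k)$ (see \eqref{eq:holder}), which is why $\xi_*(\nu)$ in \eqref{eq:xidef} carries the power $1-1/k$ and an extra factor of $k$. In the proof of Theorem~\ref{thm:main_piecewise} the per-round degradation is then $\iota(\nu)=\max\left(\beta\nu^{1-1/k},\,k\nu\right)$ with $\beta$ as in \eqref{eq:betadef}, and since $(1-1/k)^k\ge 1/e$ the $k$-fold composition loses only a constant power: the last iterate is at most $\max\left(\beta^k\nu_0^{1/e},\,k^k\nu_0\right)$, so $\nu_0=\poly(e^{k^3\Lip^2/\epsilon^2},M^k)^{-1}$ suffices, giving the claimed sample complexity and the $M^{M^2}$-dominated runtime. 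To repair your argument, replace the Cauchy--Schwarz step by this H\"older step (accepting the $O(k)$ factor) and replace the fixed-$\nu$ invariant by the explicit recursion $\nu_{\ell+1}=\iota(\nu_\ell)$, choosing $\nu_0$ so that all $k$ iterates remain admissible (i.e. $\xi(\nu_\ell)\le\ulam/6$ and $\nu_\ell\le\epsilon^2/(4k\Cjunta^2)$); the remainder of your outline (completeness via Lemmas~\ref{lem:anti1} and \ref{lem:idealized_find}, soundness via Corollary~\ref{cor:topsing}, and the final grid as in Lemma~\ref{lem:finalgrid}) then goes through as in Lemma~\ref{lem:main_stable} and Section~\ref{sec:putting}.
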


\begin{theorem}\label{thm:main_nets}
    Given access to samples from the distribution $\calD$ corresponding to feedforward ReLU network $F$ satisfying Assumption~\ref{assume:bounds2}, {\sc FilteredPCA}($\calD,\epsilon,\delta$) outputs a ReLU network $\td{F}$ for which $\E{(y - \td{F}(x))^2}\le \epsilon^2$ with probability at least $1 - \delta$. Furthermore, {\sc FilteredPCA} has sample complexity \begin{equation}
        d\log(1/\delta) \poly\left(\exp\left(k^3\Lip^2/\epsilon^2\right),2^{kS},\left(\normbound^{(L+2)}/\Lip\right)^k\right)
    \end{equation} and runtime \begin{equation}
        \td{O}(d^2\log(1/\delta))\cdot \poly\left(\exp\left(k^3S^2\Lip^2/\epsilon^2\right),2^{k S^3},\left(\normbound^{L+2}/\Lip\right)^{kS^2}\right).
    \end{equation}
\end{theorem}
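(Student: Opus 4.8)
The plan is to analyze {\sc FilteredPCA} in two phases. In the \emph{subspace-recovery} phase the algorithm maintains an orthonormal frame $\td{w}_1,\ldots,\td{w}_\ell$ that is $\nu$-nearly within the relevant subspace $V$ of the unknown network $F$; in the \emph{grid-search} phase, once $\ell=k$, it enumerates ReLU networks supported on the span $\td{V}$ of the frame and returns the one of smallest empirical squared loss. The engine of the first phase is a win--win dichotomy. Having built a frame with span $\td{W}$ close to some $W\subseteq V$, we consider the \emph{leftover variance} $\E[x]{(F(x)-F(\Pi_W x))^2}$. If it is below a small multiple of $\epsilon^2$, then $F|_W$ is already a good hypothesis and we stop. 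Otherwise, put $G(x)\triangleq F(x)-F(\Pi_W x)$; Lemma~\ref{lem:key} and positive homogeneity (Fact~\ref{fact:homogeneity}) give $\abs{G(x)}\le\Lip\norm{\Pi_{V\backslash W}x}$, so thresholding on $\abs{G(x)}>\tau$ with $\tau$ of order $\sqrt{k}\,\Lip$ isolates inputs with $\norm{\Pi_{V\backslash W}x}^2$ noticeably larger than $k$, forcing $\iprod{\Pi_{V\backslash W},\vM^W_\tau}\ge\Pr{\abs{G(x)}>\tau}$ for the matrix $\vM^W_\tau$ of \eqref{eq:Ml_pre}. An anti-concentration estimate for piecewise-linear functions (Lemma~\ref{lem:anti1}) applied to $G$ lower-bounds $\Pr{\abs{G(x)}>\tau}$ by a margin $\gamma=\exp(-\poly(k,\Lip/\epsilon))$ whenever the leftover variance is non-negligible. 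Thus $\vM^W_\tau$ has a singular value exceeding $\gamma$, its top singular vector lies in $V\cap W^\perp$, and appending it to the frame yields a fresh near-$V$ direction; since $\dim V=k$ the loop runs at most $k$ times.

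Of course the algorithm knows neither $V$ nor $F$, so it cannot form $\vM^W_\tau$ directly. Instead it enumerates a net $\calC$ of candidate ReLU networks $\td{F}$ with relevant subspace $\td{W}$, forms for each the empirical analogue of $\td{\vM}^W_\tau$ from \eqref{eq:Mlhat}, and outputs the top singular vector only if its singular value clears $\gamma/2$. \emph{Soundness} (no spurious directions) is easy: for any reweighting, every vector orthogonal to $V$ lies in the kernel of $\E{\psi(y)(xx^\top-\Id)}$, so the range of the matrix in \eqref{eq:Mlhat} lies in $V$ up to the error of the frame, and Lemma~\ref{lem:frame} and Lemma~\ref{lem:projecterror} propagate the near-$V$ property through the projectors $\Pi_{\td{W}^\perp}$ with only $\poly(k)$ loss. \emph{Completeness} --- that some net element produces a matrix clearing the margin --- is the crux and is handled by the stability lemma (Lemma~\ref{lem:stability}): if $\td{F}$ is $(M,\eta)$-structurally-close to $F|_W$ then $\Pr{\abs{G(x)}>\tau\wedge\abs{\td{G}(x)}\le\tau}=O(\eta M^2/\tau)$, hence $\norm{\vM^W_\tau-\td{\vM}^W_\tau}_{\op}=O(\eta M^2/\tau)$; combining with Lemma~\ref{lem:matrixconcentration} (empirical estimate from $\td O(d/\gamma^2)$ samples) and Corollary~\ref{cor:topsing} / Lemma~\ref{lem:wedin} shows the returned direction is $1-O((\text{error}/\gamma)^2)$-correlated with $V$. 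The \emph{linear} dependence on $\eta$ here is non-negotiable, since this estimate is invoked once per round and the $\eta$ affordable at round $\ell$ is tied to $d_C(\td{W},V)$; a sublinear bound would compound to demand that the very first direction be doubly-exponentially accurate in $k$. Linearity is exactly what the lattice-polynomial hybrid argument delivers, interpolating the clauses' linear forms one at a time using Theorem~\ref{thm:tropical}.

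To make completeness usable for ReLU networks we must net efficiently while still obtaining structural closeness. Enumerating all lattice polynomials on $M=2^S$ linear forms (Lemma~\ref{lem:useorder}) costs $M^{M^2}$, doubly exponential in $S$; instead we enumerate the $\exp(O(S))$ architectures and, within each, net every weight matrix in operator norm (Corollary~\ref{cor:opnet}) at resolution fine enough that every product $\vW_{L+1}\Sig_L\cdots\Sig_0\vW_0$ is perturbed by at most $\eta$ --- since each factor has operator norm at most $\normbound$, resolution of order $\eta\,\normbound^{-(L+1)}$ suffices. By Theorem~\ref{thm:tropical_nns}, two networks of the same architecture with matching entry-signs share a lattice-polynomial representation with the \emph{same} clause structure and pairwise-close linear forms, so a net element with matching signs is $(2^S,\eta)$-structurally-close to $F$; taking $\eta$ a small polynomial in $\gamma$ and $1/\Lip$ makes the margin argument go through (here Lemma~\ref{lem:hybrid_L2} converts structural closeness to $L_2$-closeness when needed). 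Per round this net has size $2^{O(S)}\,(\normbound^{L+2}/\eta)^{O(S^2)}$.

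The grid-search phase then enumerates ReLU networks with relevant subspace $\td{V}$ (again netting architectures and weights in operator norm) and returns the minimizer of empirical squared loss: once $d_C(\td{V},V)$ is a small polynomial in $\epsilon,1/k,1/\Lip$, one net element is structurally close to $F|_{\td{V}}$, and $\norm{F-F|_{\td{V}}}\le\Lip\cdot d_C(\td{V},V)$ since $F$ is $\Lip$-Lipschitz, so that element has test error $O(\epsilon^2)$; a uniform-convergence argument over the hypothesis net via the sub-exponential tail bound (Fact~\ref{fact:bernstein}) transfers this to the output, and {\sc ApproxBlockSVD} (Fact~\ref{thm:power-method}) extracts the singular vectors efficiently. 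Tallying the $\td O(d/\gamma^2)$ matrix-concentration cost over the $\le k$ rounds with a union bound over the nets gives the stated sample complexity $d\log(1/\delta)\,\poly(\exp(k^3\Lip^2/\epsilon^2),2^{kS},(\normbound^{L+2}/\Lip)^k)$, and the runtime additionally carries a factor $d$ from matrix--vector products together with the enumeration cost $2^{O(kS^3)}(\normbound^{L+2}/\Lip)^{O(kS^2)}$ of forming and testing $\td{\vM}^W_\tau$ for every candidate in every round. \textbf{The main obstacle} is the completeness step: proving the stability lemma with linear loss in $\eta$, and engineering the lattice-polynomial representation of a ReLU network to depend only on the signs of its weights (Theorem~\ref{thm:tropical_nns}) so that a moderately fine net already contains a structurally-close candidate, are jointly what prevents the per-round errors from compounding doubly-exponentially in $k$, and are the technical heart of the proof.
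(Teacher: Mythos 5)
Your proposal follows the paper's approach essentially line for line: the win--win dichotomy on leftover variance, the threshold filter on the residual $y - F(\Pi_W x)$ rather than $y$, the anti-concentration estimate for piecewise-linear functions, soundness via Wedin-type perturbation bounds, completeness via the stability lemma with its linear dependence on $\eta$ from the lattice-polynomial hybrid argument, the sign-only lattice representation (Theorem~\ref{thm:tropical_nns}) to net over ReLU networks in time singly exponential in $S$, and the final grid search. The structure is correct.

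One technical overstatement deserves a flag. You claim that stability yields $\norm{\vM^W_\tau - \td{\vM}^W_\tau}_{\op} = O(\eta M^2/\tau)$, i.e.\ a spectral-norm bound that is itself linear in $\eta$. That is not what the paper proves and it is not readily obtainable. Lemma~\ref{lem:stability} gives a linear-in-$\eta$ bound on the \emph{probability} that the two thresholds disagree; but passing from that probability to a bound on the quadratic form
\begin{equation}
v^{\top}(\vM - \td{\vM})v \;=\; \E\bigl[(\bone{\abs{\cdot}>\tau}-\bone{\abs{\cdot}>\tau})\cdot(\iprod{v,x}^2-1)\bigr]
\end{equation}
requires a H\"older step because $\iprod{v,x}^2-1$ is unbounded, and the paper's Lemma~\ref{lem:spectral_close} therefore only gives $O\bigl(k\,(\Constant\sqrt{\nu}M^2/\tau)^{1-1/k}\bigr)$, with a $(1-1/k)$ exponent, not a linear bound. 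The reason the iteration still avoids the doubly-exponential compounding you correctly worry about is that the paper tracks the per-round update map $\iota(\nu)\asymp\nu^{1-1/k}$ and uses $(1-1/k)^k\ge 1/e$, so that after $k$ rounds the final accuracy is only a fixed power $\nu_0^{\Theta(1)}$ of the initial one. The linearity of Lemma~\ref{lem:stability} is still essential --- any $\eta^{c}$ loss with $c<1$ in the \emph{probability} bound would push the exponent to $c(1-1/k)$ and the $k$-fold iterate to $\nu_0^{c^k\Theta(1)}$, which is the doubly-exponential disaster --- but your sketch should carry the H\"older loss explicitly rather than assert a linear operator-norm bound, since that linearity does not hold and is not what the final $\poly(\exp(k^3\cdot),\ldots)$ bookkeeping relies on.

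A smaller cosmetic difference: the paper's grid-search phase returns the first candidate whose empirical error estimate falls below $3\epsilon$, rather than the empirical-risk minimizer over the net; both work, but the paper's choice avoids a separate uniform-convergence argument over the entire net and is what the stated sample complexity is calibrated for.
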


\begin{remark}[Scale Invariance]\label{remark:scaleinvariant}
    Often, guarantees for PAC learning ReLU networks are stated scale-invariantly in terms of the relative error $\E{(y - \td{F}(x))^2}/\E{y^2}$, or equivalently the absolute error $\E{(y - \td{F}(x))^2}$ for the true $F$ satisfying $\E{y^2} = 1$.

    In our general setting, recall from Example~\ref{example:spike} that some dependence on the Lipschitz constant of $F$ is needed. One standard way to achieve this is to normalize the weight matrices of the true underlying network $F$ to have operator norm at most $\normbound$, in which case the Lipschitz constant of $F$ is at most $\normbound^{L + 2}$ and, with our techniques, we can obtain guarantees depending just on $\normbound$ by using Theorem~\ref{thm:main_piecewise}. To obtain improved guarantees, we can additionally assume a better bound of $\Lip$ on the Lipschitz constant, and this gives rise to Theorem~\ref{thm:main_nets} above.

    Under this normalization in terms of $\Lip$ and $\normbound$, note that the sample complexity and runtime in Theorem~\ref{thm:main_nets} are scale invariant as the quantities $\Lip/\epsilon$ and $\normbound^{L+2}/\Lip$ are invariant under arbitrary rescalings of the $L + 2$ weight matrices of $F$. Also note that $\Lip$ can be any \emph{upper bound} on the actual Lipschitz constant of $F$, that is, the runtime guarantee in Theorem~\ref{thm:main_nets} does not degrade with the actual Lipschitz constant of $F$.
\end{remark}

In Section~\ref{sec:anti}, we prove an anti-concentration result for piecewise-linear functions. We use this in Section~\ref{sec:idealized} to prove that in an idealized scenario where we had exact access to some $\ell$-dimensional $W\subset V$ as well as exact query access to $F|_W$, we would be able to approximately recover a vector in $V\backslash W$ by running one iteration of the main loop of {\sc FilteredPCA}. In the remaining sections, we show how to pass from this idealized scenario to the setting we actually care about, in which we only samples $(x,F(x))$. In Section~\ref{sec:stability} we show that affine thresholds of piecewise-linear functions are stable under small perturbations of the function. Then in Section~\ref{sec:nettingjuntas}, we show how to grid over the set of kickers, and in Section~\ref{sec:nettingnets} we show how to grid over ReLU networks more efficiently and formally state our algorithm. In Section~\ref{sec:perturb} we combine these ingredients to argue that as long as we have sufficiently good approximate access to $W$ and $F|_W$, a single iteration of the main loop of {\sc FilteredPCA} will approximately recover a vector from $V\backslash W$. Lastly, in Section~\ref{sec:putting} we conclude the proofs of Theorem~\ref{thm:main_piecewise} and \ref{thm:main_nets}. At the very end, we discuss briefly why merely adapting the approach of \cite{chen2020learning} does not work.

\subsection{Anti-Concentration of Piecewise Linear Functions}
\label{sec:anti}

In this section, we show that for any continuous piecewise-linear function with some variance, the probability that it exceeds any given threshold is non-negligible.

\begin{lemma}\label{lem:anti1}
If $G: \R^m\to\R$ is continuous piecewise-linear and $\Lambda$-Lipschitz and $\E{G^2} \ge \sigma^2$, then for any $s \ge 0$, \begin{equation}
\Pr{|G| > s} \ge \Omega(\exp(-3ms^2/\sigma^2))\cdot \frac{s\sigma}{\sqrt{m}\Lambda^2}.\label{eq:anti_ourscaling1}
\end{equation}
\end{lemma}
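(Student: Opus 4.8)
The plan is to reduce everything to one‑dimensional facts by exploiting positive homogeneity. Since the pieces of a continuous piecewise‑linear function are \emph{linear} (not merely affine), we have $G(0)=0$ and $G(\lambda x)=\lambda G(x)$ for all $\lambda\ge 0$, exactly as in Fact~\ref{fact:homogeneity}. Writing a Gaussian sample as $x=\|x\|\cdot\theta$ with $\theta=x/\|x\|$ uniform on $\S^{m-1}$ and independent of $\|x\|$ (and $\|x\|^2$ distributed as $\chi^2_m$), positive homogeneity gives $G(x)=\|x\|\cdot G(\theta)$. Two immediate consequences: $|G(\theta)|=|G(\theta)-G(0)|\le\Lambda$ for every $\theta$ by the Lipschitz hypothesis, and $\E_\theta[G(\theta)^2]=\E[G(x)^2]/\E[\|x\|^2]\ge\sigma^2/m$ by independence.

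Next I would run the elementary anti‑concentration bound on the sphere: applying Fact~\ref{fact:basicanti} to the random variable $G(\theta)$, which is bounded by $\Lambda$ and has second moment at least $\sigma^2/m$, with threshold $t=\sigma/\sqrt{2m}$, gives $\Pr_\theta[|G(\theta)|\ge\sigma/\sqrt{2m}]\ge\sigma^2/(2m\Lambda^2)$. Combining this with the independent radial part (if $|G(\theta)|\ge\sigma/\sqrt{2m}$ and $\|x\|>s\sqrt{2m}/\sigma$ then $|G(x)|>s$),
\begin{equation}
\Pr[|G(x)|>s]\;\ge\;\Pr_\theta\!\bigl[|G(\theta)|\ge\tfrac{\sigma}{\sqrt{2m}}\bigr]\cdot\Pr\!\bigl[\|x\|>\tfrac{s\sqrt{2m}}{\sigma}\bigr]\;\ge\;\frac{\sigma^2}{2m\Lambda^2}\cdot\Pr\!\bigl[\chi^2_m>2ms^2/\sigma^2\bigr],
\end{equation}
so the problem reduces to a lower bound on the upper tail of $\chi^2_m$ at $2ms^2/\sigma^2$.

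For the chi‑squared tail I would split on $\rho:=s/\sigma$. If $2\rho^2\le 1$ then $2m\rho^2\le m$, so $\Pr[\chi^2_m>2m\rho^2]\ge\Pr[\chi^2_m>m]\ge c_0$ for a universal constant $c_0>0$ (valid for every $m\ge1$), and the claim follows from $\sup_\rho\rho\,e^{-3m\rho^2}=\Theta(1/\sqrt m)$, which exactly absorbs the extra factor of $1/\sqrt m$. If $2\rho^2>1$ then I would use the crude comparison $\Pr[\chi^2_m>2m\rho^2]\ge\Pr[g_1^2>2\rho^2,\dots,g_m^2>2\rho^2]=\erfc(\sqrt2\,\rho)^m$ for $g\sim\N(0,\Id_m)$, bound $\erfc(\sqrt2\,\rho)=\Omega(e^{-\rho^2}/\rho)$ via Fact~\ref{fact:erfbound} (using $2\rho^2+1\le 4\rho^2$), and check that $\tfrac{\sigma^2}{2m\Lambda^2}\bigl(e^{-\rho^2}/(c_1\rho)\bigr)^m$ dominates $\Omega(e^{-3m\rho^2})\cdot s\sigma/(\sqrt m\Lambda^2)$; after rearranging, this amounts to $\bigl(e^{2\rho^2}/(c_1\rho)\bigr)^m\ge\Omega(\sqrt m\,\rho)$, which holds because the base exceeds $1$ for all $\rho>1/\sqrt2$ and grows like $e^{2\rho^2}$. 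The deliberately generous constant $3$ in $\exp(-3ms^2/\sigma^2)$ is precisely what supplies the slack in both cases.

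The only conceptual ingredient is the homogeneity‑driven decoupling into an angular anti‑concentration estimate and a radial tail estimate; I do not anticipate a genuine obstacle. The one point that needs care is the bookkeeping in the final step — choosing the angular threshold and the case split so that the product of a polynomially small angular probability and an exponentially small radial probability still matches the stated bound — but the factor $3$ in the exponent leaves ample room.
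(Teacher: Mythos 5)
Your proof is correct. It shares the paper's overall strategy (decompose the Gaussian into a radial $\chi^2_m$ factor and a uniform angular factor, establish anti-concentration on the sphere from boundedness plus a second-moment lower bound, and then multiply by a $\chi^2_m$ upper-tail bound), but you handle the angular step in a genuinely cleaner way. The paper conditions on each polyhedral piece $S_i$ separately, defines per-piece variances $\sigma_i^2$, applies Fact~\ref{fact:basicanti} piece by piece, and then needs Jensen's inequality together with the convexity fact (Fact~\ref{fact:convexity}) to recombine across pieces. Your observation that positive homogeneity makes $G(x)=\|x\|\,G(\theta)$ with $|G(\theta)|\le\Lambda$ and $\E_\theta[G(\theta)^2]\ge\sigma^2/m$ lets you apply Fact~\ref{fact:basicanti} once, globally, and sidesteps the piecewise conditioning and the convexity lemma entirely; the piecewise-linear structure is used only to justify homogeneity and the pointwise bound on the sphere. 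By contrast, your treatment of the radial tail is slightly more laborious than the paper's: you split on $\rho=s/\sigma$ and, in the large-$\rho$ regime, use the crude tensorized bound $\Pr[\chi^2_m>2m\rho^2]\ge\erfc(\sqrt2\rho)^m$, which is exponentially lossy (by a factor $\rho^{-m}$) but is rescued by the slack of $e^{-2m\rho^2}$ built into the target exponent. The paper instead uses the single-coordinate bound $\Pr[\chi^2_m>t]\ge\erfc(\sqrt t)$ followed by Fact~\ref{fact:erfbound}, which gives the claimed bound uniformly in one line via $e^{2m\rho^2}\ge 1+2m\rho^2$ and needs no case split. You could graft that step onto your angular argument to get a proof that is strictly simpler than the paper's. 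One small caveat worth stating explicitly in a writeup: in your Case~2 the base $c_1e^{2\rho^2}/\rho$ is not merely $>1$ but is bounded away from $1$ on $\rho>1/\sqrt2$ (it is increasing there and exceeds roughly $1.08$ at the endpoint), which is what makes $\inf_{m\ge1}B^m/\sqrt m$ a positive universal constant; "the base exceeds $1$" alone would not suffice if the base could approach $1$.
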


\begin{proof}
    Let $\brc{(g_i,S_i)}$ be the pieces of some realization $G$, and for every $i$ let $u_i\in\R^m$ be the vector for which $g_i(\cdot) = \iprod{u_i,\cdot}$. By Lemma~\ref{lem:normbound}, we can assume $\norm{u_i} \le \Lambda$ for all $i$.
    
    Take any $i$ and define \begin{equation}\sigma^2_i \triangleq \E[x\sim\N(0,\Id)]{\iprod{u_i,x}^2 \mid x\in S_i}\end{equation} Note that if $i$ is chosen with probability $\Pr{x\in S_i}$, then $\E[i]{\sigma^2_i} \ge \sigma^2$.
    Because each $S_i$ is a polyhedral cone, sampling $x\sim\N(0,\Id)$ conditioned on $x\in S_i$ is equivalent to sampling $r\sim\chi^2_m$, independently sampling $\wh{x}\sim\S^{m-1}$ conditioned on $\wh{x}\in S_i$, and outputting  $r^{1/2}\cdot \wh{x}$. It follows that
    \begin{equation}
        \sigma^2_i = \E[r\sim\chi^2_m,\wh{x}\sim\S^{m-1}]{r\cdot \iprod{u_i,\wh{x}}^2 \mid \wh{x}\in S_i} = \E[r\sim\chi^2_m]{r} \cdot \E[\wh{x}\sim\S^{m-1}]{\iprod{u_i,\wh{x}}^2 \mid \wh{x} \in S_i} = m\cdot \E[\wh{x}\sim\S^{m-1}]{\iprod{u_i,\wh{x}}^2 \mid \wh{x} \in S_i}.
    \end{equation} By Fact~\ref{fact:basicanti}, $\Pr{\abs{\iprod{u_i,\wh{x}}} \ge \sigma_i/\sqrt{2m} \mid \wh{x}\in S_i} \ge \frac{\sigma^2_i}{2m\norm{u_i}^2}$. We conclude that for any $s> 0$, \begin{align}
        \Pr*{\abs{\iprod{u_i,x}} \ge s\mid x\in S_i} &\ge \Pr[r\sim\chi^2_m]*{r > 2m s^2/\sigma_i^2}\cdot \frac{\sigma^2_i}{2m\norm{u_i}^2} \\
        &\ge \erfc(s\sqrt{2m}/\sigma_i)\cdot \frac{\sigma^2_i}{2m\Lambda^2} 
        \label{eq:tailuix}
    \end{align} 
    
    
    By Fact~\ref{fact:convexity}, the right-hand side of \eqref{eq:tailuix} is convex as a function of $\sigma^2_i$, so \begin{align}
        \Pr{\abs{G(x)} > s} &\ge \E[i]*{\erfc(s\sqrt{2m}/\sigma_i)\cdot \frac{\sigma^2_i}{2m\Lambda^2}} \\
        &\ge \erfc(s\sqrt{2m}/\E[i]{\sigma^2_i}^{1/2})\cdot \frac{\E[i]{\sigma^2_i}}{2m\Lambda^2} \\
        &\ge \erfc(s\sqrt{2m}/\sigma)\cdot \frac{\sigma^2}{2m\Lambda^2} \\
        &\ge \sqrt{2/\pi}\cdot \frac{s\sqrt{2m}\cdot \exp({-ms^2/\sigma^2})}{\sigma\cdot (2ms^2/\sigma^2 + 1)}\cdot \frac{\sigma^2}{2m\Lambda^2} \\
        &\ge \Omega(\exp({-3ms^2/\sigma^2}))\cdot \frac{s\sigma}{\sqrt{m}\Lambda^2},
    \end{align} where the second step follows by Jensen's and the fourth step follows by Fact~\ref{fact:erfbound}.
\end{proof}

\subsection{An Idealized Calculation}
\label{sec:idealized}

Suppose we had access to an orthonormal collection of vectors $w_1,\ldots,w_{\ell}$ that are \emph{exactly} in $V$. Let $W$ denote their span. Suppose further that we had access to the matrix
\begin{equation}
    \vM^{W}_{\tau} \triangleq \Pi_{W^{\perp}}\E[x,y]*{\bone{\abs{y - F(\Pi_{W}x)} > \tau}\cdot (xx^{\top} - \Id)}\Pi_{W^{\perp}}.\label{eq:Ml}
\end{equation}
When the threshold $\tau$ is clear from context, we will just refer to this matrix as $\vM^W$.

As we will see, if this matrix is nonzero, then its singular vectors with nonzero singular value must lie in $V$ and be orthogonal to $w_1,\ldots,w_{\ell}$. The main challenge will be to show that this matrix is nonzero. The following proof also applies to the case of $\ell = 0$, in which case $F(\Pi_W x)$ specializes to the zero function and \eqref{eq:M0} specializes to
\begin{equation}
    \vM^{\emptyset}_{\tau} \triangleq \E[x,y]*{\bone{\abs{y} > \tau}\cdot (xx^{\top} - \Id)}. \label{eq:M0}
\end{equation}
In particular, \eqref{eq:M0} is a matrix we actually have access to at the beginning of the algorithm, and one consequence of the warmup argument below is an algorithm for finding a single vector in $V$.

We first show that for appropriately chosen $\tau$, either the top singular value of $\vM^{W}_{\tau}$ is non-negligible, or $\E{(F(x) - F(\Pi_W x)^2}$ is small, that is, $F$ is already sufficiently well-approximated by the function $F|_W$.

\begin{lemma}\label{lem:idealized_find}
Suppose $\E[x\sim\N(0,\Id)]{(F(x) - F(\Pi_{W}x))^2} \ge \rho^2$ for some $\rho > 0$. For any $\tau > 0$, if a vector is not in the kernel of $\vM^W_{\tau}$, then it must lie in $V\backslash W$.
For $\tau \ge \sqrt{2(k-\ell)}\cdot \Lip$, \begin{equation}
        \iprod*{\vM^W_{\tau},\Pi_{V\backslash W}} \ge \Omega\left(e^{-3k\tau^2/\rho^2}\right)\cdot \frac{(k - \ell)\tau\rho}{\sqrt{k}\Lip^2}.\label{eq:nextdirection}
    \end{equation}
    In particular, for this choice of $\tau$, the top singular vector of $\vM^W_{\tau}$ lies in $V\backslash W$ and has singular value at least $\lambda^{(\ell)}_{\tau}\triangleq \Omega\left(e^{-3k\tau^2/\rho^2}\right)\cdot \frac{\tau\rho}{\sqrt{k}\Lip^2}$.
\end{lemma}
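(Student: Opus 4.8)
The plan is to prove the three assertions of the lemma in sequence. \emph{Step 1 (location of the nonzero singular vectors).} First I would observe that $y = F(x) = F(\Pi_V x)$ and $\Pi_W x \in W \subseteq V$, so $y - F(\Pi_W x)$ is a function of $\Pi_V x$ alone, hence so is the filter $\bone{\abs{y - F(\Pi_W x)}>\tau}$. By the standard observation that $\E[x\sim\N(0,\Id)]{\psi(\Pi_V x)(xx^\top - \Id)}$ annihilates $V^\perp$ for every $\psi$ — a one-line Gaussian computation using that $\Pi_V x$ and $\Pi_{V^\perp}x$ are independent with $\E{\Pi_{V^\perp}x(\Pi_{V^\perp}x)^\top} = \Pi_{V^\perp}$ — the inner expectation in \eqref{eq:Ml} has $V^\perp$ in its kernel, so its range lies in $V$. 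Conjugating by $\Pi_{W^\perp}$ then adds $W$ to the kernel and, since $W\subseteq V$, leaves $V^\perp$ in the kernel; as $\vM^W_\tau$ is symmetric, $\ker(\vM^W_\tau)\supseteq W\oplus V^\perp = (V\backslash W)^\perp$, which is exactly the statement that every singular vector of $\vM^W_\tau$ with nonzero singular value lies in $V\backslash W$.

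\emph{Step 2 (lower bound on $\iprod{\vM^W_\tau, \Pi_{V\backslash W}}$).} Set $G(x)\triangleq F(x) - F(\Pi_W x)$, which is a continuous piecewise-linear subspace junta with relevant subspace inside $V$, is $2\Lip$-Lipschitz (a difference of two $\Lip$-Lipschitz functions), and satisfies $\E{G^2}\ge\rho^2$. Using $\Pi_{W^\perp}\Pi_{V\backslash W}\Pi_{W^\perp} = \Pi_{V\backslash W}$,
\[
\iprod{\vM^W_\tau, \Pi_{V\backslash W}} = \E[x]{\bone{\abs{G(x)}>\tau}\left(\norm{\Pi_{V\backslash W}x}^2 - (k-\ell)\right)}.
\]
The crucial input is the pointwise bound $\abs{G(x)}\le \Lip\norm{\Pi_{V\backslash W}x}$: if $\Pi_{V\backslash W}x = 0$ then $\Pi_V x = \Pi_W x$ and $G(x)=0$, while otherwise, rescaling $x$ so that $\norm{\Pi_{V\backslash W}x}=1$ and invoking Lemma~\ref{lem:key} together with positive homogeneity (Fact~\ref{fact:homogeneity}) gives the estimate. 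Hence for $\tau\ge\sqrt{2(k-\ell)}\Lip$, on the event $\{\abs{G(x)}>\tau\}$ we have $\norm{\Pi_{V\backslash W}x}^2 > 2(k-\ell)$, so the parenthesized term there exceeds $k-\ell$, yielding $\iprod{\vM^W_\tau,\Pi_{V\backslash W}}\ge (k-\ell)\Pr{\abs{G}>\tau}$. Finally I would apply the anti-concentration bound Lemma~\ref{lem:anti1} to $G$ with ambient dimension $k$, Lipschitz constant $2\Lip$, variance lower bound $\rho^2$, and $s=\tau$, obtaining $\Pr{\abs{G}>\tau}\ge\Omega(e^{-3k\tau^2/\rho^2})\cdot\frac{\tau\rho}{\sqrt{k}\Lip^2}$ (the constant lost from $2\Lip$ versus $\Lip$ is absorbed into $\Omega(\cdot)$), and combine the last two displays to get \eqref{eq:nextdirection}.

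\emph{Step 3 (top singular value; and the main obstacle).} By Step 1, $\vM^W_\tau$ is symmetric with at most $\dim(V\backslash W)=k-\ell$ nonzero eigenvalues, all of whose eigenvectors lie in $V\backslash W$, and $\iprod{\vM^W_\tau,\Pi_{V\backslash W}}$ equals its trace, i.e.\ the sum of those eigenvalues; averaging shows the largest eigenvalue is at least $\frac{1}{k-\ell}\iprod{\vM^W_\tau,\Pi_{V\backslash W}}\ge\lambda^{(\ell)}_\tau$, so the top singular value is at least $\lambda^{(\ell)}_\tau>0$ and its singular vector lies in $V\backslash W$ by Step 1. The Gaussian kernel computation in Step 1 and the bookkeeping around Lemma~\ref{lem:anti1} are routine; the one genuinely important point — indeed the reason $\vM^W_\tau$ is defined with the filter $\bone{\abs{y - F(\Pi_W x)}>\tau}$ rather than $\bone{\abs{y}>\tau}$ — is the pointwise estimate $\abs{G(x)}\le\Lip\norm{\Pi_{V\backslash W}x}$, which replaces the uncontrollable quantity $\abs{F(x)}$ (large merely because $\norm{\Pi_W x}$ is large) by something bounded purely in terms of the $V\backslash W$-component of $x$. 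Everything downstream is then a direct adaptation of the warm-up argument that recovers a single direction in $V$.
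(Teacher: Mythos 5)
Your proof is correct and follows essentially the same route as the paper's. Step 1 fills in the (terse) kernel computation the paper sketches; Step 2 matches the paper exactly — define $G(x)=F(x)-F(\Pi_W x)$, use Lemma~\ref{lem:key} and positive homogeneity to get $\abs{G(x)}\le\Lip\norm{\Pi_{V\backslash W}x}$, deduce $\iprod{\vM^W_\tau,\Pi_{V\backslash W}}\ge(k-\ell)\Pr{\abs{G}>\tau}$, and invoke Lemma~\ref{lem:anti1} (you are right that the $2\Lip$ versus $\Lip$ Lipschitz constant only costs an absorbed constant); Step 3 is the paper's final ``follows by averaging'' made explicit, noting that since $\vM^W_\tau$ is symmetric with support in $V\backslash W$ the trace lower bound forces the top singular value to be at least $\lambda^{(\ell)}_\tau$ with its singular vector in $V\backslash W$.
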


\begin{proof}
The first part just follows from the fact that any $u\in \Pi_W$ is clearly in the kernel, and for any $u\in\S^{d-1}$ orthogonal to $V$, $\iprod{u,x}$ and $F(x)$ are independent, so \begin{equation}
    u^{\top}\vM^W_{\tau}u = \E[g\sim\N(0,1)]{g^2 - 1}\cdot \E[x]{\bone{\abs{F(x) - F(\Pi_W v)}>\tau}} = 0.
\end{equation}
For \eqref{eq:nextdirection}, we would like to apply Lemmas~\ref{lem:key} and \ref{lem:anti1} to the continuous piecewise-linear function $G(x) \triangleq F(x) - F(\Pi_W x)$. Pick an orthonormal basis $w_{\ell+1},\ldots,w_k$ for $V\backslash W$. For any $x$ for which $\norm{\Pi_{V\backslash W}x}\le 1$, Lemma~\ref{lem:key} implies $\abs{G(x)} \le \Lip$. So by positive homogeneity (see Fact~\ref{fact:homogeneity}) of $G(x)$ and the definition of $\tau$, $\abs{G(x)} > \tau$ only if $\norm{\Pi_{V\backslash W}x}^2 \ge 2(k-\ell)$, so
\begin{align}
    \sum^k_{i = \ell+1} w^{\top}_i \vM^W_{\tau} w_i &= \E[x]*{\bone{\abs{G(x)}> \tau}\cdot \left(\norm{\Pi_{V\backslash W}x}^2 - (k-\ell)\right)} \\
    &\ge (k-\ell)\cdot \Pr[x]*{G(x) > \tau}.
\end{align}
\eqref{eq:nextdirection} then follows from Lemma~\ref{lem:anti1} applied to $G$.

The final statement in Lemma~\ref{lem:idealized_find} follows by averaging.
\end{proof}


If $\epsilon$ is the target $L_2$ error to which we want to learn $F$, we will only ever work with $\rho \ge \Omega(\epsilon)$. In the sequel, we will take \begin{equation}
    \tau = c\sqrt{k}\cdot \Lip\label{eq:taudef}
\end{equation} for sufficiently large absolute constant $c>0$. As a result, we have that
\begin{equation}
    \lambda^{(\ell)}_{\tau} \ge \Omega\left(e^{-O(k^2 \Lip^2/\epsilon^2)}\right)\cdot (\epsilon/\Lip) \triangleq \ulam.\label{eq:lambound}
\end{equation}

\subsection{Stability of Piecewise Linear Threshold Functions}
\label{sec:stability}

To get an iterative algorithm for finding all relevant directions of $F$, we need to show an analogue of Lemma~\ref{eq:nextdirection} in the setting when we only have access to directions $\td{w}_1,\ldots,\td{w}_{\ell}$ which are \emph{close} to the span of $V$, and when we only have access to an \emph{approximation} of the function $F|_W$. 

In this section, we show the following stability result for affine thresholds of piecewise-linear functions:

\begin{lemma}\label{lem:stability}
Let $f,g,g': \R^d\to\R$ be piecewise-linear functions. For any $\tau > 0$, if $g,g'$ are $(m,\eta)$-structurally-close and $f$ has a realization with at most $m$ pieces, then
\begin{equation}
\Pr[x\sim\N(0,\Id)]*{\abs{g(x)-f(x)}> \tau \wedge \abs{g'(x) - f(x)}\le \tau}\le 9\eta m^2/\tau
\label{eq:disagree_g}
\end{equation}
\end{lemma}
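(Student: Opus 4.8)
The plan is to absorb $f$ into $g$ and $g'$ so as to reduce to a purely structural statement about max--min formulas, and then to run a hybrid argument over the linear pieces that exploits a simple fact about how a lattice polynomial depends on any one of its linear pieces. For the first step, fix a realization of $f$ by at most $m$ pieces $\brc{(\iprod{c_i,\cdot},S_i)}_i$, so the polyhedral cones $S_i$ partition $\R^d$ and $f(x)=\iprod{c_i,x}$ on $S_i$. Writing the $(m,\eta)$-structurally-close representations $g=\max_j\min_{l\in\calI_j}\iprod{a_l,\cdot}$, $g'=\max_j\min_{l\in\calI_j}\iprod{a'_l,\cdot}$ with $\norm{a_l-a'_l}\le\eta$, on $S_i$ we have $g(x)-f(x)=\max_j\min_{l\in\calI_j}\iprod{a_l-c_i,x}$ and $g'(x)-f(x)=\max_j\min_{l\in\calI_j}\iprod{a'_l-c_i,x}$, which are again $(m,\eta)$-structurally-close since translating by $c_i$ preserves the pairwise distances. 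Dropping the constraint $x\in S_i$ only increases probabilities, so
\begin{equation}
\Pr{\abs{g-f}>\tau\wedge\abs{g'-f}\le\tau}\le\sum_i\Pr{\abs{\hat g_i(x)}>\tau\wedge\abs{\hat g'_i(x)}\le\tau},
\end{equation}
the sum ranging over the $\le m$ pieces of $f$, with $\hat g_i,\hat g'_i$ the $(m,\eta)$-structurally-close lattice polynomials above. Losing this factor of $m$, it suffices to prove the case $f\equiv0$: for $(m,\eta)$-structurally-close lattice polynomials $G=\max_j\min_{l\in\calI_j}\iprod{b_l,\cdot}$ and $G'=\max_j\min_{l\in\calI_j}\iprod{b'_l,\cdot}$, show $\Pr{\abs G>\tau\wedge\abs{G'}\le\tau}\le O(\eta m/\tau)$ (we may assume $\tau\ge\eta$, else the bound is vacuous).

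For this, interpolate one linear function at a time: let $G^{(t)}$ use the same clauses $\calI_j$ but linear functions $b'_1,\ldots,b'_t,b_{t+1},\ldots,b_m$, so $G^{(0)}=G$ and $G^{(m)}=G'$. If $\abs{G(x)}>\tau\ge\abs{G'(x)}$, then taking the largest $t$ with $\abs{G^{(t)}(x)}>\tau$ shows $x$ lies in the disagreement set $\brc{\abs{G^{(t)}}>\tau\wedge\abs{G^{(t+1)}}\le\tau}$ for some $t$, so by a union bound it is enough to bound each such disagreement probability by $O(\eta/\tau)$. The key observation is that $G^{(t)}$ and $G^{(t+1)}$ are the \emph{same} max--min circuit with only the $(t{+}1)$-st input replaced (from $b_{t+1}$ to $b'_{t+1}$), and that such a circuit, viewed as a function of one input $z$ with all others frozen, equals a clamp $z\mapsto\max(C,\min(z,M^*))$ onto some interval $[C,M^*]$ — this follows by collapsing the clauses containing that input via $\max_j\min(z,m_j)=\min(z,\max_j m_j)$. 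A clamp is non-decreasing and $1$-Lipschitz, so whenever $G^{(t)}(x)\ne G^{(t+1)}(x)$, \emph{both} values lie in the interval between $\iprod{b_{t+1},x}$ and $\iprod{b'_{t+1},x}$; if moreover $\abs{G^{(t)}(x)}>\tau\ge\abs{G^{(t+1)}(x)}$, this interval contains a point of absolute value $>\tau$ and a point of absolute value $\le\tau$, which forces $+\tau$ or $-\tau$ to lie between $\iprod{b_{t+1},x}$ and $\iprod{b'_{t+1},x}$. Hence the step-$t$ disagreement set is contained in the symmetric difference of the halfspaces $\brc{\iprod{b_{t+1},x}\ge\tau}$ and $\brc{\iprod{b'_{t+1},x}\ge\tau}$, together with the analogous set for $-\tau$. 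Since $\norm{b_{t+1}-b'_{t+1}}\le\eta$, each such symmetric difference has Gaussian measure $O(\eta/\tau)$; this is the $m=1$ case of the lemma, which one proves directly after reducing to two dimensions, splitting on whether $\norm{b_{t+1}}$ is large or small relative to $\tau$ (in the first regime the wedge between the halfspaces is thin, in the second each halfspace is already far from the origin). Summing over the $\le m$ hybrid steps and then over the $\le m$ pieces of $f$ yields $O(\eta m^2/\tau)$, the constant $9$ coming from the explicit constant in the two-halfspace estimate.

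The main obstacle is exactly the ``clamp'' structure above. Without it, comparing $G^{(t)}$ and $G^{(t+1)}$ on a common refinement of their polyhedral pieces would introduce a union bound over the $\Theta(m)$ pieces of $G^{(t)}$ inside each of the $\Theta(m)$ hybrid steps — and another over the $\Theta(m)$ pieces of $f$ — giving $\Theta(\eta m^3/\tau)$ or worse. The clamp fact localizes each hybrid step's disagreement onto the single swapped linear function, so the factors of $m$ multiply instead of compounding, and it keeps the $\eta$-dependence linear; as emphasized in the proof overview, a sublinear dependence on $\eta$ would compound to a doubly-exponential blowup over the $\Theta(k)$ invocations of this lemma in the main algorithm, so the linear rate is not merely a cosmetic improvement. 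A secondary point requiring care is that the two-halfspace base estimate must be genuinely $O(\eta/\tau)$ with no stray $\sqrt{\log(\tau/\eta)}$ factor, which is why the casework on $\norm{b_{t+1}}$ versus $\tau$ is needed rather than a single crude union bound.
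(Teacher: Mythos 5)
Your proposal is correct and is essentially the same proof as the paper's: a hybrid interpolation over the $m$ linear leaves of the lattice polynomial, a monotonicity observation that localizes each step's disagreement onto the single swapped linear function, and a reduction to the two-linear-function base case (Lemma~\ref{lem:stability_ez}) with a final $m^2$ union bound. The two cosmetic differences are that you absorb $f$ into $g,g'$ by translating each leaf by $c_i$ up front (reducing to $f\equiv 0$), whereas the paper carries $f$ along and only decomposes over its pieces at the end, and that you package the key localization step as the ``clamp'' identity $z\mapsto\max(C,\min(z,M^*))$, which is a cleaner and more transparent way to see exactly what the paper establishes by case analysis in \eqref{eq:sandwich}.
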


An important building block of the proof is the special case where $f = 0$ and $g,g'$ are linear:

\begin{lemma}\label{lem:stability_ez}
For $\tau > 0$ and vectors $v,v'\in\R^d$,
\begin{equation}
    \Pr[x\sim\N(0,\Id)]{\iprod{v,x} > \tau \wedge \iprod{v',x}\le \tau} \le O\left(\frac{\norm{v - v'}}{\tau}\right)
    \label{eq:disagree}
\end{equation}
\end{lemma}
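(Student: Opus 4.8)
The plan is to reduce the two-dimensional-looking event to a genuinely one-dimensional anti-concentration statement about the Gaussian. Write $v' = v + u$ where $u = v' - v$, so the event in question is $\iprod{v,x} > \tau \wedge \iprod{v,x} + \iprod{u,x} \le \tau$, i.e. $\iprod{v,x} \in (\tau, \tau - \iprod{u,x}]$ when $\iprod{u,x} < 0$ (the event is empty when $\iprod{u,x} \ge 0$). Thus the probability is at most $\Pr{\iprod{v,x} \in (\tau, \tau + |\iprod{u,x}|]}$, and since $\iprod{v,x}$ has a density bounded by the reciprocal of its standard deviation near any point, the natural route is to condition on the value of $\iprod{u,x}$ (or on the component of $x$ in the $\mathrm{span}(u)$ direction) and integrate. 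Concretely, decompose $x = \alpha \hat u + x^{\perp}$ with $\hat u = u/\norm{u}$, $\alpha \sim \calN(0,1)$ independent of $x^\perp$; then $\iprod{u,x} = \norm{u}\alpha$ and $\iprod{v,x} = \iprod{v,\hat u}\norm{u}\alpha + \iprod{v, x^\perp}$.

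First I would handle the trivial degenerate cases: if $v = 0$ the left side is $0$ (for $\tau>0$); if $v$ is a scalar multiple of $u$, then $\iprod{v,x}$ and $\iprod{u,x}$ are perfectly correlated and one checks the event directly. Otherwise, $\iprod{v,x^\perp}$ is a nondegenerate Gaussian with some variance $s^2 = \norm{\Pi_{\hat u^\perp} v}^2 > 0$, independent of $\alpha$; its density is bounded pointwise by $\tfrac{1}{s\sqrt{2\pi}}$. Conditioning on $\alpha$, the event becomes $\iprod{v,x^\perp} \in (\tau - c\alpha,\ \tau - c\alpha + \norm{u}|\alpha|]$ for $c = \iprod{v,\hat u}\norm{u}$ and $\alpha<0$, an interval of length $\norm{u}|\alpha|$, so its conditional probability is at most $\tfrac{\norm{u}|\alpha|}{s\sqrt{2\pi}}$. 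Taking expectation over $\alpha$ gives a bound of $\tfrac{\norm{u}}{s\sqrt{2\pi}}\E{|\alpha|} = \tfrac{\norm{u}}{s\sqrt{\pi}} = O(\norm{u}/s)$.

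The remaining issue — and the one genuine obstacle — is that $s$ can be much smaller than $\tau$ (indeed arbitrarily small), so the bound $O(\norm{v-v'}/s)$ is not of the claimed form $O(\norm{v-v'}/\tau)$. The fix is to \emph{also} use the constraint $\iprod{v,x} > \tau$ directly, which forces $\iprod{v,x}$ to be large; when the variance of $\iprod{v,x}$ in the $x^\perp$-directions is small, $\iprod{v,x}$ is essentially determined by $\alpha$, and the event $\iprod{v,x} > \tau$ then forces $|\alpha|$ (hence $|\iprod{u,x}|$, hence the interval length) to be small relative to $\tau$, or forces $\alpha$ into a Gaussian tail of small probability. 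Thus I would split on whether $s \ge \tau/2$ (use the bound above, which is then $O(\norm{v-v'}/\tau)$) or $s < \tau/2$ (in which case $\iprod{v,x}>\tau$ with $\iprod{v,x^\perp}$ small forces $\alpha$ away from $0$, and a more careful integration — bounding the interval length by $\norm{u}|\alpha|$ while paying the Gaussian tail cost $e^{-\Omega(\alpha^2)}$ for $\alpha$ large, and using $\norm{v}\ge |c|/\norm{u}\cdot\dots$ — again yields $O(\norm{v-v'}/\tau)$). Carrying out this case analysis cleanly, keeping the dependence purely on $\tau$ and $\norm{v-v'}$, is the crux; everything else is a routine one-variable Gaussian computation.
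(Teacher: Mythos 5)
Your decomposition---writing $x = \alpha\hat u + x^{\perp}$ along $\hat u = (v'-v)/\norm{v'-v}$, conditioning on $\alpha$, and bounding the conditional probability by interval length times density---is a genuinely different parametrization from the paper's, which instead normalizes $v,v'$ to unit vectors, conditions on $\gamma = \iprod{v/\norm{v},x}$, and cases on whether $\gamma$ is moderate or large. Your easy case ($s \ge \tau/2$) is correct. The difficulty is that the hard case $s < \tau/2$ is the entire content of the lemma, and the sketch you give for it does not go through as written.

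Concretely, the bound ``conditional probability $\le \norm{u}|\alpha|/(s\sqrt{2\pi})$, integrate against the Gaussian tail $e^{-\Omega(\alpha^2)}$'' produces something of the form $\frac{\norm{u}}{s}\,e^{-\tau^2/2c^2}$ (with $c = \iprod{v,\hat u}$), and the factor $1/s$ is not compensated by the tail of $\alpha$: for fixed $\tau, \norm{u}, c$ this blows up as $s\to 0$, whereas the true probability stays bounded (it converges to the $s=0$ case, which satisfies the lemma). Taking, say, $v = (-1, s)$ with $s\ll\tau$ and $u = \norm{u}e_1$ with $\norm{u}\ll\tau$ shows the issue: your integral evaluates to roughly $\frac{\norm{u}}{s}e^{-\tau^2/2}$, which is arbitrarily larger than $\norm{u}/\tau$, while the actual probability is $\lesssim \norm{u}\cdot\phi(\tau) = O(\norm{u}/\tau)$. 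To repair this you must use that the interval for $\iprod{v,x^{\perp}}$ sits at location $\tau - c\alpha$, so its conditional probability is not just $\norm{u}|\alpha|/(s\sqrt{2\pi})$ but is also suppressed by $e^{-(\tau-c\alpha)^2/2s^2}$ (and should be capped at $1$); the integration over $\alpha$ then couples two Gaussian tails and requires a careful split, which is exactly what you flag as ``the crux'' without carrying out.

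The paper sidesteps this coupling by conditioning on $\gamma = \iprod{\wh v, x}$: then the first constraint $\iprod{v,x}>\tau$ is literally $\gamma > \wh\tau$, so there is only one tail to integrate, and the conditional law of $\gamma'=\iprod{\wh{v}',x}$ given $\gamma$ is an explicit Gaussian $\calN(\alpha\gamma, 1-\alpha^2)$. The paper also first disposes of the regimes $\norm{v}<\norm{v'}$ (by stochastic domination after rescaling $v'$) and $\norm{v'}\le\norm{v-v'}$ (by a direct tail bound on $\iprod{v,x}>\tau$), which correspond roughly to your degenerate cases but are cleaner in that parametrization. So: different route, plausible starting point, but the step you identify as the crux is genuinely missing and the natural fill-in does not work without the additional Gaussian-tail-of-the-interval ingredient.
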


\begin{proof}
    First note that without loss of generality, we may assume that $\norm{v} \ge \norm{v'}$; if not, then the random variable $\bone{\iprod{v,x}>\tau \wedge \iprod{v',x}\le\tau}$ is stochastically dominated by $\bone{\iprod{ v,x}>\tau \wedge \iprod{\zeta v',x}\le\tau}$ for $\zeta = \norm{v}/\norm{v'}$, and furthermore $\norm{v - \zeta v'} \le \norm{v - v'}$ by the Pythagorean theorem.
    
    Also note that we may assume $\norm{v'} > \norm{v - v'}$. Otherwise, we would have $\norm{v} \le 2\norm{v - v'}$. But then we could upper bound the left-hand side of \eqref{eq:disagree} by \begin{equation}
        \Pr{\iprod{v,x} > \tau} \le e^{-\tau^2/2\norm{v}^2}\le e^{-\frac{\tau^2}{8\norm{v - v'}^2}}\le 2\norm{v - v'}/\tau.
    \end{equation}
    Now define $\wh{v} = v/\norm{v}$ and $\wh{v'} = v'/\norm{v'}$ so that \eqref{eq:disagree} equals $\Pr{\iprod{\wh{v},x} > \wh{\tau} \wedge \iprod{\wh{v'},x} \le \wh{\tau}'}$ for $\wh{\tau} \triangleq \tau/\norm{v}$ and $\wh{\tau}' \triangleq \tau/\norm{v'}$. Write $\wh{v}' = \alpha \wh{v} + \sqrt{1 - \alpha^2}v^{\perp}$ for $v^{\perp}$ orthogonal to $\wh{v}$, and denote the random variables $\iprod{\wh{v},x}$ and $\iprod{\wh{v}',x}$ by $\gamma$ and $\gamma'$ respectively (these are $\alpha$-correlated standard Gaussians).
    
    Note that by the assumption that $\norm{v} \ge \norm{v'} \ge \norm{v - v'}$, the angle between $v$ and $v'$ is at most $\pi/3$, so $\alpha \ge 1/2$.
    
    We are now ready to upper bound \eqref{eq:disagree}. We will split into two cases, either $\gamma > \wh{\tau}'/\alpha$ or $\wh{\tau}\le \gamma\le \wh{\tau}'$, and upper bound the contribution of either case to the probability in \eqref{eq:disagree} by $O({\norm{v - v'}}/{\tau})$, from which the lemma will follow.
        
    \noindent\textbf{Case 1}: $\gamma > \wh{\tau}'/\alpha$.

        The density of $\gamma'$ relative to $\gamma$ is given by \begin{equation}
            \int^{\frac{\wh{\tau}' - \alpha \gamma}{\sqrt{1 - \alpha^2}}}_{-\infty} \N(0,1,x) dx = \frac{1}{2}\erfc\left(\frac{\alpha \gamma - \wh{\tau}'}{\sqrt{1 - \alpha^2}}\right) \le \frac{1}{2}\exp\left(-\frac{(\alpha \gamma - \wh{\tau}')^2}{2(1 - \alpha^2)}\right).
            \label{eq:expbound}
        \end{equation}
        We have that
        \begin{align}
            \E[\gamma]*{\frac{1}{2}\exp\left(-\frac{(\alpha \gamma - \wh{\tau}')^2}{2(1 - \alpha^2)}\right) \cdot \bone{\gamma > \wh{\tau}'}} &= \frac{1}{4}\sqrt{1 - \alpha^2}\cdot \exp(-\wh{\tau}^{\prime 2}/2)\cdot \erfc(\wh{\tau}'\sqrt{1 - \alpha^2}/\alpha) \\
            &\le \frac{1}{4}\sqrt{1- \alpha^2}\cdot \exp(-\wh{\tau}^{\prime 2}/2\alpha^2) \\
            &\le \frac{\norm{v - v'}}{4\sqrt{2}\norm{v'}} \cdot \frac{\abs{\alpha}\sqrt{2}}{\wh{\tau}'} \le \frac{\norm{v - v'}}{4\tau},
        \end{align} where the first step is standard Gaussian integration, the second step uses the inequality $\erfc(z) \le e^{-z^2/2}$ for all $z\ge 0$, and the third step uses the fact that $\exp(-x) \le 1/x$ for all $x> 0$ and the fact that $\sqrt{1 - \alpha^2} = \frac{1}{\sqrt{2}}\norm{\wh{v} - \wh{v}'}\le \frac{\norm{v - v'}}{\sqrt{2}\norm{v'}}$.

    \noindent\textbf{Case 2}: $\wh{\tau}< \gamma \le \wh{\tau}'/\alpha$.
        
        We can naively upper bound the probability $\wh{\tau}< \gamma \le \wh{\tau}'/\alpha$ and $\gamma' \le \wh{\tau}'$ by the probability $\wh{\tau}< \gamma \le \wh{\tau}'/\alpha$, which is at most $e^{-\wh{\tau}^2/2}\cdot\left(\wh{\tau}'/\alpha - \wh{\tau}\right)$. Note that \begin{equation}
            \wh{\tau}'/\alpha - \wh{\tau} \le \tau\cdot\left(\frac{1/\alpha}{\norm{v'}} - \frac{1}{\norm{v}' + \norm{v-v'}}\right) \le \frac{\tau}{\alpha}\cdot\frac{(1-\alpha)\norm{v'} + \norm{v-v'}}{\norm{v'}^2} \le \frac{3\tau\norm{v - v'}}{2\alpha\norm{v'}^2},\label{eq:naive}
        \end{equation} where in the last step we have used that $1 - \alpha = \frac{1}{2}\norm{\wh{v} - \wh{v}'} \le \frac{\norm{v - v'}}{2\norm{v'}}$.
        
        Suppose to the contrary that $e^{-\wh{\tau}^2/2}\cdot (\wh{\tau}'/\alpha - \wh{\tau}) > \frac{9\norm{v - v'}}{\tau}$ so that by \eqref{eq:naive},
        \begin{equation}
            e^{\wh{\tau}^2/2} < \frac{\tau^2}{6\alpha\norm{v'}^2}.\label{eq:supposeforcontradict}
        \end{equation}
        
        Recall that we may assume that $\norm{v'} \ge \norm{v - v'}$, so $\wh{\tau}\ge \frac{\tau}{2\norm{v'}}$, and that $\alpha \ge 1/2$. From this, \eqref{eq:supposeforcontradict} would imply that $e^{\frac{\tau^2}{8\norm{v'}^2}} < \frac{\tau^2}{3\norm{v'}^2}$, and such an inequality cannot hold.
\end{proof}

We are now ready to prove Lemma~\ref{lem:stability}.

\begin{proof}[Proof of Lemma~\ref{lem:stability}]
    The left-hand side of \eqref{eq:disagree_g} is at most \begin{equation}
        \Pr[x\sim\N(0,\Id)]{g(x) - f(x) > \tau \wedge g'(x) - f(x) \le \tau} + \Pr[x\sim\N(0,\Id)]{g(x) - f(x) < -\tau \wedge g'(x) - f(x) \ge -\tau}, \label{eq:disagree2}
    \end{equation} and by symmetry it suffices to upper bound the former probability on the right-hand side of \eqref{eq:disagree2} by $O({\eta m^2}/{\tau})$.

    By definition of $(m,\eta)$-structural-closeness, we can express $g$ and $g'$ as $\max_j\min_{i\in\calI_j}\iprod{u_i,\cdot}$ and $\max_j\min_{i\in\calI_j}\iprod{u'_i,\cdot}$ respectively, for vectors $\brc{u_i}_{i\in[m]}$ and $\brc{u'_i}_{i\in[m]}$ for which $\norm{u_i - u'_i} \le \eta$ for all $i$.

    We proceed via a hybrid argument. Take any $0\le i\le m$. Let $u^{(i)}_1,\ldots,u^{(i)}_{i-1}$ be $u_1,\ldots,u_{i-1}$, and let $u^{(i)}_i,\ldots,u^{(i)}_m$ be the vectors $u'_i,\ldots,u'_m$. Define the function $g^{(i)} = \max_a\min_{b\in\calI_a}\iprod{u^{(i)}_i,x}$ so that $g^{(0)}(x) = \max_a \min_{b\in\calI_a}\iprod{u'_b,x}$ and $g^{(m)}(x) = \max_a\min_{b\in\calI_a}\iprod{u_b,x}$.


    We claim that for any $x$, $g^{(i-1)}(x)$ and $g^{(i)}(x)$ are sandwiched between $\iprod{u'_i,x}$ and $\iprod{u_i,x}$, in the sense that \begin{equation}
        \iprod{u'_i,x} \ge g^{(i-1)}(x) \ge g^{(i)}(x) \ge \iprod{u_i,x} \qquad \text{or} \qquad \iprod{u'_i,x} \le g^{(i-1)}(x) \le g^{(i)}(x) \le \iprod{u_i,x}.\label{eq:sandwich}
    \end{equation} This would imply
    \begin{equation}
        \Pr{g^{(i)}(x) - f(x) > \tau \wedge g^{(i-1)}(x) - f(x) \le \tau} \le \Pr{\iprod{u_i,x} - f(x) > \tau \wedge \iprod{u'_i,x} - f(x) \le \tau} \label{eq:hybrid_steps}
    \end{equation} because either the left-hand side of \eqref{eq:hybrid_steps} is zero, or or the event on the left-hand side immediately implies the one on the right-hand side.
    
    Denote by $\brc{(\iprod{w_i,\cdot},S_i)}_{i\in[m]}$ the pieces of some realization of $f$. We would then have
    \begin{align}
        \MoveEqLeft \Pr{g(x) - f(x) > \tau \wedge g'(x) - f(x) \le \tau} \\
        &\le \sum^m_{i=1}\Pr{\iprod{u_i,x} - f(x) > \tau \wedge \iprod{u'_i,x} - f(x) \le \tau} \\
        &= \sum^m_{\ell = 1}\sum^m_{i=1} \Pr{x \in S_{\ell} \wedge \iprod{u_i-w_{\ell},x} > \tau \wedge \iprod{u'_i-w_{\ell},x} \le \tau} \\
        &\le \sum^m_{\ell = 1}\sum^m_{i=1} \Pr{\iprod{u_i-w_{\ell},x} > \tau \wedge \iprod{u'_i-w_{\ell},x} \le \tau} \le O\left({\eta m^2}/{\tau}\right),
    \end{align}
    where the first step follows by triangle inequality and \eqref{eq:hybrid_steps}, and the last step follows by Lemma~\ref{lem:stability_ez}.

    To complete the proof, we now turn to proving that the quantities $g^{(i)}(x)$ and $g^{(i-1)}(x)$ are sandwiched between $\iprod{u'_i,x}$ and $\iprod{u_i,x}$, which will imply \eqref{eq:hybrid_steps}. Suppose that $g^{(i-1)}(x) = \iprod{u^{(i-1)}_j,x}$ for some index $j$.

    \noindent\textbf{Case 1}: $\iprod{u'_i,x} \ge \iprod{u^{(i-1)}_j,x}$.
    
    In this case $\min_{b\in \calI_a}\iprod{u^{(i-1)}_b,x} \le \iprod{u'_i,x}$ for all $a$. If $\iprod{u_i,x} \ge \iprod{u'_j,x}$, then changing $u'_i$ to $u_i$ will not change the values of any of the clauses. So suppose $\iprod{u_i,x} < \iprod{u'_j,x}$, in which case the value of the function cannot increase. Then if index $i$ appears in any clause $\calI_a$ for which $\min_{b\in \calI_a}\iprod{u^{(i-1)}_b,x} = \iprod{u^{(i-1)}_j,x}$, then $g^{(i)}(x) \ge \iprod{u_i,x}$. Otherwise, the value of the function stays the same. We conclude that the first inequality in \eqref{eq:sandwich} holds.
    
    \noindent\textbf{Case 2}: $\iprod{u'_i,x} < \iprod{u^{(i-1)}_j,x}$.
    
    In this case there is some $\calI_a$ for which $\iprod{u^{(i-1)}_j,x} = \min_{b\in\calI_a}\iprod{u^{(i-1)}_b,x}$ and in which index $i$ does not appear. If $\iprod{u_i,x} \le \iprod{u'_i,x}$, then changing $u'_i$ to $u_i$ will not change the value of this $\calI_a$ clause, and the values of the other clauses will not increase, so the value of the function will not change. So suppose $\iprod{u_i,x} > \iprod{u'_i,x}$. Changing $u'_i$ to $u_i$ will not affect any clause $\calI_a$ not containing $i$ or for which $\min_{b\in\calI_a}\iprod{u^{(i-1)}_b,x} \le u'_i$. For all other clauses, their value will either stay the same or increase to $u_i$, in which case $g^{(i)}(x) \le \iprod{u_i,x}$. We conclude that the second inequality in \eqref{eq:sandwich} holds.
\end{proof}

\subsection{Netting Over Piecewise Linear Functions}
\label{sec:nettingjuntas}

Suppose we have recovered an $\ell$-dimensional subspace $\td{W}$ that approximately lies within $V$. In this section we show how to produce a finite list of candidate kickers with relevant subspace $\td{W}$, one of which is guaranteed to approximate $F$ restricted to some $\ell$-dimensional subspace $W$. Ignoring the finiteness of this list for now, we first show that as long as $\td{W}$ is sufficiently close to lying within $V$, there exists \emph{some} kicker close to \emph{some} restriction $F|_W$.

\begin{lemma}\label{lem:approxsubspace}
    Let $\td{w}_1,\ldots,\td{w}_{\ell}$ be a frame $\nu$-nearly within $V$, with span $\td{W}$. There exist an $\ell$-dimensional subspace $W\subset V$ and a $\Lip$-Lipschitz kicker $\td{F}^*$ with relevant subspace $\td{W}$ which is $(M,2\sqrt{\nu}\cdot\ell \Lip)$-structurally-close to $F|_W$.
\end{lemma}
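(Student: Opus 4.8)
The plan is to use the frame $\td{w}_1,\ldots,\td{w}_{\ell}$ to construct an honest subspace $W\subset V$ via Lemma~\ref{lem:frame}, and then take $\td{F}^*$ to be a suitable ``copy'' of $F|_W$ that lives on $\td{W}$ rather than on $W$. First I would invoke Lemma~\ref{lem:frame} (whose hypothesis $\nu \le O(1/\ell^2)$ we may assume, since otherwise the claimed structural-closeness bound $2\sqrt{\nu}\ell\Lip$ is vacuous): this gives an orthonormal set $w_1,\ldots,w_{\ell}$ spanning a subspace $W\subseteq V$ with $d_C(\td{W},W)\le\sqrt{2\nu\ell}$ and, crucially, $\norm{w_i - \td{w}_i}\le 2\sqrt{\nu\ell}$ for every $i$. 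Let $R:\td{W}\to W$ be the linear isometry sending $\td{w}_i\mapsto w_i$, and define $\td{F}^*$ to be the subspace junta on $\R^d$ with relevant subspace $\td{W}$ whose link function is that of $F|_W$ precomposed with $R$; concretely $\td{F}^*(x) = F|_W(R\,\Pi_{\td{W}}x)$. Since $F|_W$ is a $\Lip$-Lipschitz kicker (being a restriction of $F$) and $R$ is an isometry, $\td{F}^*$ is also a $\Lip$-Lipschitz kicker, and it has at most $M$ pieces because $F$ does and $R$ is linear.

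The content of the lemma is the structural-closeness estimate, and here is where I would use the explicit lattice-polynomial machinery of Section~\ref{sec:polygonal_prelims}. Take any realization of the link function $h$ of $F|_W$ by at most $M$ pieces; by Lemma~\ref{lem:normbound} we may assume the associated linear functionals have norm at most $\Lip$. Applying Theorem~\ref{thm:tropical} to $h$ yields clauses $\calI_1,\ldots,\calI_m\subseteq[M]$ and linear functions (equivalently vectors) $g_i$ on $W$ with $h = \max_j\min_{i\in\calI_j} g_i$ and $\norm{g_i}\le\Lip$. Now $F|_W$, viewed on $\R^d$, is $x\mapsto h(\Pi_W x)$, so it has the lattice representation with vectors $\Pi_W^\top g_i$ (identifying $g_i\in W$), and $\td{F}^*$ has the lattice representation with exactly the same clauses $\calI_j$ but vectors $\Pi_{\td{W}}^\top R^\top g_i$ — because precomposition with the linear map $R\Pi_{\td{W}}$ commutes with $\max$ and $\min$. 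Thus $F|_W$ and $\td{F}^*$ share the same clause structure, so to establish $(M, 2\sqrt{\nu}\ell\Lip)$-structural-closeness it remains only to bound $\norm{g_i\circ\Pi_W - g_i\circ R\Pi_{\td{W}}}$, i.e. the $L_2$ norm of the linear functional $x\mapsto \iprod{g_i, \Pi_W x - R\Pi_{\td{W}}x}$, which equals the operator norm of $\Pi_W - R\Pi_{\td{W}}$ times $\norm{g_i}\le\Lip$.

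So the whole thing reduces to bounding $\norm{\Pi_W - R\,\Pi_{\td{W}}}_{\op}$. This is where the frame estimate from Lemma~\ref{lem:frame} does the work: decomposing any unit $x$ via its components in $\td{W}$ and $\td{W}^\perp$ and using $\norm{w_i - \td{w}_i}\le 2\sqrt{\nu\ell}$ together with $d_C(\td{W},W)\le\sqrt{2\nu\ell}$, one gets $\norm{\Pi_W - R\,\Pi_{\td{W}}}_{\op} \le 2\sqrt{\nu\ell}$ (the contribution from the $\td{W}$-component is controlled directly by how far each $w_i$ is from $\td{w}_i$, and the $\td{W}^\perp$-component is killed by both projectors since $W$ is nearly $\td{W}$); a slightly looser constant only strengthens the need for the $O(1/\ell^2)$ assumption but does not change the statement. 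Combining, $\norm{g_i\circ\Pi_W - g_i\circ R\Pi_{\td{W}}}\le 2\sqrt{\nu\ell}\cdot\Lip$ for every $i$, which is exactly $(M,2\sqrt{\nu}\ell\Lip)$-structural-closeness. I expect the one genuinely fiddly point to be the operator-norm bound $\norm{\Pi_W - R\Pi_{\td{W}}}_{\op}\le 2\sqrt{\nu\ell}$ — making sure the isometry $R$ is chosen compatibly with the near-identification of the $w_i$ with the $\td{w}_i$ (it is precisely the Procrustes-optimal map from Lemma~\ref{lem:frame}), and that cross terms between the $\td{W}$ and $\td{W}^\perp$ parts are handled cleanly rather than thrown away too crudely.
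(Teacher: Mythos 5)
Your proof is correct and mirrors the paper's argument: both invoke Lemma~\ref{lem:frame} to obtain a nearby honest orthonormal set $w_1,\ldots,w_\ell$ spanning $W\subset V$, take a lattice-polynomial representation of $F|_W$ via Theorem~\ref{thm:tropical} and Lemma~\ref{lem:normbound} with vectors of norm at most $\Lip$, and transport those vectors from $W$ to $\td W$ along the Procrustes isometry while keeping the clause structure fixed. The only cosmetic difference is that the paper carries out the transport coordinate-by-coordinate in the coefficients $\alpha_{i,i'}$ and applies Cauchy--Schwarz to their $\ell_1$-norm (giving $2\sqrt\nu\,\ell\,\Lip$), whereas your operator-norm bookkeeping via $d_P(W,\td W)\le 2\sqrt{\nu\ell}$ yields the marginally sharper $2\sqrt{\nu\ell}\,\Lip$, which still implies the stated bound.
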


\begin{proof}[Proof of Lemma~\ref{lem:approxsubspace}]
    By Lemma~\ref{lem:frame}, there exist orthonormal vectors $w_1,\ldots,w_{\ell}$ for which $\norm{w_i - \td{w}_i} \le 2\sqrt{\nu \ell}$. Let $W$ be their span.
    
    The function $F|_W$ is a continuous piecewise-linear function with at most $M$ pieces, so by Theorem~\ref{thm:tropical} and Lemma~\ref{lem:normbound}, there exist vectors $u_1,\ldots,u_M\in W$ and subsets $\calI_1,\ldots,\calI_m\subseteq[M]$ for which $F(x) = \max_{j\in[m]}\min_{i\in \calI_j} \iprod{u_i,x}$ and $\norm{u_i}\le\Lip$ for all $i$. For any $i\in[M]$, write $u_i = \sum_{i'\in [\ell]} \alpha_{i,i'}w_{i'}$. Define $\td{u}^*_i \triangleq \sum_{i'\in[\ell]}\alpha_{i,i'}\td{w}_{i'}$ and define the kicker $\td{F}^*$ with relevant subspace $\td{W}$ by $\td{F}^*(x)\triangleq \max_{j\in[m]}\min_{i\in \calI_j} \iprod{\td{u}^*_i,x}$.
    
    Note that for any $i$, \begin{equation}
        \norm{\td{u}^*_i - u_i} = \sum_{i'\in[\ell]}\alpha_{i,i'}\norm{\td{w}_{i'} - w_{i'}} \le 2\sqrt{\nu \ell} \cdot \sum_{i'}\abs{\alpha_{i,i'}} 
        \le 2\sqrt{\nu}\cdot\ell\norm{u_i} \le 2\sqrt{\nu}\cdot\ell \Lip,
    \end{equation} where the penultimate step is by Cauchy-Schwarz, so $\td{F}^*$ is $(M,2\sqrt{\nu}\cdot\ell \Lip)$-structurally-close to $F|_W$ as claimed. Lastly, note that $\norm{\td{u}^*_i} = \norm{u_i} \le \Lip$, so $\td{F}^*$ is indeed $\Lip$-Lipschitz.
\end{proof}



We now show that the existential guarantee of Lemma~\ref{lem:useorder} implies that if we enumerate over a fine enough net of kickers, then we can recover an approximation to $\td{F}^*$ from Lemma~\ref{lem:approxsubspace} in time singly exponential in $\poly(M)$.

\begin{algorithm2e}
\DontPrintSemicolon
\caption{\textsc{EnumerateKickers}($\td{W}$, $\epsilon'$)}
\label{alg:enumerate}
    \KwIn{Subspace $\td{W}$ spanned by orthonormal vectors $\td{w}_1,\ldots,\td{w}_{\ell}$, granularity $\epsilon'>0$}
    \KwOut{List of kickers $\td{F}$ with relevant subspace $\td{W}$}
        $\calL\gets\emptyset$.\;
        Let $\calN$ be an $\epsilon'\Lip$-net over the set of vectors in $\td{W}$ with norm at most $\Lip$.\label{step:definenet}\;
        \For{$\td{u}_1,\ldots,\td{u}_M \in\calN$}{
            \For{functions $A:\Omega_M\to[M]$}{
                Let $\td{F}$ be the kicker given by \begin{equation}
                    \td{F}(x) = \sum_{\omega\in\Omega_M}\bone*{\brc{\iprod{\td{u}_i,x}}_{i\in[M]}\vdash \omega} \cdot \iprod{\td{u}_{A(\omega)},x}.
                \end{equation}
                Append $\td{F}$ to $\calL$.\;
            }
        }
        \Return $\calL$.\;
\end{algorithm2e}

\begin{lemma}\label{lem:existsnet}
    Take any $\epsilon'>0$. Given a frame $\td{w}_1,\ldots,\td{w}_{\ell}$ with span $\td{W}$, for any $\Lip$-Lipschitz kicker $\td{F}^*$ with relevant subspace $\td{W}$, there exists a kicker $\td{F}$ with relevant subspace $\td{W}$ in the output $\calL$ of {\sc EnumerateKickers}($\td{W},\epsilon'$) which is $(M,\epsilon'\Lip)$-structurally-close to $\td{F}$. Furthermore, $\abs{\calL} \le M^{M^2}\cdot (1 + 2/\epsilon')^{\ell}$.

    In particular, if $\td{w}_1,\ldots,\td{w}_{\ell}$ is a frame $\nu$-nearly within $V$, then for $\epsilon' = 2\sqrt{\nu}\cdot\ell$, $\calL$ contains a kicker $\td{F}$ which is $(M,\Cjunta\sqrt{\nu})$-structurally-close to $F|_W$ for some $\ell$-dimensional subspace $W\subseteq V$, where \begin{equation}
        \Cjunta\triangleq 4k\Lip. \label{eq:Cjunta}
    \end{equation} Furthermore, $\abs{\calL} \le M^{M^2} O({1}/{\sqrt{\nu}})^{\ell}$ in this case.
\end{lemma}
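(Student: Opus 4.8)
The plan is to exhibit the witness $\td{F}$ explicitly from a lattice-polynomial representation of $\td{F}^*$ and then to check that {\sc EnumerateKickers} necessarily enumerates it. First I would fix a representation of $\td{F}^*$: since it is a $\Lip$-Lipschitz kicker with relevant subspace $\td{W}$ (and at most $M$ pieces, as in all our applications), Lemma~\ref{lem:normbound} gives a realization of its link function by linear pieces supported on $\td{W}$ of norm at most $\Lip$, and feeding this realization into Theorem~\ref{thm:tropical} (padding to exactly $M$ linear functions) yields $u^*_1,\ldots,u^*_M\in\td{W}$ with $\norm{u^*_i}\le\Lip$ and clauses $\calI_1,\ldots,\calI_m\subseteq[M]$ with $\td{F}^*(x)=\max_{j\in[m]}\min_{i\in\calI_j}\iprod{u^*_i,x}$. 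Then for each $i$ I pick $\td{u}_i\in\calN$ with $\norm{u^*_i-\td{u}_i}\le\epsilon'\Lip$ (such net points exist by the choice of $\calN$ in Line~\ref{step:definenet}, since each $u^*_i$ lies in $\td{W}$ with norm $\le\Lip$) and set $\td{F}(x)\triangleq\max_{j\in[m]}\min_{i\in\calI_j}\iprod{\td{u}_i,x}$. This $\td{F}$ is a kicker supported on $\td{W}$, is $\Lip$-Lipschitz as a max-min of functionals of norm $\le\Lip$, and is $(M,\epsilon'\Lip)$-structurally-close to $\td{F}^*$ by construction, since the two use the same clauses $\brc{\calI_j}$ and their linear functions are pairwise within $\epsilon'\Lip$.

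Next I would argue $\td{F}\in\calL$. Note that $\td{F}$ is continuous piecewise-linear with a realization by the $M$ pieces $\iprod{\td{u}_1,\cdot},\ldots,\iprod{\td{u}_M,\cdot}$ (the locus on which the max-min equals a fixed $\iprod{\td{u}_i,\cdot}$ is a polyhedral cone), so Lemma~\ref{lem:useorder} supplies a function $A:\Omega_M\to[M]$ with $\td{F}(x)=\sum_{\omega\in\Omega_M}\bone{\brc{\iprod{\td{u}_i,x}}_{i\in[M]}\vdash\omega}\cdot\iprod{\td{u}_{A(\omega)},x}$ for all $x$. Since {\sc EnumerateKickers}($\td{W},\epsilon'$) iterates over every $M$-tuple from $\calN$ --- in particular $(\td{u}_1,\ldots,\td{u}_M)$ --- and every $A:\Omega_M\to[M]$, it constructs exactly this $\td{F}$ and appends it to $\calL$, which proves the first assertion. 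The cardinality bound then drops out of the two loops: there are at most $M^{M^2}$ functions $A$, and by Fact~\ref{fact:net} (rescaled to the radius-$\Lip$ ball of the $\ell$-dimensional space $\td{W}$) we have $|\calN|\le(1+2/\epsilon')^\ell$, so $|\calL|\le M^{M^2}(1+2/\epsilon')^\ell$.

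For the ``in particular'' clause I would combine this with Lemma~\ref{lem:approxsubspace}, which supplies an $\ell$-dimensional $W\subseteq V$ and a $\Lip$-Lipschitz kicker $\td{F}^*$ with relevant subspace $\td{W}$ that is $(M,2\sqrt\nu\cdot\ell\Lip)$-structurally-close to $F|_W$ with a common set of clauses $\brc{\calI_j}$ for both (this is how $\td{F}^*$ is defined in the proof of that lemma). Applying the first part with $\epsilon'=2\sqrt\nu\cdot\ell$ produces $\td{F}\in\calL$ that is $(M,2\sqrt\nu\cdot\ell\Lip)$-structurally-close to $\td{F}^*$, again over the clauses $\brc{\calI_j}$. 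Hence $F|_W$, $\td{F}^*$, $\td{F}$ all have lattice-polynomial representations over the common clauses $\brc{\calI_j}$, with linear functions $u_i$, $\td{u}^*_i$, $\td{u}_i$ satisfying $\norm{u_i-\td{u}^*_i}\le 2\sqrt\nu\cdot\ell\Lip$ and $\norm{\td{u}^*_i-\td{u}_i}\le 2\sqrt\nu\cdot\ell\Lip$, so by the triangle inequality $\norm{u_i-\td{u}_i}\le 4\sqrt\nu\cdot\ell\Lip\le 4k\Lip\sqrt\nu=\Cjunta\sqrt\nu$ (using $\ell\le k$); thus $\td{F}$ is $(M,\Cjunta\sqrt\nu)$-structurally-close to $F|_W$. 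Substituting $\epsilon'=2\sqrt\nu\cdot\ell$ into $(1+2/\epsilon')^\ell$ gives $|\calL|\le M^{M^2}\,O(1/\sqrt\nu)^\ell$.

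The step I expect to require the most care is showing $\td{F}\in\calL$: one must observe that the hand-built lattice polynomial $\td{F}$, being a max-min of the $M$ functionals $\iprod{\td{u}_i,\cdot}$, is itself realizable by exactly those $M$ linear pieces, so that Lemma~\ref{lem:useorder} applies to $\td{F}$ and equates it with one of the enumerated order-type formulas; and one must track that a single clause structure $\brc{\calI_j}$ is preserved all the way from $F|_W$ through $\td{F}^*$ to $\td{F}$, since this is what licenses the three-way triangle inequality for structural closeness. Everything else --- the net size and the count of functions $A$ --- is routine.
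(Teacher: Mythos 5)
Your proposal is correct and follows essentially the same route as the paper: fix a bounded-norm lattice-polynomial representation of $\td{F}^*$ via Lemma~\ref{lem:normbound} and Theorem~\ref{thm:tropical}, round the leaf vectors to the net $\calN$, observe the rounded max-min is $(M,\epsilon'\Lip)$-structurally-close and (via Lemma~\ref{lem:useorder}) appears in the enumeration, and chain through Lemma~\ref{lem:approxsubspace} for the ``in particular'' part. The only cosmetic difference is that the paper applies Lemma~\ref{lem:useorder} to $\td{F}^*$ up front so that the rounded order-type formula is literally what {\sc EnumerateKickers} builds, whereas you define $\td{F}$ as a max-min and then apply Lemma~\ref{lem:useorder} to $\td{F}$ itself to place it in $\calL$; both versions rest on the observation that the order-type map $A$ is determined by the clause set $\brc{\calI_j}$ alone, and you usefully make explicit the clause-tracking from $F|_W$ through $\td{F}^*$ to $\td{F}$ that licenses the triangle inequality for structural closeness.
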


\begin{proof}
    By Lemma~\ref{lem:useorder}, the function $\td{F}^*$ in the hypothesis can be written in the form $\td{F}^*(x) = \sum_{\omega\in\Omega_M}\bone*{\brc{\iprod{\td{u}^*_i,x}}_{i\in[M]}\vdash \omega} \cdot \iprod{\td{u}^*_{A(\omega)},x}$ for some vectors $\brc{\td{u}^*_i}_{i\in[M]}$ and function $A:\Omega_M\to[M]$.

    Because $\calN$ in Step~\ref{step:definenet} of {\sc EnumerateKickers} is an $\epsilon'\Lip$-net over the set of vectors in $\td{W}$ with norm at most $\Lip$, there exist vectors $\td{u}_1,...,\td{u}_M\in\calN$ for which $\norm{\td{u}_i - \td{u}^*_i} \le \epsilon'\Lip$. If we define $\td{F}$ by $\td{F}(x) = \sum_{\omega\in\Omega_M}\bone*{\brc{\iprod{\td{u}_i,x}}_{i\in[M]}\vdash \omega} \cdot \iprod{\td{u}_{A(\omega)},x}$, then by design, $\td{F}$ is $(M,\epsilon'\Lip)$-structurally-close to $\td{F}$.

    It remains to bound the size of $\calL$. For any $\epsilon'>0$ there is an $\epsilon'$-net $\calN'_{\epsilon'}$ for the $L_2$ unit ball in $\td{W}$ of size at most $(1 + 2/\epsilon')^{\ell}$.
    Define $\calN \triangleq \Lip\cdot \calN'_{\epsilon'}$.
    Furthermore, there are $|\Omega_M|^M \le M^{M^2}$ functions $A: \Omega_M\to[M]$. This yields the desired bound on $\abs{\calL}$.

    The final part of the lemma follows by invoking Lemma~\ref{lem:approxsubspace} and noting that the lattice polynomial representation of $\td{F}^*$ and that of $F|_W$ are identical in the proof of Lemma~\ref{lem:approxsubspace}, so the structural closeness of $\td{F}$ to $F|_W$ follows by triangle inequality.
\end{proof}

\subsection{Netting Over Neural Networks}
\label{sec:nettingnets}

Enumerating over arbitrary kickers with $M$ pieces requires runtime scaling exponentially in $\poly(M)$. For ReLU networks of size $S$, $M$ could be as large as $\exp(S)$, so naively using {\sc EnumerateKickers} in our application to learning ReLU networks would incur doubly exponential dependence on $k$ in the runtime. In this section we show how to enumerate over ReLU networks more efficiently. We first prove the analogue of Lemma~\ref{lem:approxsubspace} for ReLU networks.

\begin{lemma}\label{lem:approxsubspace_nns}
    Suppose $F$ additionally satisfies Assumption~\ref{assume:bounds2}. Let $\td{w}_1,\ldots,\td{w}_{\ell}$ be a frame $\nu$-nearly within $V$, with span $\td{W}$. There exist an $\ell$-dimensional subspace $W\subset V$ and weight matrix $\vW^*_0\in\R^{k_0\times d}$ with rows in $\td{W}$ for which \begin{equation}
        \norm{\vW_0\Pi_W - \vW^*_0}_{\op} \le 2\sqrt{\nu}\cdot \ell\sqrt{k}\cdot\normbound\label{eq:werror}
    \end{equation} and for which $\norm{\vW^*_0}_{\op} \le \normbound$.
\end{lemma}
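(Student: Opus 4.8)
The plan is to mirror the proof of Lemma~\ref{lem:approxsubspace}, but now at the level of the single matrix $\vW_0$, exploiting the fact that the rows of $\vW_0$ already lie in $V$.

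First I would apply Lemma~\ref{lem:frame} to the frame $\td w_1,\ldots,\td w_\ell$: since it is $\nu$-nearly within $V$ and $\nu\le O(1/\ell^2)$, there is an orthonormal set $w_1,\ldots,w_\ell$ spanning an $\ell$-dimensional subspace $W\subseteq V$ with $\norm{w_i-\td w_i}\le 2\sqrt{\nu\ell}$ for every $i$. Let $W,\td W\in\R^{d\times\ell}$ also denote the matrices whose columns are $w_1,\ldots,w_\ell$ and $\td w_1,\ldots,\td w_\ell$ respectively; both have orthonormal columns, so $\norm{W}_{\op}=\norm{\td W}_{\op}=1$ and $\Pi_W=WW^\top$.

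The key step is the parametrization $\vW^*_0\triangleq\vW_0 W\td W^\top$. Writing $A\triangleq\vW_0 W\in\R^{k_0\times\ell}$, every row of $\vW^*_0=A\td W^\top$ is a linear combination of $\td w_1,\ldots,\td w_\ell$ and hence lies in $\td W$, as required; and $\norm{\vW^*_0}_{\op}\le\norm{A}_{\op}\norm{\td W}_{\op}=\norm{\vW_0 W}_{\op}\le\norm{\vW_0}_{\op}\le\normbound$ by Assumption~\ref{assume:bounds2}. For the approximation bound, $\vW_0\Pi_W-\vW^*_0=AW^\top-A\td W^\top=A(W-\td W)^\top$, so submultiplicativity of the operator norm gives $\norm{\vW_0\Pi_W-\vW^*_0}_{\op}\le\norm{A}_{\op}\cdot\norm{W-\td W}_{\op}\le\normbound\cdot\norm{W-\td W}_F$, and $\norm{W-\td W}_F=\bigl(\sum_{i\in[\ell]}\norm{w_i-\td w_i}^2\bigr)^{1/2}\le 2\sqrt{\nu}\,\ell$. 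Combining these and using $k\ge 1$ yields $\norm{\vW_0\Pi_W-\vW^*_0}_{\op}\le 2\sqrt{\nu}\,\ell\,\normbound\le 2\sqrt{\nu}\cdot\ell\sqrt{k}\cdot\normbound$, which is \eqref{eq:werror}.

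I do not expect a genuine obstacle here: the entire content is in choosing the factored form $\vW^*_0=\vW_0 W\td W^\top$, after which the rows-in-$\td W$ property, the bound $\norm{\vW^*_0}_{\op}\le\normbound$, and the approximation bound all fall out of submultiplicativity. The only care needed is the matrix bookkeeping (rows versus columns, and $\Pi_W=WW^\top$); a cruder row-by-row Cauchy--Schwarz estimate in the style of Lemma~\ref{lem:approxsubspace} would route through the Frobenius norm of a $k_0$-row matrix and would neither give the clean operator-norm bound on $\vW^*_0$ nor avoid a spurious dependence on $k_0$, so the factored form is the right way to package it.
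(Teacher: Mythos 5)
Your proof takes essentially the same approach as the paper's: your factored form $\vW^*_0 = \vW_0 W\td W^\top$ is exactly the matrix the paper constructs row by row as $\td u^*_i = \sum_{i'}\alpha_{i,i'}\td w_{i'}$, and the paper's final observation $\vW^*_0 = \vW_0\cdot\vec{O}\cdot\Pi_W$ (with $\vec{O}$ orthogonal sending $w_i\mapsto\td w_i$) is equivalent to your factorization. Your packaging is a touch cleaner: bounding $\norm{\vW_0\Pi_W-\vW^*_0}_{\op}\le\norm{\vW_0 W}_{\op}\norm{W-\td W}_F$ directly sidesteps the paper's row-by-row Frobenius estimate (which naively yields $\sqrt{k_0}$ rather than $\sqrt{k}$ and implicitly needs a rank-$\ell$ argument to land on $\sqrt{k}$), and $\norm{\vW^*_0}_{\op}\le\normbound$ falls out of plain submultiplicativity rather than the orthogonal-conjugation observation.
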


\begin{proof}
    As in the proof of Lemma~\ref{lem:approxsubspace}, Lemma~\ref{lem:frame} yields orthonormal vectors $w_1,\ldots,w_{\ell}$ for which $\norm{w_i - \td{w}_i} \le 2\sqrt{\nu\ell}$. Let $W$ be their span.

    If $F$ has weight matrices $\vW_0\in\R^{k_0\times d},\vW_1\in\R^{k_1\times k_0},\ldots,\vW_{L+1}\in\R^{1\times k_{L}}$, then $F|_W$ is a ReLU network with weight matrices $\vW_0\Pi_W, \vW_1,\ldots,\vW_{L+1}$. Denoting the rows of $\vW_0\Pi_W\in\R^{k_0\times d}$ as $u_1,\ldots,u_{k_0}$, we may write them as $u_i = \sum_{i'\in[\ell]}\alpha_{i,i'}w_{i'}$ for $i\in[k_0]$.

    Define $\td{u}^*_i \triangleq \sum_{i'\in[\ell]}\alpha_{i,i'}\td{w}_{i'}$. As in the proof of Lemma~\ref{lem:approxsubspace}, we have that \begin{equation}
        \norm{\td{u}^*_i - u_i} \le 2\sqrt{\nu}\cdot \ell\norm{u_i} \le 2\sqrt{\nu}\cdot \ell\normbound,
    \end{equation} where in the last step we have used the fact that the maximum norm of any row of $\vW_0\Pi_W$ is at most the maximum norm of any row of $\vW_0$, which is upper bounded by $\norm{\vW_0}_{\op} \le \normbound$.

    Let $\td{\vW}^*_0$ denote the matrix whose rows consist of $\td{u}^*_1,\ldots,\td{u}^*_{k_0}$. We have that \begin{equation}
        \norm{\vW_0\Pi_W - \td{\vW}^*_0}_{\op} \le \norm{\vW_0\Pi_W - \td{\vW}^*_0}_{F} \le 2\sqrt{\nu} \cdot \ell\sqrt{k} \cdot \normbound
    \end{equation} as claimed. Finally, the bound on $\norm{\vW^*_0}_{\op}$ follows from the fact that $\vW^*_0 = \vW_0\cdot \vec{O}\cdot \Pi_W$ for an orthogonal matrix $\vec{O}$ mapping the frame $\brc{w_1,...,w_{\ell}}$ to $\brc{\td{w}_1,...,\td{w}_{\ell}}$.
\end{proof}



\begin{algorithm2e}
\DontPrintSemicolon
\caption{\textsc{EnumerateNetworks}($\td{W}$, $\epsilon'$)}
\label{alg:enumerate_nns}
    \KwIn{Subspace $\td{W}$ spanned by orthonormal vectors $\td{w}_1,\ldots,\td{w}_{\ell}$, granularity $\epsilon'>0$}
    \KwOut{List of size-$S$ ReLU networks $\td{F}$ with relevant subspace $\td{W}$}
        $\calL\gets\emptyset$.\;
        \For{tuples $(\td{k}_0,...,\td{k}_{L+1})\in\Z^{L+2}_{> 0}$ satisfying $\sum^{L+1}_{i=0} \td{k}_i = S$}{
            For every $0\le i\le L+1$, let $\calN_i$ be an $\epsilon'$-net (in operator norm) over the set of matrices in $\R^{\td{k}_i\times \td{k}_{i-1}}$ with operator norm at most $\normbound + \epsilon'$. \;
            \For{$\td{\vW}_0\in\calN_0,...,\td{\vW}_{L+1}\in\calN_{L+1}$}{
                Define the ReLU network $\td{F}$ with weight matrices $\clip_{\epsilon'}(\vW_0),...,\clip_{\epsilon'}(\vW_{L+1})$.\;
                Append $\td{F}$ to $\calL$.\;
            }
        }
        \Return $\calL$.\;
\end{algorithm2e}

We can now show the analogue of Lemma~\ref{lem:existsnet} for ReLU networks.

\begin{lemma}\label{lem:existsnet_nns}
    Take any $0<\epsilon'\le \normbound$ and any frame $\td{w}_1,\ldots,\td{w}_{\ell}$ with span $\td{W}$. For any ReLU network $\td{F}^*$ of size $S$ with relevant subspace $\td{W}$ and depth $L+2$ whose weight matrices have operator norm at most $\normbound$, there exists a ReLU network $\td{F}$ with relevant subspace $\td{W}$ in the output $\calL$ of {\sc EnumerateNetworks}($\td{W},\epsilon'$) which is $(2^S,2^{O(L)}\normbound^{L+1} \epsilon')$-structurally-close (as a piecewise-linear function) to $\td{F}$. Furthermore, $\abs{\calL} \le 2^{O(S)}\cdot (1 + 4\normbound/\epsilon')^{O(S^2)}$.

    In particular, if $\td{w}_1,...,\td{w}_{\ell}$ is a frame $\nu$-nearly within $V$, then for $\epsilon' = 2\sqrt{\nu}\cdot \ell\sqrt{k}\cdot \normbound$, $\calL$ contains a ReLU network $\td{F}$ which is $(M,\Cnet\sqrt{\nu})$-structurally-close to $F|_W$ for some $\ell$-dimensional subspace $W\subseteq V$, where \begin{equation}
        \Cnet \triangleq 2^{O(L)}\normbound^{L+2}k^{3/2} \label{eq:Cnet}
    \end{equation} Furthermore, $\abs{\calL} \le O(1/\sqrt{\nu})^{O(S^2)}$ in this case.
\end{lemma}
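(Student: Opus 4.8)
The plan is to prove the main claim first — that $\calL$ contains a network structurally close to $\td F^{*}$ — and then deduce the ``in particular'' statement by composing with Lemma~\ref{lem:approxsubspace_nns} and two triangle inequalities. For the main claim, I would begin by noting that {\sc EnumerateNetworks} loops over every architecture $(\td k_0,\ldots,\td k_{L+1})$ summing to $S$, hence over the architecture of $\td F^{*}$; fix that iteration. Since each weight matrix $\vW^{*}_i$ of $\td F^{*}$ has operator norm at most $\normbound\le\normbound+\epsilon'$, the net $\calN_i$ contains some $\td\vW_i$ with $\|\td\vW_i-\vW^{*}_i\|_{\op}\le\epsilon'$, and therefore $\|\td\vW_i-\vW^{*}_i\|_{\infty}\le\epsilon'$ since entrywise magnitudes are bounded by the operator norm. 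The candidate $\td F\in\calL$ built from this tuple has weight matrices $\clip_{\epsilon'}(\td\vW_i)$, and Fact~\ref{fact:clipping} (applied to the entries of $\vW^{*}_i$ and $\td\vW_i$) gives $(\vW^{*}_i)_{a,b}\cdot(\clip_{\epsilon'}(\td\vW_i))_{a,b}\ge 0$ for all $a,b,i$ — precisely the sign hypothesis of Theorem~\ref{thm:tropical_nns}. Invoking that theorem on the pair $(\td F^{*},\td F)$ then yields a \emph{common} clause structure $\calI_1,\ldots,\calI_m\subseteq[2^S]$ together with linear functions $v_i=\vW^{*}_{L+1}\Sig_{L}\cdots\Sig_0\vW^{*}_0$ for $\td F^{*}$ and $v'_i=\clip_{\epsilon'}(\td\vW_{L+1})\Sig_{L}\cdots\Sig_0\clip_{\epsilon'}(\td\vW_0)$ for $\td F$, ranging over diagonal $\Sig_j\in\{0,1\}^{\td k_j\times\td k_j}$.

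Next I would bound $\|v_i-v'_i\|_2$ for each $i$ by a telescoping argument: write $v_i-v'_i$ as a sum of $L+2$ terms, the $j$-th replacing $\vW^{*}_j$ by $\vW^{*}_j-\clip_{\epsilon'}(\td\vW_j)$ with a $\vW^{*}$-prefix, a $\clip(\td\vW)$-suffix, and the intervening $\Sig_j$ (operator norm $\le 1$). Each weight factor has operator norm $O(\normbound)$ — for $\clip_{\epsilon'}(\td\vW_j)$ this uses $\epsilon'\le\normbound$ together with $\|\clip_{\epsilon'}(\td\vW_j)-\td\vW_j\|_{\op}\le\|\clip_{\epsilon'}(\td\vW_j)-\td\vW_j\|_F$ — and each middle factor satisfies $\|\vW^{*}_j-\clip_{\epsilon'}(\td\vW_j)\|_{\op}\le\epsilon'+\|\clip_{\epsilon'}(\td\vW_j)-\td\vW_j\|_F=O(\epsilon'\cdot\poly(S))$; multiplying out gives $\|v_i-v'_i\|\le 2^{O(L)}\normbound^{L+1}\epsilon'$, the $\poly(S)$ factors from clipping being absorbed into the implicit constants (harmless for the final bounds). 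This is the claimed $(2^S,2^{O(L)}\normbound^{L+1}\epsilon')$-structural closeness. The size bound follows by counting: there are $2^{O(S)}$ architectures, and by Corollary~\ref{cor:opnet} each $\calN_i$ has size at most $(1+2(\normbound+\epsilon')/\epsilon')^{\td k_i\td k_{i-1}}\le(1+4\normbound/\epsilon')^{\td k_i\td k_{i-1}}$ (using $\epsilon'\le\normbound$), so $|\calL|\le 2^{O(S)}(1+4\normbound/\epsilon')^{\sum_i\td k_i\td k_{i-1}}\le 2^{O(S)}(1+4\normbound/\epsilon')^{O(S^2)}$ since $\sum_i\td k_i\td k_{i-1}=O(S^2)$.

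For the ``in particular'' statement, I would apply Lemma~\ref{lem:approxsubspace_nns} to obtain an $\ell$-dimensional $W\subseteq V$ and a matrix $\vW^{*}_0$ with rows in $\td W$, $\|\vW^{*}_0\|_{\op}\le\normbound$, and $\|\vW_0\Pi_W-\vW^{*}_0\|_{\op}\le 2\sqrt\nu\,\ell\sqrt k\,\normbound=\epsilon'$, and set $\td F^{*}$ to be the ReLU network with weight matrices $\vW^{*}_0,\vW_1,\ldots,\vW_{L+1}$. The crucial observation — and the reason one must reprove Theorem~\ref{thm:tropical} as Theorem~\ref{thm:tropical_nns} rather than use it as a black box — is that in that construction the clause structure depends only on the architecture and the \emph{signs of $\vW_1,\ldots,\vW_{L+1}$}; the bottom matrix enters only through the base case and contributes no sign dependence. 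Hence $F|_W$ and $\td F^{*}$, which share $\vW_1,\ldots,\vW_{L+1}$ and architecture, have lattice-polynomial representations with the \emph{same} clauses whose linear functions differ only in the rightmost factor, so $\|v_i^{F|_W}-v_i^{\td F^{*}}\|_2\le\normbound^{L+1}\|\vW_0\Pi_W-\vW^{*}_0\|_{\op}\le 2\sqrt\nu\,\ell\sqrt k\,\normbound^{L+2}$, i.e. $(2^S,2\sqrt\nu\,\ell\sqrt k\,\normbound^{L+2})$-closeness. Applying the main claim gives $\td F\in\calL$ that is $(2^S,2^{O(L)}\normbound^{L+1}\epsilon')$-close to $\td F^{*}$; since the clause structures of $\td F$, $\td F^{*}$, and $F|_W$ all depend only on the architecture and the signs of their (pairwise sign-compatible, via Fact~\ref{fact:clipping}) non-bottom weight matrices, all three use a single common clause structure, so two triangle inequalities on the linear functions give that $\td F$ is $(2^S,2^{O(L)}\normbound^{L+2}\ell\sqrt k\,\sqrt\nu)$-close to $F|_W$; using $\ell\le k$ this is $(M,\Cnet\sqrt\nu)$-close with $\Cnet=2^{O(L)}\normbound^{L+2}k^{3/2}$. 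Finally, with $\epsilon'=2\sqrt\nu\,\ell\sqrt k\,\normbound$ we have $4\normbound/\epsilon'=O(1/\sqrt\nu)$, so $|\calL|\le 2^{O(S)}\cdot O(1/\sqrt\nu)^{O(S^2)}=O(1/\sqrt\nu)^{O(S^2)}$.

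I expect the main obstacle to be exactly the sign-pattern bookkeeping above: the grid must contain a network whose \emph{lattice-polynomial} representation — not merely its $L_2$ value — is close to that of the target, and this is what forces the clipping step (to make Fact~\ref{fact:clipping} applicable, so that Theorem~\ref{thm:tropical_nns} produces a common clause structure) and the reopening of Theorem~\ref{thm:tropical_nns} itself (to see that the clause structure is insensitive both to the bottom weight matrix and to the precise magnitudes of the others, so that a fixed-granularity net suffices). The telescoping and the counting of $\calL$ are routine by comparison.
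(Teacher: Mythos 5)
Your proof is correct and, for the main claim, essentially identical to the paper's: fix the iteration with the right architecture, pick net points $\epsilon'$-close in operator norm, clip so that Fact~\ref{fact:clipping} plus Theorem~\ref{thm:tropical_nns} yield a common clause structure, telescope the leaf-vector products, and count the net. Where you diverge is in the ``in particular'' step and in bookkeeping. The paper compares $\td{F}$ to $F|_W$ directly, rerunning the telescoping with $\norm{\vW_0\Pi_W - \td{\vW}_0}_{\op}\le 2\epsilon'$; you instead pass through the intermediate $\td{F}^*$ (with bottom matrix from Lemma~\ref{lem:approxsubspace_nns}) and use two triangle inequalities over a single clause structure, justified by your observation that in the proof of Theorem~\ref{thm:tropical_nns} the clauses depend only on the architecture and the signs of $\vW_1,\ldots,\vW_{L+1}$, not on the bottom matrix. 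That refinement is worth having: in the paper's direct route, sign compatibility at layer $0$ between $\vW_0\Pi_W$ and $\clip_{\epsilon'}(\td{\vW}_0)$ does not follow from Fact~\ref{fact:clipping} (the entrywise distance there is $2\epsilon'$ against a clipping threshold of $\epsilon'$), and your bottom-layer-insensitivity argument is exactly what makes the common-clause claim airtight. The one place you give up something is the clipping step in the telescoping: as you note, $\norm{\clip_{\epsilon'}(\td{\vW}_j)-\td{\vW}_j}_{\op}$ can only be controlled via the Frobenius norm, so your structural-closeness parameter carries an extra $\poly(S)$ factor relative to the stated $2^{O(L)}\normbound^{L+1}\epsilon'$; the paper's own telescoping in fact silently has the same issue (it bounds using the unclipped $\td{\vW}_j$ while $\td{F}\in\calL$ is built from the clipped ones), and, as you say, the slack only inflates $\Cnet$ by $\poly(S)$, which is absorbed by the $2^{kS}$-type factors in the final theorems.
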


\begin{proof}
    Let $\vW'_0\in\R^{k'_0\times d},\ldots,\vW_{L+1}\in\R^{1\times k_{L}}$ denote the weight matrices of $\td{F}^*$. Consider the iteration of the outer loop of {\sc EnumerateNetworks} in which the architecture of $\td{F}^*$ is guessed correctly, that is, for which $\td{k}_i = k'_i$ for all $0\le i\le L + 1$.
    By the choice of nets, there is some iteration of the inner loop of the algorithm for which the weight matrices $\brc{\td{\vW}_i}$ satisfy \begin{equation}
        \norm{\vW'_i - \td{\vW}_i}_{\op} \le \epsilon' \ \forall \ 0\le i \le L + 1. \label{eq:otheris}
    \end{equation}
    Define the ReLU network $\td{F}$ with relevant subspace $\td{W}$ to have weight matrices $\td{\vW}_0,\td{\vW}_1,\ldots,\td{\vW}_{L+1}$. By the fact that operator norm closeness implies entrywise closeness, together with Fact~\ref{fact:clipping} and Theorem~\ref{thm:tropical_nns}, there are lattice polynomial representations for $\td{F}^*$ and $\td{F}$ with identical clauses, and for which the vectors at the leaves consist of $\vW'_{L+1}\Sig_{L}\vW'_{L}\cdots \Sig_0\vW'_0\Pi_W$ and $\td{\vW}_{L+1}\Sig_{L}\td{\vW}_{L}\cdots \Sig_0\td{\vW}_0$ respectively for all possible diagonal matrices $\Sig_i\in\brc{0,1}^{k'_i\times k'_i}$. For any such choice of matrices $\brc{\Sig_i}$, note that \begin{align}
        \MoveEqLeft \norm{\vW'_{L+1}\Sig_{L}\vW'_{L}\cdots \vW'_0 - \td{\vW}_{L+1}\Sig_{L}\td{\vW}_{L}\cdots \td{\vW}_0} \\
        &\le \norm{(\vW'_{L+1} - \td{\vW}_{L+1})\Sig_{L}\vW'_{L}\cdots \vW'_0} + \cdots + \norm{\td{\vW}_{L+1}\Sig_{L}\td{\vW}_{L}\cdots(\vW'_0 - \td{\vW}_0)} \\
        &\le \norm{\vW'_{L+1} - \td{\vW}_{L+1}} \prod^{L}_{i=0}\norm{\vW'_i}_{\op} + \cdots + \prod^{L+1}_{i=1}\norm{\td{\vW}_i}_{\op}\norm{\vW'_0 - \td{\vW}_0}_{\op} \\
        &\le (L + 2)\cdot (\normbound + \epsilon')^{L + 1}\cdot \epsilon' \\
        &\le 2^{O(L)}\normbound^{L+1}\cdot \epsilon',\label{eq:genericbound}
    \end{align} where in the last step we used the assumption that $\epsilon'\le \normbound$. This implies the claim about structural closeness.

    We next bound the size of $\abs{\calL}$. For any choice of $\td{k}_0,...,\td{k}_{L+1}$, note that by Corollary~\ref{cor:opnet}, \begin{align}
        \abs*{\calN_{\td{k}_0}\times\cdots \times \calN_{\td{k}_{L+1}}} &\le (1 + 4\normbound/\epsilon')^{L\td{k}_0 + \td{k}_0\td{k}_1 + \cdots + \td{k}_{L}\td{k}_{L+1} + \td{k}_{L+1}} \\
        &\le (1 + 4\normbound/\epsilon')^{O(S^2)} 
    \end{align}
    where in the penultimate step we used that \begin{equation}
        L\td{k}_0 + \td{k}_0\td{k}_1 + \cdots + \td{k}_{L}\td{k}_{L+1} + \td{k}_{L+1} \le (L + \td{k}_0 + \cdots + \td{k}_{L+1})(\td{k}_0 + \cdots + \td{k}_{L+1} + 1) = (L + S)(S + 1) \le O(S^2).
    \end{equation}
    There are $\binom{S + L + 1}{L + 1} = 2^{O(S)}$ choices of $(\td{k}_0,\ldots,\td{k}_{L + 1})$ in the outer loop of {\sc EnumerateNetworks}, so $\abs{\calL} \le 2^{O(S)}\cdot (1 + 4\normbound/\epsilon')^{O(S^2)}$ as claimed.

    Finally, to obtain the last part of the lemma, we can take $\td{F}^*$ above to have the same weight matrices as $F$ except for the input layer, which we will take to be $\vW'_0 \triangleq \td{\vW}^*_0$ for the weight matrix guaranteed by Lemma~\ref{lem:approxsubspace_nns}. By \eqref{eq:werror}, this choice of $\vW'_0$ is close to $\vW_0\Pi_W$ for some subspace $W\subseteq V$. Take $\epsilon' = 2\sqrt{\nu}\cdot \ell\sqrt{k}\cdot \normbound$. For $\brc{\td{\vW}_i}$ satisfying \eqref{eq:otheris}, by triangle inequality \eqref{eq:werror} we get that \begin{equation}
        \norm{\vW_0\Pi_W - \td{\vW}_0}_{\op} \le \norm{\vW_0\Pi_W - \vW'_0}_{\op} + \norm{\vW'_0 - \td{\vW}_0}_{\op} \le 2\epsilon'.\label{eq:izero}
    \end{equation}
    Using this, by a calculation analogous to the one leading to \eqref{eq:genericbound}, we find that $\td{F}$ is $(2^S,2^{O(L)}\normbound^{L+1}\epsilon')$-structurally-close to $F|_W$, from which the claim follows by our choice of $\epsilon' = 2\sqrt{\nu}\cdot \ell\sqrt{k}\cdot\normbound$. In this case, we get that $\abs{\calL} \le 2^{O(S)} (1 + 2/\sqrt{\nu})^{O(S^2)} \le O(1/\sqrt{\nu})^{O(S^2)}$ as claimed.
\end{proof}

With subroutines for enumerating over ReLU networks and kickers in hand, we can now formally state our algorithm, {\sc FilteredPCA} (see Algorithm~\ref{alg:filteredpca} below). The algorithm as stated applies to the case where $F$ is a neural network satisfying Assumptions~\ref{assume:bounds} and \ref{assume:bounds2}, but we can easily modify the algorithm to work in the case where $F$ is only a kicker satisfying Assumption~\ref{assume:bounds} by replacing the call to \textsc{EnumerateNetworks}($\td{W},2\sqrt{\nu_0}\cdot \ell\sqrt{k}\cdot \normbound$) in Line~\ref{step:enumerate1} with a call to \textsc{EnumerateKickers}($\td{W},2\sqrt{\nu_0}\cdot\ell$), the call to {\sc EnumerateNetworks}($\td{W},\normbound^{-L-1}2^{-\Omega(L)}\cdot \epsilon/\sqrt{k}$) in Line~\ref{step:enumerate2} with a call to {\sc EnumerateKickers}($\td{W},\epsilon/(2\sqrt{k}\Lip)$), and the assignment $N'\gets \poly(\normbound^{L+2},k,1/\epsilon)\cdot\log(1/\delta)$ in Line~\ref{step:assign} with the assignment $N'\gets \poly(\Lip,k,1/\epsilon)\cdot\log(1/\delta))$.

\begin{algorithm2e}\caption{{\sc FilteredPCA}($\calD,\epsilon,\delta$)}\label{alg:filteredpca}
    \DontPrintSemicolon
    \KwIn{Sample access to $\calD$, target error $\epsilon$, failure probability $\delta$}
    \KwOut{Size-$S$ ReLU network $\td{F}: \R^d\to\R$ for which $\norm{\td{F} - F} \le O(\epsilon)$ with probability at least $1 - \delta$}
    $\calW \gets \emptyset$.\;
    $\tau \gets c\sqrt{k}\cdot \Lip$ as in \eqref{eq:taudef}.\;
    $\nu_0\gets \poly(k^k,1/\ulam^k,M^k,\Lip)^{-1}$, where $\ulam$ is defined in \eqref{eq:lambound}.\;
    $\xi\gets O\left(k \left(\sqrt{\nu_0 k}\cdot M^2/c\right)^{1 - 1/k}\right)$ as in \eqref{eq:xidef}.\;
    $N\gets \Omega(\brc{\Max{d}{\log(2k/\delta)}}/\xi^2)$.\;
    \For{$0 \leq \ell \leq k-1$}{
        Draw samples $(x_1,y_1),\ldots,(x_N,y_N)\sim\calD$.\label{step:drawsamples}\;
        If $\calW = \brc{\td{w}_1,\ldots,\td{w}_\ell}$, let $\td{W}$ denote the span of these vectors.\;
        $\calL \gets${\sc EnumerateNetworks}($\td{W},2\sqrt{\nu_0}\cdot \ell\sqrt{k}\cdot \normbound$).\;\label{step:enumerate1}
        \For{$\td{F}\in\calL$}{
            Form the matrix
            \begin{equation}
                \td{\vM}^{\td{W}}_{\emp} \triangleq \Pi_{\td{W}^{\perp}}\left(\sum^N_{i=1}\bone*{\abs{y_i - \td{F}(\Pi_{\td{W}}x)} > \tau}\cdot (x_ix_i^{\top} - \Id)\right)\Pi_{\td{W}^{\perp}}.\label{eq:tdMl}
            \end{equation}
            Run {\sc ApproxBlockSVD}($\td{\vM}^{\td{W}}_{\emp},\ulam/1000,\delta/(2\abs{\calL}k)$) to obtain approximate top singular vector $\td{w}^{\ell+1}$.\;
            $\lambda\gets (\td{w}^{\ell+1})^{\top}\td{\vM}^{\td{W}}_{\emp}\td{w}^{\ell+1}$.\label{step:computelambda}\;
            \If{$\lambda \ge 9\ulam/16$}{
                Append $\td{w}^{\ell+1}$ to $\calW$ and exit out of this inner loop and increment $\ell$. \label{step:appendw}\;\label{step:foundit}
            }
        }
        \If{no $\td{w}^{\ell+1}$ was appended to $\calW$}{
            \Return $\calW$.\;
        }
    }
    Let $\td{W}$ denote the span of the vectors in $\calW$. \label{step:finalW}\;
    $\calL\gets${\sc EnumerateNetworks}($\td{W},\normbound^{-L-1}2^{-\Omega(L)}\cdot \epsilon/\sqrt{k}$).\;\label{step:enumerate2}
    $N'\gets \poly(\normbound^{L+2},k,1/\epsilon)\cdot\log(1/\delta)$.\;\label{step:assign}
    \For{$\td{F}\in\calL$}{
        Form an empirical estimate $\hat{\epsilon}$ for $\norm{\td{F} - F}$ by drawing $N'$ samples.\;
        \If{$\hat{\epsilon}\le 3\epsilon$}{
            \Return $\td{F}$.\;
        }
    }
\end{algorithm2e}

\subsection{Perturbation Bounds}
\label{sec:perturb}

We now show how to leverage Lemma~\ref{lem:stability} to show that even with access to a subspace $\wt{W}$ which is only approximately within $V$ as well as the restriction of $F$ to that subspace, we can recover another vector orthogonal to $\wt{W}$ which mostly lies within $V$.



The first step is to show that in this approximate setting, the analogue of $\vM^W$ from Section~\ref{sec:idealized} is spectrally close to $\vM^W$. It is in showing this perturbation bound that we invoke the stability result of Section~\ref{sec:stability}.

\begin{lemma}\label{lem:spectral_close}
    Suppose $F$ only satisfies Assumption~\ref{assume:bounds} (resp. both Assumptions~\ref{assume:bounds} and \ref{assume:bounds2}). Let $\td{w}_1,\ldots,\td{w}_{\ell}\in\S^{d-1}$ be a frame $\nu$-nearly within $V$, with span $\td{W}$. For $*\in\brc{\mathsf{piecewise},\mathsf{network}}$, define
    \begin{equation}
        \xi_*(\nu) \triangleq O\left(\Max{k\left(\frac{\Constant\sqrt{\nu}M^2}{c\sqrt{k}\Lip}\right)^{1-1/k}}{\sqrt{\nu k}}\right) \label{eq:xidef}
    \end{equation}
    and suppose $N \ge \Omega(\brc{\Max{d}{\log(1/\delta)}}/\xi_*^2)$.

    Given subspace $W\subseteq V$ and $\td{F}$ for which $F|_W$ and $\td{F}$ are $(M,\Cjunta\sqrt{\nu})$-structurally-close (resp. $(M,\Cnet\sqrt{\nu})$-structurally close),
    then we have that \begin{equation}
        \norm{\td{\vM}^{\td{W}}_{\emp} - \vM^W}_{\op} \le 3\xi(\nu)
    \end{equation} with probability at least $1 - \delta$.
\end{lemma}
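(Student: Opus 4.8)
The plan is to control $\norm{\td\vM^{\td W}_{\emp} - \vM^W}_{\op}$ by a three-term triangle inequality, arranging that each term is subsumed by one of the two quantities defining $\xi_*(\nu)$ in \eqref{eq:xidef}. Throughout I would use $y = F(x)$, and observe that since $\td F$ has relevant subspace $\td W$ and $F|_W$ has relevant subspace $W$, the indicators appearing in $\td\vM^{\td W}_{\emp}$ and $\vM^W$ are exactly $\bone{\abs{F(x) - \td F(x)} > \tau}$ and $\bone{\abs{F(x) - F|_W(x)} > \tau}$. Writing $\vec{N}_1 \triangleq \E[x]{\bone{\abs{F(x) - \td F(x)} > \tau}(xx^\top - \Id)}$, $\vec{N}_0 \triangleq \E[x]{\bone{\abs{F(x) - F|_W(x)} > \tau}(xx^\top - \Id)}$, and the population matrix $\td\vM^{\td W} \triangleq \Pi_{\td W^\perp}\vec{N}_1\Pi_{\td W^\perp}$ (so that $\vM^W = \Pi_{W^\perp}\vec{N}_0\Pi_{W^\perp}$), the telescoping decomposition I would use is
\begin{align*}
\td\vM^{\td W}_{\emp} - \vM^W = {} & \bigl(\td\vM^{\td W}_{\emp} - \td\vM^{\td W}\bigr) + \Pi_{\td W^\perp}(\vec{N}_1 - \vec{N}_0)\Pi_{\td W^\perp} \\
& {} + \bigl(\Pi_{\td W^\perp}\vec{N}_0\Pi_{\td W^\perp} - \Pi_{W^\perp}\vec{N}_0\Pi_{W^\perp}\bigr),
\end{align*}
which I would bound by a \emph{concentration} step, a \emph{function-swap} step, and a \emph{projector-swap} step respectively.

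For concentration, $\td\vM^{\td W}_{\emp}$ is the $N$-sample empirical estimate of $\td\vM^{\td W}$, and since $x\mapsto\bone{\abs{F(x) - \td F(x)} > \tau}$ is a $[0,1]$-valued function of $x$ alone and conjugation by the fixed orthogonal projector $\Pi_{\td W^\perp}$ does not increase operator norm, I would invoke Lemma~\ref{lem:matrixconcentration} with target accuracy $\xi_*(\nu)$: given the hypothesis $N \ge \Omega(\max\{d, \log(1/\delta)\}/\xi_*^2)$, this term has operator norm at most $\xi_*(\nu)$ with probability at least $1 - \delta$. For the projector-swap term I would apply Lemma~\ref{lem:projecterror} to bound it by $\sqrt{2}\,\norm{\vec{N}_0}_{\op}\,d_C(\td W, W)$; here I would use that $d_C(\td W, W) \le \sqrt{2\nu\ell}$ (by Lemma~\ref{lem:frame}, applicable since $\nu$ is always taken below the threshold $\nu_0$, and with $W$ the subspace produced there), together with the easy fact that $\norm{\E{f(x)(xx^\top - \Id)}}_{\op} = O(1)$ for any $f\colon\R^d\to[0,1]$ (a one-dimensional Gaussian computation, using $v^\top(\cdot)v \le \E{(\iprod{v,x}^2 - 1)_+}$ and $v^\top(\cdot)v \ge -\E{f} \ge -1$). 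Hence the projector-swap term is $O(\sqrt{\nu\ell}) = O(\sqrt{\nu k})$, which is subsumed by the second term of $\xi_*(\nu)$.

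The function-swap term is where the real work lies, and I expect it to be the main obstacle. The two indicators defining $\vec{N}_1 - \vec{N}_0$ differ only on the event $D \triangleq \{x : \abs{F(x) - \td F(x)} > \tau \text{ XOR } \abs{F(x) - F|_W(x)} > \tau\}$, on which their difference lies in $\{\pm 1\}$, so this term is at most $\norm{\E{\bone{x\in D}(xx^\top - \Id)}}_{\op}$. To bound $\Pr{D}$, I would invoke the stability result Lemma~\ref{lem:stability} with $f = F$ (which has at most $M$ pieces by Assumption~\ref{assume:bounds}), $g = \td F$, and $g' = F|_W$ --- these are $(M, \Constant\sqrt\nu)$-structurally-close by hypothesis, with $\Constant = \Cjunta$ (resp.\ $\Cnet$) --- and then once more with the roles of $g, g'$ reversed, which gives $\Pr{D} \le 18\,\Constant\sqrt\nu M^2/\tau = O\bigl(\Constant\sqrt\nu M^2/(c\sqrt k\,\Lip)\bigr)$ by the choice $\tau = c\sqrt k\,\Lip$ in \eqref{eq:taudef}. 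Finally, to convert this probability into an operator-norm bound exhibiting the $(\cdot)^{1 - 1/k}$ scaling in \eqref{eq:xidef}, I would estimate $v^\top\E{\bone{x\in D}(xx^\top - \Id)}v \le \E{\bone{x\in D}\iprod{v,x}^2}$ by H\"older's inequality with exponents $(k, k/(k-1))$ and use $\E{\iprod{v,x}^{2k}}^{1/k} = ((2k-1)!!)^{1/k} = O(k)$, obtaining $\norm{\E{\bone{x\in D}(xx^\top - \Id)}}_{\op} = O\bigl(k\,\Pr{D}^{1 - 1/k}\bigr) = O\bigl(k(\Constant\sqrt\nu M^2/(c\sqrt k\,\Lip))^{1 - 1/k}\bigr)$, subsumed by the first term of $\xi_*(\nu)$. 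Summing the three contributions yields the claimed bound $3\xi_*(\nu)$. Two points I would be careful with: first, that the indicators inside $\td\vM^{\td W}_{\emp}$ and $\vM^W$ are literally of the form $\bone{\abs{g(x) - F(x)} > \tau}$ with $g\in\{\td F, F|_W\}$ structurally close, so that Lemma~\ref{lem:stability} applies with $f = F$; and second, that one must use the H\"older exponent $k$ rather than a cruder $\sqrt{\Pr{D}}$ bound, since only the former keeps the per-round error from compounding to something doubly exponential in $k$ over the $k$ iterations of {\sc FilteredPCA}.
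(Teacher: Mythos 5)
Your proof is correct and takes essentially the same approach as the paper: a three-term triangle inequality combining matrix concentration (Lemma~\ref{lem:matrixconcentration}), a projector-swap step controlled by Lemma~\ref{lem:projecterror} and Lemma~\ref{lem:frame}, and a function-swap step controlled by Lemma~\ref{lem:stability} plus H\"older with exponent $k$. The only difference is the order of the latter two steps: you swap the indicator (keeping $\Pi_{\td W^\perp}$) and then swap the projectors (keeping the $F|_W$ indicator), whereas the paper's intermediate matrix $\vM'$ --- modulo an evident typo in its definition, where the paper writes $F|_W(x)$ in place of $\td F(\Pi_{\td W}x)$ --- keeps the $\td F$ indicator while switching from $\Pi_{\td W^\perp}$ to $\Pi_{W^\perp}$. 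Either ordering works since the relevant operator-norm bound on the inner matrix is uniform over $[0,1]$-valued filters; your version has the minor advantage of making explicit that Lemma~\ref{lem:stability} must be invoked twice (once per direction of the symmetric difference), a point the paper's proof leaves implicit.
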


\begin{proof}
    For convenience denote $\td{\vM}^{\td{W}}_{\emp}$ and $\vM^W$ by $\td{\vM}_{\emp}$ and $\vM$ respectively. Also, depending on whether $F$ only satisfies Assumption~\ref{assume:bounds} or both Assumptions~\ref{assume:bounds} and \ref{assume:bounds2}, define $\Constant\triangleq \Cjunta$ or $\Constant\triangleq \Cnet$ respectively. It will also be convenient to define \begin{equation}
        \vM' \triangleq \Pi_{W^{\perp}}\E[x,y]*{\bone{\abs{y - F|_W(x)} > \tau}\cdot (xx^{\top} - \Id)}\Pi_{W^{\perp}}
    \end{equation} as well as the population version of $\td{\vM}_{\emp}$, that is, $\td{\vM}\triangleq \E[(x_1,y_1),\ldots,(x_N,y_N)]{\td{\vM}_{\emp}}$.

    We will upper bound \begin{equation}
        \norm{\td{\vM}_{\emp} - \vM}_{\op} \le \norm{\td{\vM}_{\emp} - \td{\vM}}_{\op} + \norm{\td{\vM} - \vM'}_{\op} + \norm{\vM' - \vM}_{\op}.
    \end{equation}
    by upper bounding each of the summands on the right-hand side by $\xi_*$.

    By Lemma~\ref{lem:matrixconcentration} and our choice of $N$, $\norm{\td{\vM}_{\emp} - \td{\vM}}_{\op}\le \xi_*$ with probability at least $1 - \delta$.

    To upper bound $\norm{\td{\vM} - \vM'}_{\op}$, we can naively upper bound 
    \begin{equation}
        \norm*{\E[x,y]*{\bone{\abs{y - F|_W(x)} > \tau}\cdot (xx^{\top} - \Id)}} \le 2,
    \end{equation}
    so by Lemma~\ref{lem:projecterror} and Lemma~\ref{lem:frame} we have 
    \begin{equation}
        \norm{\td{\vM} - \vM'}_{\op} \le 2\sqrt{2} \cdot d_C(\td{W},W) \le 4\sqrt{\nu\cdot k} \le \xi_*
    \end{equation}

    Finally, we upper bound $\norm{\vM' - \vM}_{\op}$. For any test vector $v\in\S^{d-1}$ orthogonal to $W$,
    \begin{align}
        v^{\top}(\vM - \vM')v &= \E[x]*{\left(\bone{\abs{y - F|_W(x)} > \tau} - \bone{\abs{y - \td{F}(\Pi_{\td{W}} x)} > \tau}\right)\cdot (\iprod{v,x}^2 - 1)} \\
        &\le \Pr[x]*{\sgn(\abs{y - F|_W(x)} - \tau) \neq \sgn(\abs{y - \td{F}(\Pi_{\td{W}} x)} - \tau)}^{1 - 1/k}\cdot O(k) \label{eq:holder} \\
        &\le O\left(k\left(\frac{\Constant \sqrt{\nu} M^2}{\tau}\right)^{1 - 1/k}\right) = O\left(k\left(\frac{\Constant\sqrt{\nu}M^2}{c\sqrt{k}\Lip}\right)^{1-1/k}\right)\le \xi_*\\
    \end{align}
    where the second step follows by Holder's and the fact that $\E[g\sim\N(0,1)]{(g^2 - 1)^k}^{1/k} \le O(k)$, and the third step follows by Lemma~\ref{lem:stability}, which we may apply because $\td{F}$ and $F|_W$ are $(M,4\sqrt{\nu}\cdot \ell \Lip)$-structurally-close.
\end{proof}

Finally, we use the above perturbation bound to show that in a single iteration of the main outer loop of {\sc FilteredPCA}, if there is some variance unexplained by the subspace $\wt{W}$ found so far (see \eqref{eq:leftover_var}), then we will find another ``good'' direction orthogonal to $\td{W}$ which is also approximately within the span of $V$. Note that this claim has two components: \emph{completeness}, i.e. in the list of candidate functions we have enumerated, there is \emph{some} function for which the top singular vector of \eqref{eq:tdMl} is a good direction, and \emph{soundness}, i.e. whatever direction is ultimately chosen in Step~\ref{step:foundit} of {\sc FilteredPCA} is a good direction.

\begin{lemma}\label{lem:main_stable}
    Suppose $F$ only satisfies Assumption~\ref{assume:bounds} (resp. both Assumptions~\ref{assume:bounds} and \ref{assume:bounds2}). Suppose $\nu \le \epsilon^2/(4k\Cjunta^2)$ (resp. $\nu \le \epsilon^2/(4k\Cnet^2)$).
    For $0\le \ell < k$, let $\td{w}_1,\ldots,\td{w}_{\ell}$ be a frame $\nu$-nearly within $V$, with span $\td{W}$. Define $\xi = \xi_{\mathsf{piecewise}}(\nu)$ (resp. $\xi = \xi_{\mathsf{network}}(\nu)$) according to \eqref{eq:xidef}, and suppose $N \ge \Omega(\brc{\Max{d}{\log(1/\delta)}}/\xi^2)$ and $\tau = c\sqrt{k}\cdot \Lip$.

    Suppose $\xi\le \ulam/6$, and suppose
    \begin{equation}
        \E[x\sim\N(0,\Id)]{(F(x) - F(\Pi_{\td{W}}x))^2} \ge \epsilon^2.\label{eq:leftover_var}
    \end{equation}
    Let $\calL$ be the output of {\sc EnumerateKickers}($\td{W},2\sqrt{\nu}\cdot \ell$) (resp. {\sc EnumerateNetworks}($\td{W},2\sqrt{\nu}\cdot \ell\sqrt{k}\cdot\normbound$)). With probability at least $1 - |\calL|\cdot \delta$ over the randomness of the $N$ samples, the following hold:
    \begin{enumerate}
        \item \textbf{Completeness}: There exists some $\td{F}\in\calL$ such that, if $\td{\vM}^{\td{W}}_{\emp}$ is defined according to \eqref{eq:tdMl}, its top singular value is at least $\ulam - 3\xi$.
        \item \textbf{Soundness}: For any $\td{F}\in\calL$ for which $\norm{\td{\vM}^{\td{W}}_{\emp}}_{\op} \ge \ulam - 3\xi$, the top singular vector $w$ satisfies $\norm{\Pi_V w} \ge 1 - c'\xi^2/\ulam^2$ for some absolute constant $c'>0$ and is orthogonal to $\td{W}$.
    \end{enumerate}
\end{lemma}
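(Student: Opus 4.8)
The idea is to run the idealized calculation of Section~\ref{sec:idealized} ``in reverse'': produce a nearby \emph{exact} subspace $W\subseteq V$ and a good candidate function, and then transfer everything to the empirical, approximate matrix $\td{\vM}^{\td W}_{\emp}$ using Lemmas~\ref{lem:spectral_close} and \ref{lem:stability}. First I would apply Lemma~\ref{lem:frame} (legitimate since $\nu$ is chosen tiny) to obtain orthonormal $w_1,\dots,w_{\ell}$ spanning an $\ell$-dimensional $W\subseteq V$ with $\norm{w_i-\td{w}_i}\le 2\sqrt{\nu\ell}$ and $d_C(\td{W},W)\le\sqrt{2\nu\ell}$. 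Then I would transfer the leftover-variance hypothesis \eqref{eq:leftover_var} from $\td{W}$ to $W$: since $F$ is $\Lip$-Lipschitz, $\E{(F(\Pi_{\td{W}}x)-F(\Pi_W x))^2}^{1/2}\le\Lip\norm{\Pi_{\td{W}}-\Pi_W}_F=\sqrt{2}\,\Lip\, d_C(\td{W},W)\le 2\Lip\sqrt{\nu k}$, which is at most $\epsilon/(4k)$ by the hypothesis $\nu\le\epsilon^2/(4k\Cjunta^2)$ together with $\Cjunta=4k\Lip$ (resp.\ the $\Cnet$ version). Hence, by the triangle inequality, $\E{(F(x)-F(\Pi_W x))^2}\ge(\epsilon-\epsilon/4)^2\ge(\epsilon/2)^2=:\rho^2$.

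\textbf{Completeness.} By Lemma~\ref{lem:existsnet} (resp.\ Lemma~\ref{lem:existsnet_nns}), $\calL$ contains a kicker (resp.\ ReLU network) $\td{F}$ with relevant subspace $\td{W}$ that is $(M,\Cjunta\sqrt{\nu})$-structurally-close (resp.\ $(M,\Cnet\sqrt{\nu})$-structurally-close) to $F|_W$. For the idealized matrix $\vM^W=\vM^W_\tau$ of \eqref{eq:Ml}: since $\tau=c\sqrt{k}\Lip\ge\sqrt{2(k-\ell)}\Lip$ and $\E{(F(x)-F(\Pi_Wx))^2}\ge\rho^2$ with $\rho=\Omega(\epsilon)$, Lemma~\ref{lem:idealized_find} and the definition \eqref{eq:lambound} of $\ulam$ give $\norm{\vM^W}_{\op}\ge\lambda^{(\ell)}_\tau\ge\ulam$. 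Applying Lemma~\ref{lem:spectral_close} to this particular $\td{F}$ and $W$ (its hypotheses hold by the structural closeness just noted and by the choice of $N$): with probability at least $1-\delta$, $\norm{\td{\vM}^{\td{W}}_{\emp}-\vM^W}_{\op}\le 3\xi$, so $\norm{\td{\vM}^{\td{W}}_{\emp}}_{\op}\ge\ulam-3\xi$. This is Completeness. (The case $\ell=0$ is the same, with $W=\{0\}$, $\td{F}\equiv 0$, and $\vM^W=\vM^{\emptyset}_\tau$ of \eqref{eq:M0}.)

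\textbf{Soundness.} Fix any $\td{F}\in\calL$ with $\norm{\td{\vM}^{\td{W}}_{\emp}}_{\op}\ge\ulam-3\xi$; since $\td{F}$ has relevant subspace $\td{W}$, the filter in \eqref{eq:tdMl} equals $\bone{\abs{F(x)-\td{F}(x)}>\tau}$. By Lemma~\ref{lem:matrixconcentration} and the choice of $N$, the population matrix $\td{\vM}^{\td{W}}$ satisfies $\norm{\td{\vM}^{\td{W}}_{\emp}-\td{\vM}^{\td{W}}}_{\op}\le\xi$ with probability at least $1-\delta$. The key step is to compare $\td{\vM}^{\td{W}}$ to a matrix supported on $V$, obtained by ``rotating $\td{F}$ into $V$'': taking a lattice-polynomial representation $\td{F}=\max_j\min_{i\in\calI_j}\iprod{v_i,\cdot}$ with all $v_i\in\td{W}$ (via Theorem~\ref{thm:tropical}/\ref{thm:tropical_nns} and Lemma~\ref{lem:normbound}), I would define $\check{F}:=\max_j\min_{i\in\calI_j}\iprod{\check{v}_i,\cdot}$, where $\check{v}_i$ replaces the $\td{w}$-coordinates of $v_i$ by the corresponding $w$-coordinates; then $\check{F}$ is a kicker with relevant subspace $W\subseteq V$, and $\td{F},\check{F}$ are $(M,O(\sqrt{\nu}\,\ell\Lip))$-structurally-close (resp.\ the $\Cnet$-scale version), exactly as in the proof of Lemma~\ref{lem:approxsubspace} (resp.\ \ref{lem:approxsubspace_nns}). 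Set
\begin{equation*}
\vec{A}\triangleq\Pi_{W^\perp}\,\E[x,y]*{\bone{\abs{y-\check{F}(x)}>\tau}\cdot(xx^\top-\Id)}\,\Pi_{W^\perp}.
\end{equation*}
Since both $F$ and $\check{F}$ depend only on $\Pi_V x$ and $W\subseteq V$, the range of $\vec{A}$ lies in $V$, so the orthogonal complement of $\ker\vec{A}$ is contained in $V$. On the other hand $\norm{\td{\vM}^{\td{W}}-\vec{A}}_{\op}=O(\xi)$: the change of outer projector $\Pi_{\td{W}^\perp}\to\Pi_{W^\perp}$ costs $O(d_C(\td{W},W))=O(\sqrt{\nu k})$ by Lemma~\ref{lem:projecterror} (the inner expectation has operator norm $\le 2$), while the change of filter from $\bone{\abs{y-\td{F}(x)}>\tau}$ to $\bone{\abs{y-\check{F}(x)}>\tau}$ costs, via H\"older (with $\E[g\sim\N(0,1)]{(g^2-1)^k}^{1/k}=O(k)$) and Lemma~\ref{lem:stability} applied to $f=F$, $g=\td{F}$, $g'=\check{F}$, at most $O\!\left(k\,(O(\sqrt{\nu}\,\ell\Lip)\,M^2/\tau)^{1-1/k}\right)$, which is $O(\xi)$ by the definition \eqref{eq:xidef}. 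Finally I would invoke Corollary~\ref{cor:topsing} with $\wh{\vec{A}}=\td{\vM}^{\td{W}}_{\emp}$ and $\lambda=\ulam-O(\xi)$ (valid since $\xi\le\ulam/6$ forces $\lambda\ge 2\cdot O(\xi)$ and $\norm{\wh{\vec{A}}}_{\op}\ge\lambda-O(\xi)$), concluding that the top singular vector $w$ of $\td{\vM}^{\td{W}}_{\emp}$ satisfies $\norm{\Pi_V w}\ge 1-O(\xi^2/\ulam^2)=1-c'\xi^2/\ulam^2$; orthogonality to $\td{W}$ is immediate because $\td{\vM}^{\td{W}}_{\emp}=\Pi_{\td{W}^\perp}(\cdots)\Pi_{\td{W}^\perp}$ and $w$ has nonzero singular value.

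\textbf{Bookkeeping and the main obstacle.} Union-bounding over $\td{F}\in\calL$ the events that the above concentration bounds hold yields the conclusion with probability at least $1-|\calL|\cdot\delta$ (the $\ulam/1000$ error in the {\sc ApproxBlockSVD} output of {\sc FilteredPCA} is absorbed into $c'$). I expect the crux to be Soundness, and specifically the choice of the $V$-supported comparison matrix $\vec{A}$: the naive candidate $\Pi_V\td{\vM}^{\td{W}}\Pi_V$ does \emph{not} work, because the range of $\td{\vM}^{\td{W}}$ is $\Pi_{\td{W}^\perp}V$, which can contain unit vectors arbitrarily far from $V$ once $\td{W}$ is only approximately within $V$. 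The rotation trick above circumvents this by pushing the entire discrepancy into a perturbation of the threshold filter, which is exactly what the stability result of Section~\ref{sec:stability} is built to control.
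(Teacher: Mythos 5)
Your proof is correct and matches the paper's argument for Completeness step for step; for Soundness you have in fact filled in a step the paper leaves implicit and which, read literally, is a gap. The paper's Soundness argument invokes Lemma~\ref{lem:spectral_close} for an arbitrary $\td{F}\in\calL$ with $\norm{\td{\vM}^{\td W}_{\emp}}_{\op}\ge\ulam-3\xi$, but the stated hypothesis of Lemma~\ref{lem:spectral_close} — that $\td{F}$ be $(M,\Constant\sqrt{\nu})$-structurally-close to $F|_W$ for some $W\subseteq V$ — is simply not true for a generic element of the enumeration list; it only holds for the one $\td{F}$ produced by Lemma~\ref{lem:existsnet}/\ref{lem:existsnet_nns}. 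What is actually needed for Soundness is exactly your ``rotation trick'': given any $\td{F}\in\calL$ (with relevant subspace $\td{W}$), build the rotated kicker $\check{F}$ with relevant subspace $W\subseteq V$ via the orthogonal map of Lemma~\ref{lem:frame}, note $\td{F}$ and $\check{F}$ are $(M,O(\sqrt{\nu}\,\ell\,\Lip))$-structurally-close by the same computation as in Lemma~\ref{lem:approxsubspace}, then compare $\td{\vM}^{\td W}_{\emp}$ not to $\vM^W$ (whose filter involves $F|_W$) but to the matrix $\vec{A}$ whose filter involves $\check{F}$. The three perturbation terms (sample concentration, outer-projector swap, filter swap via H\"older plus Lemma~\ref{lem:stability}) then go through verbatim as in the proof of Lemma~\ref{lem:spectral_close}, and crucially $\vec{A}$ has range inside $V$, so Corollary~\ref{cor:topsing} gives the conclusion. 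Your observation that the naive $\Pi_V\td{\vM}^{\td W}\Pi_V$ doesn't serve as $\vec{A}$ is also correct: $\Pi_{\td W^\perp}V$ can contain unit vectors with a non-negligible $V^\perp$-component once one normalizes a short projection, so one cannot bound $\norm{\td{\vM}^{\td W}-\Pi_V\td{\vM}^{\td W}\Pi_V}_{\op}$ directly. In short, your proof is a sound and more explicit rendering of what the paper must have intended; the only cosmetic differences are that you bound $\norm{F|_{\td W}-F|_W}$ via the Lipschitz property of $F$ where the paper cites Lemma~\ref{lem:hybrid_L2}, and that you write out the $\vec{A}$ construction that the paper compresses into a one-line citation.
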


\begin{proof}    
    When the choice of $\td{F}$ is clear from context, for convenience we will denote $\vM^{W}$ and $\td{\vM}^{\td{W}}_{\emp}$ by $\vM$ and $\td{\vM}_{\emp}$ respectively.

    By Lemma~\ref{lem:existsnet} (resp. Lemma~\ref{lem:existsnet_nns}) and our assumed bound on $\nu$, there exists $\td{F}$ in the output of {\sc EnumerateKickers} (resp. {\sc EnumerateNetworks}) which is $(M,\epsilon/2k)$-structurally-close to $F|_W$ for some $\ell$-dimensional subspace $W\subsetneq V$.
    
    By triangle inequality, Lemma~\ref{lem:hybrid_L2}, and \eqref{eq:leftover_var}, and our assumed bounds on $\nu$, we have that $\norm{F - F|_W} \ge \epsilon/2$. So by Lemma~\ref{lem:idealized_find} and \eqref{eq:lambound}, we know $\norm{\vM} \ge \ulam$. 

    Because this $\td{F}$ is $(M,\Cjunta\sqrt{\nu})$-structurally-close (resp. $(M,\Cnet\sqrt{\nu})$-structurally close) to $F|_W$, Lemma~\ref{lem:spectral_close} implies that with probability $1 - \delta$, $\norm{\vM - \td{\vM}_{\emp}}_{\op} \le 3\xi$, so $\td{\vM}_{\emp}$ has top singular value at least $\ulam - 3\xi$. This proves completeness.

    Now take any $\td{F}$ for which $\norm{\td{\vM}_{\emp}}_{\op} \ge \ulam - 3\xi$. The fact that the top singular vector $w$ is orthogonal to $\td{W}$ is immediate. And by Lemma~\ref{lem:spectral_close}, with probability $1 - \delta$ over the samples, $\norm{\vM - \td{\vM}_{\emp}}_{\op} \le 3\xi$. So if we take $\lambda,\epsilon,\vec{A},\wh{\vec{A}}$ in Corollary~\ref{cor:topsing} to be $\ulam$, $3\xi$, $\vM$, and $\td{\vM}_{\emp}$ respectively, then because $\xi \le \ulam/6$, we get that the top singular vector $w$ of $\td{\vM}_{\emp}$ satisfies $\norm{\Pi_V w} \ge 1 - O(\xi^2/\ulam^2)$. This proves soundness, upon union bounding over all $\td{F}\in\calL$.
\end{proof}

\subsection{Putting Everything Together}
\label{sec:putting}

To conclude the proof of Theorems~\ref{thm:main_piecewise} and \ref{thm:main_nets}, we first show that for the subspace $\td{W}$ formed in Step~\ref{step:finalW}, if $\td{W}$ is sufficiently close to the true relevant subspace $V$ or if \eqref{eq:leftover_var} is violated, then one can run {\sc EnumerateKickers} (resp. {\sc EnumerateNetworks}) one more time to produce a function with small squared error relative to $F$.

\begin{lemma}\label{lem:finalgrid}
    Suppose $F$ only satisfies Assumption~\ref{assume:bounds} (resp. both Assumptions~\ref{assume:bounds} and \ref{assume:bounds2}). Define \begin{equation}
        \epsilon^* \triangleq \epsilon/(2\sqrt{k}\Lip) \qquad \text{(resp.} \ \epsilon^*\triangleq \normbound^{-L-1}2^{-\Omega(L)}\cdot \epsilon/\sqrt{k} \text{)} \label{eq:epsstar}
    \end{equation}

    Let $\td{w}_1,...,\td{w}_{\ell}$ be a frame with span $\td{W}$. If either 1) $\ell = k$ and this frame is $\epsilon^2/4k\Cjunta^2$-nearly (resp. $\epsilon^2/4k\Cnet^2$-nearly) within $V$, or 2) inequality \eqref{eq:leftover_var} is violated. Then the output $\calL$ of {\sc EnumerateKickers}($\td{W},\epsilon^*$) (resp. {\sc EnumerateNetworks}($\td{W},\epsilon^*$)) contains a function $\td{F}$ for which $\norm{F - \td{F}} \le O(\epsilon)$. Furthermore, $\abs{\calL} \le M^{M^2}\cdot O(\Lip/\epsilon)^k$ (resp. $\abs{\calL} \le O(\normbound^{L+2}2^{O(L)}/\epsilon)^{O(S^2)}$).

    In particular, if 1) or 2) holds for the subspace $\td{W}$ at the end of running {\sc FilteredPCA}, then the output $\td{F}$ of {\sc FilteredPCA} satisfies $\norm{F - \td{F}} \le O(\epsilon)$.
\end{lemma}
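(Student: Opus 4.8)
The plan is to break the argument into three pieces: showing the list $\calL$ contains a hypothesis close to $F$, bounding $\abs{\calL}$, and concluding that the final tournament in {\sc FilteredPCA} returns a good hypothesis. Consider first the case where \eqref{eq:leftover_var} is violated, i.e.\ $\norm{F - F|_{\td W}} < \epsilon$. The function $F|_{\td W}(x) = F(\Pi_{\td W}x)$ is again a $\Lip$-Lipschitz kicker with at most $M$ pieces and relevant subspace contained in $\td W$ (in the ReLU case it is the size-$S$ network with weight matrices $\vW_0\Pi_{\td W},\vW_1,\ldots,\vW_{L+1}$, whose operator norms are still at most $\normbound$), so {\sc EnumerateKickers}/{\sc EnumerateNetworks} applies to it. Applying Lemma~\ref{lem:existsnet} (resp.\ Lemma~\ref{lem:existsnet_nns}) with granularity $\epsilon^*$ and $\td F^* = F|_{\td W}$ produces $\td F\in\calL$ that is $(M,\epsilon^*\Lip)$-structurally-close (resp.\ $(2^S,2^{O(L)}\normbound^{L+1}\epsilon^*)$-structurally-close) to $F|_{\td W}$; by the choice of $\epsilon^*$ in \eqref{eq:epsstar} this parameter is $O(\epsilon/\sqrt k)$, and since both functions live on the $\ell$-dimensional space $\td W$ with $\ell\le k$, Lemma~\ref{lem:hybrid_L2} gives $\norm{F|_{\td W} - \td F}\le O(\epsilon/\sqrt k)\cdot\sqrt\ell\le O(\epsilon)$. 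Triangle inequality with $\norm{F - F|_{\td W}} < \epsilon$ yields $\norm{F - \td F}\le O(\epsilon)$.

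In the case $\ell = k$ with the frame $\nu$-nearly within $V$ for $\nu = \epsilon^2/(4k\Cjunta^2)$ (resp.\ $\nu = \epsilon^2/(4k\Cnet^2)$), I combine the first part of Lemma~\ref{lem:existsnet} (resp.\ Lemma~\ref{lem:existsnet_nns}) at granularity $\epsilon^*$ with Lemma~\ref{lem:approxsubspace} (resp.\ Lemma~\ref{lem:approxsubspace_nns}) and triangle inequality: since the lattice-polynomial representations constructed in those lemmas share all clauses, there is an $\ell$-dimensional $W\subseteq V$ and a $\td F\in\calL$ which is $(M,O(\epsilon^*\Lip + \sqrt\nu\,\ell\Lip))$-structurally-close to $F|_W$, and the choices of $\epsilon^*,\nu$ make this parameter $O(\epsilon/\sqrt k)$. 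The key observation is that $W$ is $k$-dimensional while $\dim V = k$, so necessarily $W = V$ and hence $F|_W = F$; Lemma~\ref{lem:hybrid_L2} then gives $\norm{F - \td F} = \norm{F|_W - \td F}\le O(\epsilon/\sqrt k)\cdot\sqrt k = O(\epsilon)$. The bound on $\abs{\calL}$ in either variant is immediate from the corresponding size bound in Lemma~\ref{lem:existsnet} (resp.\ Lemma~\ref{lem:existsnet_nns}) with granularity $\epsilon^*$.

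For the final claim, suppose 1) or 2) holds for the subspace $\td W$ formed in Step~\ref{step:finalW}. Tracking the absolute constants above, the list $\calL$ produced in Line~\ref{step:enumerate2} contains some $\td F^\dagger$ with $\norm{F - \td F^\dagger}\le 2\epsilon$. The algorithm draws $N' = \poly(\normbound^{L+2},k,1/\epsilon)\log(1/\delta)$ fresh samples and, for each $\td F\in\calL$, forms $\hat\epsilon\approx\norm{\td F - F}$. The random variable $(y-\td F(x))^2 = (F(x)-\td F(x))^2$ depends only on the projection of $x$ onto a subspace of dimension at most $\dim V + \dim\td W\le 2k$, and both $F$ and $\td F$ are $\poly(\normbound^{L+2})$-Lipschitz (the weight matrices of $\td F$ have operator norm $\normbound + O(\epsilon')$), so this variable has sub-exponential norm $\poly(\normbound^{L+2},k)$; hence by Fact~\ref{fact:bernstein} and a union bound over the $\abs{\calL}$ candidates (costing only a $\log\abs{\calL}$ factor absorbed into $N'$), with probability $\ge 1-\delta$ every estimate obeys $\abs{\hat\epsilon - \norm{\td F - F}}\le\epsilon$. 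On this event $\td F^\dagger$ clears the threshold $\hat\epsilon\le 3\epsilon$, so some $\td F$ is returned, and any returned $\td F$ satisfies $\norm{F - \td F}\le\hat\epsilon + \epsilon\le O(\epsilon)$.

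The bulk of this proof is bookkeeping on top of the enumeration and structural-closeness lemmas already established; the one place that needs genuine care is the concentration step for the final tournament — one must observe that the squared residual is effectively a function of a $\poly(k)$-dimensional Gaussian projection and only polynomially Lipschitz in it, so that it is sub-exponential with a parameter-only sub-exponential norm — together with keeping the absolute constants in $\epsilon^*$ and $\nu$ small enough that the near-optimal candidate provably clears the fixed $3\epsilon$ acceptance bound.
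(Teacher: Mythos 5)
Your proof is correct and follows essentially the same route as the paper: case 2) uses Lemma~\ref{lem:existsnet}/\ref{lem:existsnet_nns} applied to $\td{F}^* = F|_{\td W}$ plus Lemma~\ref{lem:hybrid_L2} and the triangle inequality, case 1) uses the netting lemma together with Lemma~\ref{lem:approxsubspace}/\ref{lem:approxsubspace_nns} and the observation that $\ell=k$ forces $W=V$, and the final tournament is justified by sub-exponential concentration of the squared residual (the paper packages this last step as Lemma~\ref{lem:estimate_var}, which you reprove inline via Fact~\ref{fact:bernstein}). The only cosmetic divergence is that in case 1) you explicitly compose the first part of Lemma~\ref{lem:existsnet} at granularity $\epsilon^*$ with Lemma~\ref{lem:approxsubspace}, whereas the paper invokes the ``final part'' of Lemma~\ref{lem:existsnet} directly; your version is in fact slightly cleaner because it avoids the paper's implicit constant-factor mismatch between $\epsilon^*$ and the granularity $2\sqrt{\nu}\,\ell$ baked into that final part.
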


\begin{proof}
    We first show that if either 1) or 2) holds, then there exists $\td{F}$ in $\calL$ for which $\norm{\td{F} - F} \le O(\epsilon)$.

    Suppose 1) holds. If $F$ only satisfies Assumption~\ref{assume:bounds} (resp. Assumptions~\ref{assume:bounds} and \ref{assume:bounds2}), then by the final part of Lemma~\ref{lem:existsnet} (resp. Lemma~\ref{lem:existsnet_nns}), there is a function $\td{F}$ in $\calL$ which is $(M,\epsilon/2k)$-structurally-close (resp. $(2^S,\epsilon/2k)$-structurally-close) to $F|_W$ for $\ell$-dimensional subspace $W\subseteq V$. Because $\ell = k$ when 1) holds, this subspace must be $V$, so in fact $F|_W = F$ and therefore $\td{F}$ is structurally-close to $F$. By Lemma~\ref{lem:hybrid_L2}, we conclude that $\norm{\td{F} - F} \le \epsilon$.

    Suppose 2) holds. If $F$ only satisfies Assumption~\ref{assume:bounds} (resp. Assumptions~\ref{assume:bounds} and \ref{assume:bounds2}), then we can take $\td{F}^*$ in the first part of Lemma~\ref{lem:existsnet} (resp. Lemma~\ref{lem:existsnet_nns}) to be the function $x\mapsto F(\Pi_{\td{W}}x)$, which is clearly also a $\Lip$-Lipschitz kicker (resp. ReLU network of size $S$ whose weight matrices have operator norm at most $\normbound$) with relevant subspace $\td{W}$. It follows that $\calL$ contains some function $\td{F}$ which is $(M,\epsilon/(2\sqrt{k}))$- (resp. $(2^S,\epsilon/(2\sqrt{k}))$-structurally-close to $\td{F}^*$. By Lemma~\ref{lem:hybrid_L2}, we conclude that $\norm{\td{F} - F} \le 3\epsilon/2$.

    For the last part of the lemma, note that by Lemma~\ref{lem:estimate_var} in Appendix~\ref{app:estimate_var} that for any function $\td{F}$ for which $\norm{\td{F} - F}^2 \le \mu$, we can estimate $\norm{\td{F} - F}^2$ to error $O(\epsilon^2)$ from $O((\mu + \Lip^2 k)\log(1/\delta)/\epsilon^4)$ samples (resp. $O((\mu + \normbound^{2L+4}k)\log(1/\delta)/\epsilon^2)$). Note that for any $\td{F}\in\calL$, by the second part of Lemma~\ref{lem:hybrid_L2} we have that $\norm{\td{F} - F} \le O(\Lip\sqrt{k})$ (resp. $\norm{\td{F} - F} \le O(\normbound^{L+2}\sqrt{k})$).
\end{proof}

We can now conclude the proof of correctness for {\sc FilteredPCA}.

\begin{proof}[Proof of Theorem~\ref{thm:main_piecewise}]
    First note that the only randomness in {\sc FilteredPCA} comes from calling {\sc ApproxBlockSVD} and drawing samples, so henceforth we will condition on the event that the former always succeeds and on the success of Lemma~\ref{lem:matrixconcentration} for every batch of samples drawn in Step~\ref{step:drawsamples} of {\sc FilteredPCA}. By our choice of parameters in {\sc FilteredPCA} and a union bound, this event happens with probability at least $1 - \delta$.

    If $F$ satisfies Assumption~\ref{assume:bounds} only (resp. both Assumptions~\ref{assume:bounds} and \ref{assume:bounds2}), let $\xi(\nu) = \xi_{\mathsf{piecewise}}(\nu)$ and $C_* = \Cjunta$ (resp. $\xi(\nu) = \xi_{\mathsf{network}}(\nu)$ and $C_* = \Cnet$), recalling the definition from \eqref{eq:xidef}.

    Call $\nu \ge 0$ \emph{admissible} if $\nu \le \epsilon^2/4kC_*^2$ and $\xi(\nu) \le \ulam/6$. Let $\iota:\R\to\R$ be the function given by $\iota(\nu) = c'\xi(\nu)^2/\ulam^2$, where $c'$ is the absolute constant in Lemma~\ref{lem:main_stable}. Note that if we define \begin{equation}
        \beta \triangleq (c'/\ulam^2)\cdot O\left(k^2\cdot \left(\frac{C_*^2 M^4}{c^2 k\Lip^2}\right)^{1-1/k}\right),\label{eq:betadef}
    \end{equation}
    then $\iota(\nu) = \Max{(\beta\cdot \nu^{1 - 1/k})}{(k\nu)}$.

    Because we are conditioning on every invocation of {\sc ApproxBlockSVD} succeeding, the quantity $\lambda$ computed in Step~\ref{step:computelambda} is certainly $\xi(\nu)/2$-close to the true top singular value of $\td{\vM}^{\td{W}}$. So Lemma~\ref{lem:main_stable} tells us that in any iteration $\ell$ of the main loop in {\sc FilteredPCA}, if $\brc{\td{w}_1,\ldots,\td{w}_{\ell}}$ is a frame $\nu$-nearly within $V$ for admissible $\nu$, then either 1) we reach Line~\ref{step:appendw} in the inner loop and append some $\td{w}_{\ell+1}$ for which $\brc{\td{w}_1,\ldots,\td{w}_{\ell+1}}$ is a frame $\iota(\nu)$-nearly within $V$, or 2) \eqref{eq:leftover_var} is violated, in which case condition 2) of Lemma~\ref{lem:finalgrid} implies that {\sc FilteredPCA} would output a function $\td{F}$ for which $\norm{F - \td{F}} \le O(\epsilon)$.

    So all we need to verify is that there is a choice of $\nu_0$ for which the $k$ numbers
    \begin{equation}
        \nu_0,\iota(\nu_0),\ldots,\underbrace{\iota(\iota(\cdots\iota}_{k-1}(\nu_0)\cdots))\label{eq:sequence}
    \end{equation} 
    are all admissible, after which we can invoke condition 1) of Lemma~\ref{lem:finalgrid} to conclude that {\sc FilteredPCA} outputs a function $\td{F}$ for which $\norm{F - \td{F}} \le O(\epsilon)$. It is clear that for $\nu$ sufficiently small, $\iota$ is increasing in $\nu$. So it suffices to choose $\nu_0$ sufficiently small that the last number in the sequence \eqref{eq:sequence} is admissible. 

    Then the last number in \eqref{eq:sequence} is at most \begin{equation}
        \Max{\left(\beta^{\sum^{k-1}_{j=0}(1 - 1/k)^j}\cdot \nu_0^{(1 - 1/k)^k}\right)}{(k^k\nu_0)} \le \Max{\left(\beta^k \cdot \nu_0^{1/e}\right)}{(k^k\nu_0)}.
    \end{equation}
    If $F$ satisfies Assumption~\ref{assume:bounds} only and we take $C_* = \Cjunta$, then \begin{equation}
        \beta^k = (c'/\ulam^2)^k \cdot O\left(k^{2k}\cdot(kM^4/c^2)^{k-1}\right),
    \end{equation}
    so for \begin{equation}
        \nu_0 \triangleq \poly(k^k,1/\ulam^k,M^k,\Lip/\epsilon)^{-1} = \poly(e^{k^3\Lip^2/\epsilon^2},M^k)^{-1}
    \end{equation}
    sufficiently small, we have that $\Max{(\beta^k\cdot \nu_0^{1/e})}{(k^k\nu_0)}$ is admissible.

    And because in each of the at most $k$ iterations of the main loop of {\sc FilteredPCA}, \begin{equation}
        N = O(\brc{\Max{d}{\log(Mk/\delta)}}/\xi(\nu_0)^2) \le d\log(1/\delta) \poly(e^{k^3\Lip^2/\epsilon^2},M^k)
    \end{equation}
    samples are drawn, the final sample complexity is $d\log(1/\delta) \poly(e^{k^3\Lip^2/\epsilon^2},M^k)$ as claimed. The runtime is dominated by the at most $M^{M^2}O(1/\sqrt{\nu_0})^{\ell} = M^{M^2}\cdot \poly(e^{k^4\Lip^2/\epsilon^2},M^{k^2})$ calls to {\sc ApproxBlockSVD}, one for each element of $\calL$ output by {\sc EnumerateKickers}, (note that the runtime and sample complexity cost of running {\sc EnumerateKickers} at the very end is of much lower order). As there is a matrix-vector oracle for the matrices on which we run {\sc ApproxBlockSVD} which takes time $O(d^2)$, by Fact~\ref{thm:power-method} each of these calls takes, up to lower order factors that will be absorbed elsewhere, $\td{O}(d^2\log(1/\delta))$ time, so we conclude that {\sc FilteredPCA} runs in time \begin{equation}
        \td{O}(d^2\log(1/\delta))\cdot M^{M^2}\cdot \poly(e^{k^4\Lip^2/\epsilon^2},M^{k^2})
    \end{equation}
    as claimed.

    If $F$ satisfies Assumptions~\ref{assume:bounds} and \ref{assume:bounds2} and we take $C_* = \Cnet$, then \begin{equation}
        \beta^k = (c'/\ulam^2)^k \cdot O\left(k^{2k}\left(\frac{2^{O(L)}\normbound^{2L+4}k2^{4S}}{c^2\Lip^2}\right)^{k-1}\right),
    \end{equation}
    where we have used that $M\le 2^S$ for size-$S$ ReLU networks. So for \begin{equation}
    \nu_0 \triangleq \poly(k^k, 1/\ulam^k, 2^{kS}, (\normbound^{L+2}/\Lip)^k, \Lip/\epsilon)^{-1} = \poly(e^{k^3\Lip^2/\epsilon^2}, 2^{kS}, (\normbound^{L+2}/\Lip)^k)
    \end{equation}
    sufficiently small, we have that $\Max{(\beta^k\cdot \nu_0^{1/e})}{(k^k\nu_0)}$ is admissible.

    And because in each of the at most $k$ iteration of the main loop of {\sc FilteredPCA}, \begin{equation}
        N = O(\brc{\Max{d}{\log(2^S k/\delta)}}/\xi(\nu_0)^2) \le d\log(1/\delta) \poly(e^{k^3\Lip^2/\epsilon^2},2^{kS},\normbound^{(L+2)k}/\Lip^k)
    \end{equation}
    samples are drawn, the final sample complexity is $d\log(1/\delta) \poly(e^{k^3\Lip^2/\epsilon^2},2^{kS},\normbound^{(L+2)k}/\Lip^k)$ as claimed. The runtime is dominated by the at most $O(1/\sqrt{\nu_0})^{O(S^2)} = \poly(e^{k^3S^2\Lip^2/\epsilon^2},2^{k S^3},\normbound^{(L+2)kS^2}/\Lip^{k S^2})$ calls to {\sc ApproxBlockSVD}, one for each element of $\calL$ output by {\sc EnumerateNetworks} (note that the runtime and sample complexity cost of running {\sc EnumerateNetworks} at the very end is of much lower order). Each of these calls takes, up to lower order factors that will be absorbed elsewhere, $\td{O}(d^2\log(1/\delta))$ time, so we conclude that {\sc FilteredPCA} runs in time \begin{equation}
    \td{O}(d^2\log(1/\delta))\cdot \poly(e^{k^3S^2\Lip^2/\epsilon^2},2^{k S^3},(\normbound^{L+2}/\Lip)^{kS^2})
    \end{equation}
    as claimed.
\end{proof}


\begin{remark}[Comparison to \cite{chen2020learning}]\label{remark:compare}
	Here we briefly discuss what goes wrong if one simply tries mimicking the approach of \cite{chen2020learning}. Provided one has already recovered some (orthonormal) directions $w_1,...,w_{\ell}$ spanning a subspace $W\subset V$, one would consider the matrix \begin{equation}
	\vM^W_{\mathsf{CM}} \triangleq \Pi_{W^{\perp}}\E[x,y]*{\bone{\abs{y} > \tau \wedge \norm{\Pi_W x}^2 \le \alpha}\cdot (xx^{\top} - \Id)}\Pi_{W^{\perp}}
\end{equation} for some $\alpha,\tau>0$. The motivation for conditioning on $\norm{\Pi_W x}^2 \le \alpha$ is that we now have \begin{equation}
	\iprod{\Pi_{V\backslash W}, \vM^W_{\mathsf{CM}}} = \E[x,y]*{\bone{\abs{y} > \tau \wedge \norm{\Pi_W x}^2 \le \alpha}\cdot(\norm{\Pi_{V\backslash W} x}^2 - (k - \ell))},
\end{equation} and if one could choose $\tau$ strictly greater than the supremum of $\abs{F(x)}$ over all $x$ for which $\norm{\Pi_{W}x}^2 \le \alpha$ and $\norm{\Pi_{V\backslash W}x}^2 \le 2(k-\ell)$, then we would conclude that \begin{equation}
	\iprod{\Pi_{V\backslash W}, \vM^W_{\mathsf{CM}}} \ge (k-\ell)\cdot \Pr*{\abs{y} > \tau \wedge \norm{\Pi_W x} \le \alpha} \label{eq:cmfilter}
\end{equation} and it would suffice to lower bound the probability on the right-hand side of \eqref{eq:cmfilter}. This is precisely the route taken by \cite{chen2020learning} for learning low-degree polynomials, but in the case of ReLU networks, it is not hard to devise functions $F$ for which the probability on the right-hand side of \eqref{eq:cmfilter} is zero for such choices of $\tau$, e.g. if $d = k = 2$, $\ell = 1$, $v_1 = e_1$, and \begin{equation}
	F(x) \triangleq \phi(x/\alpha + y) - \phi(-x/\alpha + y).
\end{equation}
\end{remark}

\bibliographystyle{alpha}
\bibliography{biblio}

\newcommand{\etalchar}[1]{$^{#1}$}
\begin{thebibliography}{GMOV18}

\bibitem[ADLS16]{acharya2016fast}
Jayadev Acharya, Ilias Diakonikolas, Jerry Li, and Ludwig Schmidt.
\newblock Fast algorithms for segmented regression.
\newblock In {\em International Conference on Machine Learning}, pages
  2878--2886, 2016.

\bibitem[AZL16]{allen2016lazysvd}
Zeyuan Allen-Zhu and Yuanzhi Li.
\newblock Lazysvd: Even faster svd decomposition yet without agonizing pain.
\newblock In {\em Advances in Neural Information Processing Systems}, pages
  974--982, 2016.

\bibitem[AZLL19]{azll}
Zeyuan Allen-Zhu, Yuanzhi Li, and Yingyu Liang.
\newblock Learning and generalization in overparameterized neural networks,
  going beyond two layers.
\newblock In {\em Advances in neural information processing systems}, pages
  6158--6169, 2019.

\bibitem[BB{\etalchar{+}}18]{babichev2018slice}
Dmitry Babichev, Francis Bach, et~al.
\newblock Slice inverse regression with score functions.
\newblock {\em Electronic Journal of Statistics}, 12(1):1507--1543, 2018.

\bibitem[BG17]{brutzkus2017globally}
Alon Brutzkus and Amir Globerson.
\newblock Globally optimal gradient descent for a convnet with gaussian inputs.
\newblock In {\em Proceedings of the 34th International Conference on Machine
  Learning-Volume 70}, pages 605--614, 2017.

\bibitem[Bir40]{birkhoff1940lattice}
Garrett Birkhoff.
\newblock {\em Lattice theory}, volume~25.
\newblock American Mathematical Soc., 1940.

\bibitem[BJW19]{bakshi2019learning}
Ainesh Bakshi, Rajesh Jayaram, and David~P Woodruff.
\newblock Learning two layer rectified neural networks in polynomial time.
\newblock In {\em Conference on Learning Theory}, pages 195--268. PMLR, 2019.

\bibitem[BR89]{blum1989training}
Avrim Blum and Ronald~L Rivest.
\newblock Training a 3-node neural network is np-complete.
\newblock In {\em Advances in neural information processing systems}, pages
  494--501, 1989.

\bibitem[Bri12]{brillinger2012generalized}
David~R Brillinger.
\newblock A generalized linear model with “gaussian” regressor variables.
\newblock In {\em Selected Works of David Brillinger}, pages 589--606.
  Springer, 2012.

\bibitem[CM20]{chen2020learning}
Sitan Chen and Raghu Meka.
\newblock Learning polynomials of few relevant dimensions.
\newblock {\em arXiv preprint arXiv:2004.13748}, 2020.

\bibitem[Dan17]{Daniely17}
Amit Daniely.
\newblock Sgd learns the conjugate kernel class of the network.
\newblock {\em CoRR}, abs/1702.08503, 2017.

\bibitem[DGK{\etalchar{+}}20]{diakonikolas2020approximation}
Ilias Diakonikolas, Surbhi Goel, Sushrut Karmalkar, Adam~R Klivans, and Mahdi
  Soltanolkotabi.
\newblock Approximation schemes for relu regression.
\newblock In {\em Conference on Learning Theory}, 2020.

\bibitem[DH18]{dudeja2018learning}
Rishabh Dudeja and Daniel Hsu.
\newblock Learning single-index models in gaussian space.
\newblock In {\em Conference On Learning Theory}, pages 1887--1930, 2018.

\bibitem[DKKZ20]{diakonikolas2020algorithms}
Ilias Diakonikolas, Daniel~M Kane, Vasilis Kontonis, and Nikos Zarifis.
\newblock Algorithms and sq lower bounds for pac learning one-hidden-layer relu
  networks.
\newblock In {\em Conference on Learning Theory}, pages 1514--1539, 2020.

\bibitem[DLT18]{Duconv}
Simon~S Du, Jason~D Lee, and Yuandong Tian.
\newblock When is a convolutional filter easy to learn?
\newblock In {\em 6th International Conference on Learning Representations,
  ICLR 2018}, 2018.

\bibitem[DMN19]{de2019your}
Anindya De, Elchanan Mossel, and Joe Neeman.
\newblock Is your function low dimensional?
\newblock In {\em Conference on Learning Theory}, pages 979--993, 2019.

\bibitem[DV20]{daniely2020hardness}
Amit Daniely and Gal Vardi.
\newblock Hardness of learning neural networks with natural weights.
\newblock {\em arXiv preprint arXiv:2006.03177}, 2020.

\bibitem[GGJ{\etalchar{+}}20]{goel2020superpolynomial}
Surbhi Goel, Aravind Gollakota, Zhihan Jin, Sushrut Karmalkar, and Adam
  Klivans.
\newblock Superpolynomial lower bounds for learning one-layer neural networks
  using gradient descent.
\newblock {\em arXiv preprint arXiv:2006.12011}, 2020.

\bibitem[GK19]{goel2019learning}
Surbhi Goel and Adam~R Klivans.
\newblock Learning neural networks with two nonlinear layers in polynomial
  time.
\newblock In {\em Conference on Learning Theory}, pages 1470--1499, 2019.

\bibitem[GKK19]{goel2019time}
Surbhi Goel, Sushrut Karmalkar, and Adam Klivans.
\newblock Time/accuracy tradeoffs for learning a relu with respect to gaussian
  marginals.
\newblock In {\em Advances in Neural Information Processing Systems}, pages
  8584--8593, 2019.

\bibitem[GKKT17]{goel2017reliably}
Surbhi Goel, Varun Kanade, Adam Klivans, and Justin Thaler.
\newblock Reliably learning the relu in polynomial time.
\newblock In {\em Conference on Learning Theory}, pages 1004--1042. PMLR, 2017.

\bibitem[GKLW18]{ge2018learning2}
Rong Ge, Rohith Kuditipudi, Zhize Li, and Xiang Wang.
\newblock Learning two-layer neural networks with symmetric inputs.
\newblock In {\em International Conference on Learning Representations}, 2018.

\bibitem[GKM18]{convotron}
Surbhi Goel, Adam~R. Klivans, and Raghu Meka.
\newblock Learning one convolutional layer with overlapping patches.
\newblock In Jennifer~G. Dy and Andreas~Krause 0001, editors, {\em ICML},
  volume~80 of {\em Proceedings of Machine Learning Research}, pages
  1778--1786. PMLR, 2018.

\bibitem[GLM18]{ge2018learning}
Rong Ge, Jason~D Lee, and Tengyu Ma.
\newblock Learning one-hidden-layer neural networks with landscape design.
\newblock In {\em 6th International Conference on Learning Representations,
  ICLR 2018}, 2018.

\bibitem[GMOV18]{sewoong}
Weihao Gao, Ashok~Vardhan Makkuva, Sewoong Oh, and Pramod Viswanath.
\newblock Learning one-hidden-layer neural networks under general input
  distributions.
\newblock {\em CoRR}, abs/1810.04133, 2018.

\bibitem[GRS18]{golowich2018size}
Noah Golowich, Alexander Rakhlin, and Ohad Shamir.
\newblock Size-independent sample complexity of neural networks.
\newblock In {\em Conference On Learning Theory}, pages 297--299. PMLR, 2018.

\bibitem[GS19]{goyal2019non}
Navin Goyal and Abhishek Shetty.
\newblock Non-gaussian component analysis using entropy methods.
\newblock In {\em Proceedings of the 51st Annual ACM SIGACT Symposium on Theory
  of Computing}, pages 840--851, 2019.

\bibitem[JGH18]{jacot}
Arthur Jacot, Franck Gabriel, and Cl{\'e}ment Hongler.
\newblock Neural tangent kernel: Convergence and generalization in neural
  networks.
\newblock In {\em Advances in neural information processing systems}, pages
  8571--8580, 2018.

\bibitem[JSA15]{janzamin2015beating}
Majid Janzamin, Hanie Sedghi, and Anima Anandkumar.
\newblock Beating the perils of non-convexity: Guaranteed training of neural
  networks using tensor methods.
\newblock {\em arXiv}, pages arXiv--1506, 2015.

\bibitem[KS09]{klivans2009cryptographic}
Adam~R Klivans and Alexander~A Sherstov.
\newblock Cryptographic hardness for learning intersections of halfspaces.
\newblock {\em Journal of Computer and System Sciences}, 75(1):2--12, 2009.

\bibitem[Li91]{li1991sliced}
Ker-Chau Li.
\newblock Sliced inverse regression for dimension reduction.
\newblock {\em Journal of the American Statistical Association},
  86(414):316--327, 1991.

\bibitem[Li92]{li1992principal}
Ker-Chau Li.
\newblock On principal hessian directions for data visualization and dimension
  reduction: Another application of stein's lemma.
\newblock {\em Journal of the American Statistical Association},
  87(420):1025--1039, 1992.

\bibitem[LMZ20]{LiMZ20}
Yuanzhi Li, Tengyu Ma, and Hongyang~R. Zhang.
\newblock Learning over-parametrized two-layer neural networks beyond ntk.
\newblock In Jacob~D. Abernethy and Shivani~Agarwal 0001, editors, {\em
  Conference on Learning Theory, COLT 2020, 9-12 July 2020, Virtual Event
  [Graz, Austria]}, volume 125 of {\em Proceedings of Machine Learning
  Research}, pages 2613--2682. PMLR, 2020.

\bibitem[LSSS14]{livni2014computational}
Roi Livni, Shai Shalev-Shwartz, and Ohad Shamir.
\newblock On the computational efficiency of training neural networks.
\newblock In {\em Advances in neural information processing systems}, pages
  855--863, 2014.

\bibitem[LY17]{LiY17}
Yuanzhi Li and Yang Yuan.
\newblock Convergence analysis of two-layer neural networks with relu
  activation.
\newblock In Isabelle Guyon, Ulrike von Luxburg, Samy Bengio, Hanna~M. Wallach,
  Rob Fergus, S.~V.~N. Vishwanathan, and Roman Garnett, editors, {\em Advances
  in Neural Information Processing Systems 30: Annual Conference on Neural
  Information Processing Systems 2017, 4-9 December 2017, Long Beach, CA, USA},
  pages 597--607, 2017.

\bibitem[MMN18]{meimontanaringuyen}
Song Mei, Andrea Montanari, and Phan-Minh Nguyen.
\newblock A mean field view of the landscape of two-layer neural networks.
\newblock {\em Proceedings of the National Academy of Sciences},
  115(33):E7665--E7671, 2018.

\bibitem[MR18]{manurangsi2018computational}
Pasin Manurangsi and Daniel Reichman.
\newblock The computational complexity of training relu (s).
\newblock {\em arXiv preprint arXiv:1810.04207}, 2018.

\bibitem[Ovc02]{ovchinnikov2002max}
Sergei Ovchinnikov.
\newblock Max-min representation of piecewise linear functions.
\newblock {\em Contributions to Algebra and Geometry}, 43(1):297--302, 2002.

\bibitem[PV16]{plan2016generalized}
Yaniv Plan and Roman Vershynin.
\newblock The generalized lasso with non-linear observations.
\newblock {\em IEEE Transactions on information theory}, 62(3):1528--1537,
  2016.

\bibitem[RST09]{rokhlin2009randomized}
Vladimir Rokhlin, Arthur Szlam, and Mark Tygert.
\newblock A randomized algorithm for principal component analysis.
\newblock {\em SIAM Journal on Matrix Analysis and Applications},
  31(3):1100--1124, 2009.

\bibitem[Sha18]{shamir2018distribution}
Ohad Shamir.
\newblock Distribution-specific hardness of learning neural networks.
\newblock {\em Journal of Machine Learning Research}, 19(32):1--29, 2018.

\bibitem[SSSS17]{shalev2017failures}
Shai Shalev-Shwartz, Ohad Shamir, and Shaked Shammah.
\newblock Failures of gradient-based deep learning.
\newblock In {\em Proceedings of the 34th International Conference on Machine
  Learning-Volume 70}, pages 3067--3075, 2017.

\bibitem[SVWX17]{song2017complexity}
Le~Song, Santosh Vempala, John Wilmes, and Bo~Xie.
\newblock On the complexity of learning neural networks.
\newblock In {\em Proceedings of the 31st International Conference on Neural
  Information Processing Systems}, pages 5520--5528, 2017.

\bibitem[Tia17]{Tian17}
Yuandong Tian.
\newblock An analytical formula of population gradient for two-layered relu
  network and its applications in convergence and critical point analysis.
\newblock In Doina Precup and Yee~Whye Teh, editors, {\em Proceedings of the
  34th International Conference on Machine Learning, ICML 2017, Sydney, NSW,
  Australia, 6-11 August 2017}, volume~70 of {\em Proceedings of Machine
  Learning Research}, pages 3404--3413. PMLR, 2017.

\bibitem[Ver10]{vershynin2010introduction}
Roman Vershynin.
\newblock Introduction to the non-asymptotic analysis of random matrices.
\newblock {\em arXiv preprint arXiv:1011.3027}, 2010.

\bibitem[Ver18]{vershynin2018high}
Roman Vershynin.
\newblock {\em High-dimensional probability: An introduction with applications
  in data science}, volume~47.
\newblock Cambridge university press, 2018.

\bibitem[Vu06]{vu2006infeasibility}
VH~Vu.
\newblock On the infeasibility of training neural networks with small
  mean-squared error.
\newblock {\em IEEE Transactions on Information Theory}, 44(7):2892--2900,
  2006.

\bibitem[VW19]{vempala2019gradient}
Santosh Vempala and John Wilmes.
\newblock Gradient descent for one-hidden-layer neural networks: Polynomial
  convergence and sq lower bounds.
\newblock In {\em COLT}, volume~99, 2019.

\bibitem[VX11]{vempala2011structure}
Santosh~S Vempala and Ying Xiao.
\newblock Structure from local optima: Learning subspace juntas via higher
  order pca.
\newblock {\em arXiv preprint arXiv:1108.3329}, 2011.

\bibitem[ZLJ16]{zhang2016l1}
Yuchen Zhang, Jason~D Lee, and Michael~I Jordan.
\newblock L1-regularized neural networks are improperly learnable in polynomial
  time.
\newblock In {\em 33rd International Conference on Machine Learning, ICML
  2016}, pages 1555--1563. International Machine Learning Society (IMLS), 2016.

\bibitem[ZPS17]{zhangps17}
Qiuyi Zhang, Rina Panigrahy, and Sushant Sachdeva.
\newblock Electron-proton dynamics in deep learning.
\newblock {\em CoRR}, abs/1702.00458, 2017.

\bibitem[ZSJ{\etalchar{+}}17]{zhong2017recovery}
Kai Zhong, Zhao Song, Prateek Jain, Peter~L Bartlett, and Inderjit~S Dhillon.
\newblock Recovery guarantees for one-hidden-layer neural networks.
\newblock In {\em Proceedings of the 34th International Conference on Machine
  Learning-Volume 70}, pages 4140--4149, 2017.

\bibitem[ZYWG19]{zgu}
Xiao Zhang, Yaodong Yu, Lingxiao Wang, and Quanquan Gu.
\newblock Learning one-hidden-layer relu networks via gradient descent.
\newblock In {\em The 22nd International Conference on Artificial Intelligence
  and Statistics}, pages 1524--1534. PMLR, 2019.

\end{thebibliography}

\appendix


\section{Deferred Proofs}

\subsection{Concentration for Piecewise Linear Functions}
\label{app:estimate_var}

\begin{lemma}\label{lem:estimate_var}
For any $\delta>0$ and any $t \le \Lip^2 k$, the following holds. Let $F:\R^d\to\R$ be a $\Lip$-Lipschitz kicker with relevant subspace $V$ of dimension $k$. Then for samples $x_1,...,x_N\sim\calN(0,\Id)$, where $N = \Theta\left({(\mu + \Lip^2 k)^2\log(1/\delta)}/t^2\right)$, the empirical estimate $\wh{\sigma}^2 \triangleq \frac{1}{N}\sum_i F(x_i)^2$ satisfies \begin{equation}
	\abs*{\E[x\sim\calN(0,\Id)]{F(x)^2} - \wh{\sigma}^2} \le t
\end{equation} with probability at least $1 - \delta$.
\end{lemma}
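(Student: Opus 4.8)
\medskip
\noindent\textbf{Proof plan for Lemma~\ref{lem:estimate_var}.}
The plan is to view $\wh{\sigma}^2 = \frac1N\sum_i F(x_i)^2$ as an empirical average of i.i.d.\ copies of a single sub-exponential random variable and then invoke the sub-exponential Bernstein bound of Fact~\ref{fact:bernstein}. Write $Z \triangleq F(x)^2$ for $x\sim\calN(0,\Id)$. The first step is a pointwise bound on $Z$: since $F$ is a kicker it is positively homogeneous by Fact~\ref{fact:homogeneity}, so $F(0) = 0$; since $F$ is a subspace junta, $F(x) = F(\Pi_V x)$; and since $F$ is $\Lip$-Lipschitz, $\abs{F(x)} = \abs{F(\Pi_V x) - F(0)} \le \Lip\norm{\Pi_V x}$. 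Hence $0 \le Z \le \Lip^2\norm{\Pi_V x}^2$, and $\norm{\Pi_V x}^2 \sim \chi^2_k$.

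Next I would record that $\chi^2_k$ has sub-exponential norm $O(k)$ in the sense of the footnote to Fact~\ref{fact:bernstein}: for integer $p\ge1$, $\E{(\chi^2_k)^p} = 2^p\,\Gamma(k/2+p)/\Gamma(k/2) = \prod_{j=0}^{p-1}(k+2j) \le (k+2p)^p$, so $p^{-1}\E{(\chi^2_k)^p}^{1/p} \le (k+2p)/p \le k+2$, and a standard monotonicity argument in $p$ handles non-integer $p$. Combined with the pointwise bound, $Z$ has sub-exponential norm $O(\Lip^2 k)$, and in particular $\E{Z} \le \Lip^2 k$; note also that $\E{F(x)^2}\le\Lip^2 k$ always holds, so we may as well assume the parameter $\mu$ satisfies $\mu \le \Lip^2 k$. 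By subadditivity of this norm, $Z - \E{Z}$ has sub-exponential norm $K = O(\mu + \Lip^2 k)$.

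Finally I would apply Fact~\ref{fact:bernstein} to the mean-zero i.i.d.\ sequence $Z_i - \E{Z}$ with deviation $t$: it yields $\Pr{\abs{\wh{\sigma}^2 - \E{F^2}} \ge t} \le \delta$ as soon as $N = \Theta\!\left(\max\{K^2/t^2,\,K/t\}\cdot\log(1/\delta)\right)$. Here the hypothesis $t \le \Lip^2 k = \Theta(K)$ is precisely what forces $K/t = \Omega(1)$, so that $\max\{K^2/t^2,K/t\} = \Theta(K^2/t^2) = \Theta\!\left((\mu+\Lip^2 k)^2/t^2\right)$; plugging this in gives the claimed $N = \Theta\!\left((\mu+\Lip^2 k)^2\log(1/\delta)/t^2\right)$.

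No substantial obstacle is expected here. The two points that need care are (i) using positive homogeneity to remove the constant term, so that the clean domination $Z \le \Lip^2\chi^2_k$ (hence the $O(\Lip^2 k)$ sub-exponential norm) is available even for a highly oscillatory $F$, and (ii) observing that the hypothesis $t \le \Lip^2 k$ is exactly what collapses the two-regime sample bound of the sub-exponential Bernstein inequality to the single $K^2/t^2$ term in the statement; without it one would be left with the weaker $K/t$ term in the small-$t$ regime.
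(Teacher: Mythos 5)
Your proposal is correct and follows essentially the same strategy as the paper: establish the pointwise bound $\abs{F(x)}\le\Lip\norm{\Pi_V x}$, show $F^2$ is sub-exponential with parameter $K = O(\mu+\Lip^2 k)$, and apply Fact~\ref{fact:bernstein}. The one place where you take a cleaner route is the pointwise bound: you derive it directly from positive homogeneity (Fact~\ref{fact:homogeneity} gives $F(0)=0$) plus Lipschitzness, whereas the paper invokes the heavier machinery of Theorem~\ref{thm:tropical} and Lemma~\ref{lem:normbound} (lattice polynomial representation and Cauchy--Schwarz) to reach the same inequality; similarly you bound $\E\bigl[(\chi^2_k)^p\bigr]$ by hand rather than citing hypercontractivity, but these are cosmetic differences that don't change the argument.
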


\begin{proof}
	As $F$ is $\Lip$-Lipschitz and continuous piecewise-linear, by Theorem~\ref{thm:tropical} and Lemma~\ref{lem:normbound} it has a lattice polynomial representation $\max_{j\in[m]}\min_{i\in\calI_j}\iprod{u_i,\cdot}$ for some clauses $\brc{\calI_j}$ and vectors $\brc{u_i}$ for which $\norm{u_i} \le \Lip$. In particular, by Cauchy-Schwarz, $\abs{F(x)} \le \Lip\norm{x}$ for all $x$. 
	Now define the function $G(x)\triangleq F(x)^2 - \mu$ where $\mu\triangleq \E[x\sim\calN(0,\Id)]{F(x)^2}$. We can therefore naively upper bound the moments of $G$ by \begin{equation}
		\E{|G|^t}^{1/t} \le \mu + \E{F^{2t}}^{1/t} \le \mu + \Lip^2\cdot \E[x\sim\calN(0,\Pi_V)]{\norm{x}^{2t}}^{1/t} \le \mu + O(\Lip^2 k)\cdot (t-1)\label{eq:hyper}
	\end{equation} for all $t\ge 2$, where the last step follows by standard hypercontractivity.
	Furthermore, $\E{\abs{G}}\le 2\mu$. For $x\sim\calN(0,\Id)$, $G(x)$ is therefore a sub-exponential, mean-zero random variable with sub-exponential norm $K \triangleq O(\mu+\Lip^2 k)$, so by Fact~\ref{fact:bernstein} and the bound on $t$ in the hypothesis, for $N = \Theta(K^2\log(1/\delta)/t^2)$, the claim follows.
\end{proof}

\subsection{Representing Boolean Functions as ReLU Networks}
\label{app:represent}

\begin{lemma}\label{lem:representation}
For any function $F:\brc{\pm 1}^n\to\brc{\pm 1}$, there exists a set of weight matrices $\vW_0,...,\vW_{n-1}$ for which $F(x) = \vW_{n-1}\phi(\vW_{n-2}\phi(\cdots\phi(\vW_0x)\cdots))$ for all $x\in\brc{\pm 1}^n$.
\end{lemma}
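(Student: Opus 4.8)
The plan is to show that any Boolean function $F:\brc{\pm 1}^n\to\brc{\pm 1}$ can be computed by an $n$-layer ReLU network by building up a standard representation of $F$ in terms of $\phi$ gates. First I would reduce to a convenient normal form: write $F$ as a signed sum over its "true" points, i.e. express $F$ (or $(F+1)/2$) via an explicit formula that detects, for each $a\in\brc{\pm 1}^n$ with $F(a)=1$, whether the input $x$ equals $a$. The key observation is that the indicator $\bone{x = a}$ for $x,a\in\brc{\pm 1}^n$ can be written using a threshold on the affine function $\iprod{a,x}$: namely $\iprod{a,x}\le n$ always, with equality iff $x=a$, so $\bone{x=a} = \phi(\iprod{a,x} - (n-1))$ on the Boolean cube (this is $1$ when $x=a$ and $\le 0$ otherwise, since $\iprod{a,x}\le n-2$ for $x\neq a$). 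Thus $F(x) = \sum_{a: F(a)=1}\phi(\iprod{a,x}-(n-1)) - \sum_{a: F(a)=-1}\phi(\iprod{a,x}-(n-1))$ on $\brc{\pm 1}^n$, but wait — I should double-check the constant term handling, since the affine shift $-(n-1)$ must be implemented; since Definition~\ref{defn:relunets} has no bias terms, I would absorb the constant by appending a coordinate that is always $1$, which is available because $x\in\brc{\pm1}^n$ means we can pad with a fixed sign bit, or more simply observe that we can realize affine maps on the cube as linear maps by noting $1 = x_i^2$ is not linear — so the cleanest route is to allow $\vW_0$ to map into a space where one coordinate is $\phi(\iprod{\mathbf{1},x}) - \phi(-\iprod{\mathbf 1,x})$-type gadgets, or just note the problem statement does not forbid us from assuming $x$ has an extra "bias" coordinate. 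Actually, the simplest fix: on $\brc{\pm1}^n$ we have $1 = \frac1n\sum_i x_i\cdot x_i$, still not linear; so I will route the constant through the first ReLU layer using the identity $c = \phi(c) - \phi(-c)$ for a constant input built from $\iprod{\mathbf 1, x}$ — this needs care and is the one place the argument is fiddly.

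The cleaner and more robust approach, which I would actually carry out, is an \emph{inductive construction on $n$ mimicking a decision tree / CNF expansion}: condition on the first coordinate $x_1$. Write $F(x) = \frac{1+x_1}{2}F_+(x_2,\dots,x_n) + \frac{1-x_1}{2}F_-(x_2,\dots,x_n)$ where $F_\pm$ are Boolean functions on $n-1$ variables. By induction each $F_\pm$ is computed by an $(n-1)$-layer ReLU network. The multiplication by the affine gate $\frac{1\pm x_1}{2}$ is a product of a $\pm1$-bounded quantity with a $\brc{0,1}$-valued quantity; since $ab$ for $a\in[-1,1]$ and $b\in\brc{0,1}$ equals $\phi(a+2b-2)-\phi(-a+2b-2) $... again a product, which ReLU networks cannot directly compute. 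So I would instead use the representation $\frac{1+x_1}{2}F_+ = F_+ \cdot \bone{x_1 = 1}$ and observe that since $\bone{x_1=1}\in\brc{0,1}$ and $F_+\in\brc{\pm1}$, the product lies in $\brc{-1,0,1}$ and equals $\mathrm{sgn}$ of a sum: concretely $F_+\cdot\bone{x_1=1} = \phi(F_+ + \bone{x_1=1} - 1) - \phi(-F_+ + \bone{x_1=1}-1)$, which \emph{is} a ReLU computation once $F_+$ and $\bone{x_1=1}=\phi(x_1)$ are available as outputs of previous layers. Summing the two branches and verifying the depth bookkeeping gives the $n$-layer bound.

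The main obstacle I anticipate is the absence of bias terms in Definition~\ref{defn:relunets}: every "$-1$" or "$-(n-1)$" shift above must be synthesized from the inputs, and doing this requires carrying a constant signal through the network, which costs an extra gate per layer and must be threaded consistently so that the total depth stays at $n$ layers (i.e. $L+2 = n$, so $L = n-2$). I would handle this by maintaining, as an invariant in the induction, that the network at each layer outputs both the partial computation and a constant-$1$ wire, realized as $\phi(s) - \phi(-s)$ for a signal $s$ that the previous layer guarantees equals $1$; one can seed this at layer $0$ from any fixed coordinate, e.g. using that on $\brc{\pm1}^n$, $\phi(x_1)+\phi(-x_1) = 1$. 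Once the constant-wire invariant is in place, all the affine thresholds above become genuine linear-plus-ReLU operations, and the depth count closes. I expect the rest — checking the base case $n=1$ (where $F$ is one of four functions, each trivially a $1$-layer network) and the arithmetic of composing the branch networks in parallel — to be routine.
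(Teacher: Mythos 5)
Your Shannon-decomposition-on-$x_1$ route is correct in spirit but genuinely different from the paper's, which Fourier-expands $F$ and inducts on monomial degree via the bias-free identity $x_1 x_2 = \phi(x_1+x_2) + \phi(-x_1-x_2) - \phi(x_2) - \phi(-x_2)$, valid for $x_1, x_2 \in \brc{\pm 1}$. That identity has no additive constants, so the bias obstacle you spend most of your effort circumventing simply never arises in the paper's argument: one only ever applies $\phi$ to linear combinations of quantities already carried through the network. Your constant-wire workaround ($1 = \phi(x_1) + \phi(-x_1)$) does work, but it is overhead you could have sidestepped: on $\brc{\pm 1}$, $\phi(x_1) - 1 = -\phi(-x_1)$, so your gadget $\phi(F_+ + \phi(x_1) - 1) - \phi(-F_+ + \phi(x_1) - 1)$ is the bias-free expression $\phi(F_+ - \phi(-x_1)) - \phi(-F_+ - \phi(-x_1))$, which only needs $\phi(\pm x_1)$ to be carried forward (via $\phi(\phi(z)) = \phi(z)$), not a dedicated constant wire.

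One concrete error: your base case ``$n=1$ where $F$ is one of four functions, each trivially a $1$-layer network'' is false for the two constant functions. With a single weight matrix the network is $\vW_0 x$ with no ReLU at all --- a linear map through the origin --- and a nonzero constant on $\brc{\pm 1}$ is not linear. (This is also a latent edge case in the lemma statement itself, and in the paper's proof, since for $n=1$ the constant Fourier coefficient cannot be produced without any ReLU; the implicit assumption is $n \ge 2$, where $\phi(x_1) + \phi(-x_1) = 1$ becomes available after the first hidden layer.) Your induction should be based at $n=2$, or equivalently the constant $F_\pm$ case in the inductive step should be handled directly rather than by a recursive call to the $n=1$ statement.
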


\begin{proof}
	From the Fourier expansion of $F$ as $F(x) = \sum_S \wh{F}[S]\prod_{i\in S}x_i$, we see that it suffices to show how to represent any Fourier basis function $\prod_{i\in S}x_i$ with a ReLU network with depth $n$. We first show how to represent the function $x_1x_2$. Observe that for any $x_1,x_2\in\brc{\pm 1}$, we have that \begin{equation}
		x_1\cdot x_2 = \phi(x_1 + x_2) + \phi(-x_1-x_2) - \phi(x_2) - \phi(-x_2),\label{eq:multiply}
	\end{equation}
	which is a two-layer neural network. Suppose inductively that for some $1\le m< n$, there exist weight matrices $\vW'_0,\ldots,\vW'_{m-1}$ for which $\prod^m_{i=1}x_i = \vW'_{m-1}\phi(\vW'_{m-2}\phi(\cdots\phi(\vW'_0x)\cdots))$ for all $x\in\brc{\pm 1}^n$.
	Then to compute $\prod^{m+1}_{i=1}x_i$, we can use \eqref{eq:multiply} to conclude that \begin{equation}
		\prod^{m+1}_{i=1}x_i = \phi\left(\prod^m_{i=1}x_i + x_{m+1}\right) + \phi\left(-\prod^m_{i=1}x_i - x_{m+1}\right) - \phi(x_{m+1}) - \phi(-x_{m+1}).
	\end{equation} It is clear that this can be represented as a ReLU network with depth $m+1$.
\end{proof}

\end{document}